\documentclass[twoside,11pt]{article}

\usepackage{blindtext}
\usepackage{pdflscape}
% Any additional packages needed should be included after jmlr2e.
% Note that jmlr2e.sty includes epsfig, amssymb, natbib and graphicx,
% and defines many common macros, such as 'proof' and 'example'.
%
% It also sets the bibliographystyle to plainnat; for more information on
% natbib citation styles, see the natbib documentation, a copy of which
% is archived at http://www.jmlr.org/format/natbib.pdf

% Available options for package jmlr2e are:
%
%   - abbrvbib : use abbrvnat for the bibliography style
%   - nohyperref : do not load the hyperref package
%   - preprint : remove JMLR specific information from the template,
%         useful for example for posting to preprint servers.
%
% Example of using the package with custom options:
%
% \usepackage[abbrvbib, preprint]{jmlr2e}

\usepackage{jmlr2e}
\usepackage{import}
\usepackage{amsmath,amssymb,amsfonts}
\usepackage{stmaryrd}
\usepackage{mathtools,marvosym}
\usepackage{soul}
\usepackage{subcaption}
\usepackage{dblfloatfix}
\usepackage{float}
\usepackage{caption}
\usepackage{xparse, xspace}
\usepackage{algorithm}
\usepackage{algpseudocode}
\usepackage{MnSymbol}
\usepackage[shortlabels]{enumitem}
\usepackage{tikz-cd}
\usepackage{tikz}
\usepackage{multicol}
\usepackage{wasysym}
\usepackage{pst-node, multido}
\usetikzlibrary{automata, positioning, arrows, trees, decorations.pathreplacing, tikzmark}
\usepackage{float,graphicx}

\setlist{topsep=0pt}

\makeatletter
\renewcommand{\subset}{\subseteq}

\DeclareMathOperator{\Ex}{\mathbb{E}}
\DeclareMathOperator{\Var}{Var}

% algebra

\newcommand{\N}{\mathbb{N}}

\newcommand{\R}{\mathbb{R}}

\newcommand{\tspace}{\allowbreak\ }
\NewDocumentCommand\given{s}{
    \IfBooleanTF{#1}{
        \,\middle|\,
    }{
        \,|\,
    }
}

\NewDocumentCommand\p{s m}{
    \IfBooleanTF{#1}{
        \left [ #2 \right ]
    }{
        \left ( #2 \right )
    }
}

\newcommand{\createwordcommand}[2]{
    \NewDocumentCommand{#1}{s s s}{\IfBooleanTF{##1}{
            \IfBooleanTF{##2}{
                \IfBooleanTF{##3}
                {#2}{#2\tspace}
            }{\tspace#2}
        }{\tspace#2\tspace}
    }
}

\NewDocumentCommand\prob{s m}{
    \IfBooleanTF{#1}{
        P\left(#2\right)
    }{
        P(#2)
    }
}

\newcommand\Dist{\Delta}

% % Results

\newtheorem{claim}[theorem]{Claim} 

% Logic

%%% HELPER CODE FOR DEALING WITH EXTERNAL REFERENCES
\usepackage{xr}
\makeatletter
\newcommand*{\addFileDependency}[1]{
  \typeout{(#1)}
  \@addtofilelist{#1}
  \IfFileExists{#1}{}{\typeout{No file #1.}}
}
\makeatother

%%% END HELPER CODE

\DeclareMathAlphabet{\mathpzc}{OT1}{pzc}{m}{it}

\newcommand{\x}{\times}
\newcommand{\reals}{\mathbf{R}}

\newcommand{\cA}{\mathcal{A}}

\newcommand{\cF}{\mathcal{F}}

% Definitions of handy macros can go here

\newcommand*\samethanks[1][\value{footnote}]{\footnotemark[#1]}

% Heading arguments are {volume}{year}{pages}{date submitted}{date published}{paper id}{author-full-names}

\usepackage{lastpage}
\jmlrheading{25}{2024}{1-\pageref{LastPage}}{10/23; Revised
2/24}{2/24}{23-1415}{Prakash Panangaden, Sahand Rezaei-Shoshtari, Rosie Zhao, David Meger, Doina Precup}
\ShortHeadings{Policy Gradient Methods in the Presence of Symmetries and State Abstractions}{Panangaden$^*$, Rezaei-Shoshtari$^*$, Zhao$^*$, Meger, Precup}

% Short headings should be running head and authors last names
\firstpageno{1}

\begin{document}

\title{Policy Gradient Methods in the Presence of Symmetries and State Abstractions}

\author{\name Prakash Panangaden\thanks{Equal contributions; alphabetically ordered.} \email prakash@cs.mcgill.ca \\
       \addr School of Computer Science, McGill University\\
       and Mila -- Quebec AI Institute\\
      Montreal, QC, Canada
       \AND
       \name Sahand Rezaei-Shoshtari\samethanks \email srezaei@cim.mcgill.ca \\
       \addr School of Computer Science, McGill University\\
       and Mila -- Quebec AI Institute\\       
       Montreal, QC, Canada
       \AND
       \name Rosie Zhao\samethanks \email rosiezhao@g.harvard.edu \\
       \addr School of Engineering and Applied Sciences\\
       Harvard University\\
       Cambridge, MA, USA
       \AND
       \name David Meger \email dmeger@cim.mcgill.ca \\
       \addr School of Computer Science, McGill University\\
       and Mila -- Quebec AI Institute\\       
       Montreal, QC, Canada       
       \AND
       \name Doina Precup \email dprecup@cs.mcgill.ca \\
       \addr School of Computer Science, McGill University\\
       and Mila -- Quebec AI Institute\\
       and DeepMind\\
       Montreal, QC, Canada}

\editor{Martha White}

\maketitle

\begin{abstract}%   <- trailing '%' for backward compatibility of .sty file
Reinforcement learning (RL) on high-dimensional and complex problems relies on abstraction for improved efficiency and generalization. In this paper, we study abstraction in the continuous-control setting, and extend the definition of Markov decision process (MDP) homomorphisms to the setting of continuous state and action spaces. We derive a policy gradient theorem on the abstract MDP for both stochastic and deterministic policies. Our policy gradient results allow for leveraging approximate symmetries of the environment for policy optimization. Based on these theorems, we propose a family of actor-critic algorithms that are able to learn the policy and the MDP homomorphism map simultaneously, using the lax bisimulation metric. Finally, we introduce a series of environments with continuous symmetries to further demonstrate the ability of our algorithm for action abstraction in the presence of such symmetries. We demonstrate the effectiveness of our method on our environments, as well as on challenging visual control tasks from the DeepMind Control Suite. Our method's ability to utilize MDP homomorphisms for representation learning leads to improved performance, and the visualizations of the latent space clearly demonstrate the structure of the learned abstraction.

\end{abstract}

\begin{keywords}
  reinforcement learning, policy optimization, abstraction, symmetry, representation learning
\end{keywords}

% \section*{Outline}
% \begin{itemize}
%     \item Theory for continuous MDP homomorphisms:
%     \begin{itemize}
%         \item Optimal Value Equivalence
%         \item Existence of lifted policy for both deterministic and stochastic policies
%         \item Value equivalence for lifted policies - Incomplete submission for NeurIPS, Rosie's thesis
%         \item Computing the lifted policy (Bayesian approach) - Rosie's thesis
        
%     \end{itemize}
%     \item Deep Homomorphic Deterministic Policy Gradient: NeurIPS paper
%     \item Deep Homomorphic Stochastic Policy Gradient: Incomplete submission for NeurIPS, Rosie's thesis
% \end{itemize}

\section{Introduction}
\label{sec:intro}
Reinforcement learning on high-dimensional observations relies on representation learning and abstraction for learning a simpler problem that can be solved efficiently \citep{li2006towards, abel2016near}. A major obstacle, however, is the coupling between states, actions, and rewards, particularly in complex continuous control problems. Strategies have been developed to find ways to reduce the state space by capturing \emph{behavioral equivalence} between individual states. One formalization of this for MDPs is \emph{bisimulation}~\citep{givan2003equivalence}, which was originally introduced for labelled transition systems in the early 1980's~\citep{milner1989communication}. Bisimulation defines an equivalence relation over the state space, which allows one to quotient the state space by considering the equivalence classes under this relation. Bisimulation and their associated bisimulation metrics~\citep{Ferns04}— which are used to approximate this equivalence relation — have previously been used for abstraction and model minimization.

Alternatively, one could use the quotiented state space to define a new environment with transition dynamics and rewards that preserve the structure of the original state space, and define a function between the original and new MDP. Thus, closely related to bisimulation are \emph{MDP homomorphisms} \citep{ravindran2004algebraic, ravindran2001symmetries, ravindran2004approximate}, which capture behavioral equivalence via maps between MDPs that have certain preservation properties. Similar to bisimulation, one can use MDP homomorphisms to exploit (approximate) symmetries of an MDP for joint state-action abstraction.

%Strategies have been developed to find ways to reduce the state space.  Techniques based on quotienting by bisimulation equivalence~\citep{givan2003equivalence} or bisimulation metrics~\citep{Ferns04} have been used for abstraction and model minimization.  Closely related to this is the notion of MDP homomorphisms \citep{ravindran2004algebraic, ravindran2001symmetries, ravindran2004approximate} that seek to exploit (approximate) symmetries of an MDP for joint state-action abstraction.
% and perform model reduction by quotienting using the equivalence on state-action pairs. 

% \citet{Ferns05} showed that bisimulation metrics for MDPs could be developed in the continuous setting building on the development of bisimulation~\citep{Desharnais02} and bisimulation metrics~\citep{Desharnais04} for Markov processes on continuous state spaces.

MDP homomorphisms, developed in the context of discrete state and action spaces, are structure-preserving maps between MDPs that preserve value functions. Typically, they are used to map an MDP to an abstract MDP in a way such that no relevant information is lost.  \citet{ravindran2001symmetries} show that policies can be pulled back, or \emph{lifted}, from the abstract MDP to the original one while preserving optimality.  Pulling back a policy in this way is a tricky construction and explicitly uses the finiteness of the state and action spaces. From the practical perspective, recent works have shown that using MDP homomorphisms are effective in guiding the learning in discrete problems \citep{van2020plannable, van2020mdp, biza2019online}. Figure \ref{fig:intro} shows schematics and key properties of MDP homomorphisms, which we formally define in Section \ref{sec:background}. 

Our first contribution is that we extend MDP homomorphisms to the continuous setting. This is crucial if we are to use these ideas for control of dynamical systems in physical spaces, as in robotics.  The mathematics involved is significantly deeper than in the finite case and in some cases the finite case provides no guidance on how to proceed.  We show that the value functions and the optimal value function are preserved for both stochastic and deterministic policies, as in the finite case.  Lifting the policy from the abstract space to the original is one crucial example where we have to do something completely different from \citet{ravindran2001symmetries}, where we appeal to using classical tools in functional analysis.

\begin{figure}[b!]
    \centering
    \begin{subfigure}[b]{0.44\textwidth}
         \centering
         \includegraphics[width=\textwidth]{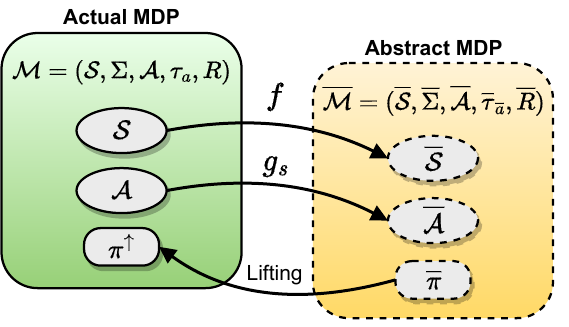}
         \caption{Components of an MDP homomorphism.}
         \label{fig:intro_a}
    \end{subfigure}
    \hfill
    \begin{subfigure}[b]{0.54\textwidth}
         \centering
         \includegraphics[width=\textwidth]{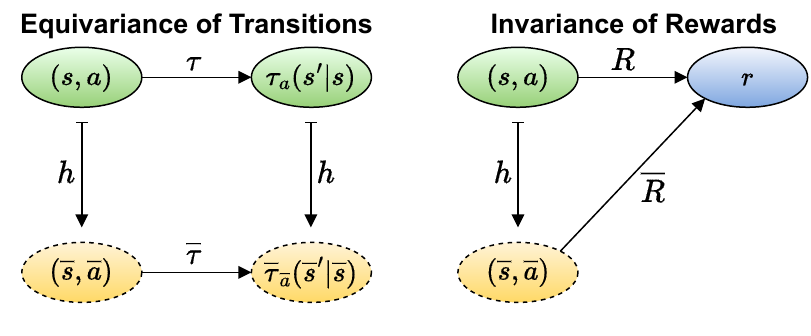}
         \caption{Commutative diagrams for MDP homomorphisms.}
         \label{fig:intro_b}
    \end{subfigure}    
    \caption{Overview of an MDP homomorphism $h = (f, g_s)$. \textbf{(a)} Components of an MDP homomorphism map, and the relation between the actual and abstract MDPs. \textbf{(b)} Commutative diagrams for MDP homomorphisms demonstrating the equivariance of transitions and the invariance of rewards. Diagram is adapted from \citet{ravindran2001symmetries}.}
    \label{fig:intro}
\end{figure}

The next significant contribution is that we derive a version of the policy gradient theorem \citep{sutton2000policy, silver2014deterministic} that tightly integrates the MDP homomorphism in the policy optimization process. In other words, one can use the policy gradient obtained from the abstract MDP, referred to as the \emph{homomorphic policy gradient} (HPG), to optimize for the performance measure defined on the original MDP. We rigorously prove this result for both deterministic and stochastic policies, and show that HPG can act as an additional gradient estimator capable of utilizing approximate symmetries for improved sample efficiency. As policy gradient methods remain a key family of RL algorithms, particularly for continuous control problems \citep{kiran2021deep, arulkumaran2017deep}, our homomorphic policy gradient derivation can have significant outcome for policy gradient algorithms in the presence of state abstraction.

Our third contribution is that we propose a deep actor-critic algorithm, referred to as the \emph{deep homomorphic policy gradient} (DHPG) algorithm based on our novel theoretical results. DHPG is able to simultaneously optimize the policy, learn the homomorphism map, and exploit the abstraction of MDP homomorphisms for policy optimization. We empirically show that state-action abstractions learned through MDP homomorphisms provide a natural inductive bias for representation learning on challenging visual control problems, resulting in performance and sample efficiency improvements over strong baselines. 

Finally, we show how to collapse an MDP when there is a group of symmetries which is also continuous.  Thus, for example if a system is spherically symmetric the system is invariant under the action of the rotation group $SO(3)$ and this is certainly not a finite group.  Discrete symmetries can and do occur in continuous systems but in general one will be dealing with continuous symmetries. Additionally, to demonstrate the ability of DHPG in learning continuous symmetries, we have developed a series of environments with continuous symmetries. In summary, our contributions can be listed as: 
\begin{enumerate}[noitemsep]
    \item Defining continuous MDP homomorphisms on continuous state and action spaces, and proving the existence of the lifted policy in the general case of stochastic policies.  
    \item Proving the value and optimal value equivalence of MDP homomorphisms for the general case of stochastic policies.
    \item Deriving the homomorphic policy gradient theorem for both stochastic and deterministic policies. 
    \item Developing a family of deep actor-critic algorithms that are able to learn the optimal policy simultaneously with the MDP homomorphisms map. Our algorithm works for both stochastic and deterministic policies through the use of a novel and computationally efficient policy lifting procedure.
    \item Developing a series of novel RL environments with continuous symmetries that allow for benchmarking the ability of agents in learning and leveraging continuous environmental symmetries.
\end{enumerate}
Notably, compared to the prior work of \citet{rezaei2022continuous}, our theoretical and empirical contributions are not limited to deterministic policies and bijective action encoders. Instead, we prove the value equivalence property and the homomorphic policy gradient theorem for the general case of stochastic policies and surjective action encoders, and propose a computationally efficient way for lifting a general stochastic policy. Empirically, we show that stochastic DHPG is superior to deterministic DHPG in environments with continuous symmetries as it is capable of a more powerful action abstraction. Our code for DHPG and the novel environments with continuous symmetries are publicly available\footnote{\href{https://github.com/sahandrez/homomorphic_policy_gradient}{\texttt{https://github.com/sahandrez/homomorphic\_policy\_gradient}}}.

The paper is structured as follows: in Sections~\ref{sec:related_work} and \ref{sec:background} we provide an overview of related work and introduce relevant background, including finite MDP homomorphisms, bisimulation, and policy gradient methods. In Section~\ref{sec:cont_MDP_homomorphism} we formally introduce continuous MDPs and continuous MDP homomorphisms and prove key equivalence properties. In Section~\ref{sec:hom_policy_gradient_thm} we prove the stochastic and deterministic homomorphic policy gradient theorems and subsequently introduce the DHPG algorithm in Section~\ref{sec:hom_algorithm}. Finally, we provide experimental results of DHPG on continuous control tasks in Section~\ref{sec:exp_results}.

\section{Related Work}
\label{sec:related_work}
\paragraph{State Abstraction.} Abstraction can be defined as a process that maps the original representation to an abstract representation that is more compact and easier to work with \citep{li2006towards}. Probabilistic bisimulation, which we will refer to as just ``bisimulation''\citep{larsen1991bisimulation} is one notion of behavioral equivalence between systems.  It was extended to continuous state spaces by \citet{blute1997bisimulation} and \citet{Desharnais02} and extended to MDPs by \citet{givan2003equivalence}. Bisimulation metrics \citep{Desharnais99b, ferns2005metrics, ferns2006methods, ferns2011bisimulation} define a pseudometric to quantify the degree of behavioural similarity.  Recently, \citet{zhang2020learning} defined a loss function for learning representations via bisimilarity of latent states, and \citet{kemertas2021towards} have further improved its robustness. \citet{castro2020scalable} has proposed a method to approximate the bisimulation metric for deterministic MDPs with continuous states but discrete actions. \citet{van2020plannable} have defined a contrastive loss based on MDP homomorphisms for learning an abstract MDP for planning, but their method is only applicable to finite MDPs. Another approach is to directly embed the MDP homomorphic relation in the network architecture \citep{van2020mdp, van2021multi}. Other recently proposed metrics \citep{le2021metrics} seek to learn representations that preserve values \citep{grimm2020value, grimm2021proper} or policies \citep{agarwal2020contrastive}, or via a sampling-based similarity metric \citep{castro2021mico}. Recently, \citet{kemertas2022approximate} have incorporated the bisimulation relation within the approximate policy iteration. Finally, state abstractions can in principle help improve transferring of policies \citep{abel2019state, castro2010using, soni2006using, sorg2009transfer, rajendran2009learning}, or learning temporally extended actions \citep{castro2011automatic, wolfe2006decision, wolfe2006defining, sutton1999between}.

\paragraph{Action Abstraction.} Action representations are often studied in the context of large discrete action spaces \citep{sallans2004reinforcement} as a form of a look-up embedding that is known \emph{a-priori} \citep{dulac2015deep}, factored representations \citep{sharma2017learning}, or policy decomposition \citep{chandak2019learning}. Action representations can also be learned from expert demonstrations \citep{tennenholtz2019natural}. More related to our work is dynamics-aware embeddings \citep{whitney2019dynamics} where a combined state-action embedding for continuous control is learned. In contrast, we use the notion of homomorphisms to learn the state-dependent action representations, while preserving values. Lastly, action representations can be combined with temporal abstraction \citep{sutton1999between} for discovering temporally extended actions \citep{ravindran2003relativized, abel2020value, castro2010using, castro2011automatic}.

\paragraph{State Representation Learning.} Extant methods for learning the
underlying state space from raw observations often use latent models
\citep{gelada2019deepmdp, hafner2019dream, hafner2019learning, ha2018world, biza2021learning},
auxiliary prediction tasks \citep{jaderberg2016reinforcement, liu2019self,
  lyle2021effect}, physics-inspired inductive biases
\citep{jonschkowski2015learning, cranmer2020lagrangian,
  greydanus2019hamiltonian}, unsupervised learning \citep{hjelm2018learning,
  liu2021aps}, or self-supervised learning \citep{anand2019unsupervised,
  sinha2021s4rl, hansen2020self, hansen2021generalization,
  fan2021secant}. From another point of view, representation learning can be
effectively decoupled from the RL problem \citep{eslami2018neural,
  stooke2021decoupling}.  Symmetries of the environment can also be used for
representation learning \citep{mondal2022eqr, mahajan2017symmetry, park2022learning, wang2021so2, higgins2018towards,
  higgins2021symetric, quessard2020learning, caselles2019symmetry}. In fact, MDP
homomorphisms are specializations of such approaches for RL.  A key distinguishing factor of MDP homomorphisms is their ability to take actions into account for representation learning in the same premises as \citet{thomas2017independently}. Recently, simple image augmentation methods have shown significant improvements in RL performance \citep{yarats2020image, lee2019network}. Since these approaches are in general orthogonal to state abstractions, they can be combined together.

\paragraph{Equivariant Representation Learning.} Using equivariance to leverage symmetries in data has been a fruitful line of machine learning research, where enforcing equivariance properties in the model architecture has led to state-of-the-art performance across several data modalities and applications. These domains include segmentation and classification tasks in computer vision~\citep{cohen2016group}, medical imaging~\citep{winkels2019pulmonary, veeling2018rotation}, 3D model classification~\citep{thomas2018tensor, chen2021equivariant}, quantum chemistry~\citep{qiao2021unite, satorras2021n, batzner20223}, and protein structure classification~\citep{eismann2021hierarchical, ganea2021independent, jumper2021highly}. Since the utility of translation equivariance was demonstrated for traditional CNNs~\citep{lecun1989handwritten, lecun1995convolutional}, in recent years these convolutional layers have been generalized to be equivariant to discrete groups--- such as finite rotations, translations, and reflections~\citep{cohen2016group}--- and continuous groups--- such as the rotation group SO(3) and the Euclidean and special Euclidean groups E(3) and SE(3)~\citep{cohen2018spherical, kondor2018clebsch, weiler20183d, cohen2019gauge}. The equivariance constraints imposed on these architectures are very rigid, and previous work has shown that true equivariance is difficult to achieve~\citep{azulay2018deep, engstrom2017rotation}. Further, the groups for these equivariant networks are typically fixed and known apriori; however, there has been work which constructs the appropriate equivariant network for arbitrary matrix Lie groups~\citep{finzi2021practical} and presents algorithms to automatically discover symmetries pertaining to Lie groups~\citep{dehmamy2021automatic}.

\section{Background}
\label{sec:background}
\subsection{Markov Decision Processes}
Reinforcement learning is based on an agent interacting with its environment and acquiring
rewards as it does so.  It seeks to maximize the expected reward and learns to do this
through its interaction with the environment.  Markov decision processes are the
basic model formalizing the interaction between an agent and its environment.

\begin{definition}[MDP]
  A \emph{Markov decision process} (MDP) is a tuple $\mathcal{M} = (\mathcal{S}, \mathcal{A}, \tau_a, R, \gamma )$
  where $\mathcal{S}$ is a set of states, $\mathcal{A}$ is a set of actions, for each $a \in \mathcal{A}$ we have
  $\tau_a: \mathcal{S} \to\Dist(\mathcal{S})$ where $\Dist(\mathcal{S})$ denotes the set of probability distributions
  over $\mathcal{S}$, $R: \mathcal{S} \times \mathcal{A} \to \R$ is the reward function, and $\gamma \in [0, 1)$ is the
  discount factor.
\end{definition}
Initially, we assume $\mathcal{S}$ and $\mathcal{A}$ to be finite; in
Section~\ref{sec:cont_mdp_homomorphism}, we will define MDPs on more general state and
action spaces.  From a state $s \in \mathcal{S}$, an agent acting according to policy
$\pi: \mathcal{S} \to\Dist(\mathcal{A})$ selects actions $a \sim \pi(\cdot | s)$ and transitions to
$s' \sim \tau_a(\cdot | s)$, yielding reward $r = R(s,a)$.  The objective is to maximize
the expected return by learning an optimal policy:
\[\pi^* = \arg\max_\pi \Ex_{\tau}[\sum_{t=0}^\infty \gamma^t R(s_t, a_t)].\]

Here, note that we assume $\gamma < 1$ to ensure convergence of the return (although $\gamma = 1$ is permitted for episodic tasks).

The \emph{value function} $V^\pi(s)$ gives the expected return starting from state $s$
and following policy $\pi$.  The \emph{action-value function} $Q^\pi(s,a)$ gives the
expected return starting from state $s$, taking action $a$ and thereafter following
$\pi$.  The value function is the fixed point of the Bellman operator $T^\pi: \R^{\mathcal{S} \times \mathcal{A}} \to \R^{\mathcal{S} \times \mathcal{A}}$ defined as:
$$T^\pi V(s) := \Ex_{\substack{a \sim \pi(\cdot | s) \\ s' \sim \tau_a(\cdot | s)}} [r + \gamma V(s')].$$
Similarly the optimal value function $V^*$ is the fixed point of the Bellman optimality operator $T^*: \R^{\mathcal{S} \times \mathcal{A}} \to \R^{\mathcal{S} \times \mathcal{A}}$:
$$T^* V(s) := \max_a \left[\Ex_{\substack{s'\sim \tau_a(\cdot | s)}} [r + \gamma V(s')]\right].$$
Analogous Bellman equations are defined for $Q^\pi$ and $Q^*$~\citep{sutton2018reinforcement}.

\subsection{Policy Gradient Theorems}
\label{sec:background_pg}
Reinforcement learning algorithms can be broadly divided into \emph{value-based} and \emph{policy gradient} (PG) methods. While value-based methods select actions via a greedy maximization step  based on the learned action-values, policy gradient methods directly optimize a parameterized policy $\pi_\theta$ based on the gradient of the performance measure $J(\theta)$, defined as:
\begin{equation}
    \label{eq:perf_measure}
    J(\theta) = \mathbb{E}_{\pi}[V^\pi (s)],
\end{equation}
where the expectation is taken with respect to the policy, transitions, and the initial state distribution of the actual MDP. Unlike value-based methods, policy gradient algorithms inherit the strong, albeit local, convergence guarantees of the gradient descent and are naturally extendable to continuous actions. The fundamental theorem underlying policy gradient methods is the \emph{policy gradient theorem} \citep{sutton2000policy}:
\begin{theorem}[\citet{sutton2000policy}] 
    \label{thm:stoch_pg}
    Let $\pi_\theta : \mathcal{S} \to \Dist(\mathcal{A})$ be a stochastic policy defined on $\mathcal{M}$. Then the gradient of the performance measure $J(\theta)$ w.r.t. $\theta$ is: 
    $$
        \nabla_\theta J(\pi_\theta) = \int_{s \in {\mathcal{S}}} \rho^{\pi_\theta}(s) \int_{a \in {\mathcal{A}}} \nabla_\theta \pi_\theta(da | s) Q^{\pi_\theta}(s, a) ds,
    $$
    where $\rho^{\pi_\theta} (s) = \lim_{t \rightarrow \infty} \gamma^t P(s_t=s | s_0, a_{0:t} \sim \pi_\theta)$ is the discounted stationary distribution of states under $\pi_\theta$.
\end{theorem}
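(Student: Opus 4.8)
The plan is to reproduce the classical unrolling argument of \citet{sutton2000policy}, transcribed into the integral (continuous-state) notation above: derive a fixed-point recursion for $\nabla_\theta V^{\pi_\theta}$, solve it by iterating, and then average over the initial-state distribution so that the discounted visitation weights assemble into $\rho^{\pi_\theta}$. First I would start from the defining relation between the value and action-value functions,
$$V^{\pi_\theta}(s) = \int_{a \in \mathcal{A}} \pi_\theta(a \mid s)\, Q^{\pi_\theta}(s,a),$$
differentiate both sides in $\theta$, and apply the product rule under the integral sign:
$$\nabla_\theta V^{\pi_\theta}(s) = \int_{a} \nabla_\theta \pi_\theta(a \mid s)\, Q^{\pi_\theta}(s,a) + \int_a \pi_\theta(a \mid s)\, \nabla_\theta Q^{\pi_\theta}(s,a).$$
Writing $Q^{\pi_\theta}(s,a) = R(s,a) + \gamma \int_{s'} \tau_a(s' \mid s)\, V^{\pi_\theta}(s')$ and using that neither $R$ nor $\tau_a$ depends on $\theta$, the second term becomes $\gamma \int_a \pi_\theta(a \mid s) \int_{s'} \tau_a(s' \mid s)\, \nabla_\theta V^{\pi_\theta}(s')$. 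Abbreviating the $\theta$-free part by $\phi(s) := \int_a \nabla_\theta \pi_\theta(a \mid s)\, Q^{\pi_\theta}(s,a)$ and the policy-induced one-step kernel by $P^{\pi_\theta}(s' \mid s) := \int_a \pi_\theta(a \mid s)\, \tau_a(s' \mid s)$, this gives the recursion
$$\nabla_\theta V^{\pi_\theta}(s) = \phi(s) + \gamma \int_{s'} P^{\pi_\theta}(s' \mid s)\, \nabla_\theta V^{\pi_\theta}(s').$$

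Next I would iterate this identity, substituting the right-hand side into itself repeatedly, which expresses $\nabla_\theta V^{\pi_\theta}(s)$ as $\sum_{t=0}^\infty \gamma^t \int_{s'} P_t^{\pi_\theta}(s' \mid s)\, \phi(s')$, where $P_t^{\pi_\theta}(\cdot \mid s)$ denotes the $t$-step transition kernel under $\pi_\theta$ and $P_0$ is the identity. Taking the gradient of the performance measure, $\nabla_\theta J(\pi_\theta) = \nabla_\theta \int_{s_0} p_0(s_0)\, V^{\pi_\theta}(s_0)$, and collecting the accumulated weights $\gamma^t P_t^{\pi_\theta}$ over trajectories launched from the initial distribution $p_0$ is exactly what reconstructs the discounted state distribution $\rho^{\pi_\theta}$ appearing in the statement, yielding
$$\nabla_\theta J(\pi_\theta) = \int_s \rho^{\pi_\theta}(s)\, \phi(s) = \int_s \rho^{\pi_\theta}(s) \int_a \nabla_\theta \pi_\theta(a \mid s)\, Q^{\pi_\theta}(s,a),$$
which is the claim.

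The main obstacle is analytic rather than algebraic: each step above silently interchanges differentiation with integration (a Leibniz-rule argument), swaps the infinite sum with the integral over states, and presumes both that the geometric series converges and that the $\gamma$-contraction has a unique fixed point so that unrolling is legitimate. In the finite state-action setting of \citet{sutton2000policy} these are automatic, but in the integral formulation they require regularity hypotheses --- for instance $\theta \mapsto \pi_\theta(a \mid s)$ continuously differentiable with an integrable dominating bound and $Q^{\pi_\theta}$ bounded --- together with $\gamma < 1$ (or an episodic/absorbing-state assumption when $\gamma = 1$) so that $\sum_t \gamma^t P_t^{\pi_\theta}$ is a finite measure. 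Under these conditions dominated convergence justifies every interchange, and the contraction property justifies solving the recursion by iteration. I expect the delicate part of a fully rigorous argument to be verifying these dominated-convergence conditions and confirming that the trajectory weights reassemble precisely into $\rho^{\pi_\theta}$, rather than the symbolic manipulations themselves.
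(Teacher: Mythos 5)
Your proof is correct and takes essentially the same approach as the paper: the paper states this theorem as cited background from \citet{sutton2000policy} without reproving it, but the recursion-for-$\nabla_\theta V^{\pi_\theta}$, roll-out, and averaging-over-$p_0$ structure you use is exactly the skeleton the paper follows in its own proof of the stochastic homomorphic policy gradient (Theorem \ref{thm:stochastic_hpg}). The analytic caveats you flag (interchange of gradient and integral, Fubini, boundedness) are likewise the same regularity conditions the paper defers to its Appendix \ref{sec:assumptions}.
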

 In Theorem \ref{thm:stoch_pg}, $\rho^{\pi_\theta} (s)$ is assumed to exist and to be independent of the initial state distribution (ergodicity assumption). The significance of the policy gradient theorem is that the effect of policy changes on the state distribution does not appear in its expression, allowing for a sample-based estimate of the gradient \citep{williams1992simple}. The deterministic policy gradient (DPG) is derived for deterministic policies by \citet{silver2014deterministic} as:
\begin{theorem}[\citet{silver2014deterministic}] 
    \label{thm:det_pg}
    Let $\pi_\theta : \mathcal{S} \to \mathcal{A}$ be a deterministic policy defined on $\mathcal{M}$. Then the gradient of the performance measure $J(\theta)$ w.r.t. $\theta$ is:
    $$
        \nabla_\theta J(\pi_\theta) = \int_{s \in {\mathcal{S}}} \rho^{\pi_\theta}(s) \nabla_\theta \pi_\theta(s) \nabla_a Q^{\pi_\theta}(s, a) \big|_{a=\pi_\theta(s)} ds,
    $$
    where $\rho^{\pi_\theta} (s) = \lim_{t \rightarrow \infty} \gamma^t P(s_t=s | s_0, a_{0:t} \sim \pi_\theta)$ is the discounted stationary distribution of states under $\pi_\theta$.
\end{theorem}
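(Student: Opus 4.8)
The plan is to follow the classic Bellman-recursion argument underlying the deterministic policy gradient, transcribed into the continuous-state integral notation used here. The starting point is the identity $V^{\pi_\theta}(s) = Q^{\pi_\theta}(s, \pi_\theta(s))$ together with the Bellman expansion $Q^{\pi_\theta}(s,a) = R(s,a) + \gamma \int_{s'} \tau_a(s' \mid s)\, V^{\pi_\theta}(s')\, ds'$. Differentiating $V^{\pi_\theta}(s) = Q^{\pi_\theta}(s, \pi_\theta(s))$ with respect to $\theta$, one must be careful that $\theta$ enters in two places: through the action argument $\pi_\theta(s)$, and through the policy superscript of $Q$ (i.e.\ through all future behaviour). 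By the chain and product rules the total derivative therefore splits into two pieces:
\[
\nabla_\theta V^{\pi_\theta}(s) = \nabla_\theta \pi_\theta(s)\, \nabla_a Q^{\pi_\theta}(s,a)\big|_{a = \pi_\theta(s)} + \gamma \int_{s'} \tau_{\pi_\theta(s)}(s' \mid s)\, \nabla_\theta V^{\pi_\theta}(s')\, ds'.
\]
The ``held-$a$-fixed'' part of $\nabla_\theta Q^{\pi_\theta}$ contributes only the discounted integral of $\nabla_\theta V^{\pi_\theta}(s')$, because neither $R(s,a)$ nor the kernel $\tau_a$ depends on $\theta$ once the action is fixed; this is precisely why the explicit $\nabla_a Q$ term appears and no $\nabla_\theta R$ term survives.

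The second step is to unroll this recursion. Writing $p(s \to x, t; \pi_\theta)$ for the $t$-step transition density from $s$ to $x$ under $\pi_\theta$, repeated substitution of the identity into itself yields
\[
\nabla_\theta V^{\pi_\theta}(s) = \sum_{t=0}^\infty \gamma^t \int_{x} p(s \to x, t; \pi_\theta)\, \nabla_\theta \pi_\theta(x)\, \nabla_a Q^{\pi_\theta}(x, a)\big|_{a = \pi_\theta(x)}\, dx.
\]
Finally, taking the expectation over the initial-state distribution $\rho_0$ in the performance measure $J(\theta) = \Ex_{\pi}[V^{\pi_\theta}(s)]$ and folding the discounted visitation weights into $\rho^{\pi_\theta}(s) = \int_{s_0} \rho_0(s_0) \sum_{t=0}^\infty \gamma^t\, p(s_0 \to s, t; \pi_\theta)\, ds_0$ reproduces exactly the stated formula, since the inner gradient factor $\nabla_\theta \pi_\theta(s)\, \nabla_a Q^{\pi_\theta}(s,a)\big|_{a=\pi_\theta(s)}$ is shared across all time steps and can be pulled out once the visitation measure is assembled.

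The main obstacle is analytic rather than algebraic: every manipulation above requires interchanging $\nabla_\theta$ with an integral (the Leibniz rule) in the Bellman expansion, and interchanging it with the infinite sum produced by the unrolling. To justify these I would impose the standard regularity assumptions — that $\pi_\theta(s)$, $\nabla_\theta \pi_\theta(s)$, $R(s,a)$, $\tau_a(s' \mid s)$ and the relevant $a$-derivatives exist, are continuous, and are bounded, so that $Q^{\pi_\theta}$ is differentiable in $a$ and the integrands admit an integrable dominating function. Dominated convergence then legitimizes swapping the gradient past the integral, while the discount factor together with the ergodicity/existence assumption already invoked for $\rho^{\pi_\theta}$ (stated after Theorem~\ref{thm:stoch_pg}) guarantees that the geometric-in-$\gamma$ series converges and that $\rho^{\pi_\theta}$ is well defined. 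The deterministic case is in fact slightly more delicate than the stochastic one here, because the gradient must pass through the action argument $\pi_\theta(s)$ via $\nabla_a Q$ rather than through a density, so the smoothness of $Q^{\pi_\theta}$ in its action argument is the key hypothesis that cannot be dispensed with.
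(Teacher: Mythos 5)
Your proof is correct and takes essentially the same route as the paper: the paper states this result as background (citing Silver et al., 2014) without reproving it, but its own proof of the deterministic homomorphic policy gradient (Theorem~\ref{thm:det_hpg}) follows exactly your recipe --- differentiate $V^{\pi_\theta}(s) = Q^{\pi_\theta}(s, \pi_\theta(s))$ through the Bellman expansion with the product/chain rule and Leibniz rule to obtain the recursion with the $\nabla_\theta \pi_\theta(s)\, \nabla_a Q^{\pi_\theta}(s,a)\big|_{a=\pi_\theta(s)}$ term, unroll it via Fubini's theorem, and then integrate against the initial-state distribution to assemble the discounted visitation measure $\rho^{\pi_\theta}$. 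Your handling of the analytic caveats (Leibniz/dominated convergence for the gradient--integral interchange, boundedness for the geometric series) matches the regularity conditions the paper imposes in Appendix~\ref{sec:assumptions}, so there are no gaps.
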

Since DPG does not need to integrate over the action space, it is often more sample-efficient than the stochastic policy gradient \citep{silver2014deterministic}. However, noise needs to be manually injected during exploration as the deterministic policy does not have any inherent means of exploration. Finally, it is worth noting that due to the differentiation of the value function with respect to $a$, DPG is only applicable to continuous actions. 

\subsection{Bisimulation and Bisimulation Metrics}
Bisimulation is a fundamental equivalence relation on the state space which captures the
idea of behavioural similarity.  
It was introduced in the late 1970's and early 1980's by
\citet{Milner80,Milner89} and \citet{Park81b} in a non-probabilistic context and then
extended to probabilistic systems by~\citet{Larsen91}.  The extension to continuous state
spaces was done by \citet{Blute97} and \citet{Desharnais02}.  These models did
not involve rewards but it is a minor modification to add rewards as was done
by~\citet{givan2003equivalence}. The bisimulation relation on an MDP is formally defined as:

\begin{definition}[Bisimulation]
A \emph{bisimulation relation} on an MDP $\mathcal{M} = (\mathcal{S}, \mathcal{A}, \tau_a, R, \gamma)$ is an equivalence
relation $B$ on $\mathcal{S}$ such that if \(sBt\) holds then for any action $a$ and any equivalence
class of $C$ of $B$ we have:
\begin{itemize}[noitemsep]
\item $R(s,a) = R(t,a)$ and
\item \( \tau_a(C | s) = \tau_a(C | t)\).
\end{itemize}
If there exists such a relation between two states $s$ and $t$ we say that $s$ and $t$ are
bisimilar and write \(s \sim t\).
\end{definition}
It is possible to define bisimulation as the greatest fixed point of a suitable operator
on the complete lattice of equivalence relations on $\mathcal{S}$~\citep{milner1989communication}. Bisimulation is not robust to small perturbations in the system parameters.  In a
quantitative setting like MDPs we need to use metrics which give a quantitative notion to
similarity.

In order to define a ``metric'' which can be viewed as a quantitative version of
bisimulation, it is natural to think of a pseudometric with the property that its kernel is
the bisimulation equivalence relation.  This is defined through a fixed-point
construction.  We equip $\mathfrak{M}$, the space of $1$-bounded pseudometrics on $\mathcal{S}$, with the following
metric:
\[ \Delta(m_1,m_2) := \sup_{x,y\in \mathcal{S}} |m_1(x,y)-m_2(x,y)|. \]
Here, $m_1,m_2$ are elements of $\mathfrak{M}$, \emph{i.e.}\ $1$-bounded pseudometrics. We then define an operator called $T_K: \mathfrak{M}\to\mathfrak{M}$ as follows:
\[ T_K(m)(x,y) = \max_{a\in\cA}[|R(x,a)-R(y,a)| + \gamma W_1(m)(\tau_a(x),\tau_a(y))].\]
Here $\tau_a(x)$ represents the probability distribution over the state space when the
system executes an $a$-transition starting from $x$ and similarly for $\tau_a(y)$.  The
metric $W_1$ on probability distributions is the well-known Kantorovich
metric\footnote{More often called the ``Wasserstein'' metric for reasons that have no
  historical validity.} which depends on $m$.  One can readily show that the space $\mathfrak{M}$
equipped with $\Delta$ is a complete metric space and that the function or operator
\(T_K\) is contractive with respect to the metric $\Delta$.  Thus, by the Banach
fixed-point theorem, it has a unique fixed point.  This is the bismulation
metric\footnote{In \citet{Ferns05} a different fixed-point theorem based on lattice theory
  was used.}.

\subsection{Finite MDP Homomorphisms}
Closely related to the concept of behavioural equivalence of states in MDPs are model
minimization methods, which identify reductions in the original MDP to obtain an
equivalent, smaller MDP.  This gave rise to the notion of MDP homomorphism, originally
proposed by \citet{ravindran2001symmetries}.  We will present the definitions and various
results about MDP homomorphisms assuming the state and action spaces are finite.

\begin{definition}[MDP Homomorphism]
    An \emph{MDP homomorphism} $h$ between MDPs $\mathcal{M} \!=\! (\mathcal{S}, \mathcal{A}, \tau_a, R, \gamma)$ and
    $\mathcal{\overline{M}} \!=\! (\mathcal{\overline{S}}, \mathcal{\overline{A}}, \overline{\tau}_{\overline{a}}, \overline{R}, \overline{\gamma})$ is a tuple of
    surjective maps $h \!=\! (f, g_s)$ where $f\!:\! \mathcal{S} \!\to\! \mathcal{\overline{S}}$ and $g_s\!:\! \mathcal{A}
    \!\to\! \mathcal{\overline{A}}$ for each $s \in \mathcal{S}$ such that: 
    \begin{enumerate}[noitemsep]
        \item $R(s,a) = \overline{R}(f(s), g_s(a))$  for every $s \in \mathcal{S}, a \in \mathcal{A}$;
        \item For every $s, s' \in \mathcal{S}$ and $a \in \mathcal{A}$, 
        \begin{equation}
        \label{eq:finite_hom_transitions}
            \overline{\tau}_{g_s(a)}(f(s') | f(s)) = \sum_{s'' \in [s']_{B_h\mid \mathcal{S}}} \tau_a(s'' | s),
        \end{equation}
        where $B_h$ is the partition of $\mathcal{S} \times \mathcal{A}$ induced by the equivalence relation of $h$, $B_h \mid \mathcal{S}$ is the projection of $B_h$ onto $\mathcal{S}$, and $[s']_{B_h \mid \mathcal{S}}$ is the partition of $B_h \mid \mathcal{S}$ containing $s'$.
    \end{enumerate}
\end{definition}
In other words, the probability of transitioning between $f(s)$ and $f(s')$ in the image MDP $\mathcal{\overline{M}}$ under action $g_s(a)$ equals the probability of transitioning from $s$ to the subset $[s']_{B_h \mid \mathcal{S}}$ in the original MDP $\mathcal{M}$ under action $a$. Figure \ref{fig:intro_b} shows the commutative diagram of MDP homomorphisms. A key property of MDP homomorphisms is the \emph{optimal value equivalence}, showing the optimal value function is preserved under this mapping.

\begin{theorem}[\citet{ravindran2001symmetries}]
\label{thm:optvalue_discrete}
Let $h$ be an MDP homomorphism from $\mathcal{M} = (\mathcal{S}, \mathcal{A}, \tau_a, R, \gamma)$ to $\mathcal{\overline{M}} = (\mathcal{\overline{S}}, \mathcal{\overline{A}}, \overline{\tau}_{\overline{a}}, \overline{R}, \overline{\gamma})$.  Then for any $(s,a) \in \mathcal{S} \times \mathcal{A}$, we have:
$$
    Q^*(s,a) = \overline{Q}^*(f(s), g_s(a)).
$$
\end{theorem}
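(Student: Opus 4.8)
The plan is to exploit the fact that $\overline{Q}^*$ is the unique fixed point of the Bellman optimality operator $\overline{T}^*$ on $\mathcal{\overline{M}}$, and to show that the \emph{lifted} function $\tilde{Q}(s,a) := \overline{Q}^*(f(s), g_s(a))$ is a fixed point of the Bellman optimality operator $T^*$ on $\mathcal{M}$. Since $T^*$ is a $\gamma$-contraction on the (complete) space of bounded functions for $\gamma < 1$, it has a unique fixed point, namely $Q^*$; hence $\tilde{Q} = Q^*$, which is exactly the claim. Equivalently, one can run value iteration on both MDPs and prove $Q_n(s,a) = \overline{Q}_n(f(s), g_s(a))$ by induction on $n$ and pass to the limit, with identical algebra at each step.

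First I would write out the target identity $T^*\tilde{Q} = \tilde{Q}$, i.e.
$$\overline{Q}^*(f(s), g_s(a)) \stackrel{?}{=} R(s,a) + \gamma \sum_{s'} \tau_a(s'|s)\, \max_{a'} \overline{Q}^*(f(s'), g_{s'}(a')),$$
and simplify the right-hand side using the two defining properties of $h$. The reward term is immediate from the invariance $R(s,a) = \overline{R}(f(s), g_s(a))$. For the continuation term, I would replace the inner maximization over the original action space by one over the abstract action space: because each $g_{s'}\!:\! \mathcal{A} \to \mathcal{\overline{A}}$ is surjective, as $a'$ ranges over $\mathcal{A}$ the image $g_{s'}(a')$ ranges over all of $\mathcal{\overline{A}}$, so $\max_{a'} \overline{Q}^*(f(s'), g_{s'}(a')) = \max_{\bar a'} \overline{Q}^*(f(s'), \bar a') = \overline{V}^*(f(s'))$, a quantity depending on $s'$ only through $f(s')$.

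Next I would regroup the sum over next-states by the fibers of $f$. Writing $[s']_{B_h\mid\mathcal{S}} = f^{-1}(f(s'))$ and collecting terms with a common abstract image $\bar s'$, the continuation term becomes $\sum_{\bar s'} \overline{V}^*(\bar s') \sum_{s'' \in f^{-1}(\bar s')} \tau_a(s''|s)$. The transition equivariance condition~\eqref{eq:finite_hom_transitions} identifies the inner fiber-sum with $\overline{\tau}_{g_s(a)}(\bar s' \mid f(s))$, so the continuation term collapses to $\gamma \sum_{\bar s'} \overline{\tau}_{g_s(a)}(\bar s' \mid f(s))\, \overline{V}^*(\bar s')$. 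Adding back the reward term, the right-hand side is exactly $\overline{R}(f(s), g_s(a)) + \gamma \sum_{\bar s'} \overline{\tau}_{g_s(a)}(\bar s'\mid f(s))\, \overline{V}^*(\bar s') = \overline{T}^* \overline{Q}^*(f(s), g_s(a)) = \overline{Q}^*(f(s), g_s(a))$, establishing $T^*\tilde Q = \tilde Q$.

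The main obstacle is the $\max$-interchange step: one must justify turning the maximization over $\mathcal{A}$ in the original Bellman operator into the maximization over $\mathcal{\overline{A}}$ defining $\overline{V}^*$, and this is precisely where surjectivity of each $g_{s}$ is essential—a merely injective encoder would only yield an inequality. A secondary point to verify is that the fiber partition used in the regrouping coincides with $B_h\mid\mathcal{S}$, i.e. that $s \sim_{B_h\mid\mathcal{S}} t \iff f(s) = f(t)$, so that~\eqref{eq:finite_hom_transitions} applies verbatim. Finiteness of $\mathcal{S}$ and $\mathcal{A}$ keeps all sums and maxima well-defined and makes the contraction argument go through for $\gamma < 1$; it is exactly this finiteness that is unavailable in the continuous setting treated later, where genuinely new tools are required.
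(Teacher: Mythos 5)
Your proposal is correct, and every step checks out: reward invariance handles the immediate term, surjectivity of each $g_{s'}$ is exactly what licenses replacing $\max_{a'\in\mathcal{A}}$ by $\max_{\bar a'\in\overline{\mathcal{A}}}$, and the fiber regrouping is legitimate because the projected equivalence classes satisfy $[s']_{B_h\mid\mathcal{S}} = f^{-1}(f(s'))$, so \eqref{eq:finite_hom_transitions} applies as stated. The route differs somewhat from the one the paper follows: this theorem is quoted from \citet{ravindran2001symmetries} as background, and the proof style the paper actually adopts (for the continuous analogue, Theorem~\ref{thm:opt_continuous}, explicitly modeled on the discrete argument) is induction on the value-iteration sequence $Q_m$, establishing $Q_m(s,a) = \overline{Q}_m(f(s), g_s(a))$ at every stage and passing to the limit. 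Your primary argument instead verifies in one shot that the pulled-back function $\tilde Q(s,a) = \overline{Q}^*(f(s), g_s(a))$ is a fixed point of $T^*$ and invokes Banach uniqueness; the per-step algebra is identical, but the wrapper differs. The fixed-point formulation buys brevity and avoids any limiting argument beyond the contraction property itself; the induction formulation is the one that transports cleanly to the continuous setting, where the re-indexing of the sum becomes a change of variables under the pushforward measure and where one wants the finite-horizon identity $Q_m = \overline{Q}_m \circ h$ as an intermediate result in its own right. Both versions, like the paper's, genuinely need $\gamma < 1$ (despite the definition allowing $\gamma \in (0,1]$), since otherwise neither the contraction nor the convergence $Q_m \to Q^*$ is available. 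Your closing remarks correctly isolate the two load-bearing hypotheses: surjectivity (injectivity alone yields only an inequality in the max-interchange) and finiteness (which is what fails in Section~\ref{sec:cont_mdp_homomorphism} and forces the measure-theoretic machinery there).
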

The optimal policies of an MDP and its image under an MDP homomorphism are also closely related.  Given a policy on the image MDP, we can define a new, lifted policy on the original MDP that has the ``equivalent behaviour''.

\begin{definition}[Lifted Policy]
\label{def:lifted_finite}
Let $h$ be an MDP homomorphism from $M \!=\! (S, A, \tau_a, R, \gamma)$ to $\mathcal{\overline{M}} \!=\! (\mathcal{\overline{S}}, \mathcal{\overline{A}}, \overline{\tau}_{\overline{a}}, \overline{R}, \overline{\gamma})$, and let $\overline{\pi} : \mathcal{\overline{S}} \to \Dist(\mathcal{\overline{A}})$ be a policy on $\mathcal{\overline{M}}$.  Then $\overline{\pi}$ lifted to $\mathcal{M}$ is a policy $\pi^\uparrow : \mathcal{S} \to \Dist(\mathcal{A})$ such that for any $(s,a) \in \mathcal{S} \times \mathcal{A}$, we have:
$$
    \pi^\uparrow(a | s) = \frac{\overline{\pi}(g_s(a) | f(s))}{|g_s^{-1}(\{g_s(a)\})|}.
$$
\end{definition}
Note that for these results to hold, it suffices for the lifted policy to satisfy:
\begin{equation}
\label{eq:liftedpushforward_finite}
   \sum_{a \in g_s^{-1}(\{g_s(a)\})} \pi^\uparrow(a | s) = \overline{\pi}(g_s(a) | f(s)) \quad \forall \; s \in \mathcal{S} 
\end{equation}
but in order to make the lifted policy unique, \citet{ravindran2001symmetries} choose to uniformly spread the probability of taking $g_s(a)$ from $f(s)$ across all actions $a'$ satisfying $g_s(a) = g_s(a')$.  We have the following result that the lifted policy of the optimal policy of $\mathcal{\overline{M}}$ is an optimal policy for $\mathcal{M}$:
\begin{theorem}[\citet{ravindran2001symmetries}]
Let $h$ be an MDP homomorphism from $\mathcal{M} \!=\! (\mathcal{S}, \mathcal{A}, \tau_a, R, \gamma)$ to $\mathcal{\overline{M}} \!=\! (\mathcal{\overline{S}}, \mathcal{\overline{A}}, \overline{\tau}_{\overline{a}}, \overline{R}, \overline{\gamma})$, and let $\overline{\pi}^* \!:\! \mathcal{\overline{S}} \!\to\! \Dist(\mathcal{\overline{A}})$ be an optimal policy on $\mathcal{\overline{M}}$.  Then the lifted policy $\pi^* : \mathcal{S} \to \Dist(\mathcal{A})$ is an optimal policy for $\mathcal{M}$.
\end{theorem}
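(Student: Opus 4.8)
The plan is to show that the lifted optimal policy $\pi^*$ achieves the optimal value at every state, using the optimal value equivalence of Theorem~\ref{thm:optvalue_discrete} as the engine. First I would recall that by Theorem~\ref{thm:optvalue_discrete} we have $Q^*(s,a) = \overline{Q}^*(f(s), g_s(a))$ for all $(s,a)$, and correspondingly $V^*(s) = \overline{V}^*(f(s))$ (the latter follows by maximizing over actions on both sides and using surjectivity of $g_s$, so that the range of $g_s(a)$ as $a$ varies is all of $\mathcal{\overline{A}}$). The goal is then to prove $V^{\pi^*}(s) = V^*(s)$ for every $s$, which certifies optimality.

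The key computation is to evaluate the value of the lifted policy directly. I would write the value of $\pi^*$ as the expectation of $Q^*$ under the lifted action distribution:
\begin{equation*}
V^{\pi^*}(s) = \sum_{a \in \mathcal{A}} \pi^*(a|s)\, Q^*(s,a) = \sum_{a \in \mathcal{A}} \frac{\overline{\pi}^*(g_s(a)|f(s))}{|g_s^{-1}(\{g_s(a)\})|}\, \overline{Q}^*(f(s), g_s(a)),
\end{equation*}
where I have substituted both the definition of the lifted policy (Definition~\ref{def:lifted_finite}) and the value equivalence. The crucial step is to regroup the sum over $a \in \mathcal{A}$ according to the fibers of $g_s$: partition $\mathcal{A}$ into preimage classes $g_s^{-1}(\{\overline{a}\})$ for each $\overline{a} \in \mathcal{\overline{A}}$. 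On each such class the summand $\overline{\pi}^*(\overline{a}|f(s))\,\overline{Q}^*(f(s),\overline{a})/|g_s^{-1}(\{\overline{a}\})|$ is constant in $a$, and there are exactly $|g_s^{-1}(\{\overline{a}\})|$ terms, so the normalizing denominator cancels the cardinality and the inner sum collapses to a single term per $\overline{a}$. This telescoping yields $V^{\pi^*}(s) = \sum_{\overline{a} \in \mathcal{\overline{A}}} \overline{\pi}^*(\overline{a}|f(s))\,\overline{Q}^*(f(s),\overline{a}) = \overline{V}^{\overline{\pi}^*}(f(s))$.

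Finally, since $\overline{\pi}^*$ is optimal on $\mathcal{\overline{M}}$, we have $\overline{V}^{\overline{\pi}^*}(f(s)) = \overline{V}^*(f(s)) = V^*(s)$, closing the argument. I expect the main subtlety to be justifying that $V^{\pi^*}$ as I have written it is genuinely a Bellman-consistent value function rather than a one-step expression: strictly, the identity $V^{\pi^*}(s) = \sum_a \pi^*(a|s) Q^{\pi^*}(s,a)$ uses $Q^{\pi^*}$, not $Q^*$, so the cleanest route is to verify that $V^*$ (equivalently $f$ pulled back from $\overline{V}^*$) satisfies the Bellman \emph{evaluation} equation $T^{\pi^*} V^* = V^*$ for the lifted policy, and then invoke uniqueness of the fixed point of $T^{\pi^*}$ to conclude $V^{\pi^*} = V^*$. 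This requires checking that the one-step look-ahead under $\pi^*$ reproduces $V^*$, which in turn uses the transition-matching condition~\eqref{eq:finite_hom_transitions} together with the fiber-collapsing identity above applied to the reward and expected next-state-value terms rather than directly to $Q^*$; this consistency verification, and carefully tracking the equivalence-class sums in the transition term, is the genuine obstacle rather than the clean telescoping of the value itself.
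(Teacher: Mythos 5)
The paper itself offers no proof of this statement: it is quoted as background from \citet{ravindran2001symmetries}, so there is no in-paper argument to compare yours against; I evaluate it on its own terms. Your proof is correct, and crucially you spot the one real trap yourself: the naive computation substitutes $Q^*$ where the definition of $V^{\pi^*}$ demands $Q^{\pi^*}$, which would be circular, and the standard repair is exactly the one you name --- show $T^{\pi^*}V^* = V^*$ and invoke uniqueness of the fixed point of the contraction $T^{\pi^*}$. One refinement: the ``genuine obstacle'' you anticipate in that verification dissolves on inspection. You do not need to re-invoke the transition-matching condition \eqref{eq:finite_hom_transitions}, nor handle the reward and next-state terms separately, because the Bellman optimality identity
\begin{equation*}
Q^*(s,a) = R(s,a) + \gamma \sum_{s'} \tau_a(s'|s)\, V^*(s')
\end{equation*}
holds intrinsically in $\mathcal{M}$, so $T^{\pi^*}V^*(s) = \sum_{a} \pi^*(a|s)\, Q^*(s,a)$ immediately; the homomorphism conditions are already packaged inside Theorem~\ref{thm:optvalue_discrete}, which you then apply wholesale. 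From there your fiber-collapsing computation gives $T^{\pi^*}V^*(s) = \sum_{\overline{a}} \overline{\pi}^*(\overline{a}|f(s))\, \overline{Q}^*(f(s),\overline{a})$, and this equals $\overline{V}^*(f(s))$ because an optimal policy of a finite MDP is supported on the maximizers of $\overline{Q}^*$ --- equivalently, $T^{\overline{\pi}^*}\overline{V}^* = \overline{V}^*$ by the same fixed-point property, applied in $\overline{\mathcal{M}}$ where (unlike in $\mathcal{M}$) optimality is known, which is precisely the asymmetry that makes the argument non-circular. Combined with $V^*(s) = \overline{V}^*(f(s))$, which you correctly derive from Theorem~\ref{thm:optvalue_discrete} and surjectivity of $g_s$, this yields $T^{\pi^*}V^* = V^*$ and hence $V^{\pi^*} = V^*$, as required.
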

Furthermore, \citet{rezaei2022continuous} show that given this definition of a lifted policy, we have a \textit{value equivalence} result, showing that all value functions---not just the optimal one--- are preserved under the MDP homomorphism mapping.

\begin{theorem}[\citet{rezaei2022continuous}]
Let $h$ be an MDP homomorphism from $\mathcal{M} = (\mathcal{S}, \mathcal{A}, \tau_a, R, \gamma)$ to $\mathcal{\overline{M}} = (\mathcal{\overline{S}}, \mathcal{\overline{A}}, \overline{\tau}_{\overline{a}}, \overline{R}, \overline{\gamma})$.  Then for any $(s,a) \in \mathcal{S} \times \mathcal{A}$, abstract policy $\overline{\pi} : \mathcal{\overline{S}} \to \Dist(\mathcal{\overline{A}})$, and its lifted policy $\pi^\uparrow : \mathcal{S} \to \Dist(\mathcal{A})$, we have:
$$
    Q^{\pi^\uparrow}(s,a) = \overline{Q}^{\overline{\pi}}(f(s), g_s(a)).
$$
\end{theorem}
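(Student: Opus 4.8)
The plan is to exploit the fact that the action-value function $Q^{\pi^\uparrow}$ is the unique fixed point of the Bellman evaluation operator $T^{\pi^\uparrow}$ on $\mathcal{M}$, which is a $\gamma$-contraction on bounded functions $\mathcal{S}\times\mathcal{A}\to\mathbb{R}$ since $\gamma<1$. Rather than compute $Q^{\pi^\uparrow}$ directly, I would define the candidate $\tilde{Q}(s,a) := \overline{Q}^{\overline{\pi}}(f(s), g_s(a))$, which pulls the abstract value back through the homomorphism, and show that $\tilde{Q}$ satisfies the Bellman equation $\tilde{Q} = T^{\pi^\uparrow}\tilde{Q}$. Uniqueness of the fixed point then forces $\tilde{Q} = Q^{\pi^\uparrow}$, which is exactly the claimed identity.

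To verify $\tilde{Q} = T^{\pi^\uparrow}\tilde{Q}$, I would expand $T^{\pi^\uparrow}\tilde{Q}(s,a) = R(s,a) + \gamma \sum_{s'}\tau_a(s'|s)\sum_{a'}\pi^\uparrow(a'|s')\,\overline{Q}^{\overline{\pi}}(f(s'),g_{s'}(a'))$ and collapse the two inner summations onto the abstract spaces. For the action sum, I group the $a'$ by their image $\bar{a}'=g_{s'}(a')$; since $\overline{Q}^{\overline{\pi}}(f(s'),\cdot)$ is constant on each fiber $g_{s'}^{-1}(\{\bar{a}'\})$, the defining property of the lifted policy (Eq.~\ref{eq:liftedpushforward_finite}), $\sum_{a'\in g_{s'}^{-1}(\{\bar{a}'\})}\pi^\uparrow(a'|s') = \overline{\pi}(\bar{a}'|f(s'))$, reduces the action sum to $\sum_{\bar{a}'}\overline{\pi}(\bar{a}'|f(s'))\,\overline{Q}^{\overline{\pi}}(f(s'),\bar{a}')$. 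For the state sum, I group the $s'$ by their image $\bar{s}'=f(s')$ and invoke the transition-equivariance condition (Eq.~\ref{eq:finite_hom_transitions}), $\sum_{s'\in f^{-1}(\{\bar{s}'\})}\tau_a(s'|s) = \overline{\tau}_{g_s(a)}(\bar{s}'|f(s))$, to rewrite the state sum as a sum over $\bar{s}'\in\mathcal{\overline{S}}$ against the abstract kernel. Finally, substituting the reward invariance $R(s,a)=\overline{R}(f(s),g_s(a))$ and $\gamma=\overline{\gamma}$, the right-hand side becomes the abstract Bellman backup $\overline{R}(f(s),g_s(a)) + \overline{\gamma}\sum_{\bar{s}'}\overline{\tau}_{g_s(a)}(\bar{s}'|f(s))\sum_{\bar{a}'}\overline{\pi}(\bar{a}'|\bar{s}')\overline{Q}^{\overline{\pi}}(\bar{s}',\bar{a}')$, which equals $\overline{Q}^{\overline{\pi}}(f(s),g_s(a))=\tilde{Q}(s,a)$.

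The main obstacle is bookkeeping rather than conceptual: I must ensure the re-indexing of the state sum is legitimate, which rests on identifying the block $[s']_{B_h\mid\mathcal{S}}$ with the fiber $f^{-1}(\{f(s')\})$ so that the equivariance condition applies verbatim to the grouped sum. I would verify this identification from the definition of $B_h$ as the partition induced by $h$ together with its projection onto $\mathcal{S}$, and confirm that summing the equivariance equation over a block recovers the full abstract transition probability. The parallel care is needed for the action grouping, where one must check that distinct $\bar{a}'$ yield disjoint fibers that exhaust $\mathcal{A}$ so that the lifted-policy pushforward identity applies block by block.

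An alternative, if one prefers to avoid the contraction machinery, is to prove the finite-horizon statement $Q_n^{\pi^\uparrow}(s,a)=\overline{Q}_n^{\overline{\pi}}(f(s),g_s(a))$ by induction on $n$, using exactly the same two reindexing steps in the inductive step, and then pass to the limit $n\to\infty$ using $\gamma$-discounting to bound the tail. This replaces the appeal to fixed-point uniqueness with an explicit convergence argument, but the combinatorial heart of the proof is identical.
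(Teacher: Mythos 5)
Your proof is correct, but it takes a genuinely different route from the paper's. The paper's argument for this statement (inherited from \citet{rezaei2022continuous}, and reproduced in the paper's own proof of the continuous Value Equivalence theorem in Section~\ref{sec:policy_lifting_value_eqv}) is an induction on finite-horizon iterates: it defines $Q^{\pi^\uparrow}_m$ by iterating the Bellman evaluation backup from $Q^{\pi^\uparrow}_0 = 0$, proves $Q^{\pi^\uparrow}_m(s,a) = \overline{Q}^{\overline{\pi}}_m(f(s), g_s(a))$ by induction, and then passes to the limit $m \to \infty$. You instead pull the abstract value function back through the homomorphism, verify that $\tilde{Q}(s,a) = \overline{Q}^{\overline{\pi}}(f(s),g_s(a))$ is a fixed point of the evaluation operator $T^{\pi^\uparrow}$, and conclude by uniqueness of the fixed point of a $\gamma$-contraction. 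The combinatorial heart is identical in both proofs: grouping actions by the fibers of $g_{s'}$ and invoking the lifted-policy condition \eqref{eq:liftedpushforward_finite}, and grouping states by the fibers of $f$ and invoking transition equivariance \eqref{eq:finite_hom_transitions} under the identification $[s']_{B_h\mid\mathcal{S}} = f^{-1}(\{f(s')\})$, which indeed holds since $B_h\mid\mathcal{S}$ is exactly the partition induced by $f$. The difference is only in how this Bellman-level identity is upgraded to equality of value functions: your fixed-point route is arguably cleaner (one verification plus uniqueness, no limiting argument), whereas the paper's iterate-based route is the one that transfers directly to the continuous setting of Section~\ref{sec:cont_mdp_homomorphism}, where the two re-indexing steps become applications of the change-of-variables formula for pushforward measures and where setting up the contraction/uniqueness machinery on general measurable state-action spaces would require extra care. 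One caveat applying to both arguments: the paper's MDP definition allows $\gamma \in (0,1]$, and both your contraction step and the paper's convergence $Q^{\pi^\uparrow}_m \to Q^{\pi^\uparrow}$ genuinely require $\gamma < 1$. Your stated alternative (induction on $Q_n$ with a discounted tail bound) is precisely the paper's proof.
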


\section{Continuous MDP Homomorphisms}
\label{sec:cont_mdp_homomorphism}
\label{sec:cont_MDP_homomorphism}
Our introduction of MDP homomorphisms in the previous section was strictly applicable where the state and action spaces were finite. In this section, we will formalize MDP homomorphisms for general continuous domains. First, we define continuous MDPs and state our underlying assumptions, which require care regarding measurability and differentiability of spaces.

\begin{definition}[Continuous MDP]
    \label{def:cont_mdp}
  A \emph{continuous Markov decision process (MDP)} is a $6$-tuple:
  \[\mathcal{M} = (\mathcal{S},\Sigma,\mathcal{A},\forall a\in \mathcal{A}\;\; \tau_a:\mathcal{S}\x\Sigma\to[0,1],R:\mathcal{S}\x \mathcal{A}\to \R, \gamma)\] where $\mathcal{S}$, the state space is assumed to be an appropriate topological space, $\Sigma$ is a $\sigma$-algebra on $\mathcal{S}$\footnote{Usually the Borel algebra.}, $\mathcal{A}$, the space of \emph{actions}, is a
  locally compact metric space, usually taken to be a subset of $\R^n$, $\tau_a$ is
  the transition probability kernel for each possible action $a$, for each fixed $s$,
  $\tau_a(\cdot|s)$ is a probability distribution on $\Sigma$ while $R$ is the reward
  function, and $\gamma$ is the discount factor. Furthermore, for all $s \in \mathcal{S}$ and $B \in \Sigma$ the map $a \mapsto \tau_a(B|s)$ is smooth.
\end{definition}
The last assumption is required for differentiability with respect to actions $a$, which is needed in Section \ref{sec:homomorphic_pg} for deriving the homomorphic policy gradient theorem. 

Probability theory on continuous spaces works well when the underlying space is assumed to be Polish (see Appendix \ref{supp:math_tools} for definitions) but we do not need the properties of Polish spaces for our results.  The assumption on the action space is needed for the proof that policies can be lifted; it is possible that this could be proved with different assumptions but locally compact metric spaces are general enough to cover any example we have seen.

Next we will define continuous MDP homomorphisms and establish results for both optimal value equivalence and value equivalence.
\begin{definition}[Continuous MDP Homomorphism]
\label{def:cont_mdp_homo}
    A \emph{continuous MDP homomorphism} is a map $h = ( f, g_s ): \mathcal{M} \to \overline{\mathcal{M}}$ where $f: \mathcal{S} \to \overline{\mathcal{S}}$ and for every $s \in \mathcal{S}$, $g_s: \mathcal{A} \to \overline{\mathcal{A}}$ are measurable, surjective maps such that the following hold:
    \begin{align}
        \text{Invariance of reward: }& \overline{R}(f(s), g_s(a)) = R(s,a) \qquad \forall s \in \mathcal{S}, a \in \mathcal{A} \\
        \text{Equivariance of transitions: }& \overline{\tau}_{g_s(a)}(\overline{B}| f(s)) = \tau_a(f^{-1}(\overline{B})| s) \qquad \forall \; s \in \mathcal{S}, a \in \mathcal{A}, \overline{B} \in \overline{\Sigma}
    \end{align}
\end{definition}
Note that if $g_s$ is the identity map, the second condition reduces to $\overline{\tau}_a(\overline{B} | f(s)) \!=\! \tau_a(f^{-1}(\overline{B}) | s)$ which is simply the condition for preservation of transition probabilities as used in bisimulation \citep{Desharnais02}.

The condition on the rewards translates directly from the finite case. The equivariance of transitions is defined using the $\sigma$-algebra defined on the image MDP; it states that the measure $\overline{\tau}_{g_s(a)}(\cdot| f(s))$ is the pushforward measure of $\tau_a(\cdot | s)$ under the state mapping $f$. In the results to follow, we will see the reason we require this condition.

\subsection{Optimal Value Equivalence}

In this continuous setting, we will show that optimal value equivalence still holds. The proof is similar to Theorem~\ref{thm:optvalue_discrete}, however, we utilize the change of variables formula (see Theorem \ref{thm:cov} in Appendix \ref{supp:math_tools}) to change the domain of integration in the continuous Bellman equation instead of re-indexing the summation.

\begin{theorem}[Optimal Value Equivalence]
\label{thm:opt_continuous}
Let $\overline{\mathcal{M}} = (\overline{\mathcal{S}}, \overline{\Sigma}, \overline{\mathcal{A}}, \overline{\tau}_{\overline{a}}, \overline{R}, \overline{\gamma})$ be the image of a continuous MDP homomorphism $h = (f, g_s)$ from $\mathcal{M} = (\mathcal{S}, \Sigma, \mathcal{A}, \tau_a, R, \gamma)$. Then for any $(s, a) \in \mathcal{S} \times \mathcal{A}$ we have:
$$Q^*(s,a) = \overline{Q}^*(f(s), g_s(a)),$$
where $Q^*,  \overline{Q}^*$ are the optimal action-value functions for $\mathcal{M}$ and $\overline{\mathcal{M}}$, respectively.
\end{theorem}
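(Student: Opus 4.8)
The plan is to exhibit the function $(s,a)\mapsto\overline{Q}^*(f(s),g_s(a))$ as the unique fixed point of the Bellman optimality operator on $\mathcal{M}$, thereby identifying it with $Q^*$. Working with the action-value form of the operator,
$$T^*Q(s,a)=R(s,a)+\gamma\int_{\mathcal{S}}\sup_{a'\in\mathcal{A}}Q(s',a')\,\tau_a(ds'\mid s),$$
I would set $\widetilde{Q}(s,a):=\overline{Q}^*(f(s),g_s(a))$ and compute $T^*\widetilde{Q}$, aiming to recover $\widetilde{Q}$ via the abstract Bellman optimality equation. Since $\gamma<1$ and rewards are bounded, $T^*$ is a contraction on bounded measurable functions, so its fixed point is unique; measurability of $\widetilde{Q}$ follows from that of $f$, $g_s$, and $\overline{Q}^*$.

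First I would reduce the inner supremum. Because $g_{s'}:\mathcal{A}\to\overline{\mathcal{A}}$ is surjective, the image $\{g_{s'}(a'):a'\in\mathcal{A}\}$ is all of $\overline{\mathcal{A}}$, whence
$$\sup_{a'\in\mathcal{A}}\widetilde{Q}(s',a')=\sup_{a'\in\mathcal{A}}\overline{Q}^*(f(s'),g_{s'}(a'))=\sup_{\overline{a}'\in\overline{\mathcal{A}}}\overline{Q}^*(f(s'),\overline{a}')=\overline{V}^*(f(s')).$$
This is exactly the step that replaces the re-indexing of the sum in the finite proof: surjectivity of the action map lets me pass from an optimization over $\mathcal{A}$ to one over $\overline{\mathcal{A}}$. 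The resulting integrand is the pullback $\overline{V}^*\circ f$ of the abstract optimal value function, which is measurable and bounded.

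The central step is then the change of variables. The equivariance condition says precisely that $\overline{\tau}_{g_s(a)}(\cdot\mid f(s))$ is the pushforward of $\tau_a(\cdot\mid s)$ under $f$, so Theorem~\ref{thm:cov} applied to $\overline{V}^*$ gives
$$\int_{\mathcal{S}}\overline{V}^*(f(s'))\,\tau_a(ds'\mid s)=\int_{\overline{\mathcal{S}}}\overline{V}^*(\overline{s}')\,\overline{\tau}_{g_s(a)}(d\overline{s}'\mid f(s)).$$
Using reward invariance $R(s,a)=\overline{R}(f(s),g_s(a))$ on the leading term, I obtain
$$T^*\widetilde{Q}(s,a)=\overline{R}(f(s),g_s(a))+\gamma\int_{\overline{\mathcal{S}}}\overline{V}^*(\overline{s}')\,\overline{\tau}_{g_s(a)}(d\overline{s}'\mid f(s))=\overline{Q}^*(f(s),g_s(a))=\widetilde{Q}(s,a),$$
the middle equality being the Bellman optimality equation on $\overline{\mathcal{M}}$. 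Hence $\widetilde{Q}$ is a fixed point of $T^*$, and uniqueness forces $\widetilde{Q}=Q^*$, which is the claim.

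The step I expect to be the main obstacle is the change of variables: I must verify that Theorem~\ref{thm:cov} genuinely applies to the transition kernel viewed as the measure $\tau_a(\cdot\mid s)$ with pushforward induced by the measurable map $f$, and that $\overline{V}^*$ is measurable and integrable so that both sides are well-defined. The surjectivity reduction of the supremum likewise requires care that I work with suprema rather than maxima throughout, since on a non-compact action space the optimum need not be attained; as no part of the argument uses attainment, this is a matter of bookkeeping rather than a genuine difficulty.
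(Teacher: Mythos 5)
Your proposal is correct, and it reaches the result by a different wrapper around the same core computation. The paper proves the claim by induction on the value-iteration iterates: it defines $Q_m$ and $\overline{Q}_m$, shows $Q_m(s,a)=\overline{Q}_m(f(s),g_s(a))$ for every $m$ using exactly your three ingredients (reward invariance, surjectivity of $g_{s'}$ to pass the supremum from $\mathcal{A}$ to $\overline{\mathcal{A}}$, and the change-of-variables formula for the pushforward kernel), and then lets $m\to\infty$ using $Q_m\to Q^*$. You instead run that identical algebra once, inside a single application of the Bellman optimality operator $T^*$, to show that $\widetilde{Q}(s,a):=\overline{Q}^*(f(s),g_s(a))$ is a fixed point of $T^*$, and then invoke uniqueness of the fixed point. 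Neither route is more elementary than the other, since the paper's limit $Q_m\to Q^*$ and your uniqueness claim both rest on the same Banach fixed-point/contraction property of $T^*$ (and both therefore need bounded rewards and $\gamma<1$, even though the paper's MDP definition nominally allows $\gamma=1$). What your version buys is economy: no induction bookkeeping and no passage to the limit, and after the surjectivity reduction your integrand is $\overline{V}^*\circ f$, which makes the measurability of the integrand transparent given measurability of $\overline{V}^*$. What the paper's version buys is that it mirrors the finite-state proof of \citet{ravindran2001symmetries} step for step, and it only ever applies the operator to the iterates $Q_m$ rather than requiring upfront that $\widetilde{Q}$ lies in the complete space of bounded measurable functions on which $T^*$ is a contraction. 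Both proofs share the same unaddressed technical caveats (joint measurability of $(s,a)\mapsto g_s(a)$, measurability of the optimal value functions), so your proposal is no less rigorous than the paper's own argument.
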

\begin{proof}
We will first prove the following claim:
\begin{claim}
    For $m \geq 1$, define the sequence $Q_m: \mathcal{S} \times \mathcal{A} \to \R$ as:
    $$Q_m(s,a) = R(s,a) + \gamma \int_{s' \in \mathcal{S}}\tau_a(ds' | s) \sup_{a' \in \mathcal{A}}Q_{m-1}(s', a')$$
    and $Q_0(s,a) = R(s,a)$. Define the sequence $\overline{Q}_m: \overline{\mathcal{S}} \times \overline{\mathcal{A}} \to \R$ analogously. Then for any $(s, a) \in \mathcal{S} \times \mathcal{A}$ we claim:
    $$Q_m(s,a) = \overline{Q}_m(f(s), g_s(a)).$$
\end{claim}
We will prove this claim by induction on $m$. The base case $m= 0$ follows from the reward invariance property of continuous MDP homomorphisms:
$$Q_0(s,a) = R(s,a) = \overline{R}(f(s), g_s(a)) = \overline{Q}_0(f(s), g_s(a)).$$
For the inductive case, note that:
\begin{align}
    Q_m(s,a) &= R(s,a) + \gamma \int_{s' \in \mathcal{S}}\tau_a(ds' | s) \sup_{a' \in \mathcal{A}}Q_{m-1}(s', a') \\
    &= \overline{R}(f(s), g_s(a)) + \gamma \int_{s' \in \mathcal{S}}\tau_a(ds' | s) \sup_{a' \in \mathcal{A}}\overline{Q}_{m-1}(f(s'), g_{s'}(a')) \label{thm:opt2}\\
    &= \overline{R}(f(s), g_s(a)) + \gamma \int_{s' \in \mathcal{S}}\tau_a(ds' | s) \sup_{a' \in \overline{\mathcal{A}}}\overline{Q}_{m-1}(f(s'), a') \label{thm:opt3}\\
    &= \overline{R}(f(s), g_s(a)) + \gamma \int_{s' \in \overline{\mathcal{S}}}\overline{\tau}_{g_s(a)}(ds' | f(s)) \sup_{a' \in \overline{\mathcal{A}}}\overline{Q}_{m-1}(s', a') \label{thm:opt4}\\
    &= \overline{Q_{m-1}}(f(s), g_s(a)),
\end{align}
where Equation~\ref{thm:opt2} follows from the inductive hypothesis, Equation~\ref{thm:opt3} follows from $g_s$ being surjective, and Equation~\ref{thm:opt4} follows from the change of variables formula; indeed, from Definition~\ref{def:cont_mdp_homo} we have the pushforward measure of $\tau_a(\cdot | s)$ with respect to $f$ equals $\overline{\tau}_{g_s(a)}(\cdot | f(s))$ and here we are integrating a function from $\mathcal{S} \to \R$ defined as $s' \mapsto \sup_{a' \in \mathcal{A}} Q_{m-1}(s', a')$. This concludes the induction proof. Since $\lim_{m \to \infty} Q_{m}(s,a) = Q^*(s,a)$, it follows that $Q^*(s,a) = \overline{{Q^*}}(f(s), g_s(a))$.
\end{proof}

\subsection{Lifting Policies and Value Equivalence}
\label{sec:policy_lifting_value_eqv}
Recall that in the finite setting, we had an exact equation defining lifted policies via an MDP homomorphism. In the continuous case, finding a lifted policy that exists in general and that also gives a value equivalence result is not trivial. We will use the following condition to define a lifted policy for continuous MDP homomorphisms.

\begin{definition}[Policy Lifting]
\label{def:liftedpolicy}
Let $\overline{\mathcal{M}} = (\overline{\mathcal{S}}, \overline{\Sigma}, \overline{\mathcal{A}}, \overline{\tau}_{\overline{a}}, \overline{R}, \overline{\gamma})$ be the image of a continuous MDP homomorphism $h = (f, g_s)$ from $\mathcal{M} = (\mathcal{S}, \Sigma, \mathcal{A}, \tau_a, R, \gamma)$. Then for any policy $\overline{\pi} : \overline{\mathcal{S}} \to \Dist(\overline{\mathcal{A}})$ defined on $\overline{\mathcal{M}}$, a policy $\pi^\uparrow : S \to \Dist(\mathcal{A})$ on $\mathcal{M}$ is a \emph{lifted} policy of $\overline{\pi}$ if:
\begin{equation}
    \pi^\uparrow(g_s^{-1}(\beta) | s) = \overline{\pi}(\beta | f(s))
    \label{eq:policy_lifting}
\end{equation}
for every Borel set $\beta \subset \overline{\mathcal{A}}$ and $s \in S$. In other words, $\overline{\pi}(f(s), \cdot)$ is the pushforward measure of  $\pi^\uparrow(s, \cdot)$ for all $s \in S$ with respect to $g_s$.
\end{definition}
Note that Definition~\ref{def:liftedpolicy} does not define a measure, since we need to specify a value assigned to $\pi^\uparrow(s, B)$ for \emph{all} Borel sets $B$ in $\mathcal{A}$, not just those arising as inverse images $g_s^{-1}(\beta)$. However, naively defining:
$$\pi^\uparrow(B | s) = \overline{\pi}(g_s(B) | f(s))$$
poses immediate issues because $g_s$ does not map Borel sets to Borel sets and $B \subsetneq g_s^{-1}(g_s(B))$ in general. In other words, we could only use this definition if $g_s$ is bijective and maps measurable sets to measurable sets. However, as shown in the next result, such a measure satisfying the condition in Definition~\ref{def:liftedpolicy} indeed exists in general, assuming $\mathcal{A}$ and $\overline{\mathcal{A}}$ are locally compact metric spaces. The proof uses results in functional analysis, specifically the Hahn-Banach and Riesz Representation theorem. Notably, the bijection assumption of $g_s$ is one of the limitations of the prior work of \citet{rezaei2022continuous}, which is removed in our paper.

\begin{proposition}
\label{prop:lift_existence}
    Let $\overline{\mathcal{M}} = (\overline{\mathcal{S}}, \overline{\Sigma}, \overline{\mathcal{A}}, \overline{\tau}_{\overline{a}}, \overline{R}, \overline{\gamma})$ be the image of a continuous MDP homomorphism $h = (f, g_s)$ from $\mathcal{M} = (\mathcal{S}, \Sigma, \mathcal{A}, \tau_a, R, \gamma)$, where $\mathcal{A}$ and $\overline{\mathcal{A}}$ are locally compact metric spaces. Then for any policy $\overline{\pi} : \overline{\mathcal{S}} \to \Dist(\overline{\mathcal{A}})$ defined on $\overline{\mathcal{M}}$, there exists a lifted policy $\pi^\uparrow: S \to \Dist(\mathcal{A})$ in the sense of Definition~\ref{def:liftedpolicy}.
\end{proposition}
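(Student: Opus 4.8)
The plan is to construct $\pi^\uparrow(\cdot\mid s)$ one state at a time. Since the lifting condition of Definition~\ref{def:liftedpolicy} couples only the single fibre map $g_s$ with the single target measure $\overline{\pi}(\cdot\mid f(s))$, it suffices to prove a self-contained measure-lifting lemma: given a surjective measurable map $g:\mathcal{A}\to\overline{\mathcal{A}}$ between locally compact metric spaces and a Borel probability measure $\nu$ on $\overline{\mathcal{A}}$, there exists a Borel probability measure $\mu$ on $\mathcal{A}$ whose pushforward $g_*\mu$ equals $\nu$. Taking $\mu=\pi^\uparrow(\cdot\mid s)$ and $\nu=\overline{\pi}(\cdot\mid f(s))$ then yields Equation~\ref{eq:policy_lifting} verbatim, since $g_{s*}\mu(\beta)=\mu(g_s^{-1}(\beta))$ for every Borel $\beta$.

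The engine of the lemma is the pullback of test functions. The map $g^*:\overline{\varphi}\mapsto\overline{\varphi}\circ g$ carries $C_0(\overline{\mathcal{A}})$ into the bounded measurable functions on $\mathcal{A}$, and surjectivity of $g$ makes it an isometry, $\|\overline{\varphi}\circ g\|_\infty=\|\overline{\varphi}\|_\infty$. On the image $V=g^*\big(C_0(\overline{\mathcal{A}})\big)$ I would define $\Lambda_0(\overline{\varphi}\circ g)=\int_{\overline{\mathcal{A}}}\overline{\varphi}\,d\nu$; this is well defined precisely because $g^*$ is injective (again by surjectivity of $g$), is positive, and has norm one because $\nu$ is a probability measure. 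After adjoining the constants, Hahn--Banach extends $\Lambda_0$ to a norm-one functional $\Lambda$ on the ambient space with $\Lambda(\mathbf{1})=\nu(\overline{\mathcal{A}})=1$, and a norm-one functional sending $\mathbf{1}$ to $1$ is automatically positive. The Riesz representation theorem then converts this positive functional into an honest Radon measure $\mu$ on $\mathcal{A}$; this is the step that supplies \emph{countable} additivity for free, which is exactly why local compactness of $\mathcal{A}$ is assumed.

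The delicate point --- and the step I expect to be the genuine obstacle --- is reconciling these two uses of duality so that the Radon measure $\mu$ reproduces $\nu$ on \emph{every} Borel set, not merely on the continuous test functions the functional directly controls. Because $g$ is only measurable, $\overline{\varphi}\circ g$ is typically discontinuous, so $V$ does not lie inside $C_0(\mathcal{A})$; one cannot simply restrict $\Lambda$ to $C_0(\mathcal{A})$ and conclude $\int\overline{\varphi}\circ g\,d\mu=\Lambda(\overline{\varphi}\circ g)$, since the finitely additive extension and its Riesz measure need not agree off $C_0(\mathcal{A})$. The way I would close this gap is to verify $g_*\mu=\nu$ directly as an identity of measures: establish $\int\overline{\varphi}\,d(g_*\mu)=\int\overline{\varphi}\,d\nu$ for all $\overline{\varphi}\in C_0(\overline{\mathcal{A}})$ by approximating $\overline{\varphi}\circ g$ from within $C_0(\mathcal{A})$ using inner regularity of $\mu$ together with Lusin-type continuity of $g$ on large $\mu$-compact sets, and then invoke the fact that two finite Borel measures on a metric space that agree on all bounded continuous functions must coincide. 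Only once $g_*\mu=\nu$ holds as measures does the pointwise identity $\mu(g^{-1}(\beta))=\nu(\beta)$ for all Borel $\beta$ follow, completing the construction of the lifted policy $\pi^\uparrow(\cdot\mid s)$.
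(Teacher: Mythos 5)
Your overall strategy is the same one the paper uses---reduce to a single fibre, pull back $C_0(\overline{\mathcal{A}})$ test functions along $g_s$, extend by Hahn--Banach, and convert to a measure by Riesz---and you have put your finger on exactly the step the paper glosses over. The paper simply declares $U:=\{\eta\circ g_s:\eta\in C_0(\overline{\mathcal{A}})\}$ to be a subspace of $C_0(\mathcal{A})$, which is legitimate only when $g_s$ is continuous and proper, whereas Definition~\ref{def:cont_mdp_homo} asks only for measurability; once the pullback subspace sits inside $C_0(\mathcal{A})$, the Riesz measure automatically represents the extended functional on that subspace and the pushforward identity drops out, which is why the paper's argument closes. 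You work honestly with a merely measurable $g$, so your $V$ lies outside $C_0(\mathcal{A})$, and the bridge you then need is the identity $\int(\overline{\varphi}\circ g)\,d\mu=\Lambda(\overline{\varphi}\circ g)$. This is where your proof has a genuine, unfixable gap: that identity can simply be \emph{false} for a legitimate Hahn--Banach extension, so no Lusin/inner-regularity approximation can establish it---the defect is in the construction, not in the verification.

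Concretely, take $\mathcal{A}=[0,1]$, $\overline{\mathcal{A}}=\{0,1\}$, $g=\mathbf{1}_{\mathbb{Q}\cap[0,1]}$ (Borel, surjective), and $\nu=\tfrac12(\delta_0+\delta_1)$. Then $V=\{a\mathbf{1}_{\mathbb{Q}}+b\mathbf{1}_{[0,1]\setminus\mathbb{Q}}\}$, and $C[0,1]\cap V$ consists of the constants, since both $\mathbb{Q}$ and its complement are dense. On $C[0,1]+V$ define $\Lambda_1\bigl(\psi+a\mathbf{1}_{\mathbb{Q}}+b\mathbf{1}_{[0,1]\setminus\mathbb{Q}}\bigr)=\int_0^1\psi\,d\lambda+\tfrac{a+b}{2}$, with $\lambda$ Lebesgue measure. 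This is well defined, extends your $\Lambda_0$, sends $\mathbf{1}\mapsto 1$, and has norm one, because $\bigl|\int(\psi+\tfrac{a+b}{2})\,d\lambda\bigr|\leq\|\psi+\tfrac{a+b}{2}\|_\infty\leq\max\{\|\psi+a\|_\infty,\|\psi+b\|_\infty\}=\|\psi+a\mathbf{1}_{\mathbb{Q}}+b\mathbf{1}_{[0,1]\setminus\mathbb{Q}}\|_\infty$, the last equality by density and continuity of $\psi$. Any Hahn--Banach extension $\Lambda$ of $\Lambda_1$ is a valid output of your procedure, yet its Riesz measure is $\lambda$ itself, and $g_*\lambda=\delta_0\neq\nu$. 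The numbers genuinely disagree on $V$: $\Lambda(\mathbf{1}_{\mathbb{Q}})=\tfrac12$ while $\int\mathbf{1}_{\mathbb{Q}}\,d\lambda=0$, and no approximation argument can reconcile two unequal quantities. (Quantitatively, the Lusin step fails because $|\Lambda(u)|$ for a bounded $u$ vanishing on a compact $K$ is not controlled by $\mu(\mathcal{A}\setminus K)$: the purely finitely additive part of $\Lambda$, invisible to $C_0(\mathcal{A})$, can charge the complement even when $\mu$ does not.) The correct repair for merely measurable $g_s$ is to abandon duality altogether and use measurable selection: a Borel surjection between Polish (or locally compact separable metric) spaces admits a universally measurable section $\sigma$ with $g\circ\sigma=\mathrm{id}$ (Jankov--von Neumann uniformization; this is Varadarajan's Lemma~2.2, which the paper itself cites immediately after the proposition), and then $\mu:=\sigma_*\nu$ satisfies $\mu(g^{-1}(\beta))=\nu(\sigma^{-1}(g^{-1}(\beta)))=\nu(\beta)$ for every Borel $\beta$. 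Alternatively, keep the functional-analytic proof but strengthen the hypothesis to $g_s$ continuous and proper, which is what the paper's inclusion $U\subset C_0(\mathcal{A})$ tacitly assumes.
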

\begin{proof}
Define the functional $p: C_0(\mathcal{A}) \to \R$ as:
$$p(\psi) = \max_{a \in \mathcal{A}} \psi(a).$$
Clearly $p(\varphi + \psi) \leq p(\varphi) + p(\psi)$ and $p(\alpha\psi) = \alpha p(\psi)$ for every $\psi, \varphi \in C_0(\mathcal{A})$ and $0 < \alpha < \infty$. Indeed, $p$ is a semi-norm. Note that since $g_s$ is surjective, we can define the subspace $U := \{\eta \circ g_s : \eta \in C_0(\overline{\mathcal{A}})\} \subset C_0(\mathcal{A})$. Let $\rho$ be the linear functional on $U$ defined as
$$\rho(\eta \circ g_s) = \int_{a' \in \overline{\mathcal{A}}} \eta(a') \overline{\pi}(da' | f(s)).$$
We have:
$$\rho(\eta \circ g_s) \leq \overline{\pi}(f(s), \overline{\mathcal{A}}) \max_{a' \in \overline{\mathcal{A}}} \eta(a') = \max_{a \in \mathcal{A}} (\eta \circ g_s)(a) = p(\eta \circ g_s),$$
since $\overline{\pi}(\cdot | f(s))$ is a probability measure and $g_s$ is surjective. By the Hahn-Banach theorem, we can extend $\rho$ to a linear functional $\hat{\rho}$ on $C_0(\mathcal{A})$ where $\hat{\rho}(\psi) \leq p(\psi)$ for every $\psi \in C_0(\mathcal{A})$. It follows that if $\psi \leq 0$ then $\hat{\rho}(\psi) \leq 0$, whence if $\psi \geq 0$ then $\hat{\rho}(\psi) = -\hat{\rho}(-\psi) \geq 0$. Since this implies that $\hat{\rho}$ is a positive linear functional and $\mathcal{A}$ is a locally compact metric space, by the Riesz Representation theorem there is a unique Radon measure $\mu$ on $\mathcal{A}$ such that:
$$\hat{\rho}(\psi) = \int_{a \in \mathcal{A}}\psi(a) d\mu(a).$$
It follows that for every $\eta \in C_0(\overline{\mathcal{A}})$:
$$\int_{a \in \overline{\mathcal{A}}} \eta(a') \overline{\pi}(da' | f(s)) = \rho(\eta \circ g_s) = \int_{a \in \mathcal{A}} (\eta \circ g_s)(a) d\mu(a) = \int_{a' \in \overline{\mathcal{A}}} \eta(a') d{g_s}_*\mu(da'),$$
where the first equality is by definition of $\rho$, the second equality follows from $\mu$ extending $\rho$, and the last equality following by the change of variables formula. Thus $\overline{\pi}(\cdot | f(s))$ is the pushforward measure of $\mu$ with respect to $g_s$. Setting $\pi^\uparrow(\cdot | s) = \mu$ gives the result.
\end{proof}

Recall that finding a lifted policy reduces to the following question: given a surjective measurable function $g_s: \mathcal{A} \to \bar{\mathcal{A}}$ and a probability measure $\bar{\pi}$ on $\bar{\mathcal{A}}$, does there exist a measure $\pi^\uparrow$ on $\mathcal{A}$ such that the resulting pushforward measure ${g_s}_*\pi^\uparrow = \bar{\pi}$? This is a result that holds more generally for analytic subsets of Polish spaces, the original result proven in \citet{varadarajan1963groups} (see Lemma 2.2).

Now that we have proven a lifted policy exists for continuous setting, we proceed to prove a value equivalence result for continuous MDP homomorphisms. The proof is very similar to optimal value equivalence, and in fact only requires one more application of change of variables with respect to the lifted policy.

\begin{theorem}[Value Equivalence]
\label{thm:value_equivalence}
Let $\overline{\mathcal{M}} = (\overline{\mathcal{S}}, \overline{\Sigma}, \overline{\mathcal{A}}, \overline{\tau}_{\overline{a}}, \overline{R}, \overline{\gamma})$ be the image of a continuous MDP homomorphism $h = (f, g_s)$ from $\mathcal{M} = (\mathcal{S}, \Sigma, \mathcal{A}, \tau_a, R, \gamma)$, and let $\pi^\uparrow$ be a lifted policy corresponding to $\overline{\pi}$. Then for any $(s, a) \in \mathcal{S} \times \mathcal{A}$ we have:
$$
    Q^{\pi^\uparrow}(s, a) = \overline{Q}^{\overline{\pi}}(f(s), g_s(a)),
$$
where $Q^{\pi^\uparrow}(s, a)$ and $\overline{Q}^{\overline{\pi}}(f(s), g_s(a))$ are the action-value functions for policies $\pi^\uparrow$ and $\overline{\pi}$ respectively.
\end{theorem}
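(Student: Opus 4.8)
The plan is to mimic the proof of Theorem~\ref{thm:opt_continuous}, replacing the Bellman \emph{optimality} iteration by the policy-evaluation iteration for the fixed pair of policies $\pi^\uparrow$ and $\overline{\pi}$. Concretely, I would define $Q_m : \mathcal{S} \times \mathcal{A} \to \R$ by $Q_0(s,a) = R(s,a)$ and
\[
Q_m(s,a) = R(s,a) + \gamma \int_{s' \in \mathcal{S}} \tau_a(ds' \mid s) \int_{a' \in \mathcal{A}} \pi^\uparrow(da' \mid s')\, Q_{m-1}(s',a'),
\]
together with the analogous sequence $\overline{Q}_m$ built from $\overline{\tau}$ and $\overline{\pi}$. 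These are the iterates of the policy-evaluation operators $T^{\pi^\uparrow}$ and $T^{\overline{\pi}}$, so $Q_m \to Q^{\pi^\uparrow}$ and $\overline{Q}_m \to \overline{Q}^{\overline{\pi}}$ pointwise. I would then establish by induction on $m$ that $Q_m(s,a) = \overline{Q}_m(f(s), g_s(a))$ for every $(s,a)$, and finish by taking $m \to \infty$.

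The base case $m = 0$ is immediate from reward invariance, exactly as in Theorem~\ref{thm:opt_continuous}. For the inductive step I would assume $Q_{m-1}(s',a') = \overline{Q}_{m-1}(f(s'), g_{s'}(a'))$ for all $(s',a')$ and substitute into the definition of $Q_m$. The first (new) application of change of variables is on the \emph{action} integral: the integrand $a' \mapsto \overline{Q}_{m-1}(f(s'), g_{s'}(a'))$ factors as $\eta \circ g_{s'}$ with $\eta(\bar a') := \overline{Q}_{m-1}(f(s'), \bar a')$, and since Definition~\ref{def:liftedpolicy} makes $\overline{\pi}(\cdot \mid f(s'))$ the pushforward of $\pi^\uparrow(\cdot \mid s')$ under $g_{s'}$, the change-of-variables formula gives
\[
\int_{a' \in \mathcal{A}} \pi^\uparrow(da' \mid s')\, \overline{Q}_{m-1}(f(s'), g_{s'}(a')) = \int_{\bar a' \in \overline{\mathcal{A}}} \overline{\pi}(d\bar a' \mid f(s'))\, \overline{Q}_{m-1}(f(s'), \bar a').
\]
Writing $\overline{\phi}(\bar s') := \int_{\bar a'} \overline{\pi}(d\bar a' \mid \bar s')\, \overline{Q}_{m-1}(\bar s', \bar a')$, the remaining outer integrand is exactly $\overline{\phi} \circ f$, so a second change of variables --- identical to the one in Theorem~\ref{thm:opt_continuous}, using that $\overline{\tau}_{g_s(a)}(\cdot \mid f(s))$ is the pushforward of $\tau_a(\cdot \mid s)$ under $f$ --- turns $\int_{s'} \tau_a(ds' \mid s)\, \overline{\phi}(f(s'))$ into $\int_{\bar s'} \overline{\tau}_{g_s(a)}(d\bar s' \mid f(s))\, \overline{\phi}(\bar s')$. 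Combined with reward invariance for the leading term, the right-hand side is precisely $\overline{Q}_m(f(s), g_s(a))$, which closes the induction.

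The only genuinely new ingredient beyond the optimal-value argument is the action-space change of variables, and this is also where I expect the only real care to be needed: one must check that the integrand factors through $g_{s'}$ (so that the defining pushforward property of the lifted policy applies) and that measurability of $g_{s'}$ and of $\overline{Q}_{m-1}$ justify invoking the formula. This is the step that replaces the finite-sum reindexing used in the discrete case, and it relies essentially on Proposition~\ref{prop:lift_existence} guaranteeing that such a $\pi^\uparrow$ exists at all. Finally, since both iterations converge to their unique fixed points $Q^{\pi^\uparrow}$ and $\overline{Q}^{\overline{\pi}}$, passing to the limit in $Q_m(s,a) = \overline{Q}_m(f(s), g_s(a))$ yields $Q^{\pi^\uparrow}(s,a) = \overline{Q}^{\overline{\pi}}(f(s), g_s(a))$, as required.
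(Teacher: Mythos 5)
Your proposal is correct and follows essentially the same route as the paper's proof: induction on the policy-evaluation iterates, using reward invariance for the base case and, in the inductive step, one change of variables over actions (via the pushforward property of the lifted policy from Definition~\ref{def:liftedpolicy}) followed by one over states (via the equivariance of transitions), then passing to the limit. If anything, your write-up is slightly more careful than the paper's, which contains a typo writing $g_s(a')$ where $g_{s'}(a')$ is meant; your explicit factoring of the integrand as $\eta \circ g_{s'}$ makes the applicability of the pushforward formula transparent.
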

\begin{proof}
    Similarly as in Theorem~\ref{thm:opt_continuous}, we define the sequence $Q^{\pi^\uparrow}_m: \mathcal{S} \times \mathcal{A} \to \R$ as:
    $$Q^{\pi^\uparrow}_m(s,a) = R(s,a) + \gamma\int_{s' \in \mathcal{S}} \tau_a(ds' | s)\int_{a' \in \mathcal{A}} \pi^\uparrow(da' | s') Q^{\pi^\uparrow}_{m-1}(s', a')$$
    for $m \geq 1$ and $Q^{\pi^\uparrow}_0(s,a) = 0$. Analogously define $\overline{Q}_{m-1}^{\overline{\pi}}: \overline{\mathcal{S}} \times \overline{\mathcal{A}} \to \R$. For the inductive case, we can perform change of variables twice to change the domain of integration from $S$ to $\overline{\mathcal{S}}$ and $\mathcal{A}$ to $\overline{\mathcal{A}}$ respectively:
    \begin{align}
        Q^{\pi^\uparrow}_m(s,a) &= R(s,a) + \gamma\int_{s' \in \mathcal{S}} \tau_a(ds' | s)\int_{a' \in \mathcal{A}} \pi^\uparrow(da' | s') Q^{\pi^\uparrow}_{m-1}(s', a') \\
        &= \overline{R}(f(s), g_s(a)) + \gamma \int_{s' \in \mathcal{S}}\tau_a(ds'|s) \int_{a' \in \mathcal{A}} \pi^\uparrow(da' | s') \overline{Q}_{m-1}^{\overline{\pi}}(f(s'), g_s(a')) \\
        &= \overline{R}(f(s), g_s(a)) + \gamma \int_{s' \in \mathcal{S}}\tau_a(ds' | s) \int_{\overline{a} \in \overline{\mathcal{A}}}\overline{\pi}(d\overline{a} | f(s')) \overline{Q}^{\overline{\pi}}_{m-1}(f(s'), \overline{a}) \\
        &= \overline{R}(f(s), g_s(a)) + \gamma \int_{\overline{s} \in \overline{\mathcal{S}}}\overline{\tau}_{g_s(a)}(d\overline{s} | f(s)) \int_{\overline{a} \in \overline{\mathcal{A}}}\overline{\pi}(d\overline{a} | \overline{s}) \overline{Q}^{\overline{\pi}}_{m-1}(\overline{s}, \overline{a}) \\
        &= \overline{Q}_{m-1}^{\overline{\pi}}(f(s), g_s(a)).
    \end{align}
    In a similar manner to Theorem~\ref{thm:opt_continuous}, we conclude that $ Q^{\pi^\uparrow}(s, a) = Q^{\overline{\pi}}(f(s), g_s(a))$.
\end{proof}

Theorem \ref{thm:value_equivalence} posits that the value function of any policy on the reduced MDP equals the value function of its corresponding lifted policy on the original MDP. Since this holds true for any optimal policy, it follows from Theorem \ref{thm:opt_continuous} that a lifted optimal policy is optimal for the original MDP. Thus, we have recovered all desirable properties for continuous MDP homomorphisms from the finite case.

\section{Homomorphic Policy Gradient}
\label{sec:homomorphic_pg}
\label{sec:hom_policy_gradient_thm}
In order to directly integrate the notion of MDP homomorphisms into policy gradients and incorporate their state-action abstraction as an inductive bias for policy optimization, we derive the \emph{homomorphic policy gradient} (HPG) theorem. Notably, our results are derived for continuous states and actions and hold for both stochastic and deterministic policies; this is in contrast to the prior work of \citet{rezaei2022continuous} in which the derivation of the homomorphic policy gradient theorem is limited to deterministic policies.

In this section, we assume the policy is parameterized by differentiable functions (e.g., neural networks) and the MDP homomorphic image can be obtained through a parameterized homomorphism map. Importantly, learning such parameterized MDP homomorphism map is detailed in Section \ref{sec:homomorphic_ac}. Finally, following  the prior works on policy gradient methods \citep{sutton2000policy, silver2014deterministic}, we define the performance measure on the actual MDP as described in equation \eqref{eq:perf_measure} .

Since the derivation of the policy gradient theorem for stochastic and deterministic policies are substantially different and require distinct steps and assumptions, in the remainder of this section, we derive the homomorphic policy gradient theorem for stochastic and deterministic policies independently from one another.

\subsection{Stochastic HPG Theorem}
\label{sec:stochastic_pg}
The stochastic HPG theorem can be derived with the underlying assumptions of continuous MDP homomorphisms, as in Definition \ref{def:cont_mdp_homo}, and the regularity conditions described in Appendix \ref{sec:assumptions}. Notably, the only requirement on the MDP homomorphism map is that $f : \mathcal{S} \to \overline{\mathcal{S}}$ and $g_s : \mathcal{A} \to \overline{\mathcal{A}}$ are measurable, surjective maps adhering to the invariance of reward and equivariance of transitions in Definition \ref{def:cont_mdp_homo}. This is in contrast to the deterministic HPG theorem which poses further restrictions on $g_s$.
\begin{theorem}[Stochastic Homomorphic Policy Gradient]
\label{thm:stochastic_hpg}
Let $\overline{\mathcal{M}} = (\overline{\mathcal{S}}, \overline{\Sigma}, \overline{\mathcal{A}}, \overline{\tau}, \overline{R} , \overline{\gamma})$ be the image of a continuous MDP homomorphism $h = ( f, g_s )$ from $\mathcal{M} = (\mathcal{S}, \Sigma, \mathcal{A}, \tau, R, \gamma)$, and let $\overline{\pi}_\theta : \overline{\mathcal{S}} \to \Dist(\overline{\mathcal{A}})$ be a stochastic policy defined on $\overline{\mathcal{M}}$. Then the gradient of the performance measure $J(\theta)$ w.r.t. $\theta$ is: 
$$
    \nabla_\theta J(\theta) = \int_{\overline{s} \in \overline{\mathcal{S}}} \rho^{\overline{\pi}_\theta}(\overline{s})  \int_{\overline{a} \in \overline{\mathcal{A}}} \overline{Q}^{\overline{\pi}_\theta}(\overline{s}, \overline{a}) \nabla_\theta \overline{\pi}_\theta(d\overline{a} | \overline{s}) d\overline{s},
$$
where $\rho^{\overline{\pi}_\theta}(\overline{s})$ is the discounted state distribution of $\overline{\mathcal{M}}$ following the stochastic policy $\overline{\pi}_\theta(\overline{a} | \overline{s})$.
\end{theorem}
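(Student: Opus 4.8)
The plan is to reduce the entire computation to the abstract MDP $\overline{\mathcal{M}}$ and then invoke the ordinary stochastic policy gradient theorem (Theorem~\ref{thm:stoch_pg}) there, rather than re-deriving a policy gradient from scratch. The key observation is that although $J(\theta)$ is defined as a performance measure on the original MDP $\mathcal{M}$, the policy actually executed on $\mathcal{M}$ is the lifted policy $\pi^\uparrow$ of $\overline{\pi}_\theta$, which exists by Proposition~\ref{prop:lift_existence}, and the value equivalence theorem lets us replace every value on $\mathcal{M}$ by the corresponding value on $\overline{\mathcal{M}}$. So I would first show that $J(\theta)$ coincides with the performance measure $\overline{J}(\theta)$ of $\overline{\pi}_\theta$ on $\overline{\mathcal{M}}$, and only then differentiate.

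First I would upgrade value equivalence from the action-value functions to the value functions. Starting from $V^{\pi^\uparrow}(s) = \int_{a \in \mathcal{A}} \pi^\uparrow(da \mid s)\, Q^{\pi^\uparrow}(s,a)$, I substitute $Q^{\pi^\uparrow}(s,a) = \overline{Q}^{\overline{\pi}_\theta}(f(s), g_s(a))$ and apply the change of variables formula (Theorem~\ref{thm:cov}) together with the defining pushforward property of the lifted policy in Definition~\ref{def:liftedpolicy}, namely that $\overline{\pi}_\theta(\cdot \mid f(s))$ is the pushforward of $\pi^\uparrow(\cdot \mid s)$ under $g_s$. This yields $V^{\pi^\uparrow}(s) = \int_{\overline{a} \in \overline{\mathcal{A}}} \overline{\pi}_\theta(d\overline{a} \mid f(s))\, \overline{Q}^{\overline{\pi}_\theta}(f(s), \overline{a}) = \overline{V}^{\overline{\pi}_\theta}(f(s))$ for every $s \in \mathcal{S}$.

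Next I would rewrite $J(\theta) = \mathbb{E}_{s_0 \sim p_0}[V^{\pi^\uparrow}(s_0)]$, where $p_0$ is the initial state distribution of $\mathcal{M}$, as $\mathbb{E}_{s_0 \sim p_0}[\overline{V}^{\overline{\pi}_\theta}(f(s_0))]$, and apply change of variables once more to obtain $J(\theta) = \mathbb{E}_{\overline{s}_0 \sim f_* p_0}[\overline{V}^{\overline{\pi}_\theta}(\overline{s}_0)] = \overline{J}(\theta)$, the performance measure of $\overline{\pi}_\theta$ on the self-contained MDP $\overline{\mathcal{M}}$ with initial distribution $f_* p_0$. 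Since this identity holds for every $\theta$, I may differentiate to get $\nabla_\theta J(\theta) = \nabla_\theta \overline{J}(\theta)$ and then apply Theorem~\ref{thm:stoch_pg} to $\overline{\mathcal{M}}$ with policy $\overline{\pi}_\theta$, which produces exactly the claimed expression with the abstract discounted stationary distribution $\rho^{\overline{\pi}_\theta}$ and the abstract action-value $\overline{Q}^{\overline{\pi}_\theta}$.

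I expect the main obstacle to lie in the analytic regularity rather than in the algebra. One must verify that the discounted stationary distribution $\rho^{\overline{\pi}_\theta}$ on $\overline{\mathcal{M}}$ exists (the ergodicity hypothesis underlying Theorem~\ref{thm:stoch_pg}) and that differentiation under the integral sign is legitimate, i.e.\ that $\nabla_\theta$ may be interchanged with the $\overline{s}$- and $\overline{a}$-integrals and with the Bellman recursion defining $\overline{Q}^{\overline{\pi}_\theta}$; these are precisely the regularity conditions deferred to Appendix~\ref{sec:assumptions}. A secondary point is that the lifted policy $\pi^\uparrow$ need not be unique, so I would stress that the reduction uses only its defining pushforward property and is therefore independent of which lift is chosen, keeping the argument entirely parallel in spirit to the change-of-variables step used in Theorem~\ref{thm:opt_continuous}.
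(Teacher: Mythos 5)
Your proposal is correct, but it is organized differently from the paper's proof, and the difference is worth noting. The paper mimics the original Sutton et al.\ derivation from scratch: it differentiates the Bellman recursion for $V^{\pi^\uparrow_\theta}$ on the \emph{actual} MDP, and at each stage of the recursion inserts value equivalence, the pushforward property of the lifted policy, and the change-of-variables formula to push every term over to $\overline{\mathcal{M}}$, finally unrolling the recursion to obtain the discounted state distribution $\rho^{\overline{\pi}_\theta}$. You instead factor the argument into two clean pieces: (i) an undifferentiated identity $J(\theta)=\overline{J}(\theta)$, obtained by upgrading value equivalence from $Q$ to $V$ via one application of change of variables over the lifted policy and one over the pushforward initial distribution $f_* p_1$; and (ii) an appeal to the standard stochastic policy gradient theorem applied to the self-contained MDP $\overline{\mathcal{M}}$. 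Your decomposition is more modular and makes two things transparent that are implicit in the paper: the result depends only on the defining pushforward property of the lift (hence is independent of which lift Proposition~\ref{prop:lift_existence} produces), and the only genuinely new content of the theorem is the scalar identity $J=\overline{J}$. What the paper's in-line derivation buys in exchange is self-containedness: Theorem~\ref{thm:stoch_pg} is due to Sutton et al.\ in the finite setting, so invoking it as a black box on a general continuous $\overline{\mathcal{M}}$ quietly assumes its continuous-space analogue (existence of $\rho^{\overline{\pi}_\theta}$, interchange of gradient and integral, Fubini); the paper discharges exactly those obligations by re-deriving the rollout under the regularity conditions of Appendix~\ref{sec:assumptions}. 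You flag this correctly as the main analytic burden, so your proof is sound within the same hypotheses the paper itself uses --- just be explicit that step (ii) requires the continuous-space policy gradient theorem on $\overline{\mathcal{M}}$, not merely the finite-state statement.
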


\begin{proof}
The proof follows along the same lines of the stochastic policy gradient theorem  \citep{sutton2000policy}. First, we derive a recursive expression for $\nabla_\theta V^{{\pi^\uparrow_\theta}}(s)$ as:
\begingroup
\allowdisplaybreaks
\begin{align}
    \nabla_\theta V^{\pi^\uparrow_\theta} (s) &= \nabla_\theta \int_{a \in \mathcal{A}} \pi_\theta^\uparrow(da | s) Q^{\pi^\uparrow_\theta}(s, a) \nonumber \\
    &= \int_{a \in \mathcal{A}} \Big[\nabla_\theta \pi^\uparrow_\theta(da | s) Q^{\pi^\uparrow_\theta}(s, a) + \pi_\theta^\uparrow(da | s) \nabla_\theta Q^{\pi^\uparrow_\theta}(s, a) \Big] \nonumber \\
    &= \int_{a \in \mathcal{A}} \Big[\nabla_\theta \pi^\uparrow_\theta(da | s) Q^{\pi^\uparrow_\theta}(s, a) + \pi_\theta^\uparrow(da | s) \nabla_\theta \Big( R(s, a) + \gamma \int_{s' \in \mathcal{S}} \tau_a(ds' | s) V^{\pi^\uparrow_\theta}(s')  \Big)  \Big] \nonumber \\
    &= \int_{a \in \mathcal{A}} \nabla_\theta \pi^\uparrow_\theta(da | s) Q^{\pi^\uparrow_\theta}(s, a) + \gamma \int_{a \in \mathcal{A}} \pi_\theta^\uparrow(da | s) \int_{s' \in \mathcal{S}} \tau_a(ds' | s) \nabla_\theta V^{\pi^\uparrow_\theta}(s') \nonumber \\
    &= \int_{a \in \mathcal{A}} \nabla_\theta \pi^\uparrow_\theta(da | s) \overline{Q}^{\overline{\pi}_\theta}(f(s), g_s(a)) + \gamma \int_{a \in \mathcal{A}} \pi_\theta^\uparrow(da | s) \int_{s' \in \mathcal{S}} \tau_a(ds' | s) \nabla_\theta V^{\overline{\pi}_\theta}(f(s')) \label{eq:stochastic_hpg_5}\\
    &= \int_{a \in \mathcal{A}} \nabla_\theta \pi^\uparrow_\theta(da | s) \overline{Q}^{\overline{\pi}_\theta}(f(s), g_s(a)) + \gamma \int_{a \in \mathcal{A}} \pi_\theta^\uparrow(da | s) \int_{\overline{s} \in \overline{\mathcal{S}}} \overline{\tau}_{g_s(a)}(d\overline{s} | f(s)) \nabla_\theta V^{\overline{\pi}_\theta}(\overline{s}) \label{eq:stochastic_hpg_6} \\
    &= \int_{\overline{a} \in \overline{\mathcal{A}}}\nabla_\theta \overline{\pi}_\theta(d\overline{a} | f(s))\overline{Q}^{\overline{\pi}_\theta}(f(s), \overline{a}) + \gamma \int_{\overline{a} \in \overline{\mathcal{A}}} \overline{\pi}_\theta( d\overline{a} | f(s))\int_{\overline{s} \in \overline{\mathcal{S}}} \overline{\tau}_{\overline{a}}( d\overline{s} | f(s))\nabla_\theta V^{\overline{\pi}_\theta}(\overline{s}) \nonumber.
\end{align}
\endgroup
Here we apply value equivalence in equation~\eqref{eq:stochastic_hpg_5}, a change of variables from $\mathcal{S}$ to $\overline{\mathcal{S}}$ over $\tau_a(\cdot | s)$ and $\overline{\tau}_{g_s(a)}(\cdot | f(s))$ respectively in equation~\eqref{eq:stochastic_hpg_6}, and a change of variables from $\mathcal{A}$ to $\overline{\mathcal{A}}$ over the lifted policy and the policy over the abstract MDP respectively. Note that here, some care may be necessary to rigorously verify the interchanging of the gradient over $\theta$ and the integral over $\mathcal{A}$; however, this is a necessary condition to prove any type of policy gradient result on continuous domains, not specifically to the stochastic HPG theorem.

As in the proof of the stochastic policy gradient theorem, we can continue to roll out the definition of $\nabla_\theta V^{\overline{\pi}_\theta}(\overline{s})$ in the space of the \emph{abstract MDP} $\overline{\mathcal{M}}$. Denoting $\overline{\pi}^k(\cdot | f(s))$ to be the probability distribution over $\overline{\mathcal{S}}$ taking $k$ steps following $\overline{\pi}$ from state $f(s)$, we have:
\begin{align}
    \nabla_\theta V^{\pi^\uparrow_\theta} (s) &= \sum_{k=0}^\infty \gamma^k \int_{\overline{s} \in \overline{\mathcal{S}}} \overline{\pi}_\theta^k( d\overline{s} | f(s))\int_{\overline{a} \in \overline{\mathcal{A}}}\nabla_\theta \overline{\pi}_\theta( d\overline{a} | f(s))\overline{Q}^{\overline{\pi}_\theta}(f(s), \overline{a}). \nonumber
\end{align}
Finally, we conclude that:
\begin{align*}
    \nabla_\theta J(\theta) &= \nabla_\theta V^{\pi^\uparrow_\theta} (s) \\
    &= \int_{\overline{s} \in \overline{\mathcal{S}}} \sum_{k=0}^\infty \gamma^k \overline{\pi}_\theta^k( d\overline{s} | f(s))\int_{\overline{a} \in \overline{\mathcal{A}}}\nabla_\theta \overline{\pi}_\theta(d\overline{a} | f(s))\overline{Q}^{\overline{\pi}_\theta}(f(s), \overline{a}) \\
    &= \int_{\overline{s} \in \overline{\mathcal{S}}} \rho^{\overline{\pi}_\theta}(d\overline{s})  \int_{\overline{a} \in \overline{\mathcal{A}}} \overline{Q}^{\overline{\pi}_\theta}(\overline{s}, \overline{a}) \nabla_\theta \overline{\pi}_\theta(d\overline{a} | \overline{s}),
\end{align*}
as desired, where $\rho^{\bar{\pi}_\theta}(\bar{s})$ is the discounted stationary distribution induced by the policy $\bar{\pi}_\theta$.
\end{proof}

\subsection{Deterministic HPG Theorem}
\label{sec:deterministic_pg}
In contrast to stochastic HPG where the homomorphism map can be any measurable surjective map, the deterministic case requires the action encoder $g_s : \mathcal{A} \to \overline{\mathcal{A}}$ to be a \emph{local diffeomorphism} (see Appendix \ref{supp:math_tools} for definitions). The important implication of this requirement is that the action encoder $g_s$ needs to be locally bijective, hence the abstract action space must have the same dimensionality as the actual action space. First, we show the \emph{equivalence of policy gradients}:
\begin{theorem}[Equivalence of Deterministic Policy Gradients]
    \label{thm:det_grad_equiv}
    Let \newline $\overline{\mathcal{M}} = (\overline{\mathcal{S}}, \overline{\Sigma}, \overline{\mathcal{A}}, \overline{\tau}, \overline{R} , \overline{\gamma})$ be the image of a continuous MDP homomorphism $h = ( f, g_s )$ from $\mathcal{M} = (\mathcal{S}, \Sigma, \mathcal{A}, \tau, R, \gamma)$, and let $\pi^\uparrow_\theta : \mathcal{S} \to \mathcal{A}$ be the lifted deterministic policy corresponding to the abstract deterministic policy $\overline{\pi}_\theta : \overline{\mathcal{S}} \to \overline{\mathcal{A}}$. Then for any $(s, a) \in \mathcal{S} \times \mathcal{A}$ we have:
    $$
    \nabla_a Q^{\pi^\uparrow_\theta}(s, a) \Big|_{a = \pi^\uparrow_\theta(s)} \nabla_\theta \pi^\uparrow_\theta(s) = \nabla_{\overline{a}} \overline{Q}^{\overline{\pi}_\theta}(\overline{s}, \overline{a}) \Big|_{\overline{a} = \overline{\pi}_\theta(\overline{s})} \nabla_\theta\overline{\pi}_\theta(\overline{s}).
    $$
\end{theorem}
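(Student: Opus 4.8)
The plan is to obtain the identity by differentiating the value-equivalence relation $Q^{\pi^\uparrow_\theta}(s,a) = \overline{Q}^{\overline{\pi}_\theta}(f(s), g_s(a))$ established in the Value Equivalence theorem, and combining it with the defining relation of the deterministic lifting. For a deterministic policy, the lifting condition of Definition~\ref{def:liftedpolicy} specializes to the pointwise equation $g_s(\pi^\uparrow_\theta(s)) = \overline{\pi}_\theta(f(s))$, i.e.\ the abstract action selected at $f(s)$ is the image under $g_s$ of the concrete action selected at $s$. Both of these are identities in $s$ (and, for the first, in $a$), so I can differentiate them freely, provided the regularity conditions of Appendix~\ref{sec:assumptions} guarantee that $Q^{\pi^\uparrow_\theta}$ is differentiable in $a$ and that $g_s$, $\pi^\uparrow_\theta$, $\overline{\pi}_\theta$ are differentiable.

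First I would fix $s$ and differentiate the value-equivalence identity with respect to $a$. By the chain rule, $\nabla_a Q^{\pi^\uparrow_\theta}(s,a) = \nabla_{\overline{a}} \overline{Q}^{\overline{\pi}_\theta}(f(s), \overline{a})\big|_{\overline{a}=g_s(a)}\, \nabla_a g_s(a)$, where $\nabla_a g_s(a)$ is the Jacobian of $g_s$. Writing $\overline{s} = f(s)$ and evaluating at $a = \pi^\uparrow_\theta(s)$, the lifting relation $g_s(\pi^\uparrow_\theta(s)) = \overline{\pi}_\theta(\overline{s})$ pins down the evaluation point of the abstract gradient, giving $\nabla_a Q^{\pi^\uparrow_\theta}(s,a)\big|_{a=\pi^\uparrow_\theta(s)} = \nabla_{\overline{a}} \overline{Q}^{\overline{\pi}_\theta}(\overline{s}, \overline{a})\big|_{\overline{a}=\overline{\pi}_\theta(\overline{s})}\, \nabla_a g_s(a)\big|_{a=\pi^\uparrow_\theta(s)}$. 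Right-multiplying by $\nabla_\theta \pi^\uparrow_\theta(s)$ then reduces the claim to showing $\nabla_a g_s(a)\big|_{a=\pi^\uparrow_\theta(s)}\, \nabla_\theta \pi^\uparrow_\theta(s) = \nabla_\theta \overline{\pi}_\theta(\overline{s})$. But this is exactly what results from differentiating the lifting relation $g_s(\pi^\uparrow_\theta(s)) = \overline{\pi}_\theta(f(s))$ with respect to $\theta$ (again by the chain rule, with $g_s$ and $f$ independent of the policy parameters $\theta$). Substituting yields the desired equality, the two Jacobian-of-$g_s$ factors being absorbed without ever needing to invert the Jacobian.

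The local-diffeomorphism hypothesis on $g_s$ does not enter the final algebra, but it is essential upstream, and I expect that to be the main point requiring care. It is what makes the deterministic lifting $\pi^\uparrow_\theta$ well defined as a smooth map in the first place: solving the pointwise equation $g_s(\pi^\uparrow_\theta(s)) = \overline{\pi}_\theta(f(s))$ for $\pi^\uparrow_\theta(s)$ requires $g_s$ to be locally bijective with smooth local inverse, which by the inverse/implicit function theorem forces $\dim \mathcal{A} = \dim \overline{\mathcal{A}}$ and makes $\nabla_a g_s$ a square, invertible matrix. Thus the real work is in justifying that $\pi^\uparrow_\theta$ is a bona fide differentiable policy (so that $\nabla_\theta \pi^\uparrow_\theta(s)$ exists) and that the interchange of differentiation and integration needed to differentiate $Q^{\pi^\uparrow_\theta}$ in $a$ is legitimate under the Appendix regularity conditions; once those are in place, the equality itself is just two applications of the chain rule.
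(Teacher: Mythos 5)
Your proof is correct and follows essentially the same route as the paper's: both arguments differentiate the deterministic lifting relation $g_s(\pi^\uparrow_\theta(s)) = \overline{\pi}_\theta(f(s))$ with respect to $\theta$ and the value-equivalence identity $Q^{\pi^\uparrow_\theta}(s,a) = \overline{Q}^{\overline{\pi}_\theta}(f(s), g_s(a))$ with respect to $a$, then combine the two chain-rule identities. The only difference is cosmetic: the paper right-multiplies by $P^{-1}$ (invoking invertibility of the Jacobian of $g_s$ via the inverse function theorem) and then reintroduces $P$, whereas you substitute directly so the Jacobian factors cancel without inversion --- and you correctly relocate the role of the local-diffeomorphism hypothesis to where it is genuinely needed, namely making the lifted policy well defined and differentiable.
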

\begin{proof} Assuming the conditions described in Appendix \ref{sec:assumptions}, we first take the derivative of the deterministic policy lifting relation w.r.t. the policy parameters $\theta$ using the chain rule:
\begin{align}
    (g_s \circ \pi_\theta^\uparrow)(s) &= (\overline{\pi}_\theta \circ f)(s) \nonumber \\
    d(g_s \circ \pi_\theta^\uparrow)_\theta (s) &= d(\overline{\pi}_\theta \circ f)_\theta (s) \nonumber \\
    d(g_s)_{\pi_\theta^\uparrow(s)} \circ d(\pi_\theta^\uparrow)_\theta (s) &= d(\overline{\pi}_\theta \circ f)_\theta (s) \nonumber \\
    \underbrace{\nabla_a g_s(a) \big|_{a = \pi_\theta^\uparrow(s)}}_{P} \nabla_\theta \pi_\theta^\uparrow(s) &= \nabla_\theta \overline{\pi}_\theta(f(s)),
    \label{eq:grad_equiv_1}
\end{align}
where $\circ$ is the composition operator and the dimensions of the matrices are $P \in \R^{|\overline{\mathcal{A}}| \times |A|}$, $\nabla_\theta \pi_\theta^\uparrow(s) \in \R^{|A| \times |\theta|}$, and $\nabla_\theta \overline{\pi}_\theta(\overline{s}) \in \R^{|\overline{\mathcal{A}}| \times |\theta|}$. Second, we take the derivative of the value equivalence theorem w.r.t. the actions $a$ using the chain rule: 
\begin{align}
    Q^{\pi_\theta^\uparrow}(s, a) &= \overline{Q}^{\overline{\pi}_\theta} (f(s), g_s(a)) \nonumber \\
    dQ^{\pi_\theta^\uparrow}(s, a)_a &= d\overline{Q}^{\overline{\pi}_\theta}(f(s), g_s(a))_a \nonumber \\
    \nabla_a Q^{\pi_\theta^\uparrow}(s, a) \big|_{a = \pi_\theta^\uparrow(s)} &= \nabla_{\overline{a}} \overline{Q}^{\overline{\pi}_\theta}(f(s), \overline{a}) \big|_{\overline{a} = \overline{\pi}_\theta(f(s))} \underbrace{\nabla_a g_s(a) \Big|_{a = g_s^{-1}(\overline{\pi}_\theta(f(s)))}}_{P},
    \label{eq:grad_equiv_2}
\end{align}
where the dimensions of the matrices are $\nabla_a Q^{\pi_\theta^\uparrow}(s, a) \in \R^{|A|}$, $\nabla_{\overline{a}} \overline{Q}^{\overline{\pi}_\theta}(\overline{s}, \overline{a}) \in \R^{|\overline{\mathcal{A}}|}$, and similarly as before $P \in \R^{|\overline{\mathcal{A}}| \times |A|}$. As we assumed the $g_s$ to be a local diffeomorphism, the inverse function theorem states that the matrix $P$ is invertible, thus we right-multiply both sides of equation \eqref{eq:grad_equiv_2} by $P^{-1}$ and left-multiply the resulting equation by equation \eqref{eq:grad_equiv_1} to obtain the desired result:
\begin{align}
    \nabla_a Q^{\pi_\theta^\uparrow}(s, a) \big|_{a = \pi_\theta^\uparrow(s)} P^{-1} P  \nabla_\theta \pi_\theta^\uparrow(s) &= \nabla_{\overline{a}} \overline{Q}^{\overline{\pi}_\theta}(f(s), \overline{a}) \big|_{\overline{a} = \overline{\pi}(f(s))} \nabla_\theta \overline{\pi}_\theta(f(s)) \nonumber \\ 
    \nabla_a Q^{\pi_\theta^\uparrow}(s, a) \big|_{a = \pi_\theta^\uparrow(s)} \nabla_\theta \pi_\theta^\uparrow(s) &= \nabla_{\overline{a}} \overline{Q}^{\overline{\pi}_\theta}(f(s), \overline{a}) \big|_{\overline{a} = \overline{\pi}_\theta(f(s))} \nabla_\theta \overline{\pi}_\theta(f(s)). \nonumber
\end{align}
\end{proof}

Theorem \ref{thm:det_grad_equiv} highlights that the gradient of the abstract MDP is equivalent to that of the original, despite the underlying spaces being abstracted. This implies that performing HPG on the abstract MDP is equivalent to performing DPG on the actual MDP, allowing us to use them synergistically to update the same parameters $\theta$, as shown in Figure \ref{fig:hpg_diagram}. 

While one can naively use Theorem \ref{thm:det_grad_equiv} to substitute gradients of the standard DPG, theoretically this does not produce any useful results as the expectation remains estimated with respect to the stationary state distribution of the actual MDP $\mathcal{M}$ under $\pi_\theta^\uparrow(s)$. However, using properties of continuous MDP homomorphisms, we can change the integration space from $\mathcal{\mathcal{S}}$ to $\mathcal{\overline{\mathcal{S}}}$, and consequently estimate the policy gradient with respect to the stationary distribution of the abstract MDP $\mathcal{\overline{M}}$ under $\overline{\pi}_\theta(\overline{s})$:
\begin{theorem} [Deterministic Homomorphic Policy Gradient]
\label{thm:det_hpg}
Let \newline $\overline{\mathcal{M}} = (\overline{\mathcal{S}}, \overline{\Sigma}, \overline{\mathcal{A}}, \overline{\tau}, \overline{R} , \overline{\gamma})$ be the image of a continuous MDP homomorphism $h = ( f, g_s )$ from $\mathcal{M} = (\mathcal{S}, \Sigma, \mathcal{A}, \tau, R, \gamma)$, and let $\overline{\pi}_\theta : \overline{\mathcal{S}} \to \overline{\mathcal{A}}$ be a deterministic abstract policy defined on $\overline{\mathcal{M}}$. Then the gradient of the performance measure $J(\theta)$, defined on the actual MDP $\mathcal{M}$, w.r.t. $\theta$ is:  
$$
    \nabla_\theta J(\theta) = \int_{\overline{s} \in \overline{\mathcal{S}}} \rho^{\overline{\pi}_\theta}(\overline{s})  \nabla_{\overline{a}} \overline{Q}^{\overline{\pi}_\theta}(\overline{s}, \overline{a}) \Big|_{\overline{a} = \overline{\pi}_\theta(\overline{s})} \nabla_\theta \overline{\pi}_\theta(\overline{s}) d\overline{s},
$$
where $\rho^{\overline{\pi}_\theta}(\overline{s})$ is the discounted state distribution of $\overline{\mathcal{M}}$ following the deterministic policy $\overline{\pi}_\theta(\overline{s})$.
\end{theorem}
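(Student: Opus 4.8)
The plan is to reduce the statement to the standard deterministic policy gradient theorem applied to the lifted policy, use the gradient-equivalence result to rewrite the integrand in abstract coordinates, and then transport the whole integral from $\mathcal{S}$ to $\overline{\mathcal{S}}$ by means of a pushforward identity for the discounted state distribution.

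First, because the performance measure $J(\theta)$ is defined on the actual MDP $\mathcal{M}$ and $\pi^\uparrow_\theta : \mathcal{S} \to \mathcal{A}$ is the corresponding lifted policy, I would apply Theorem~\ref{thm:det_pg} verbatim to $\pi^\uparrow_\theta$, obtaining
$$\nabla_\theta J(\theta) = \int_{s \in \mathcal{S}} \rho^{\pi^\uparrow_\theta}(s)\, \nabla_\theta \pi^\uparrow_\theta(s)\, \nabla_a Q^{\pi^\uparrow_\theta}(s,a)\big|_{a = \pi^\uparrow_\theta(s)}\, ds.$$
Next I would invoke the equivalence of deterministic policy gradients (Theorem~\ref{thm:det_grad_equiv}), which says precisely that the quantity $\nabla_a Q^{\pi^\uparrow_\theta}(s,a)|_{a=\pi^\uparrow_\theta(s)}\nabla_\theta\pi^\uparrow_\theta(s)$ (the integrand of Theorem~\ref{thm:det_pg} up to the ordering convention in the two statements) equals its abstract counterpart evaluated at $\overline{s}=f(s)$. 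Substituting gives
$$\nabla_\theta J(\theta) = \int_{s \in \mathcal{S}} \rho^{\pi^\uparrow_\theta}(s)\, \nabla_{\overline{a}} \overline{Q}^{\overline{\pi}_\theta}(f(s), \overline{a})\big|_{\overline{a}=\overline{\pi}_\theta(f(s))}\, \nabla_\theta \overline{\pi}_\theta(f(s))\, ds.$$
Writing $F(\overline{s})$ for the abstract integrand $\nabla_{\overline{a}}\overline{Q}^{\overline{\pi}_\theta}(\overline{s},\overline{a})|_{\overline{a}=\overline{\pi}_\theta(\overline{s})}\nabla_\theta\overline{\pi}_\theta(\overline{s})$, the key structural fact is that the integrand now depends on $s$ only through $f(s)$, i.e. it is the composition $F \circ f$.

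The crux of the argument, and the step I expect to be the main obstacle, is the identity ${f}_*\rho^{\pi^\uparrow_\theta} = \rho^{\overline{\pi}_\theta}$: the pushforward under $f$ of the discounted state distribution induced on $\mathcal{M}$ by $\pi^\uparrow_\theta$ is the discounted state distribution induced on $\overline{\mathcal{M}}$ by $\overline{\pi}_\theta$. To prove this I would show, by induction on the horizon $t$, that the pushforward under $f$ of the $t$-step state distribution on $\mathcal{S}$ coincides with the $t$-step state distribution on $\overline{\mathcal{S}}$. The inductive step combines two ingredients already available: the equivariance of transitions from Definition~\ref{def:cont_mdp_homo}, which states that ${f}_*\tau_a(\cdot|s) = \overline{\tau}_{g_s(a)}(\cdot|f(s))$, and the deterministic lifting relation $g_s(\pi^\uparrow_\theta(s)) = \overline{\pi}_\theta(f(s))$ (the relation $g_s\circ\pi^\uparrow = \overline{\pi}\circ f$ used in the proof of Theorem~\ref{thm:det_grad_equiv}). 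Together these say that one step of the chain on $\mathcal{S}$ under $\pi^\uparrow_\theta$ projects under $f$ to exactly one step of the chain on $\overline{\mathcal{S}}$ under $\overline{\pi}_\theta$; summing the resulting per-step identities against the discount weights $\gamma^t$ yields the claimed equality of discounted state distributions. Care is needed here with the existence and convergence of these discounted occupancy measures and with the measurability required to push the measures forward, but no new analytic machinery beyond the change-of-variables formula is needed.

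Finally, with the pushforward identity in hand, I would apply the change-of-variables formula (Theorem~\ref{thm:cov}) to compute $\int_{\mathcal{S}} \rho^{\pi^\uparrow_\theta}(s)\, F(f(s))\, ds = \int_{\overline{\mathcal{S}}} F(\overline{s})\, d({f}_*\rho^{\pi^\uparrow_\theta})(\overline{s}) = \int_{\overline{s}\in\overline{\mathcal{S}}} \rho^{\overline{\pi}_\theta}(\overline{s})\, F(\overline{s})\, d\overline{s}$, which upon unfolding $F$ is exactly the claimed expression. This mirrors the change-of-variables moves used in Theorem~\ref{thm:opt_continuous} and the value-equivalence theorem, the only genuinely new content being the transport of the stationary distribution itself.
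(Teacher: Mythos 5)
Your proposal is correct, but it is organized quite differently from the paper's own proof, and the difference is worth spelling out. The paper does not invoke the standard DPG theorem as a black box; instead it re-derives the whole argument from scratch, setting up the Bellman recursion for $\nabla_\theta V^{\pi^\uparrow_\theta}(s)$ and converting to the abstract MDP \emph{inside} the recursion: at each step it applies value equivalence, then the pushforward/change-of-variables identity for the transition kernels, then Theorem~\ref{thm:det_grad_equiv}, so that the unrolled sum $\sum_t \gamma^t p(\overline{s}\to\overline{s}',t,\overline{\pi}_\theta)$ lives in $\overline{\mathcal{S}}$ from the outset; the final step pushes the initial distribution $p_1$ forward to $\overline{p}_1$ and identifies $\rho^{\overline{\pi}_\theta}$. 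You instead factor the proof into three modular pieces: (i) Theorem~\ref{thm:det_pg} applied verbatim to the lifted policy, (ii) a single application of Theorem~\ref{thm:det_grad_equiv} to rewrite the integrand as $F\circ f$, and (iii) a new lemma, $f_*\rho^{\pi^\uparrow_\theta}=\rho^{\overline{\pi}_\theta}$, proved by induction on the horizon from the equivariance of transitions (Definition~\ref{def:cont_mdp_homo}) and the lifting relation $g_s\circ\pi^\uparrow_\theta=\overline{\pi}_\theta\circ f$, followed by one change of variables (Theorem~\ref{thm:cov}). Your inductive argument for the occupancy transport is sound: the one-step case is exactly equivariance composed with the lifting relation, and pushforward commutes with the discounted sum $\sum_t\gamma^t\mu_t$ by monotone convergence. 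The lemma you isolate is never stated in the paper---it is established only implicitly through the unrolling---and it is of independent interest: the identical two-step pattern (standard PG as a black box plus occupancy transport) would also yield the stochastic HPG theorem (Theorem~\ref{thm:stochastic_hpg}), whereas the paper repeats a separate monolithic derivation there. What the paper's route buys in exchange is explicitness: by carrying out the full derivation it shows exactly where the Leibniz rule, Fubini's theorem, and the regularity conditions of Appendix~\ref{sec:assumptions} enter, whereas your step (i) silently requires that those same conditions (differentiability of $\pi^\uparrow_\theta$ in $\theta$, of $Q^{\pi^\uparrow_\theta}$ in $a$, and the boundedness hypotheses) hold for the lifted policy on $\mathcal{M}$ so that Theorem~\ref{thm:det_pg} is actually applicable; you should state that you are invoking those assumptions, and note (as you implicitly do) that the abstract initial distribution in $\rho^{\overline{\pi}_\theta}$ is $\overline{p}_1=f_*p_1$, which is how the paper's final change of variables also interprets it.
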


\begin{proof}
The proof follows along the same lines of the deterministic policy gradient theorem \citep{silver2014deterministic}, but with additional steps for changing the integration space from $\mathcal{S}$ to $\mathcal{\overline{S}}$. First, we derive a recursive expression for $\nabla_\theta V^{{\pi^\uparrow_\theta}}(s)$ as:
\begingroup
\allowdisplaybreaks
\begin{align}
    \nabla_\theta V^{\pi^\uparrow_\theta} (s) &= \nabla_\theta Q^{\pi^\uparrow_\theta}\Big(s, \pi^\uparrow_\theta(s) \Big) \nonumber \\
        &= \nabla_\theta \Big[R (s, \pi^\uparrow_\theta(s)) + \gamma \int_{s' \in \mathcal{S}} \tau_{\pi^\uparrow_\theta(s)} (ds' | s) V^{\pi^\uparrow_\theta}(s') \Big] \nonumber \\
        &=\nabla_\theta \pi^\uparrow_\theta(s) \nabla_a R(s, a) \Big|_{a = \pi^\uparrow_\theta(s)} \nonumber \\ &\qquad + \gamma \int_{s' \in \mathcal{S}} \Big[ \tau_{\pi^\uparrow_\theta(s)} (ds' | s) \nabla_\theta V^{\pi^\uparrow_\theta}(s') + \nabla_\theta \pi^\uparrow_\theta(s) \nabla_a \tau_{a} (ds' | s) \Big|_{a = \pi^\uparrow_\theta(s)} V^{\pi^\uparrow_\theta}(s') \Big] \label{eq:hpg_eq_1} \\
        &= \nabla_\theta \pi^\uparrow_\theta(s) \nabla_a \Big[ R \big(s, a \big) \!+\! \gamma \!\! \int_{s' \in \mathcal{S}} \tau_{a} (ds' | s) V^{\pi^\uparrow_\theta}(s') \Big] \Big|_{a = \pi^\uparrow_\theta(s)} \!\!\!+\! \gamma\!\! \int_{s' \in \mathcal{S}} \tau_{\pi^\uparrow_\theta(s)} (ds' | s) \nabla_\theta V^{\pi^\uparrow_\theta}(s') \nonumber \\
        &= \nabla_\theta \pi^\uparrow_\theta(s) \nabla_a Q^{\pi^\uparrow_\theta}(s, a)\Big|_{a = \pi^\uparrow_\theta(s)} + \gamma \int_{s' \in \mathcal{S}} \tau_{\pi^\uparrow_\theta(s)} (ds' | s) \nabla_\theta V^{\overline{\pi}_\theta}(f(s')) \label{eq:hpg_eq_2} \\
        &= \nabla_\theta \pi^\uparrow_\theta(s) \nabla_a Q^{\pi^\uparrow_\theta}(s, a)\Big|_{a = \pi^\uparrow_\theta(s)} + \gamma \int_{\overline{s}' \in \overline{\mathcal{S}}} \overline{\tau}_{g_s(\pi^\uparrow_\theta(s))} (d\overline{s}' | f(s)) \nabla_\theta V^{\overline{\pi}_\theta}(f(s')) \label{eq:hpg_eq_3} \\
        &=  \nabla_\theta \overline{\pi}_\theta(f(s)) \nabla_{\overline{a}} \overline{Q}^{\overline{\pi}_\theta}(f(s), \overline{a})  \Big|_{\overline{a} = \overline{\pi}_\theta(f(s))} + \gamma \int_{\overline{s}' \in \overline{\mathcal{S}}} \overline{\tau}_{\overline{\pi}_\theta(\overline{s})}(d\overline{s}' | \overline{s}) \nabla_\theta V^{\overline{\pi}_\theta}(\overline{s}') \label{eq:hpg_eq_4} \\
        &=  \nabla_\theta \overline{\pi}_\theta(f(s)) \nabla_{\overline{a}} \overline{Q}^{\overline{\pi}_\theta}(f(s), \overline{a})  \Big|_{\overline{a} = \overline{\pi}_\theta(f(s))} + \gamma \int_{\overline{s}' \in \overline{\mathcal{S}}}  p(\overline{s} \rightarrow \overline{s}', 1, \overline{\pi}_\theta ) \nabla_\theta V^{\overline{\pi}_\theta}(\overline{s}') d\overline{s}'. \nonumber
\end{align}
Where $p(\overline{s} \rightarrow \overline{s}', t, \overline{\pi}_\theta)$ is the probability of going from $\overline{s}$ to $\overline{s}'$ under the policy $\overline{\pi}_\theta(\overline{s})$ in $t$ time steps. In equation \eqref{eq:hpg_eq_1} we were able to apply the Leibniz integral rule to exchange the order of derivative and integration because of the regularity conditions on the continuity of the functions. In equation \eqref{eq:hpg_eq_2} we used the value equivalence property, and in equation \eqref{eq:hpg_eq_3} we used the change of variables formula based on the pushforward measure of $\tau_a(\cdot | s)$ with respect to $f$. Finally, in equation \eqref{eq:hpg_eq_4} we used the equivalence of policy gradients from Theorem \ref{thm:det_grad_equiv}. By recursively rolling out the formula above, we obtain:
\begin{align}
    \nabla_\theta V^{\pi^\uparrow_\theta} (s) &= \nabla_\theta \overline{\pi}_\theta(f(s)) \nabla_{\overline{a}} \overline{Q}^{\overline{\pi}_\theta}(f(s), \overline{a}) \Big|_{\overline{a} = \overline{\pi}_\theta(f(s))} \nonumber \\ 
    &\quad + \gamma \int_{\overline{s}' \in \overline{\mathcal{S}}}  p(\overline{s} \rightarrow \overline{s}', 1, \overline{\pi}_\theta ) \nabla_\theta \overline{\pi}_\theta(f(s')) \nabla_{\overline{a}} \overline{Q}^{\overline{\pi}_\theta}(f(s'), \overline{a}) \Big|_{\overline{a} = \overline{\pi}_\theta(f(s'))}  d\overline{s}' \nonumber \\
    &\quad + \gamma^2 \int_{\overline{s}' \in \overline{\mathcal{S}}}  p(\overline{s} \rightarrow \overline{s}', 1, \overline{\pi}_\theta ) \int_{\overline{s}'' \in \overline{\mathcal{S}}}  p(\overline{s}' \rightarrow \overline{s}'', 1, \overline{\pi}_\theta ) \nabla_\theta V^{\pi^\uparrow_\theta}(f(s'')) d\overline{s}'' d\overline{s}' \nonumber \\
    &= \nabla_\theta \overline{\pi}_\theta(f(s)) \nabla_{\overline{a}} \overline{Q}^{\overline{\pi}_\theta}(f(s), \overline{a}) \Big|_{\overline{a} = \overline{\pi}_\theta(f(s))} \nonumber \\ 
    &\quad + \gamma \int_{\overline{s}' \in \overline{\mathcal{S}}}  p(\overline{s} \rightarrow \overline{s}', 1, \overline{\pi}_\theta ) \nabla_\theta \overline{\pi}_\theta(f(s')) \nabla_{\overline{a}} \overline{Q}^{\overline{\pi}_\theta}(f(s'), \overline{a}) \Big|_{\overline{a} = \overline{\pi}_\theta(f(s'))}  d\overline{s}' \nonumber \\
    &\quad + \gamma^2 \int_{\overline{s}'' \in \overline{\mathcal{S}}}  p(\overline{s} \rightarrow \overline{s}'', 2, \overline{\pi}_\theta ) \nabla_\theta V^{\overline{\pi}_\theta}(f(s'')) d\overline{s}'' \label{eq:hpg_eq_5}\\
    &\vdots \nonumber \\
    &= \int_{\overline{s}' \in \overline{\mathcal{S}}} \sum_{t=0}^\infty \gamma^t p(\overline{s} \rightarrow \overline{s}', t, \overline{\pi}_\theta ) \nabla_\theta \overline{\pi}_\theta(f(s)) \nabla_{\overline{a}} \overline{Q}^{\overline{\pi}_\theta}(f(s), \overline{a}) \Big|_{\overline{a} = \overline{\pi}_\theta(f(s))} d\overline{s}'.
\end{align}
Where in equation \eqref{eq:hpg_eq_5} we exchanged the order of integration using the Fubini's theorem that requires the boundedness of $\| \nabla_\theta V^{\overline{\pi}_\theta} (s) \|$ as described in the regularity conditions. Finally, we take the expectation of $\nabla_\theta V^{\pi^\uparrow_\theta}(s)$ over the initial state distribution:
\begin{align}
    \nabla_\theta J(\theta) &= \nabla_\theta \int_{s \in \mathcal{S}} p_1(s) V^{\pi^\uparrow_\theta}(s) ds \nonumber \\
    &= \int_{s \in \mathcal{S}} p_1(s) \nabla_\theta V^{\pi^\uparrow_\theta}(s) ds \nonumber \\
    &= \int_{s \in \mathcal{S}} p_1(s) \int_{\overline{s}' \in \overline{\mathcal{S}}} \sum_{t=0}^\infty \gamma^t p(\overline{s} \rightarrow \overline{s}', t, \overline{\pi}_\theta ) \nabla_\theta \overline{\pi}_\theta(f(s)) \nabla_{\overline{a}} \overline{Q}^{\overline{\pi}_\theta}(f(s), \overline{a}) \Big|_{\overline{a} = \overline{\pi}_\theta(f(s))} d\overline{s}' ds \nonumber \\
    &= \int_{\overline{s} \in \overline{\mathcal{S}}} \overline{p}_1(\overline{s}) \int_{\overline{s}' \in \overline{\mathcal{S}}} \sum_{t=0}^\infty \gamma^t p(\overline{s} \rightarrow \overline{s}', t, \overline{\pi}_\theta ) \nabla_\theta \overline{\pi}_\theta(f(s)) \nabla_{\overline{a}} \overline{Q}^{\overline{\pi}_\theta}(f(s), \overline{a}) \Big|_{\overline{a} = \overline{\pi}_\theta(f(s))} d\overline{s}' d\overline{s} \label{eq:hpg_eq_6} \\
    &= \int_{\overline{s} \in \overline{\mathcal{S}}} \rho^{\overline{\pi}_\theta}(\overline{s}) \nabla_\theta \overline{\pi}_\theta(\overline{s}) \nabla_{\overline{a}} \overline{Q}^{\overline{\pi}_\theta}(\overline{s}, \overline{a}) \Big|_{\overline{a} = \overline{\pi}_\theta(\overline{s})} d\overline{s}.
\end{align}
\endgroup
Where $\rho^{\overline{\pi}_\theta}(\overline{s})$ is the discounted stationary distribution induced by the policy $\overline{\pi}_\theta$. In equation \eqref{eq:hpg_eq_6} we used the change of variable formula. Similar to the steps before, we have used the Leibniz integral rule to exchange the order of integration and derivative, used Fubini's theorem to exchange the order of integration.
\end{proof}

\subsection{Comparing the Stochastic and Deterministic HPG Theorems}
\label{sec:comparison_hpg}

The significance of the homomorphic policy gradients (Theorems \ref{thm:stochastic_hpg} and \ref{thm:det_hpg}), which form the basis of our proposed homomorphic actor-critic algorithms, is twofold. First, we can get another estimate for the policy gradient based on the approximate MDP homomorphic image in addition to the standard policy gradient estimator. Although the two policy gradient estimates are not statistically independent from one another as they are tied through the homomorphism map, HPG will potentially have less variance at the expense of some bias due to the approximation of the MDP homomorphism.

Second, since the minimal image of an MDP is the MDP homomorphic image \citep{ravindran2001symmetries}, the abstract critic $\overline{Q}^{\overline{\pi}_\theta}$ is trained on a simplified problem. In other words, each abstract state-action pair $(\overline{s}, \overline{a})$ used to train $\overline{Q}^{\overline{\pi}_\theta}$ represents all $(s, a )$ pairs that are equivalent under the MDP homomorphism relation, thus improving sample efficiency. However, the amount of complexity reduction is dependent on the approximate symmetries of the environment, as also supported by our empirical results. 

Figure \ref{fig:hpg_diagram} shows the schematics of the homomorphic policy gradient theorem and its tangential use alongside the standard PG theorem. To conclude this section, we provide a conceptual comparison between the stochastic and deterministic HPG variants, following up with an empirical comparison in Section \ref{sec:experiments}.

\paragraph{Dimensionality Reduction in the Action Space.}
A key aspect of MDP homomorphisms is the notion of ``\emph{collapse}'': the state map $f$ and state-dependent action maps $g_s$ are specifically surjective. For instance, continuous symmetries of a physical system with respect to an action corresponds to an invariance of a quantity, and effectively allows for reduction in the dimensionality of the action space \citep{noether1971invariant, bluman2013symmetries}. In the context of RL agents, the ability to identify and leverage continuous symmetries of the environment results in the dimensionality reduction of the action space which in turn significantly simplifies the learning problem. However, such action reductions do not meet the conditions of the deterministic HPG theorem, as it requires the action map $g_s$ to be a \emph{local diffeomorphism}. Thus, the underlying assumptions do not account for strict collapses. In contrast, the stochastic HPG does not impose any additional structure on $g_s$, which consequently allows for effective dimensionality reduction of the action space, without risking the optimality of the policy.

\begin{figure}[t!]
    \centering
    \includegraphics[width=0.6\textwidth]{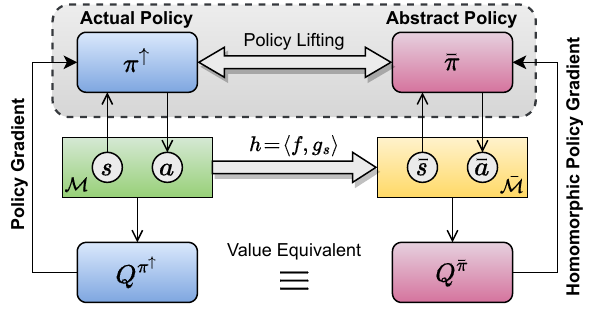}
    \caption{Schematics of HPG. The actual MDP $\mathcal{M}$ is used to train $Q^{\pi^\uparrow}$ and update $\pi^\uparrow$ with the standard PG theorem, while the abstract MDP ${\overline{\mathcal{M}}}$ is used to train $\overline{Q}^{\overline{\pi}}$ and update $\overline{\pi}$ with the homomorphic PG theorem.  ${\overline{\mathcal{M}}}$ is the MDP homomorphic image of $\mathcal{M}$ obtained by learning the homomorphism map $h \!=\! ( f, g_s )$. The policies $\pi^\uparrow$ and $\overline{\pi}$ are coupled together through the lifting procedure.}
    \label{fig:hpg_diagram}
\end{figure}

\paragraph{Maximum Entropy RL.}
Having the result for stochastic policies give theoretical guarantees when integrating MDP homomorphisms in a wider variety of algorithms. For instance, the \emph{maximum entropy RL} framework generalizes the expected return formulation to encompass the entropy of the policy, resulting in improvements in robustness and exploration \citep{ziebart2008maximum, ziebart2010modeling, haarnoja2018soft}. Of course, such methods are only applicable to stochastic policies. Thus, in contrast to the deterministic HPG, stochastic HPG is capable of benefiting from the addition of the policy's entropy. Importantly, the entropy of the pushforward measure is at most the entropy of the original measure (this is a consequence of the conditional Jensen's inequality--- see~\citet{smorodinsky2006ergodic}); hence seeking maximal entropy policies in the abstract MDP correspond to high-entropy policies in the original space as well.

\paragraph{Sample Efficiency.}
The sample efficiency of deterministic and stochastic PG methods varies significantly depending on the choice of the algorithm, network architecture, exploration strategy, and implementation \citep{henderson2018deep}; nevertheless, it is generally observed that deterministic PG methods are more sample efficient \citep{lillicrap2015continuous, fujimoto2018addressing, barth2018distributed}. One hypothesis is that since deterministic PG integrates only over the state space, in contrast to stochastic PG which integrates over both state and action spaces, the policy gradient estimation is more sample efficient, particularly in high-dimensional action spaces \citep{silver2014deterministic}. The same reasoning is applicable to HPG variants. We have carried out a thorough empirical study on a variety of environments in Section \ref{sec:experiments} to further study the characteristics of the two HPG variants.

\section{Homomorphic Actor-Critic Algorithms}
\label{sec:homomorphic_ac}
\label{sec:hom_algorithm}
In this section, we outline a practical deep reinforcement learning algorithm based on the stochastic and deterministic HPG theorems, referred to as the \emph{Deep Homomorphic Policy Gradient} (DHPG) algorithms. While the overall structure of the algorithm and learning the MDP homomorphisms map are similar in both cases of stochastic and deterministic policies, the policy lifting procedure requires additional intricate steps in the stochastic case. Algorithm \ref{alg:dhpg} describes the pseudo-code of DHPG algorithms.

Denoting pixel observations as ${o}_t$, the underlying states as ${s}_t$, and the abstract states as ${\overline{s}}_t$, the main components of the DHPG algorithm are: the MDP homomorphism map $h_{\phi, \eta} \!=\! (f_\phi({s}_t), g_\eta({s}_t, {a}_t))$, pixel encoder $E_\mu({o}_t)$, actual critic $Q_\psi({s}_t, {a}_t)$ and policy $\pi^\uparrow_\theta({a}_t | {s_t})$, abstract critic $\overline{Q}_{\overline{\psi}}({\overline{s}}_t, {\overline{a}}_t)$ and policy $\overline{\pi}_{\overline{\theta}}({\overline{a}}_t | {\overline{s}}_t)$, reward predictor $\overline{R}_\rho({\overline{s}}_t)$, and probabilistic transition dynamics $\overline{\tau}_\nu ({\overline{s}}_{t+1} | {\overline{s}}_t, {\overline{a}}_t)$ which outputs a Gaussian distribution. Finally, we leverage target critic networks ${Q}_{\psi'}$ and $\overline{Q}_{\overline{\psi}'}$ for a more stable training and use a vanilla replay buffer \citep{mnih2013playing, lillicrap2015continuous}.

\paragraph{Policy Lifting Procedure.}
In general, the lifted policy needs to satisfy the relation $\pi^\uparrow(g_s^{-1}(\beta) | s) = \overline{\pi}(\beta | f(s))$ for every Borel set $\beta \subset \mathcal{\overline{A}}$ and $s \in \mathcal{S}$. As discussed in Section \ref{sec:policy_lifting_value_eqv}, Proposition \ref{prop:lift_existence} proves the existence of the lifted policy $\pi^\uparrow$ from an abstract policy $\overline{\pi}$, however, it does not provide an explicit method for construction of the lifted policy.

If the abstract policy is deterministic, the lifted policy can be simply obtained by choosing one representative for the preimage $g_s^{-1}\big(\overline{\pi}(f(s))\big)$. If we select $g_s$ to be a bijection, as was assumed in Section \ref{sec:deterministic_pg}, the lifted policy can be uniquely defined as $\pi^\uparrow(s) \!=\! g_s^{-1}\big(\overline{\pi}(f(s))\big)$. This allows for parameterizing the two policies using the same network. In practice, we parameterize the actual policy $\pi^\uparrow_\theta$ and obtain the abstract policy as $\overline{\pi}_\theta(f(s)) \!=\! g_s(\pi^\uparrow_\theta(s))$.

The solution is not as straightforward for stochastic abstract policies; while Bayesian approaches for constructing solutions to stochastic inverse problems exist \citep{butler2018combining}, we choose a sampling-based method to derive a loss function as an approximation of the policy lifting procedure \citep{kaipio2006statistical}. Using the change of variable formula of the pushforward measure, we can show that the conditional expectations of abstract actions under the two policies are equal:
\begin{equation*}
    \mathbb{E}_{\pi^\uparrow} [g_s(a) | s] = \int_{a \in \mathcal{A}} g_s(a) \pi^\uparrow(da | s) = \int_{\overline{a} \in \mathcal{\overline{A}}} \overline{a} \; \overline{\pi}(d\overline{a} | \overline{s}) = \mathbb{E}_{\overline{\pi}} [\overline{a} | f(s)].
\end{equation*}
A similar result holds for all finite moments; in particular, the conditional variance of abstract actions under the two policies are equal--- that is:
\begin{equation*}
    \Var_{\pi^\uparrow} [g_s(a) | s] = \Var_{\overline{\pi}} [\overline{a} | f(s)].    
\end{equation*}
Therefore, we can derive a policy lifting loss as a measure of the consistency of the two policies with respect to the MDP homomorphism map and the lifting procedure. Assuming the policies $\pi^\uparrow_\theta$ and $\overline{\pi}_{\overline{\theta}}$ are parameterized by independent neural networks, the loss function is obtained by matching the conditional expectation and standard deviation (SD) of abstract actions conditioned on observations sampled from the replay buffer:
\begin{equation}
    \label{eq:policy_lifting_loss}
    \mathcal{L}_{\text{lift.}}(\theta, {\overline{\theta}}) = (\mathbb{E}_{\pi_\theta^\uparrow} [g_s(a) | s] - \mathbb{E}_{\overline{\pi}_{\overline{\theta}}} [\overline{a} | f(s)])^2 + (\text{SD}_{\pi_\theta^\uparrow} [g_s(a) | s] - \text{SD}_{\overline{\pi}_{\overline{\theta}}} [\overline{a} | f(s)])^2.
\end{equation}
As discussed, the policy lifting loss in Equation \eqref{eq:policy_lifting_loss} is not required for deterministic DHPG.

\paragraph{Training the Critic.} 
Actual and abstract critics are trained using $n$-step TD error for a faster reward propagation \citep{barth2018distributed}. The loss function for each critic is therefore defined as the expectation of the $n$-step Bellman error estimated over transitions samples from the replay buffer $\mathcal{B}$:
\begin{align}
    \label{eq:critic_loss_1}
    \mathcal{L}_\text{actual critic} (\psi) &= \mathbb{E}_{(s, a, s^\prime, r) \sim \mathcal{B}} \big[ \big( R_t^{(n)} + \gamma^n Q_{\psi^\prime}({s}_{t+n}, {a}_{t+n}) - Q_\psi({s}_t, {a}_t) \big)^2 \big] \\
    \label{eq:critic_loss_2}
    \mathcal{L}_\text{abstract critic} (\overline{\psi}, \phi, \eta) &= \mathbb{E}_{(s, a, s^\prime, r) \sim  \mathcal{B}} \big[ \big(R_t^{(n)} + \gamma^n \overline{Q}_{\overline{\psi^\prime}}({\overline{s}}_{t+n}, {\overline{a}}_{t+n}) - \overline{Q}_{\overline{\psi}}({\overline{s}}_t, {\overline{a}}_t) \big)^2 \big],
\end{align}
where ${\overline{s}_t} \!=\! f_\phi({s}_t)$ and ${\overline{a}}_t \!=\! g_\eta({s}_t, {a}_t)$ are computed using the learned MDP homomorphism, $\psi^\prime$ and $\overline{\psi^\prime}$ denote parameters of target networks, and $R_t^{(n)} \!\!=\!\! \sum_{i=0}^{n-1} \gamma^i r_{t+i}$ is the $n$-step return.

\begin{algorithm}[b!]
    \caption{Deep Homomorphic Policy Gradient (DHPG)}
    \begin{algorithmic}[1]
        \State Initialize target networks $\quad \psi' \leftarrow \psi$, $\overline{\psi'} \leftarrow \overline{\psi}$.
        \For{$t = 0$ to $T$} 
            \State Encode observation: $\quad {s}_t = E_\mu({o}_t)$
            \State Select and execute action:
            \If{stochastic policy}
                \State ${a}_t \sim \pi^\uparrow_\theta(\cdot | {s}_t)$
            \Else
                \State ${a}_t = \pi^\uparrow_\theta({s}_t) + \epsilon$, where $\epsilon \sim {N}(0, \sigma)$
            \EndIf
            \State Store transition in the replay buffer: $\quad \mathcal{B} \leftarrow \mathcal{B} \cup ({s}_t, {a}_t, {s}_{t+1}, r_{t})$
            \State Sample a mini-batch: $\quad B_i \sim \mathcal{B}$ 
            \State Permute the mini-batch: $\quad B_j = \text{permute}(B_i)$
            \State Train $h_{\phi, \eta}, E_\mu, \overline{\tau}_\nu, \overline{R}_\rho$: $\quad \mathcal{L}_\text{lax} + \mathcal{L}_\text{h}$ \Comment{Eqns. \ref{eq:lax_bisim_loss}-\ref{eq:hom_loss}}
            \State Train critics $Q_\psi, \overline{Q}_{\overline{\psi}}$: $\quad \mathcal{L}_\text{actual} + \mathcal{L}_\text{abstract}$ \Comment{Eqns. \ref{eq:critic_loss_1}-\ref{eq:critic_loss_2}}
            \If{stochastic policy}
                \State \# Lifted and abstract policies are respectively parameterized by $\theta$ and $\overline{\theta}$
                 \State Train $\pi^\uparrow_\theta$ using PG + MaxEnt   \Comment{SAC \citep{haarnoja2018soft}}
                \State Train $\overline{\pi}_{\overline{\theta}}$ using stochastic HPG \Comment{Theorem \ref{thm:stochastic_hpg}}
                \State Update policies $\pi^\uparrow_\theta$ and $\overline{\pi}_{\overline{\theta}}$ with the policy lifting loss: $\quad \mathcal{L}_\text{lift.}$ \Comment{Eqn. \ref{eq:policy_lifting_loss}} 
            \Else
                \State \# Lifted and abstract policies share the same parameters $\theta$
                \State Train $\pi^\uparrow_\theta$ using DPG   \Comment{TD3 \citep{fujimoto2018addressing}}
                \State Train $\overline{\pi}_{\theta}$ using deterministic HPG \Comment{Theorem \ref{thm:det_hpg}}
            \EndIf           
            \State Update target networks: $\quad \psi' \leftarrow \alpha \psi + (1 - \alpha) \psi', \; \overline{\psi'} \leftarrow \alpha \overline{\psi} + (1 - \alpha) \overline{\psi'}$
        \EndFor
    \end{algorithmic}
    \label{alg:dhpg}
\end{algorithm}

\paragraph{Training the Actor.}
For stochastic policies, we train the actual policy using the standard PG theorem \citep{sutton2000policy}, and train the abstract policy via the stochastic HPG (Theorem \ref{thm:stochastic_hpg}). While we can employ any stochastic actor-critic algorithm for training the actual policy, we use SAC \citep{haarnoja2018soft} to further demonstrate the applicability of our method to the maximum entropy RL framework. Notably, as discussed in Section \ref{sec:comparison_hpg}, the entropy regularizer term needs to be accounted for only during the actual policy update. 

In the case of deterministic policies, the actual policy is trained using the deterministic PG \citep{silver2014deterministic} and the abstract policy is updated using the deterministic HPG (Theorem \ref{thm:det_hpg}). Notably, since in this case the actual and abstract policies share the same set of parameters ($\theta = \overline{\theta}$), both policy updates are applied to the same set of policy parameters.

\paragraph{Learning the continuous MDP Homomorphism Map.}
We learn the continuous MDP homomorphism map using the lax bisimulation metric \citep{taylor2008bounding}, similarly to the prior work on continuous MDP homomorphisms \citep{rezaei2022continuous}. We use the lax bisimulation metric \citep{taylor2008bounding}, Equation \eqref{eq:lax_bisim}, to propose a loss function that encodes lax bisimilar states closer together in the abstract space. The lax bisimulation metric is applicable to continuous actions and as a (pseudo-)metric, it can naturally represent approximate MDP homomorphisms. The lax bisimulation metric between two state-action pairs $({s}_i, {a}_i)$ and $({s}_j, {a}_j)$ is defined as:
\begin{equation}
    \label{eq:lax_bisim}
    d_\text{lax} \big( ({s}_i, {a}_i), ({s}_j, {a}_j)  \big) = c_r |R({s}_i, {a}_i) - R({s}_j, {a}_j)| + c_t W_1 \big( \tau(\cdot | {s}_i, {a}_i), \tau(\cdot | {s}_j, {a}_j) \big),
\end{equation}
where the first term measures the distance between the reward terms and $W_1$ is the Kantorovich metric measuring the distance between transition probabilities. Following the same intuition as prior works on using bisimulations for representation learning \citep{zhang2020learning}, we define our proposed lax bisimulation loss over pairs of transition tuples sampled from the replay buffer. We permute samples to compute their pairwise distance in the abstract space and their pairwise lax bisimilarity distance. Consequently, we minimize the distance between these two terms:
{
\small
\begin{equation}
    \mathcal{L}_\text{lax}(\phi, \eta) = \mathbb{E}_{\mathcal{B}} \big[ \| f_\phi({s}_i) - f_\phi({s}_j) \|_1 - \|r_i-r_j\|_1 - \alpha W_2 \big( \overline{\tau}_\nu(\cdot | f_\phi({s}_i), g_\eta(s_i, {a}_i)), \overline{\tau}_\nu(\cdot | f_\phi({s}_j), g_\eta(s_j, a_j)) \big) \big]^2
    \label{eq:lax_bisim_loss}
\end{equation}
}
Here, the expectation is taken over two samples $(s_i, a_i, s_i', r_i), (s_j, a_j, s_j', r_j) \sim \mathcal{B}$ from the replay buffer.

Similarly to \citet{zhang2020learning}, we replaced the Kantorovich ($W_1$) metric with the $W_2$ metric as there is an explicit formula for it for Gaussian distributions. Finally, we apply the conditions of a continuous MDP homomorphism map from Definition \ref{def:cont_mdp_homo} via the loss function of:
\begin{equation}
    \label{eq:hom_loss}
    \mathcal{L}_\text{h}(\phi, \eta, \nu, \rho) = \mathbb{E}_{(s_i, a_i, s_i^\prime, r_i) \sim \mathcal{B}} \big[ \big( f_\phi({s}_i^\prime) - {\overline{s}}_i^\prime \big)^2 + \big( r_i \!-\! \overline{R}_\rho(f_\phi({s}_i)) \big)^2  \big],
\end{equation}
where ${\overline{s}}_i^\prime \!\sim\!  \overline{\tau}_\nu(\cdot | f_\phi({s}_i), g_\eta({s}_i, {a}_i))$. The final loss function is obtained as  $\mathcal{L}_\text{lax}(\phi, \eta) + \mathcal{L}_\text{h}(\phi, \eta, \rho, \nu)$.

\section{Experiments}
\label{sec:experiments}
\label{sec:exp_results}
\begin{figure}[t!]
    \centering
    \begin{subfigure}[b]{0.32\textwidth}
         \centering
         \includegraphics[width=\textwidth]{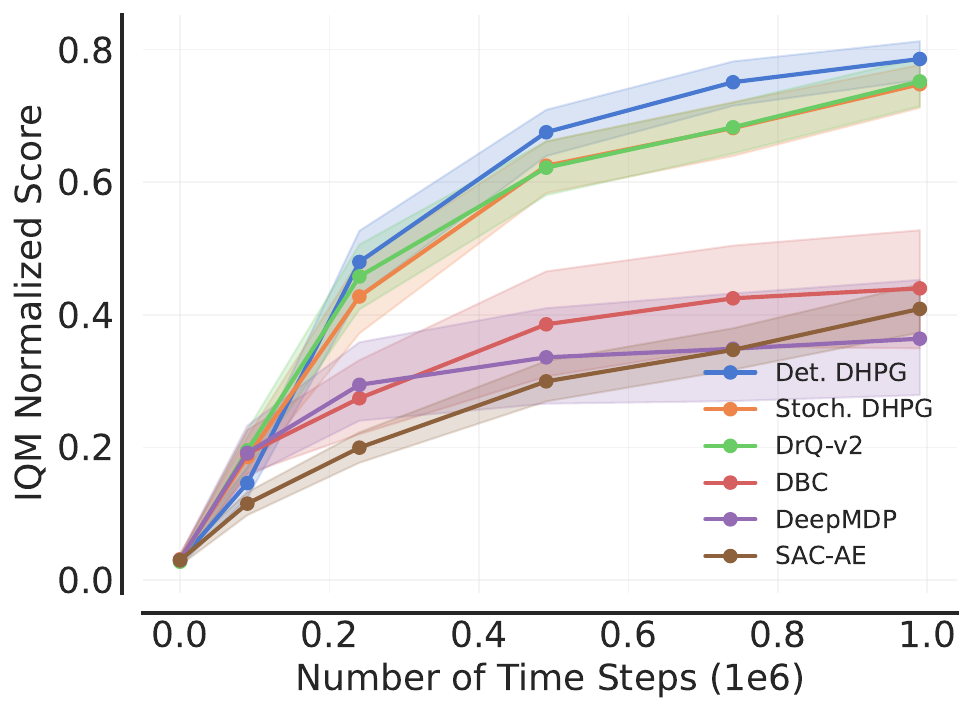}
         \caption{Sample efficiency.}
         \label{fig:pixels_sample_efficiency}
    \end{subfigure}
    \begin{subfigure}[b]{0.32\textwidth}
         \centering
         \includegraphics[width=\textwidth]{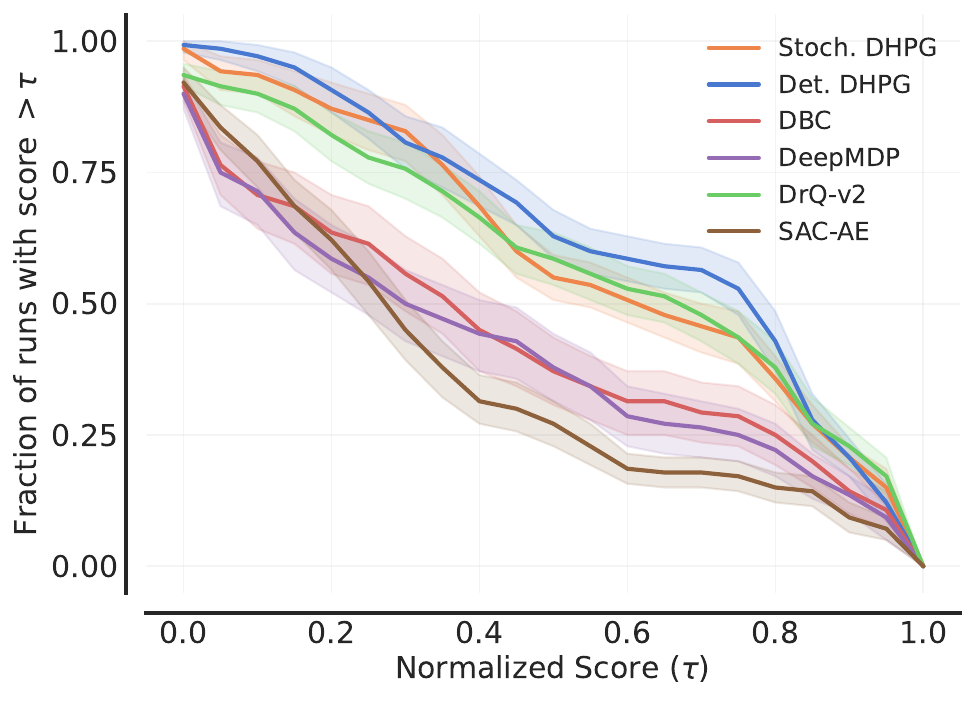}
         \caption{Performance profiles.}
         \label{fig:spixels_perf_profile_main}
    \end{subfigure}
    \begin{subfigure}[b]{0.32\textwidth}
         \centering
         \includegraphics[width=\textwidth]{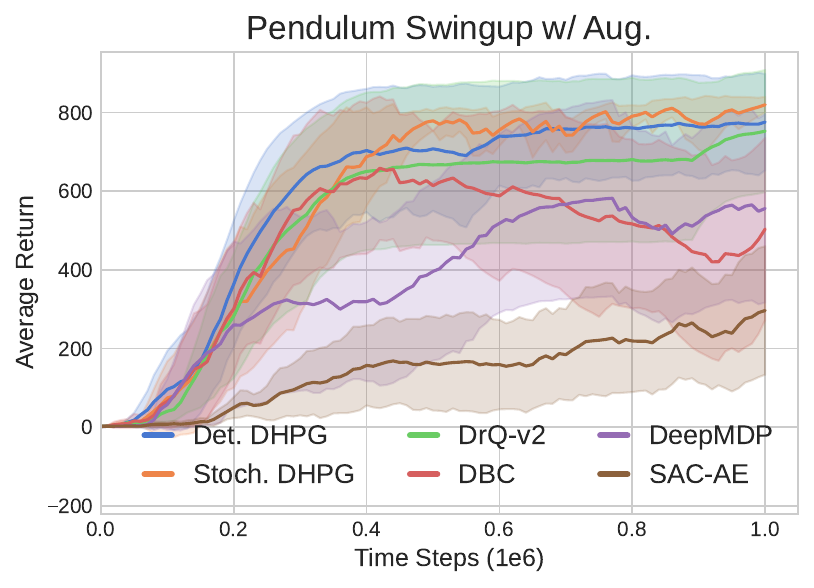}
         \caption{Learning curves.}
         \label{fig:pixels_res_1}
    \end{subfigure}    
    \caption{Results of DM Control tasks with pixel observations obtained on 10 seeds. RLiable metrics are aggregated over 14 tasks.  All methods are \textbf{with} image augmentation. \textbf{(a)} RLiable IQM scores as a function of number of steps for comparing sample efficiency, \textbf{(b)} RLiable performance profiles at 500k steps, \textbf{(c)} learning curves on the pendulum swingup task. Full results are in Appendix \ref{sec:additional_results_pixels}. Shaded regions represent $95\%$ confidence intervals.}
    \label{fig:pixels_results}
\end{figure}

In our experiments, we aim to answer the following key questions:
\begin{enumerate}[noitemsep,nosep]
    \item Does the homomorphic policy gradient improve policy optimization?
    \item What are the qualitative properties of the learned representations and the abstract MDP?
    \item Can DHPG learn and recover the minimal MDP image from raw pixel observations?
\end{enumerate}

We evaluate DHPG on continuous control tasks from DM Control on pixel observations. Importantly, to reliably evaluate our algorithm against the baselines and to correctly capture the distribution of results, we follow the best practices proposed by \citet{agarwal2021deep} and report the interquartile mean (IQM) and performance profiles aggregated on all tasks over 10 random seeds. While our baseline results are obtained using the official code, when possible\footnote{We use the official implementations of DrQv2, DBC, and SAC-AE, while we re-implement DeepMDP due to the unavailability of the official code. See Appendix \ref{sec:baseline_impl} for full details.}, some of the results may differ from the originally reported ones due to the difference in the seed numbers and our goal to present a faithful representation of the true performance distribution \citep{agarwal2021deep}. 

\subsection{DeepMind Control Suite}
\label{sec:results_pixels}
We compare the effectiveness of DHPG on pixel observations against DBC \citep{zhang2020learning}, DeepMDP \citep{gelada2019deepmdp}, SAC-AE \citep{yarats2021improving}, and state-of-the-art performing DrQ-v2 \citep{yarats2021mastering}. All methods use $n$-step returns, share the same hyperparameters in Appendix \ref{sec:hyperparams} and all hyperparameters are adapted from DrQ-v2 \emph{without any further tuning}. We acknowledge that since DrQ-v2 is based upon DDPG, the hyperparameters we used may be more advantageous to det erministic DHPG in comparison with stochastic DHPG. Importantly, for a fair comparison with DrQ-v2 which uses image augmentation, we present two variations of DHPG and other baselines, \emph{with and without image augmentation}.

\begin{figure}[t!]
    \centering
    \begin{subfigure}[b]{0.36\textwidth}
        \centering
        \includegraphics[width=\textwidth]{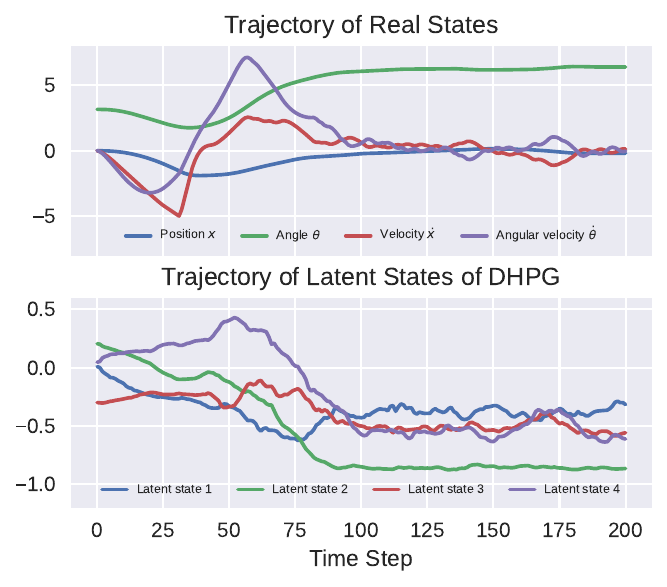}
        \caption{State trajectories over time.}
        \label{fig:low_dim_results_traj}
    \end{subfigure}
    \hfill
    \begin{subfigure}[b]{0.31\textwidth}
        \centering
        \includegraphics[width=\textwidth]{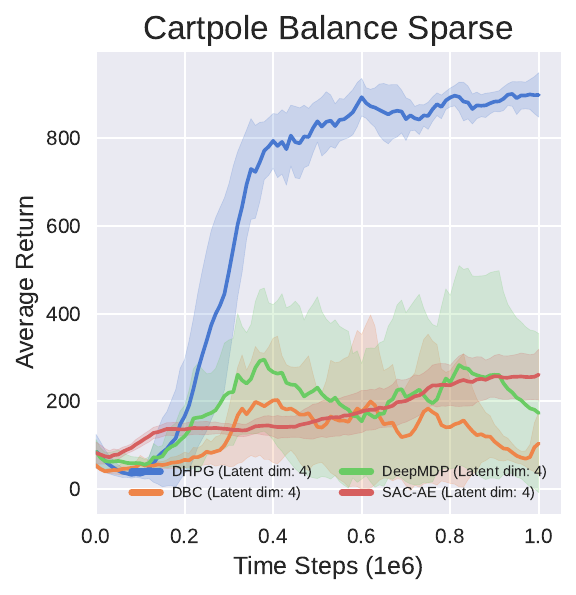}
        \caption{Learning curves.}
    \end{subfigure}
    \hfill
    \begin{subfigure}[b]{0.31\textwidth}
        \centering
        \includegraphics[width=\textwidth]{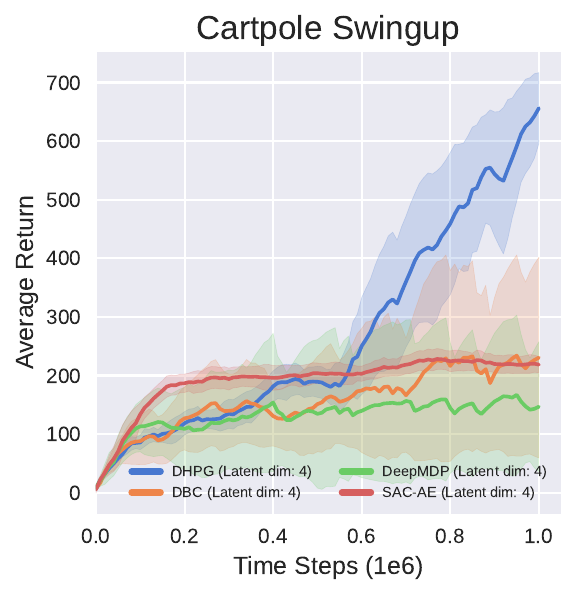}
        \caption{Learning curves.}
    \end{subfigure}
    \caption{Effectiveness of DHPG in recovering the minimal MDP from pixels. All methods are limited to a 4-dimensional latent space which is equal to the dimensions of the real state space of cartpole. \textbf{(a)} Trajectories of real states obtained from Mujoco and trajectories of latent states of DHPG. \textbf{(b, c)} Learning curves averaged on 10 seeds.}
    \label{fig:low_dim_results}
\end{figure}

\paragraph{DHPG outperforms or matches other algorithms on pixel observations, demonstrating its effectiveness in representation learning.} Results with image augmentation are presented in Figure \ref{fig:pixels_results} and \emph{full results are in Appendix \ref{sec:additional_results_pixels}}. Aggregated over 14 tasks, deterministic DHPG outperforms state-of-the-art DrQ-v2 and stochastic DHPG is as performant. Although these yield slight performance gains overall, the comparison in performance between DHPG and DrQ-v2 is highly task-dependent (see Figure~\ref{fig:pixel_results_supp_aug} in Appendix \ref{sec:additional_results_pixels}). For complex tasks such as Walker Run or Cheetah Run, DHPG obtains equal or slightly worse performance compared to DrQ-v2; however, on domains with clear symmetries---and hence easily learnable MDP homomorphism maps--- DHPG outperforms DrQ-v2. In particular, DHPG without image augmentation outperforms DrQ-v2 on domains such as Cartpole and Pendulum, demonstrating its capability of representation learning. 

\paragraph{Deterministic DHPG and stochastic DHPG have approximately similar sample efficiency, with deterministic DHPG being slightly better.} As discussed in Section \ref{sec:comparison_hpg}, deterministic policy gradient in theory is more sample efficient than the stochastic policy gradient, since it does not need to integrate over the action space \citep{silver2014deterministic}. Additionally, due to the complications of lifting a stochastic policy, stochastic DHPG has more components which can negatively impact the learning performance. As a result, deterministic DHPG is slightly more sample efficient than stochastic DHPG.

\paragraph{DHPG can learn and recover a low-dimensional MDP image.} A key strength of MDP homomorphisms is their ability to represent the minimal MDP image \citep{ravindran2001symmetries}, which is particularly important when learning from pixel observations. To demonstrate this ability, we have limited the latent space dimensions to the dimension of the real system and compared DHPG (without image augmentation) with baselines in Figure \ref{fig:low_dim_results}. While other methods are not able to learn the tasks, DHPG can successfully learn the policy and the minimal low-dimensional latent space. Surprisingly, trajectories of the latent states resemble that of the real states as shown in Figure \ref{fig:low_dim_results_traj}.

\begin{figure}[t!]
    \centering    
    \begin{subfigure}[b]{0.4\textwidth}
         \centering
         \includegraphics[width=\textwidth]{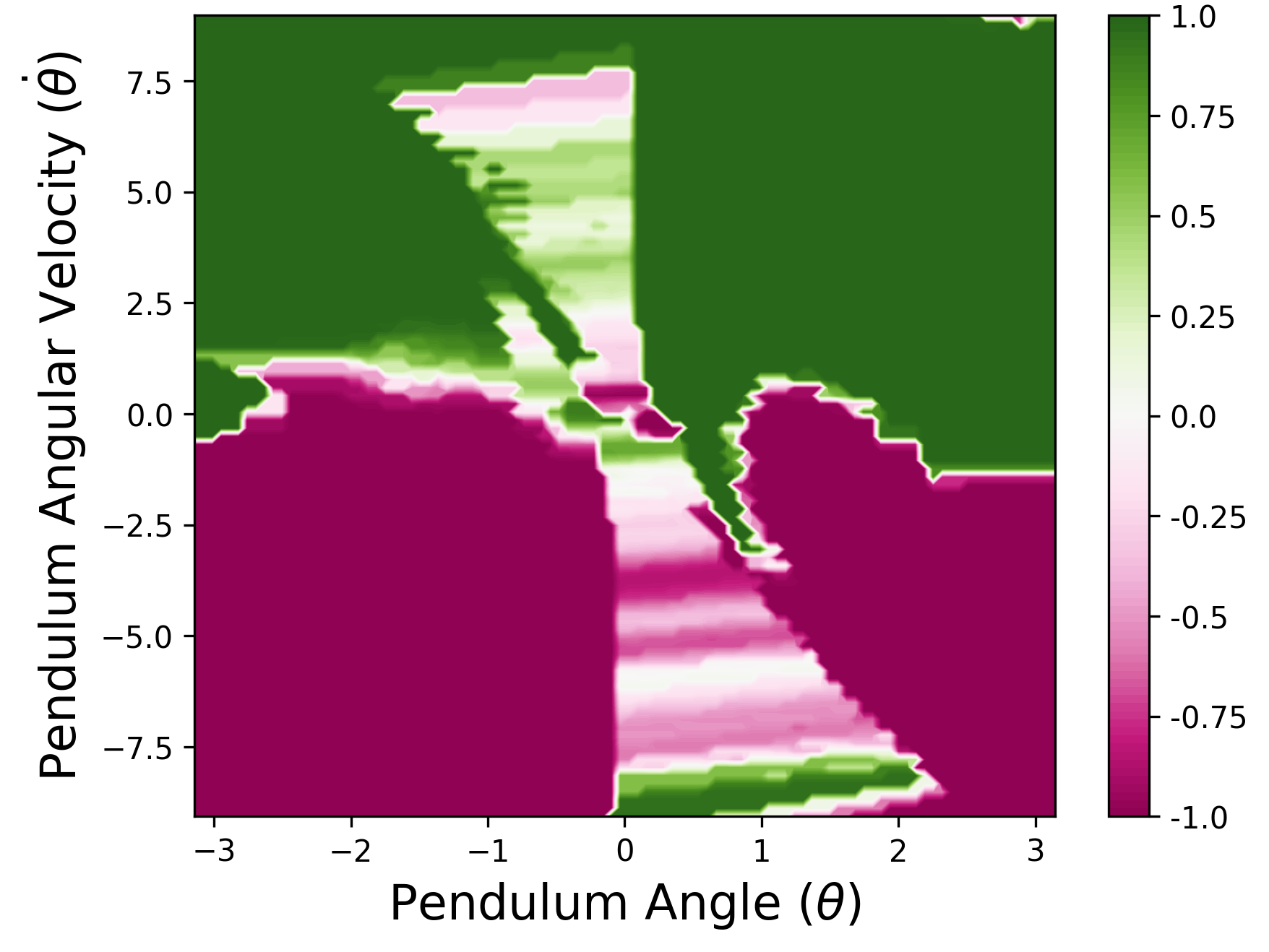}
         \caption{Actual optimal policy.}
         \label{fig:states_pendulum_action_contour_a}
    \end{subfigure}    
    \hspace{1em}
    \begin{subfigure}[b]{0.4\textwidth}
         \centering
         \includegraphics[width=\textwidth]{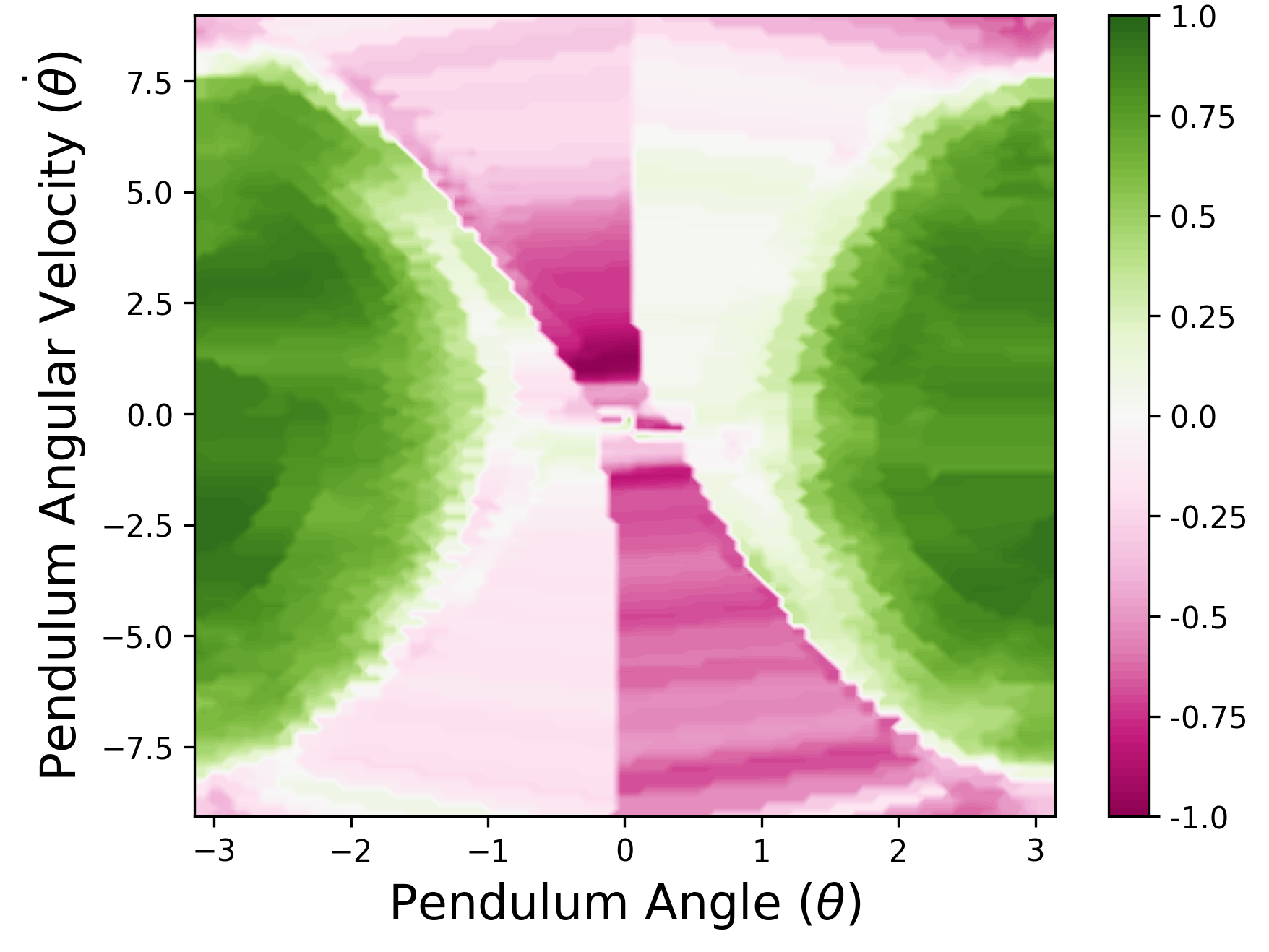}
         \caption{Abstract optimal policy.}
         \label{fig:states_pendulum_action_contour_b}
    \end{subfigure}    
    \caption{Contours of actual and abstract optimal actions over the state space of the pendulum-swingup task. Colors represent action values, and states are $s \!=\! (\theta, \dot{\theta})$. \textbf{(a)} Actual optimal policy;  contours of optimal actions $a^* \!=\! \pi^{\uparrow^*}\!(s)$. \textbf{(b)} Abstract optimal policy; contours of abstract optimal actions $\overline{a}^* \!=\! g_s(a^*) \!=\! \overline{\pi}^*(\overline{s})$. The relation $g_{s_1}\!(a_1) \!=\! g_{s_2}\!(a_2)$ holds for equivalent state-action pairs, and the abstract optimal policy is symmetric.}
    \label{fig:states_pendulum_action_contour}    
\end{figure}

\begin{figure}[b!]
    \centering
    \hfill
    \begin{subfigure}[b]{0.24\textwidth}
        \centering
        \includegraphics[width=\textwidth]{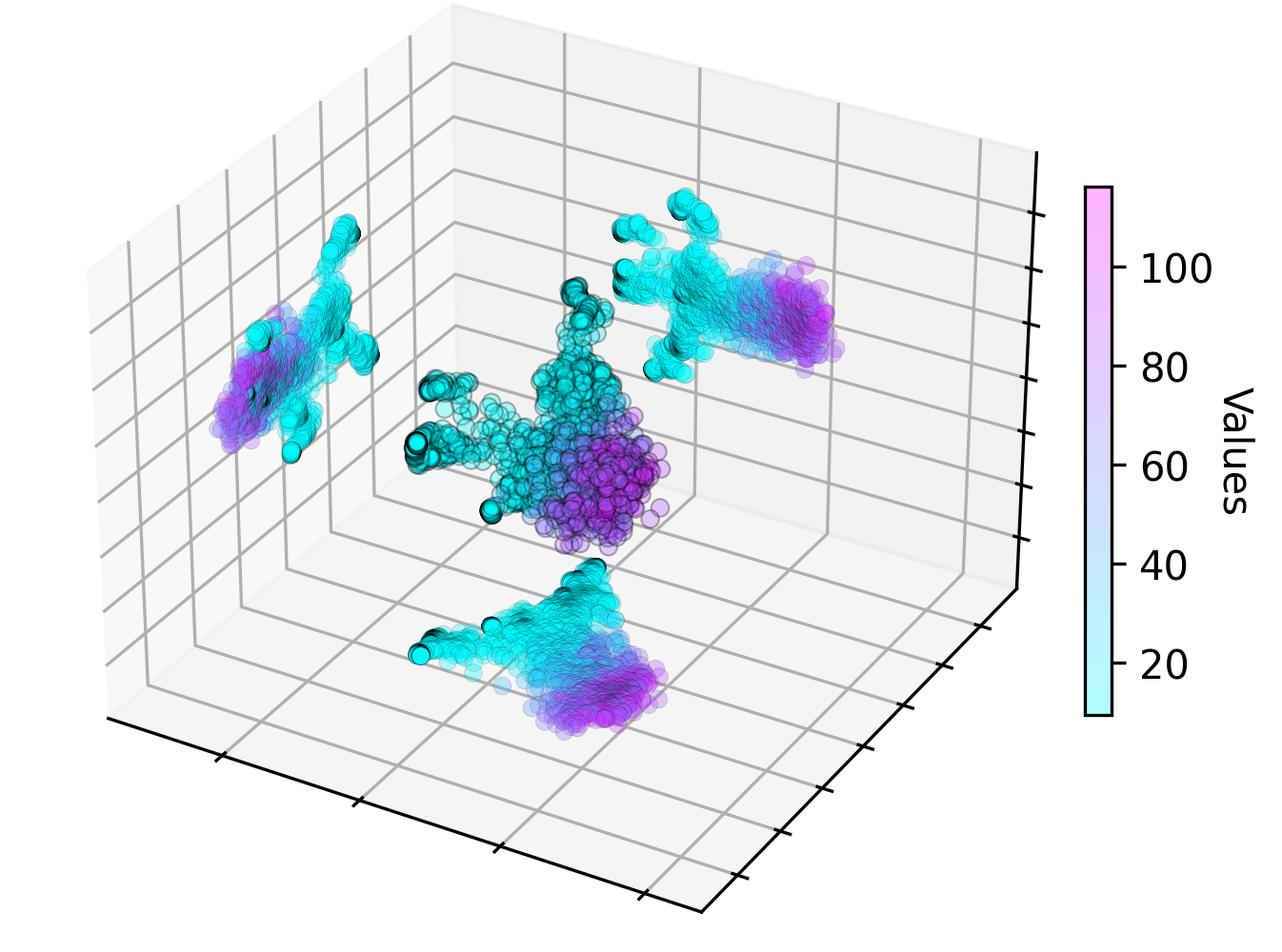}
        \caption{\scriptsize DHPG latent states.}
        \label{fig:pixels_quadruped_mdp_vis_a}
    \end{subfigure}
    \hfill
    \begin{subfigure}[b]{0.24\textwidth}
        \centering
        \includegraphics[width=\textwidth]{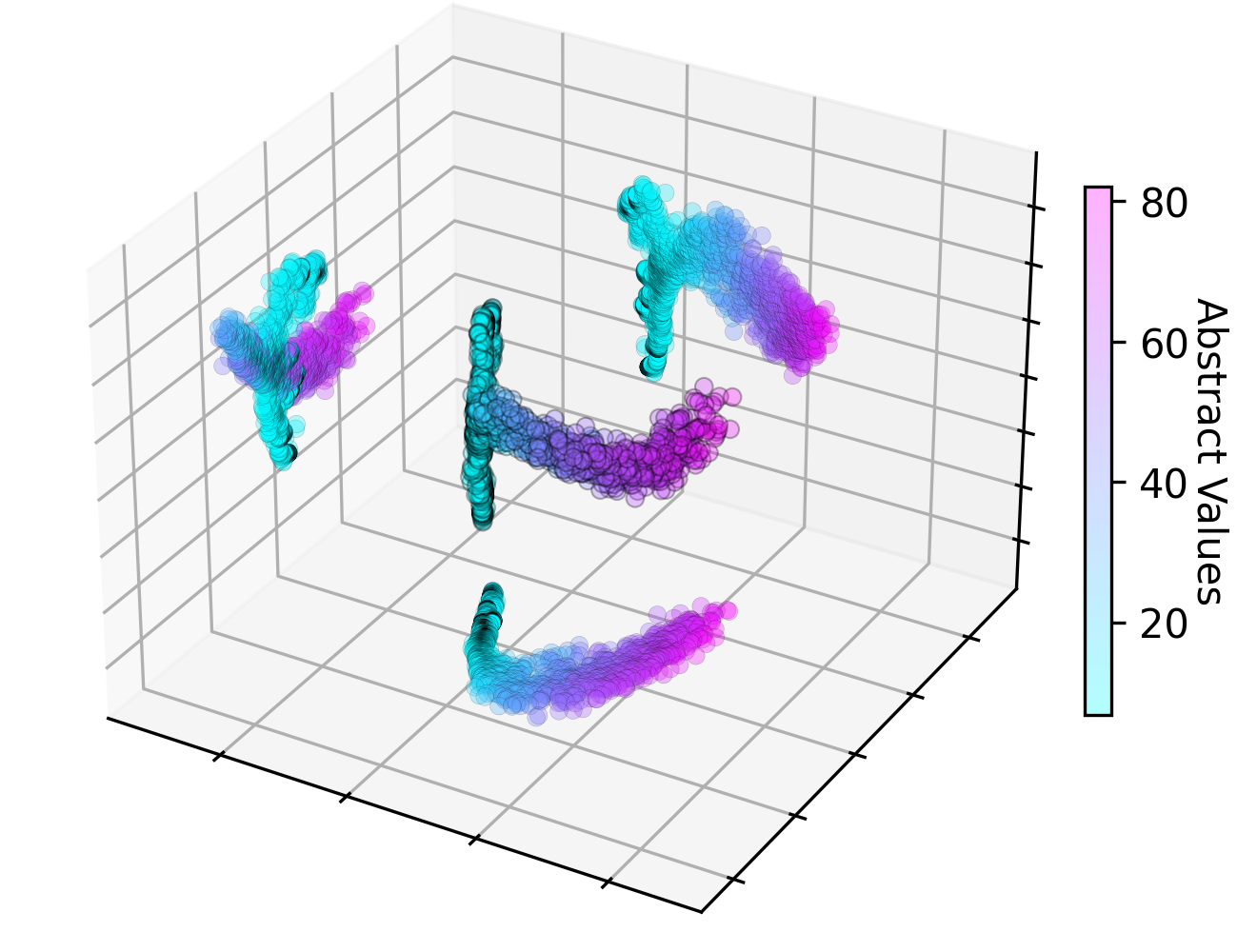}
        \caption{\scriptsize DHPG abstract states.}
        \label{fig:pixels_quadruped_mdp_vis_b}
    \end{subfigure}
    \hfill
    \begin{subfigure}[b]{0.24\textwidth}
        \centering
        \includegraphics[width=\textwidth]{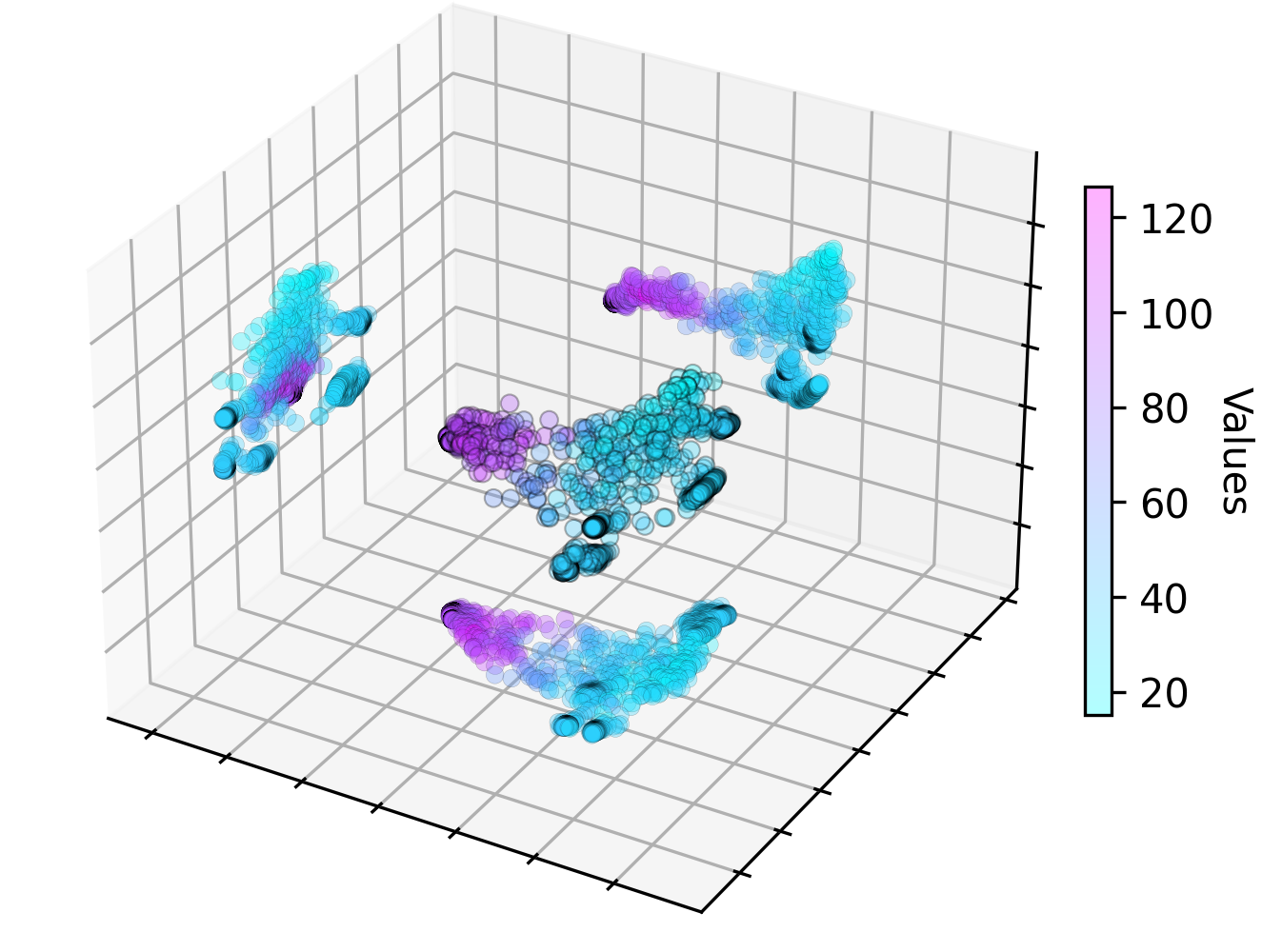}
        \caption{\scriptsize DBC latent states.}
    \end{subfigure}
    \hfill
    \begin{subfigure}[b]{0.24\textwidth}
        \centering
        \includegraphics[width=\textwidth]{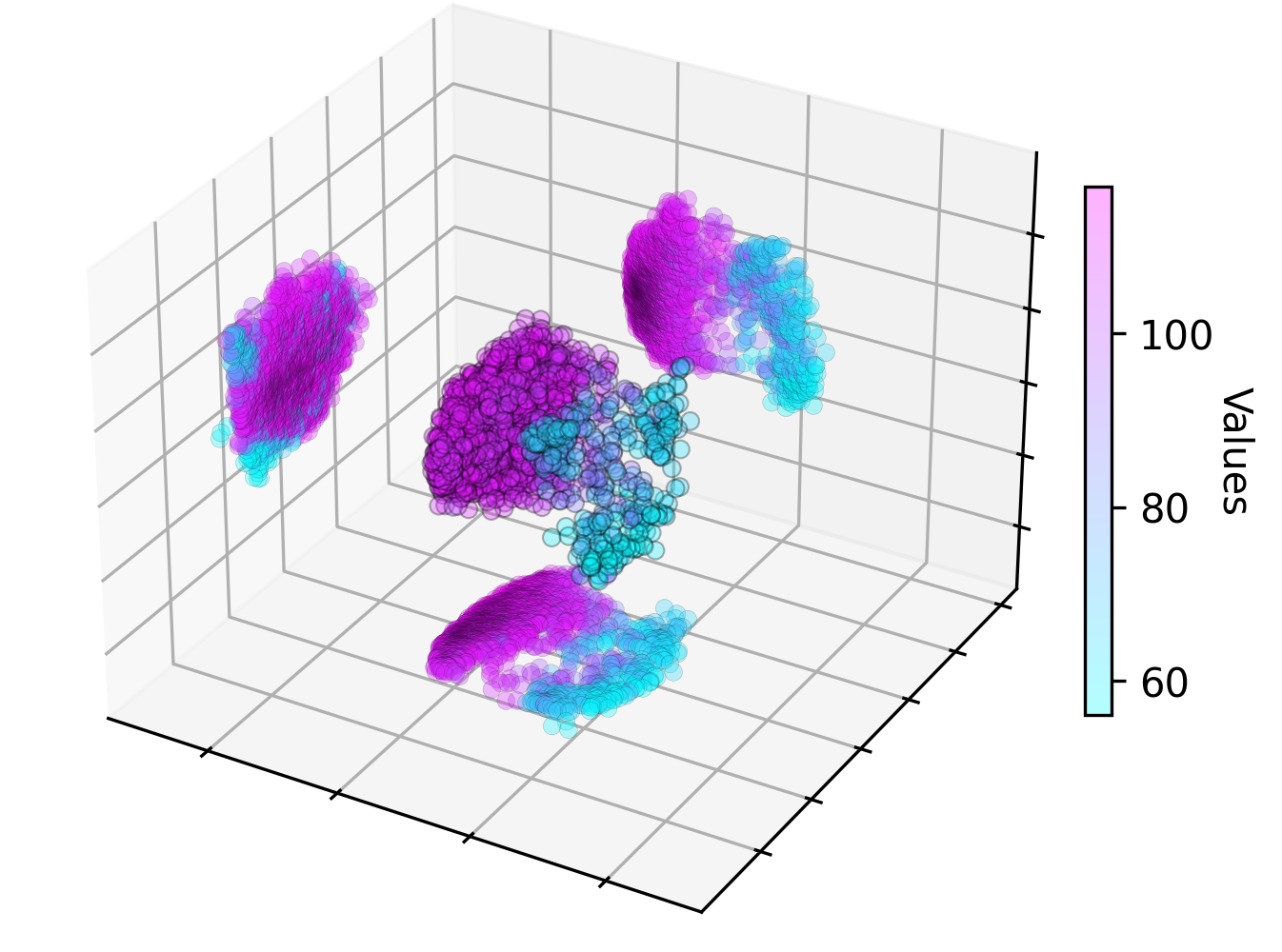}
        \caption{\scriptsize DrQ-v2 latent states.}
    \end{subfigure}
    \hfill
    \caption{PCA projection of learned representations for quadruped-walk with pixel observations. \textbf{(a)} Latent states $s \!=\! E_\mu(o)$, \textbf{(b)} abstract latent states $\overline{s} \!=\! f_\phi(E_\mu(o))$ for DHPG, \textbf{(c)} latent states $s \!=\! E_\mu(o)$ for DBC, and \textbf{(d)} DrQ-v2. Color of each point denotes its value learned by $Q(s, a)$ or $\overline{Q}(\overline{s}, \overline{a})$. Points are also projected onto each main plane. The homomorphism map of DHPG has mapped the latent states of corresponding legs (e.g., left forward leg and right backward leg) \textbf{(a)} on to the same abstract latent states \textbf{(b)}, indicating a clear structure in ${\overline{\mathcal{S}}}$.}
    \label{fig:pixels_quadruped_mdp_vis}
\end{figure}

\paragraph{The learned mapping ${h \!\!=\!\! (f, g_s\!)}$ demonstrates properties of an MDP homomorphism.} We use the pendulum swingup task to visualize its learned MDP homomorphism, as its symmetries are perfectly intelligible. Two state-action pairs $(s_1\!=\!(\theta_1, \dot{\theta}_1), a_1)$ and $(s_2\!=\! (\theta_2, \dot{\theta}_2), a_2)$ are equivalent if $a_1 \!=\! -a_2$, $\theta_1 \!=\! -\theta_2$, and $\dot{\theta}_1 \!=\! -\dot{\theta}_2$. Therefore, the learned action representations are expected to reflect this by setting $g_{s_1}(a_1) \!=\! g_{s_2}(a_2)$. Figure \ref{fig:states_pendulum_action_contour_a} shows contours of optimal actions over $\mathcal{S}$, while Figure \ref{fig:states_pendulum_action_contour_b} shows action representations $\overline{a} \!=\! g_s(a)$ of optimal actions over $\mathcal{S}$. Clearly, abstract actions adhere to the aforementioned relation for equivalent state-action pairs, indicating $g_s(a)$ is in fact representing the action encoder of an MDP homomorphism mapping.

\paragraph{The abstract MDP demonstrates properties of an MDP homomorphic image.} To qualitatively demonstrate the significance of learning joint state-action representations, Figure \ref{fig:pixels_quadruped_mdp_vis} shows visualizations of latent states for quadruped-walk, a task with symmetries around movements of its four legs. Interestingly, while the latent space of DHPG (Figure \ref{fig:pixels_quadruped_mdp_vis_a}) shows distinct states for each leg, abstract state encoder $f_\phi$ has mapped corresponding legs (e.g., left forward leg and right backward leg) to the same abstract latent state (Figure \ref{fig:pixels_quadruped_mdp_vis_b}) as they are some homomorphic image of one another. Clearly, DBC and DrQ-v2 are not able to achieve this.

\begin{figure}[t!]
     \centering
     \begin{subfigure}[b]{0.75\textwidth}
         \centering
         \includegraphics[width=\textwidth]{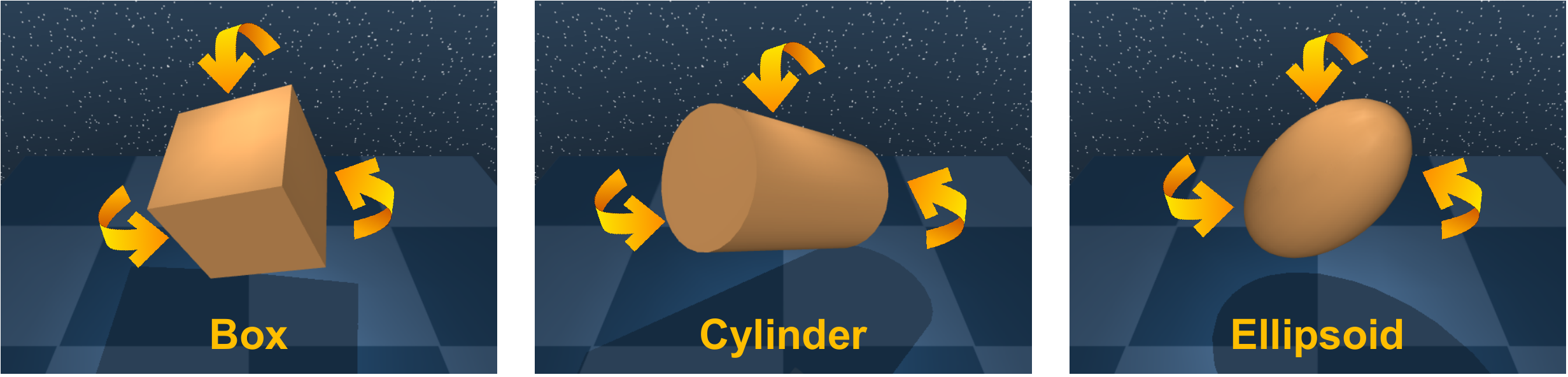}
         \caption{Rotate Suite.}
         \label{fig:rotate_suite_env}
     \end{subfigure}
     \hfill
     \begin{subfigure}[b]{0.22\textwidth}
         \centering
         \includegraphics[width=\textwidth]{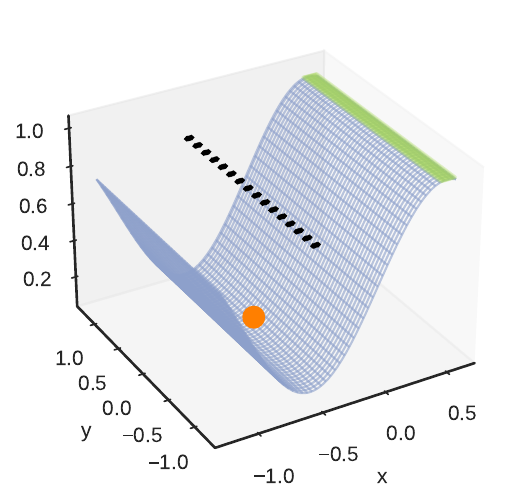}
         \caption{3D mountain car.}
         \label{fig:mountain_car_env}
     \end{subfigure}
     \hfill
    \caption{Novel environments with symmetries. \textbf{(a)} Rotate Suite is a series of visual control tasks with the goal of rotating a 3D object along its axes to achieve a goal orientation. Symmetries of the environment are declared by symmetries of the object. \textbf{(b)} 3D mountain car has a continuous translational symmetry along the $y$-axis, shown as a dotted black line. The orange point represents the car and the green line represents the goal position.}
    \label{fig:cont_symm_environments}
\end{figure}

\paragraph{The learned representations and the MDP homomorphism map transfer to new tasks within the same domain.} Importantly, one consideration with representation learning methods relying on rewards is the transferability of the learned representations to a new reward setting within the same domain. To ensure that our method does not hinder such transfer, we have carried out experiments in which the actor, critics, and the learned MDP homomorphism map are transferred to another task from the same domain. Results, given in Appendix \ref{sec:transfer_supp} show that our method has not compromised transfer abilities.

\paragraph{Additional Experiments.}
We study the value equivalence property as a measure for the quality of the learned MDP homomorphisms in Appendix \ref{sec:value_equiv_supp}, and we present ablation studies on DHPG variants, and the impact of $n$-step return on our method in Appendices \ref{sec:ablation_dhpg_variants} and \ref{sec:ablation_n_step}, respectively. Finally, we compare the computation time of our method against the baselines in Appendix \ref{sec:computation_supp}.

\subsection{Environments with Continuous Symmetries}
As discussed in Section \ref{sec:comparison_hpg}, the key difference between the deterministic and stochastic HPG theorems is that the former requires a bijective action encoder, whereas the latter does not impose any structure on it. The implication of such requirement is that deterministic DHPG is not capable of abstracting actions beyond relabling them. While relabling actions is sufficient for discrete symmetries, environments with continuous symmetries can in principle have their action dimensions reduced. To showcase the superiority of stochastic DHPG in action abstraction, we carry out experiments on a suite of novel environments with continuous symmetries. These environments are publicly available\footnote{\href{https://github.com/sahandrez/rotate_suite}{\texttt{https://github.com/sahandrez/rotate\_suite}}}.
\begin{itemize}[noitemsep]
    \item \textbf{Rotate Suite} is a series of visual control tasks developed based on the DeepMind Control Suite. The goal in each environment is to rotate a 3D object along its axes to achieve a goal orientation. Symmetries of the environment are declared by symmetries of the object. Thus, the box rotation has discrete symmetries, while cylinder and ellipsoid rotation have continuous rotational symmetries. Figure \ref{fig:rotate_suite_env} shows examples of these interactive environments.
    \item \textbf{3D Mountain Car} is an extension of the 2D mountain car problem \citep{moore1990efficient} in which the mountain curve is extended along the $y$-axis, creating a mountain surface. The agent has a 2D action on the mountain surface, in contrast to the 1D action space of the 2D mountain car. As a result of this extension, the problem has a continuous translational symmetry along the $y$-axis and the action along side this axis is redundant. Figure \ref{fig:mountain_car_env} shows the surface of the 3D mountain car.
\end{itemize}

\begin{figure}[t!]
     \centering
     \begin{subfigure}[b]{0.32\textwidth}
         \centering
         \includegraphics[width=\textwidth]{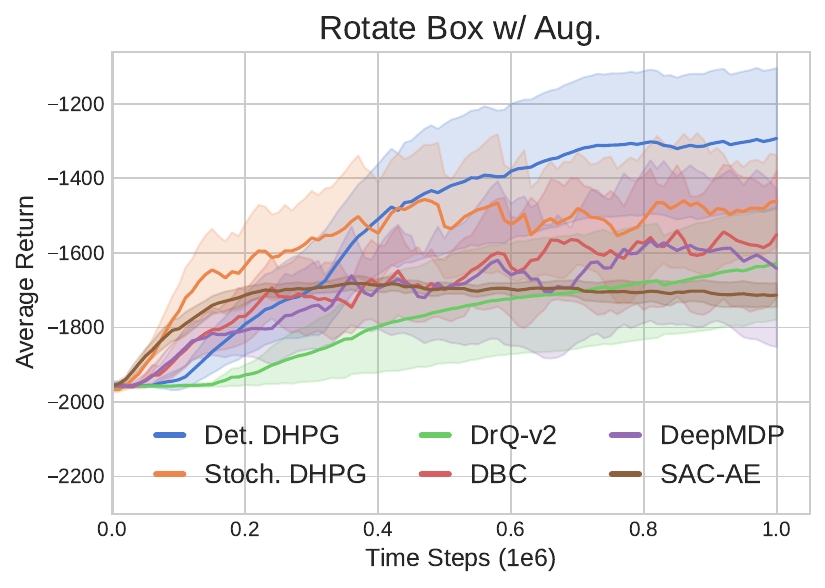}
     \end{subfigure}
     \begin{subfigure}[b]{0.32\textwidth}
         \centering
         \includegraphics[width=\textwidth]{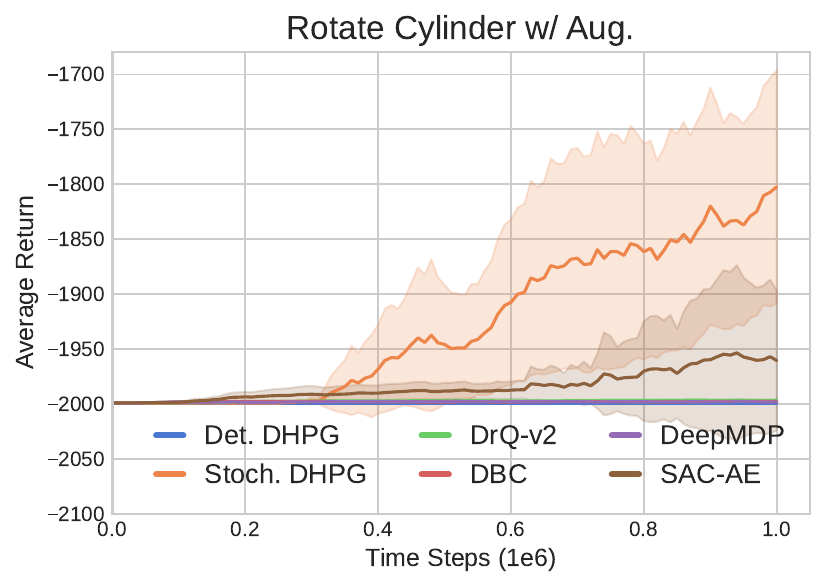}
     \end{subfigure}
     \begin{subfigure}[b]{0.32\textwidth}
         \centering
         \includegraphics[width=\textwidth]{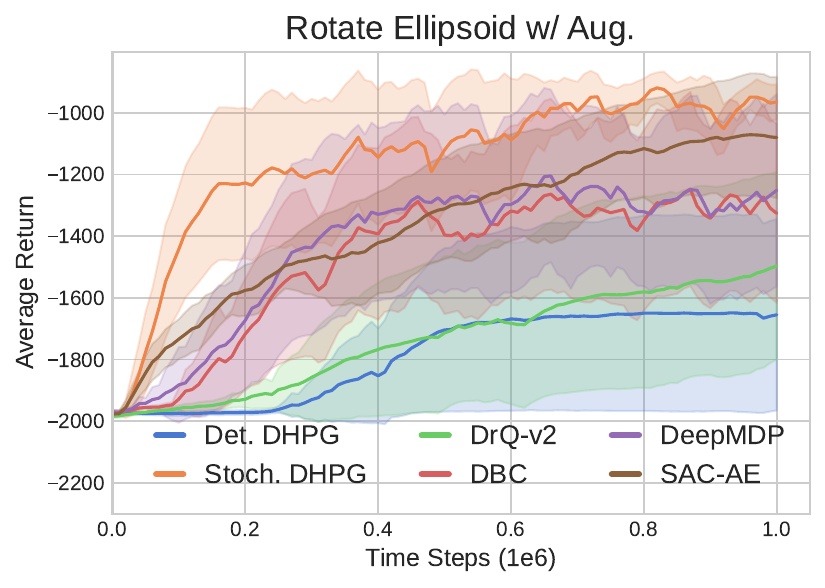}
     \end{subfigure}     
    \caption{Learning curves for Rotate Suite environments obtained on 10 seeds. \textbf{(a)} Box rotation. \textbf{(b)} Cylinder rotation. \textbf{(c)} Ellipsoid rotation. Shaded regions represent $95\%$ confidence intervals. Stochastic DHPG outperforms deterministic DHPG on environments with continuous symmetries (cylinder and ellipsoid rotation).}
    \label{fig:rotate_suite_res}
\end{figure}

\begin{figure}[b!]
    \centering
    \includegraphics[width=0.7\textwidth]{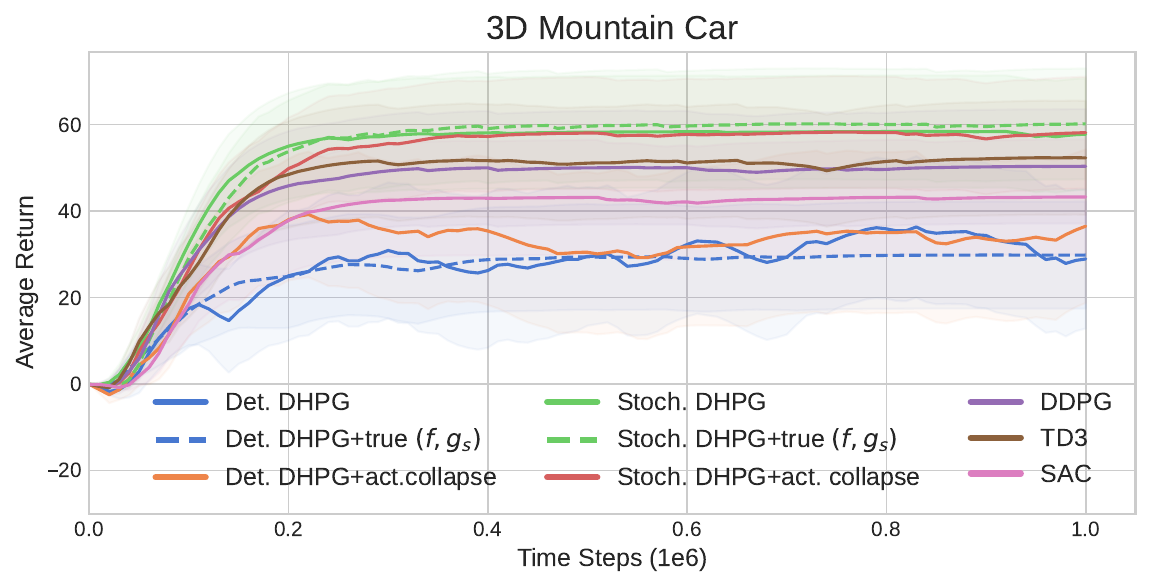}
    \caption{Learning curves for the 3D mountain car environment obtained on 50 seeds. Shaded regions represent $95\%$ confidence intervals. Stochastic DHPG outperforms deterministic DHPG due to the continuous translational symmetry of the environment. DHPG+true $(f, g_s)$ uses the ground truth homomorphism map, and DHPG+act. collapse removes has 1D abstract action space, as opposed to the original 2D action space.}
    \label{fig:mountain_car_res}
\end{figure}

\paragraph{Stochastic DHPG outperforms deterministic DHPG in the environments with continuous symmetries.} Results are presented in Figures \ref{fig:rotate_suite_res} and \ref{fig:mountain_car_res}. Notably, stochastic DHPG outperforms other baselines, as well as deterministic DHPG on environments with continuous symmetries. This is due to the fact that in theory, stochastic DHPG does not impose any structures on the action encoder and is therefore able to achieve higher levels of action abstraction, compared to deterministic DHPG.

\paragraph{DHPG is able to learn a structured latent space that adheres to the properties of a 3D rotation.} Visualizations of latent state trajectories in the cylinder rotation task are presented in Figure \ref{fig:rotate_cylinder_vis}. Each trajectory, color-coded by the action dimension, is collected by applying a constant rotation around one of the main axes (pitch, roll, and yaw). Interestingly, the latent trajectories of DHPG are fully disentangled and resemble 3D rotations in the latent space. However, none of the other baselines exhibits such structure in their latent representation. 

\begin{figure}[t!]
     \centering
     \begin{subfigure}[b]{0.19\textwidth}
         \centering
         \includegraphics[width=\textwidth]{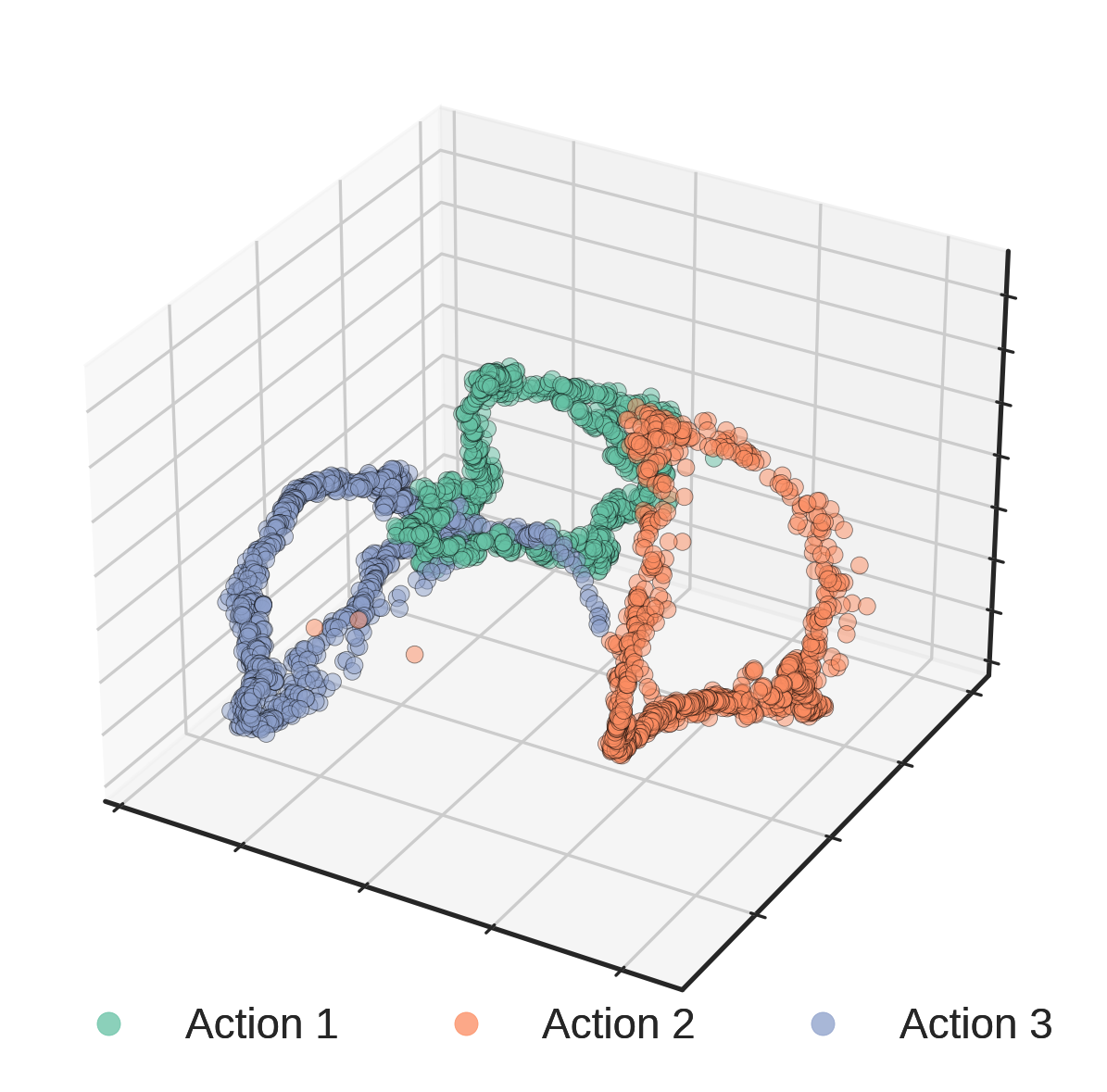}
         \caption{Stoch. DHPG.}
     \end{subfigure}
     \begin{subfigure}[b]{0.19\textwidth}
         \centering
         \includegraphics[width=\textwidth]{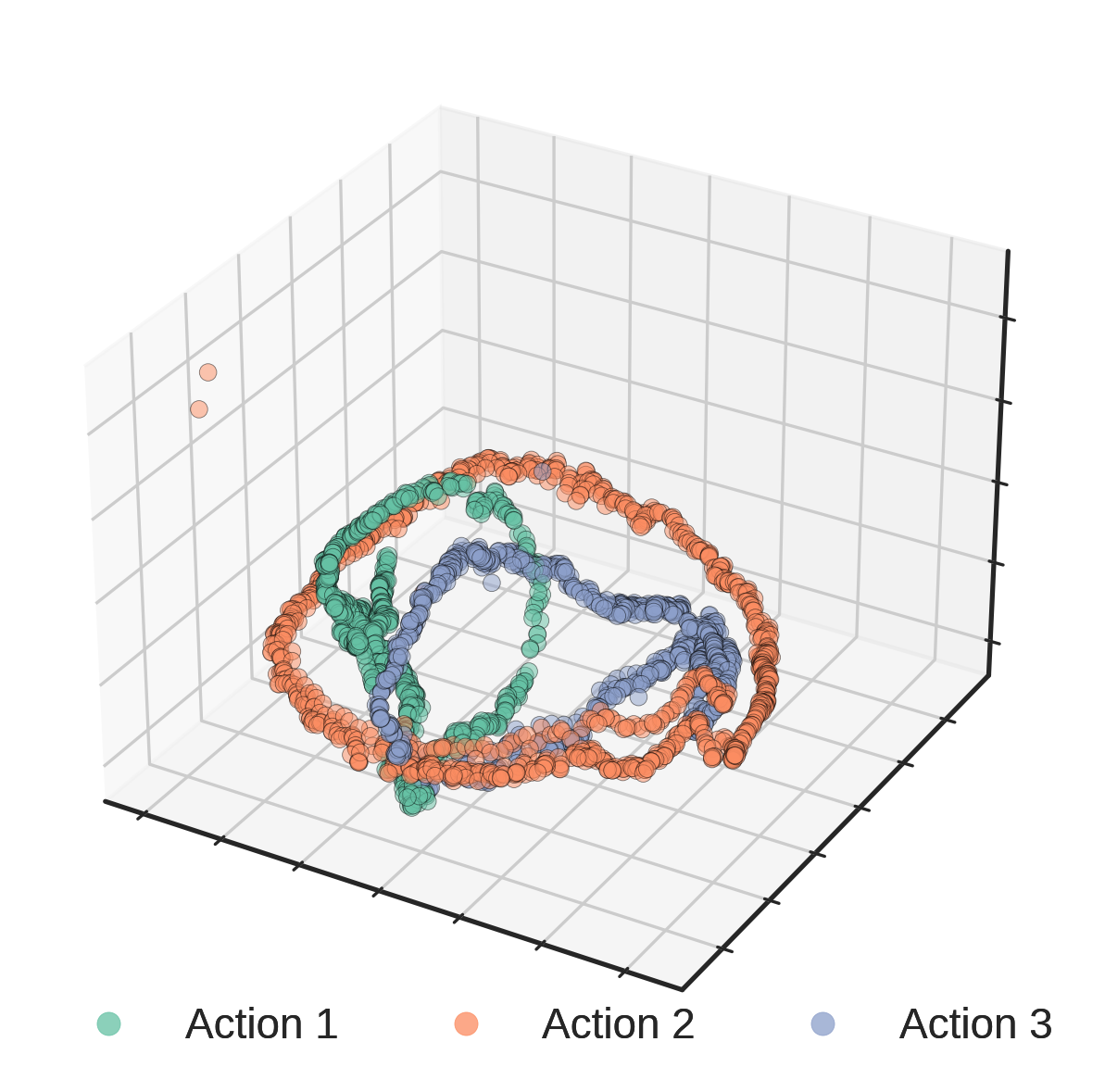}
         \caption{Det. DHPG.}
     \end{subfigure}
     \begin{subfigure}[b]{0.19\textwidth}
         \centering
         \includegraphics[width=\textwidth]{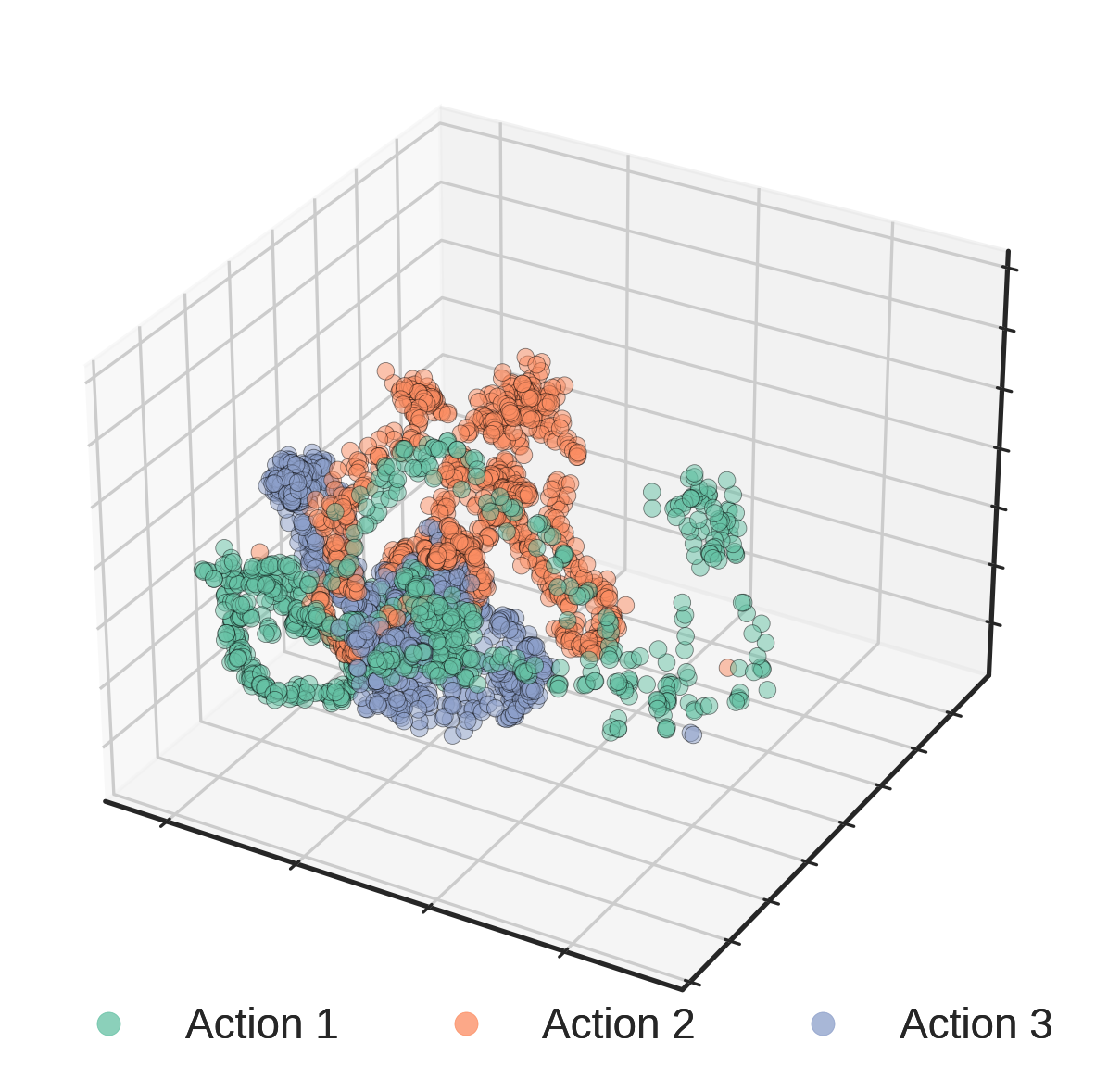}
         \caption{DrQ-v2.}
     \end{subfigure}
     \begin{subfigure}[b]{0.19\textwidth}
         \centering
         \includegraphics[width=\textwidth]{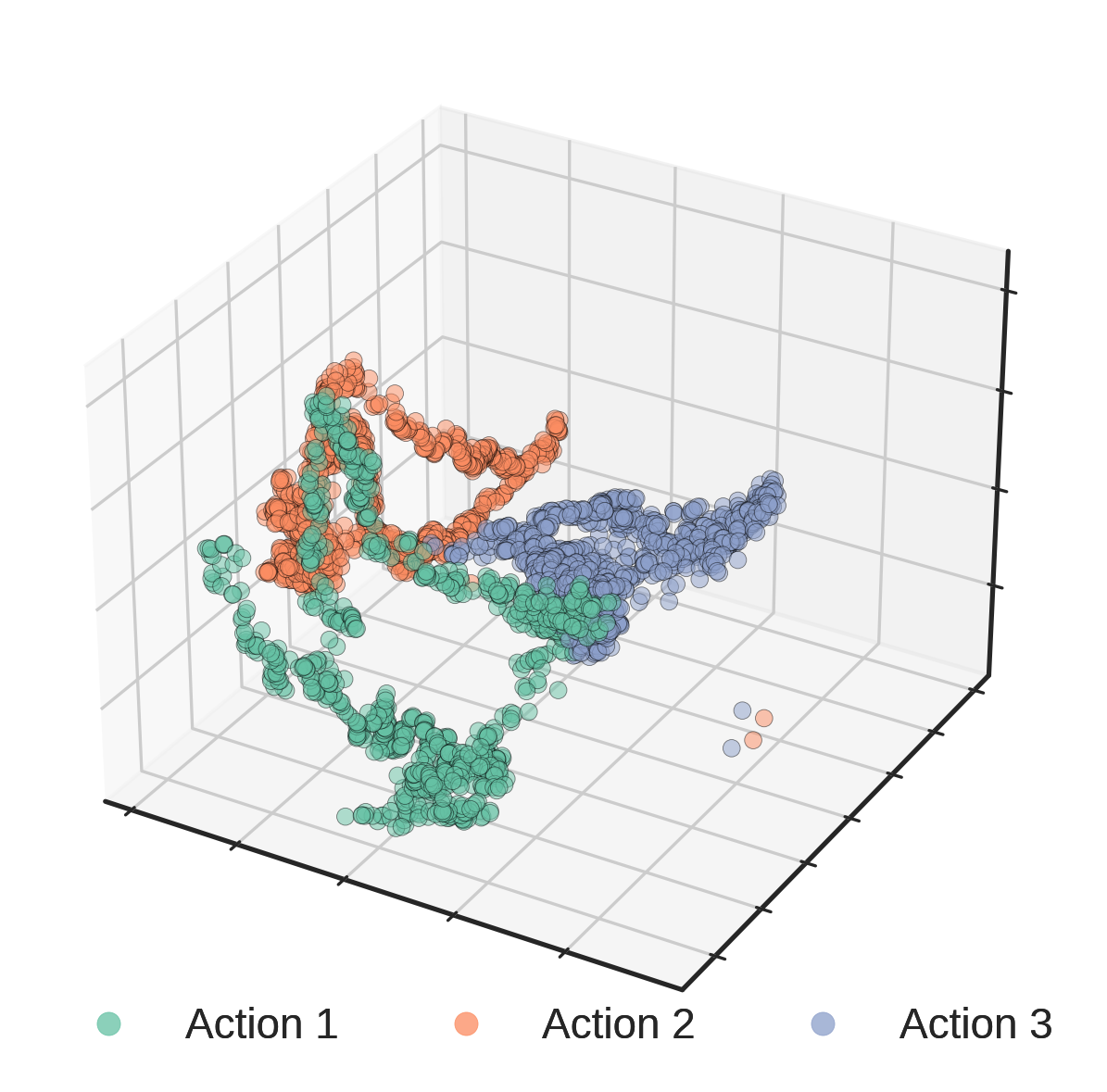}
         \caption{DBC.}
     \end{subfigure}
     \begin{subfigure}[b]{0.19\textwidth}
         \centering
         \includegraphics[width=\textwidth]{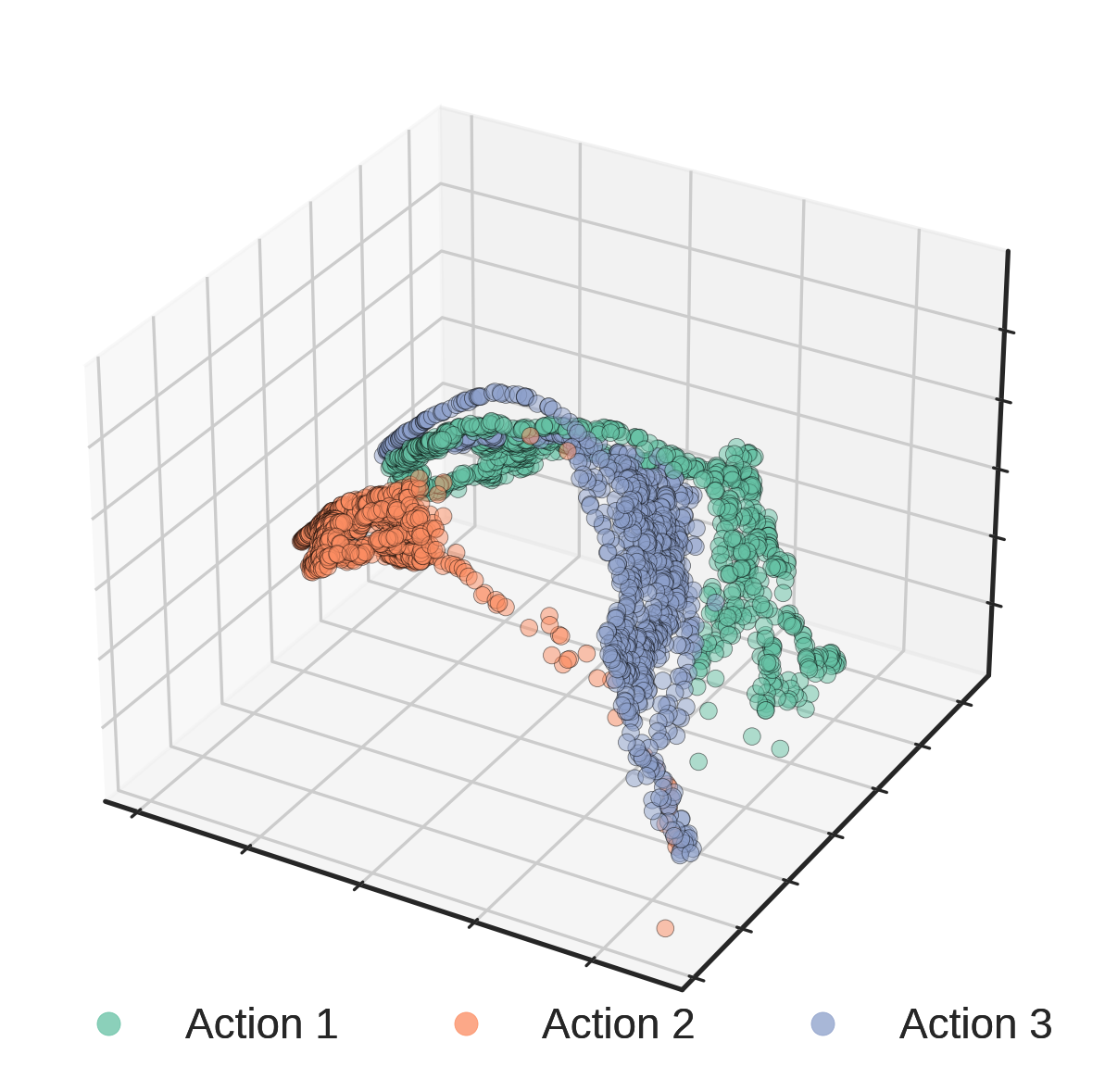}
         \caption{DeepMDP.}
     \end{subfigure}
    \caption{Visualization of latent state trajectories in the cylinder rotation task, color-coded by the action dimension. Action dimensions respectively correspond to \emph{pitch}, \emph{roll}, and \emph{yaw} and each trajectory is collected by applying a constant rotation around a specific axis. Latent states are projected into a 3D space using PCA.}
    \label{fig:rotate_cylinder_vis}
\end{figure}

\section{Conclusion}
\label{sec:conclusion}
In this paper, we developed the novel theory of continuous MDP homomorphisms using measure theory, and we rigorously proved their value and optimal value equivalence properties. We derived the homomorphic policy gradient for both stochastic and deterministic policies, in order to directly use a joint state-action abstraction for policy optimization. Importantly, we rigorously proved that applying our homomorphic policy gradient on the abstract MDP is equivalent to applying the standard policy gradient on the actual MDP. Based on our novel theoretical results, we developed a family of deep actor-critic algorithms, with either stochastic or deterministic policies, that can simultaneously learn the policy and the MDP homomorphism map using the lax bisimulation metric. Our algorithm, referred to as Deep Homomorphic Policy Gradient (DHPG) improves upon strong baselines in challenging visual control problems. The visualization of the latent space demonstrates the strong potential of MDP homomorphisms in learning structured representations that can  preserve value functions. Finally, we introduced a series of environments with continuous symmetries to further demonstrate the ability of our algorithm for action abstraction in the presence of continuous symmetries.

We believe that our work will open-up future possibilities for the application of MDP homomorphisms in challenging continuous control problems by enabling other RL algorithms to benefit from the abstraction power of MDP homomorphisms and the homomorphic policy gradient theorems.

% Acknowledgements and Disclosure of Funding should go at the end, before appendices and references
\acks{SRS is supported by an NSERC CGS-D scholarship. RZ was supported by an NSERC CGS-M scholarship at the time this work was completed. PP is supported by a research grant from NSERC.  The computing resources for this research were provided by Calcul Quebec and the Digital Research Alliance of Canada.}

\clearpage  

% Manual newpage inserted to improve layout of sample file - not
% needed in general before appendices/bibliography.

\bibliography{refs}

\newpage
\appendix
\section{Assumptions and Conditions}
\label{sec:assumptions}
The derivation of our homomorphic policy gradient theorem is for continuous state and action spaces. Therefore, we have assumed the following regularity conditions on the actual MDP ${\mathcal{M}}$ and its MDP homomorphic image ${\overline{\mathcal{M}}}$ under the MDP homomorphism map $h$. The conditions are largely based on the regularity conditions of the deterministic policy gradient theorem \citep{silver2014deterministic}:

\paragraph{Regularity conditions 1:} $\tau_a(s' | s)$, $\nabla_a \tau_a(s' | s)$, $\overline{\tau}_{\overline{a}}(\overline{s}' | \overline{s})$, $\nabla_{\overline{a}} \overline{\tau}_{\overline{a}}(\overline{s}' | \overline{s})$, $R(s, a), \nabla_a R(s, a)$, $\overline{R}(\overline{s}, \overline{a}), \nabla_{\overline{a}} \overline{R}(\overline{s},  \overline{a})$, $\pi^\uparrow_\theta(s), \nabla_\theta \pi^\uparrow_\theta(s), \overline{\pi}_\theta(\overline{s})$, $\nabla_{\theta} \overline{\pi}_\theta(\overline{s})$, $p_1(s)$, and $\overline{p}_1(\overline{s})$ are continuous with respect to all parameters and variables $s, \overline{s}, a, \overline{a}, s'$, and $\overline{s}'$.

\paragraph{Regularity conditions 2:} There exists a $b$ and $L$ such that $\sup_s p_1(s) \!<\! b$, $\sup_{\overline{s}} \overline{p}_1(\overline{s}) < b$, $\sup_{a, s, s'} \tau_a(s'|s) < b$, $\quad\sup_{\overline{a}, \overline{s}, \overline{s}'} \overline{\tau}_{\overline{a}}(\overline{s}'|\overline{s}) < b$, $\quad\sup_{a, s} R(s, a) < b$, $\sup_{\overline{a}, \overline{s}} \overline{R}(\overline{s}, \overline{a}) < b$, $\sup_{a, s, s'}\|\nabla_a \tau_a(s'|s) \| < L$, $\quad\sup_{\overline{a}, \overline{s}, \overline{s}'}\|\nabla_{\overline{a}} \overline{\tau}_{\overline{a}}(\overline{s}'|\overline{s}) \| < L$, $\qquad\sup_{s, a} \| \nabla_a R(s, a)\| < L\qquad$, $\sup_{\overline{s}, \overline{a}} \| \nabla_{\overline{a}} \overline{R}(\overline{s}, \overline{a})\| < L$.

\paragraph{Regularity conditions 3:} The action mapping $g_s(a)$ is a local diffeomorphism (Definition \ref{def:local_diffeo}). Hence it is continuous with respect to $a$ and locally bijective with respect to $a$. Additionally, $\nabla_a g_s(a)$ is continuous with respect to the parameter $a$, and there exists a $L$ such that $\sup_{s, a} \| \nabla_a g_{s}(a)\| < L$.

\section{Mathematical tools}
\label{supp:math_tools}
Various mathematical concepts from measure theory and differential geometry are
briefly presented in this section.  We only explicitly introduce concepts which are
directly mentioned or relevant to the proofs presented in Sections
\ref{sec:cont_mdp_homomorphism} and \ref{sec:homomorphic_pg}; for a more
comprehensive overview, we direct the reader to textbooks such as
\citet{bogachev2007measure, lang2012differential, spivak2018calculus}. 

\subsection{Metric spaces and topology}

A set equipped with a metric is called a metric space and is usually written as a pair,
typically \((X,d)\).  Given a metric one can define standard notions from basic analysis:
convergent sequence, limit of a sequence, Cauchy sequence and continuous function.  If
every Cauchy sequence converges we say the metric space is \emph{complete}.  Limits of
convergent sequences are unique in proper metric spaces but not in pseudometric spaces.  A
function $f:(X,d)\to (Y,d')$ between metric spaces is said to be \emph{nonexpansive} if:
\[ \forall x,x'\in X, d'(f(x),f(x'))\leq d(x,x'). \]
A function is said to be \emph{contractive} if there is some number \(c\in (0,1)\) such
that:
\[ \forall x,x'\in X, d'(f(x),f(x')) < c\cdot d(x,x').\]
The fundamental theorem about metric spaces, called the \emph{Banach fixed-point theorem},
states the following:
\begin{theorem}
If $f$ is a contractive function from a complete metric space $X$ to itself then there is
a \emph{unique} fixed point for $f$, i.e.\ a unique point $x_0\in X$ such that \(f(x_0)=x_0\).
\end{theorem}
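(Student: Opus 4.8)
The plan is to prove existence by Picard iteration and uniqueness by a one-line contradiction, both resting solely on the contraction estimate $d(f(x),f(x'))\le c\,d(x,x')$. (The paper's strict inequality gives this for $x\ne x'$, and it holds trivially when $x=x'$ since both sides are $0$; so the $\le$ form is safe to use throughout.) First I would fix an arbitrary $x_0\in X$ and define the iterates $x_{n+1}:=f(x_n)$, and then show this sequence is Cauchy.

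The Cauchy estimate is the heart of the argument. Applying the contraction property inductively yields $d(x_{n+1},x_n)\le c^{\,n}\,d(x_1,x_0)$, and then the triangle inequality together with a geometric series gives, for $m>n$,
\[
d(x_m,x_n)\;\le\;\sum_{k=n}^{m-1} d(x_{k+1},x_k)\;\le\;\Big(\sum_{k=n}^{\infty} c^{\,k}\Big)\,d(x_1,x_0)\;=\;\frac{c^{\,n}}{1-c}\,d(x_1,x_0).
\]
Since $0<c<1$, the right-hand side tends to $0$ as $n\to\infty$, so $(x_n)$ is Cauchy; by \emph{completeness} of $X$ it converges to some $x^*\in X$. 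To see $x^*$ is a fixed point, I would use that a contraction is Lipschitz, hence continuous, so $f(x^*)=f(\lim_n x_n)=\lim_n f(x_n)=\lim_n x_{n+1}=x^*$ (equivalently, bound $d(f(x^*),x^*)\le c\,d(x^*,x_n)+d(x_{n+1},x^*)$ and let $n\to\infty$).

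For uniqueness, suppose $x^*$ and $y^*$ both satisfy $f(x^*)=x^*$ and $f(y^*)=y^*$. Then $d(x^*,y^*)=d(f(x^*),f(y^*))\le c\,d(x^*,y^*)$, so $(1-c)\,d(x^*,y^*)\le 0$; since $1-c>0$ this forces $d(x^*,y^*)=0$, i.e.\ $x^*=y^*$.

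The main obstacle is really just the quantitative Cauchy bound: once the geometric-series estimate establishes convergence, fixed-point verification and uniqueness are immediate. The only points demanding care are getting the telescoping/triangle-inequality estimate clean and making the minor remark reconciling the paper's strict-inequality definition of \emph{contractive} with the $\le$ form the estimates use.
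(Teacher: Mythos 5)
Your proof is correct and complete: it is the standard Picard-iteration argument (geometric-series Cauchy estimate, completeness, continuity of $f$ to pass to the limit, and the one-line uniqueness contradiction). The paper itself states this result---the Banach fixed-point theorem---as classical background in its mathematical-tools section and gives no proof, so there is nothing to compare against; your argument is exactly the canonical one, and your remark reconciling the paper's strict-inequality definition of \emph{contractive} (which, read literally, fails at $x = x'$) with the $\le$ form used in the estimates is a careful touch that actually patches a small imprecision in the paper's own definition.
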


We assume that the readers are familiar with basic concepts of topology: open
and closed sets, base and subbase, convergence of sequences and continuity of
functions.

A topological space is \emph{completely metrizable} if it can be equipped with a metric which generates its topology and the resulting metric space is complete. A topological space is \emph{separable} if it contains a countable, dense subset--- that is, every nonempty subset in the topological space contains at least one element of this subset. A \emph{Polish space} is a topological space that is separable and completely metrizable. Polish spaces have ``desirable properties'' and are used primarily in areas of descriptive set theory and measure theory.

Another fundamental concept of topological spaces is \emph{compactness}.  An \emph{open cover} of a
topological space $X$ is a family of open subsets whose union includes all of
$X$.  A \emph{subcover} of a cover is a subcollection of the open sets in the
cover that also covers $X$.  A topological space is said to be \emph{compact} if
every open cover has a finite subcover.  In metric spaces, the compact sets are
exactly the closed and bounded sets.  A space is said to be \emph{locally
  compact} if every point is contained in an open set that is contained in a
compact set.

\subsection{Measure theory}
Measure theory attempts to generalize notions of length, area and volume or
mass to more complicated subsets than the simple ones that one first encounters
in geometry.  In common situations, like the real line, it is not possible to
define a measure on \emph{all} subsets in such a way that one's normal
intuitions of length survive.  In the real line there is no measure that is
defined on all subsets and which assigns to all intervals its length.  One
needs, therefore, to choose well behaved families of subsets on which one can
define measures.

\begin{definition}[$\sigma$-algebra]
    Given a set $X$, a $\sigma$-algebra on $X$ is a family $\Sigma$ of subsets
    of $X$ such that 1) $X \in \Sigma$, 2) $A \in \Sigma$ implies $A^c \in
    \Sigma$ (closure under complements), and 3) if $(A_i)_{i \in \N}$ satisfies
    $A_i \in \Sigma$ for all $i \in \N$, then $\cup_{i \in \N}A_i \in \Sigma$
    (closure under countable union). The tuple $(X, \Sigma)$ is a measurable
    space. 
\end{definition}
A set equipped with a $\sigma$-algebra is called a \emph{measurable} space.
The $\sigma$-algebra of a space specifies the sets for which a measure can be
defined; in probability theory---and in our use case---a $\sigma$-algebra
represents a collection of events which can be assigned probabilities.

Given any family of subsets there is a \emph{smallest} $\sigma$-algebra that
includes the given family: this is called the $\sigma$-algebra generated by the
family.  In metric spaces the $\sigma$-algebra generated by the open sets is
called the \emph{Borel} $\sigma$-algebra.

Given a $\sigma$-algebra one can define a measure.
\begin{definition}
  A (probability) \emph{measure} $\mu$ on a $\sigma$-algebra $\Sigma$ defined on $X$ is a
  function \(\mu:\Sigma\to [0,\infty]\) (\(\mu:\Sigma\to [0,1]\)) satisfying:
  \begin{itemize}
  \item \(\mu(\emptyset) =0\)
  \item If $\{A_i\}_{i\in\N}$ is a countable family of pairwise disjoint subsets
    in $\Sigma$, then
    \(\mu(\cup_iA_i) = \sum_{i\in\N}\mu(A_i)\).
\end{itemize}
For a probability measure we require $\mu(X) = 1$.
\end{definition}

The functions that play a key role are called measurable functions.
\begin{definition}
A \emph{measurable} function or map between two measurable spaces $(X\Sigma)$ and
$(Y,\Lambda)$ is a function $f:X\to Y$ such that for any $B\in\Lambda$,
\(f^{-1}(B)\in\Sigma\). 
\end{definition}

We can define measures on the image of a measurable function based on a measure in the preimage. This yields the definition of the pushforward measure and the change of variables formula which is crucial for our proofs in switching the domain of integration across a measurable function.

\begin{definition}[Pushforward measure]
Let $(X_1, \Sigma_1)$ and $(X_2, \Sigma_2)$ be two measurable spaces, $f: X_1
\to X_2$ a measurable map and $\mu: \Sigma_1 \to [0, \infty]$ a measure on
$X_1$. Then the pushforward measure of $\mu$ with respect to $f$, denoted
$f_*(\mu): \Sigma_2 \to [0, \infty]$ is defined as: 
$$
    (f_*(\mu))(B) = \mu(f^{-1}(B)) \; \forall \; B \in \Sigma_2.
$$
\end{definition}

\begin{theorem}[Change of variables]
\label{thm:cov}
A measurable function $g$ on $X_2$ is integrable with respect to $f_*(\mu)$ if
and only if the function $g \circ f$ is integrable with respect to $\mu$, in
which case the integrals are equal: 
$$
    \int_{X_2} g d(f_*(\mu)) = \int_{X_1}g \circ f d\mu.
$$
\end{theorem}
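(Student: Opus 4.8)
The plan is to prove this by the standard measure-theoretic approximation argument (the ``standard machine''), building up the class of functions for which the identity holds in four stages: indicators, nonnegative simple functions, nonnegative measurable functions, and finally general integrable functions. Throughout I will use that the composition $g \circ f$ is $\Sigma_1$-measurable whenever $g$ is $\Sigma_2$-measurable and $f$ is measurable, since $(g\circ f)^{-1}(C) = f^{-1}(g^{-1}(C)) \in \Sigma_1$; this guarantees all integrands below are well-defined.

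First I would establish the base case for indicator functions. For $B \in \Sigma_2$ take $g = \mathbf{1}_B$, so that $g \circ f = \mathbf{1}_{f^{-1}(B)}$. Then directly from the definition of the pushforward measure,
$$
\int_{X_2} \mathbf{1}_B \, d(f_*\mu) = (f_*\mu)(B) = \mu(f^{-1}(B)) = \int_{X_1} \mathbf{1}_{f^{-1}(B)} \, d\mu = \int_{X_1} (\mathbf{1}_B \circ f) \, d\mu.
$$
By linearity of the integral, the identity extends immediately to any nonnegative simple function $g = \sum_{i=1}^n c_i \mathbf{1}_{B_i}$ with $c_i \geq 0$ and $B_i \in \Sigma_2$, using $g \circ f = \sum_i c_i \mathbf{1}_{f^{-1}(B_i)}$. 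Next, for an arbitrary nonnegative measurable $g$, I would pick an increasing sequence of nonnegative simple functions $g_n \uparrow g$ pointwise. Then $g_n \circ f \uparrow g \circ f$ pointwise as well, and applying the monotone convergence theorem on each side gives
$$
\int_{X_2} g \, d(f_*\mu) = \lim_{n} \int_{X_2} g_n \, d(f_*\mu) = \lim_{n} \int_{X_1} (g_n \circ f) \, d\mu = \int_{X_1} (g \circ f) \, d\mu,
$$
an identity that holds unconditionally in $[0,\infty]$.

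Finally I would handle the integrability equivalence and the general case. Applying the nonnegative result just proved to $|g|$ yields $\int_{X_2} |g| \, d(f_*\mu) = \int_{X_1} |g \circ f| \, d\mu$, so $g$ is $f_*\mu$-integrable if and only if $g \circ f$ is $\mu$-integrable. Under this common finiteness, I would decompose $g = g^+ - g^-$ into its nonnegative and nonpositive parts, note $g^\pm \circ f = (g\circ f)^\pm$, apply the nonnegative identity to each (both integrals now finite), and subtract to obtain the claimed equality. The one point requiring care—the main obstacle, such as it is—is the logical structure: one must prove the unconditional $[0,\infty]$-valued equality for nonnegative functions \emph{before} invoking integrability, precisely so that the ``if and only if'' is derived rather than assumed, and so that the subtraction in the final step never produces an indeterminate $\infty - \infty$.
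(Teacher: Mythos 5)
Your proof is correct and complete: the measurability of $g\circ f$, the indicator-function base case from the definition of $f_*\mu$, the extension by linearity and monotone convergence, and the final decomposition $g = g^+ - g^-$ with $g^\pm\circ f = (g\circ f)^\pm$ are all handled properly, including the key point of establishing the unconditional $[0,\infty]$-valued identity before addressing integrability. Note that the paper itself gives no proof of this theorem --- it appears in the background appendix as a standard result of measure theory, deferred to textbooks --- and your ``standard machine'' argument is precisely the canonical proof those references contain, so there is nothing in the paper to diverge from.
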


\subsection{Manifolds}

Differential manifolds formalize doing differential calculus on
curved surfaces.  Unlike vector spaces, we cannot define addition of
points in an arbitrary topological space; we need additional structure.  Hence the
strategy is to define ``patches'' of the topological space that ``look
like'' patches of a vector space and then glue them together.  This
motivates the definition of a differential or smooth manifold.  The word
``smooth'' is a synonym for \emph{infinitely differentiable} or $C^{\infty}$.
\begin{definition}
An $n$-dimensional \textbf{smooth (or differential) manifold} is a topological
space $M$\footnote{Assumed to be paracompact and Hausdorff.} equipped with a family of pairs, called \textbf{charts},
\(\{(U_i,\phi_i)|i\in \cA\}\) where:  
\begin{itemize}
\item Each $U_i$ is an open subset of $M$,
\item each $\phi_i:U_i\to \reals^n$ is a \emph{homeomorphism} between $U_i$ and the image
  $V_i := \phi_i(U_i)$,
\item the \(\{U_i\}\) form an open cover of $M$.
\end{itemize}
In addition, the following \emph{compatibility condition} must be satisfied:\\
if \(U_i \cap U_j\not=\emptyset\) then the map
\[\phi_j\circ\phi_i^{-1}\bigg|_{\phi_i(U_i\cap U_j)}:\phi_i(U_i\cap U_j)\to\phi_j(U_i\cap
  U_j)\]
is infinitely differentiable, written as $C^{\infty}$.  A collection of compatible charts
is called an \textbf{atlas}.
\end{definition}
\begin{figure}[H]
\begin{center}
\includegraphics[height=2.0in,width=3.0in]{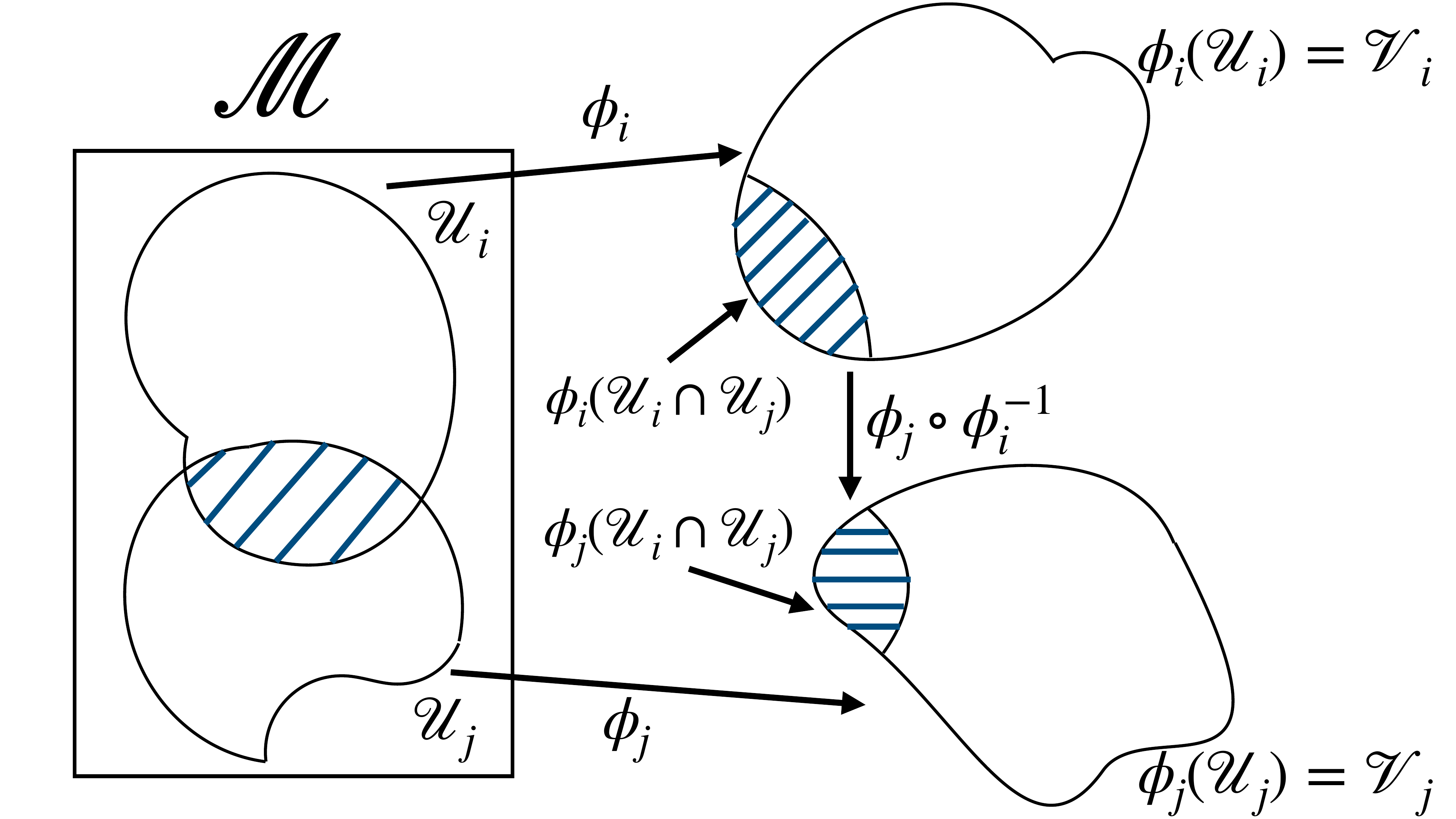}
\caption{The compatibility condition on charts.}
\label{chart}
\end{center}
\end{figure}

Note that the very last condition refers to a map between an open set in
$\reals^n$ and another open set in $\reals^n$; hence its meaning is clear since
$\reals^n$ is a vector space.  The picture in Fig.~\ref{chart} illustrates the
meaning of this condition.  Now everything can be defined in terms of charts.
\begin{definition}
A \emph{smooth function}
$f:M\to\reals$ is a function such that for any chart \((U,\phi)\) the map
\(f\circ\phi^{-1}:\reals^n\to\reals\)\footnote{Actually this is only defined on
  $\phi(U)$ not all of $\reals^n$ but it would clutter the notation too much to
  constantly put in the correct restrictions.} is smooth.
\end{definition}
A smooth map between manifolds $M$ and $M'$ can be defined similarly.
\begin{definition}
A \emph{smooth map}
$f:M\to M'$ is a function such that for any chart \((U,\phi)\) of $M$ and
\((U',\phi')\) of $M'$ the map
\(\phi'\circ\psi\circ\phi^{-1}:\reals^n\to\reals^{n'}\) is smooth.
\end{definition}
Smooth maps and smooth functions are automatically continuous.

Differential manifolds come with a notion of isomorphism called a
\emph{diffeomorphism}.
\begin{definition}
A smooth map between two manifolds is called a \emph{diffeomorphism} if it is a
bijection and the inverse map is also smooth.
\end{definition}

\begin{definition}[Local diffeomorphism]
\label{def:local_diffeo}
Let $M$ and $N$ be differentiable manifolds. A function $f : M \to N$ is a
\emph{local diffeomorphism}, if for each point $x \in M$ there exists an open
set $U$ containing $x$ such that $f(U)$ is open in $N$ and $f|_U : U \to f(U)$
is a diffeomorphism. 
\end{definition}

Once the structure of a smooth manifold is in place one can define the notion of
derivative operator.  The tangent to a curve can also be defined in terms of
differentiation.  A tangent vector $t$ at a point $x$ should be thought of as a
directional derivative.  Standard results from multivariable calculus can be invoked to
show that the set of tangent vectors at a point form an $n$-dimensional vector space.  One
writes $T_x$ for this vector space, which is called \emph{the tangent space} at $x$.

The cleanest way to axiomatize the concept of tangent vector at a point $x$ is as
follows.  Let $\cF$ be the set of smooth real-valued functions defined on $M$.
\begin{definition}
  Given a point $x$ of $M$ we define a \textbf{tangent vector at} $x$ to be a map
  $t:\cF\to\reals$ such that, \(\forall a,b\in\reals\) and \(\forall f,g\in\cF\):
\begin{enumerate}
\item \(t(af+bg) = at(f) + bt(g)\),
\item \(t(fg) = f(x)t(g) + g(x)t(f)\).
\end{enumerate}
\end{definition}
It follows immediately that if $f$ is a constant function then $t(f) = 0$.  Note how the
second condition makes specific reference to the point $x$.

A smooth map $\psi:M\to N$ induces a map between tangent spaces at the corresponding
points.  The \emph{differential} of the map $\psi$ at $x$ is the linear map
\(\mathrm{d}\psi:T_xM\to T_{\psi(x)}N\) defined as follows.  Let $g$ be a smooth function
on a neighbourhood of $\psi(x)$ and let $t$ be a tangent vector at $x$.  We have to define
a tangent vector at $\psi(x)$ so it should be able to act on $g$.  We define
\[ \mathrm{d}\psi(t)(g) := t(g\circ\psi). \]

The following theorems are fundamental and used in the proofs of the policy gradient theorems.
\begin{theorem}[Inverse function theorem for manifolds]
\label{thm:inv_function}
If $f : M \to N$ is a smooth map whose differential $df_x : T_x M \to T_{f(x)}N$ is an
isomorphism at a point $x \in M$.  Then $f$ is a local diffeomorphism at $x$.
\end{theorem}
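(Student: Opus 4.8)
The plan is to reduce the statement to the classical inverse function theorem on Euclidean space by passing to charts; the manifold structure contributes only bookkeeping, while the genuine analytic content lives entirely in the Euclidean result (which is itself provable via the Banach fixed-point theorem stated earlier in this appendix).

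First I would observe that since $df_x : T_xM \to T_{f(x)}N$ is an isomorphism of vector spaces, the two tangent spaces have equal dimension, so $\dim M = \dim N =: n$. Next I would select a chart $(U,\phi)$ of $M$ about $x$ and a chart $(V,\psi)$ of $N$ about $f(x)$; shrinking $U$ if necessary, using continuity of $f$, I may assume $f(U) \subseteq V$. This produces the local representative $\hat{f} := \psi \circ f \circ \phi^{-1} : \phi(U) \to \psi(V)$, a smooth map between open subsets of $\reals^n$.

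The key step is to verify that the Jacobian of $\hat f$ at $\phi(x)$ is invertible. By the chain rule for differentials,
\[
d\hat{f}_{\phi(x)} = d\psi_{f(x)} \circ df_x \circ d(\phi^{-1})_{\phi(x)}.
\]
Since $\phi$ and $\psi$ are diffeomorphisms onto their images, both $d\psi_{f(x)}$ and $d(\phi^{-1})_{\phi(x)} = (d\phi_x)^{-1}$ are linear isomorphisms, and $df_x$ is an isomorphism by hypothesis; hence $d\hat f_{\phi(x)}$ is a composition of isomorphisms and is therefore invertible. I may now invoke the classical inverse function theorem on $\reals^n$: there is an open neighborhood $W \subseteq \phi(U)$ of $\phi(x)$ on which $\hat f$ restricts to a diffeomorphism onto the open set $\hat f(W)$.

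Finally I would transport this back to the manifolds. Setting $U' := \phi^{-1}(W)$, an open neighborhood of $x$ in $M$, I note that on $U'$ we have $f = \psi^{-1} \circ \hat f \circ \phi$, a composition of the diffeomorphisms $\phi|_{U'}$, $\hat f|_W$, and $\psi^{-1}|_{\hat f(W)}$; hence $f|_{U'}$ is a diffeomorphism onto the open set $f(U') = \psi^{-1}(\hat f(W))$, exhibiting $f$ as a local diffeomorphism at $x$ in the sense of Definition~\ref{def:local_diffeo}. The only real obstacle is the Euclidean inverse function theorem itself; once that is granted, every remaining step is a routine chart computation, so I would spend care only on checking that the domains of the various composites match up correctly under the compatibility condition on charts.
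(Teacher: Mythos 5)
Your proof is correct, but note that the paper itself offers no proof of this statement to compare against: Theorem~\ref{thm:inv_function} is stated in the mathematical-tools appendix as a background fact imported from differential geometry (alongside the chain rule for manifolds), and is simply invoked in the proof of the deterministic homomorphic policy gradient theorem to conclude that the matrix $P = \nabla_a g_s(a)$ is invertible. Your argument---pass to charts, use the chain rule to show the local representative $\hat f = \psi \circ f \circ \phi^{-1}$ has invertible differential at $\phi(x)$, invoke the classical Euclidean inverse function theorem, and transport the resulting local inverse back through the charts---is the standard textbook proof, and each step is consistent with the paper's own definitions: the conclusion you reach is exactly Definition~\ref{def:local_diffeo}, and the chain rule you apply is the one recorded in the same appendix. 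Treating the Euclidean inverse function theorem as the granted analytic ingredient is reasonable; as you observe, its classical proof rests on the Banach fixed-point theorem, which the paper also states.
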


\begin{theorem}[Chain rule for manifolds]
If $f : M \to N$ and $g : N \to O$ are smooth maps of manifolds, then:
$$d
    (g \circ f)_{x} = dg_{f(x)} \circ df_x.
$$
\end{theorem}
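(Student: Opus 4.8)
The plan is to prove the identity directly from the axiomatic definition of the differential supplied in the excerpt, namely $\mathrm{d}\psi(t)(\varphi) := t(\varphi\circ\psi)$, where $t$ is a tangent vector and $\varphi$ is a smooth test function defined near $\psi(x)$. Both sides of the claimed equation are linear maps $T_xM \to T_{(g\circ f)(x)}O$, so it suffices to show that, for an arbitrary tangent vector $t\in T_xM$, the two tangent vectors $\mathrm{d}(g\circ f)_x(t)$ and $\big(\mathrm{d}g_{f(x)}\circ\mathrm{d}f_x\big)(t)$ agree when evaluated on an arbitrary smooth $\varphi:O\to\reals$ defined on a neighbourhood of $(g\circ f)(x)$. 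Since a tangent vector is determined by its action on the algebra $\cF$ of smooth functions, agreement on all such $\varphi$ forces equality of the tangent vectors, and hence equality of the two linear maps.

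First I would fix $t\in T_xM$ and such a $\varphi$, and unwind the left-hand side using the definition of the differential of the composite map: $\mathrm{d}(g\circ f)_x(t)(\varphi) = t\big(\varphi\circ(g\circ f)\big)$. The single algebraic input is the associativity of ordinary function composition, $\varphi\circ(g\circ f) = (\varphi\circ g)\circ f$, which rewrites this as $t\big((\varphi\circ g)\circ f\big)$.

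Next I would read off the right-hand side by applying the definition of the differential twice. Applying the definition of $\mathrm{d}f_x$ to the smooth function $\varphi\circ g$ gives $t\big((\varphi\circ g)\circ f\big) = \mathrm{d}f_x(t)(\varphi\circ g)$; here I must first note that $\varphi\circ g$ is a legitimate element of the function algebra on $N$, which holds because $g$ and $\varphi$ are smooth and composites of smooth maps are smooth. Since $\mathrm{d}f_x(t)$ is a tangent vector at $f(x)$, applying the definition of $\mathrm{d}g_{f(x)}$ to it and the test function $\varphi$ yields $\mathrm{d}f_x(t)(\varphi\circ g) = \mathrm{d}g_{f(x)}\big(\mathrm{d}f_x(t)\big)(\varphi) = \big(\mathrm{d}g_{f(x)}\circ\mathrm{d}f_x\big)(t)(\varphi)$. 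Chaining these equalities shows the two sides coincide on every $\varphi$ and every $t$, which is the claim.

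The argument is essentially a one-line unwinding of definitions, so there is no real obstacle; the only points deserving a moment's care are the verification that $\varphi\circ g$ lies in $\cF$ on $N$ so that $\mathrm{d}f_x$ may legitimately be applied to it, and the bookkeeping of base points, ensuring each tangent vector is evaluated on functions defined near the correct image point ($f(x)$ for $\mathrm{d}f_x(t)$ and $(g\circ f)(x)$ for $\mathrm{d}(g\circ f)_x(t)$).
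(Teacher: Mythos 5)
Your proof is correct and complete. Note that the paper itself states this chain rule without proof: it appears in Appendix B as a background tool (one of the ``fundamental theorems used in the proofs of the policy gradient theorems''), cited implicitly to standard textbooks, so there is no paper proof to compare against. Your argument is exactly the standard one given the paper's axiomatic definition $\mathrm{d}\psi(t)(\varphi) := t(\varphi\circ\psi)$: unwind both sides on an arbitrary tangent vector $t$ and test function $\varphi$, and observe that associativity of composition, $\varphi\circ(g\circ f) = (\varphi\circ g)\circ f$, makes them agree. Your two points of care --- that $\varphi\circ g$ is a legitimate smooth function on $N$ so $\mathrm{d}f_x(t)$ may act on it, and that base points are tracked correctly --- are precisely the right ones; the only further pedantic check one could add (that $\mathrm{d}f_x(t)$ satisfies the two derivation axioms at $f(x)$, so it is genuinely a tangent vector) is already built into the paper's definition of the differential as a map $T_xM \to T_{f(x)}N$.
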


\clearpage
\section{Full Results}
\label{sec:additional_results}
As discussed in Section \ref{sec:experiments}, we evaluate DHPG on continuous control tasks from DM Control on pixel observations, as well as custom designed environments. Importantly, to reliably evaluate our algorithm against the baselines and to correctly capture the distribution of results, we follow the best practices proposed by \citet{agarwal2021deep} and report the interquartile mean (IQM) and performance profiles aggregated on all tasks over 10 random seeds. While our baseline results are obtained using the official code, when possible, some of the results may differ from the originally reported ones due to the difference in the seed numbers and our goal to present a faithful representation of the true performance distribution \citep{agarwal2021deep}. 

We use the official implementations of DrQv2, DBC, and SAC-AE, while we re-implement DeepMDP due to the unavailability of the official code; See Appendix \ref{sec:baseline_impl} for more details on the baselines.

\subsection{DeepMind Control Suite}
\label{sec:additional_results_pixels}
Figures \ref{fig:pixel_results_supp_aug}-\ref{fig:pixel_results_supp_no_aug} show full results obtained on 16 DeepMind Control Suite tasks with pixel observations to supplement the results of Section \ref{sec:results_pixels}. Domains that require excessive exploration and large number of time steps (e.g., acrobat, swimmer, and humanoid) and domains with visually small targets (e.g., reacher hard and finger turn hard) are not included in this benchmark.

Figures \ref{fig:pixel_aug_results_performance_profiles}-\ref{fig:pixel_no_aug_results_performance_profiles} and \ref{fig:pixel_aug_results_aggregate_metrics}-\ref{fig:pixel_no_aug_results_aggregate_metrics} respectively show performance profiles and aggregate metrics \citep{agarwal2021deep} on 14 tasks; hopper hop and walker run are removed from RLiable evaluation as none of the algorithms have acquired reasonable performance within 1 million time-steps. 

In Figures \ref{fig:pixel_results_supp_aug}, \ref{fig:pixel_aug_results_performance_profiles}, and \ref{fig:pixel_aug_results_aggregate_metrics} all methods are \emph{with} image augmentation, while in Figures \ref{fig:pixel_results_supp_no_aug}, \ref{fig:pixel_no_aug_results_performance_profiles}, and \ref{fig:pixel_no_aug_results_aggregate_metrics} all methods are \emph{without} image augmentation.

\clearpage

\vspace{5em}
\begin{figure}[h!]
     \centering
     \begin{subfigure}[b]{0.24\textwidth}
         \centering
         \includegraphics[width=\textwidth]{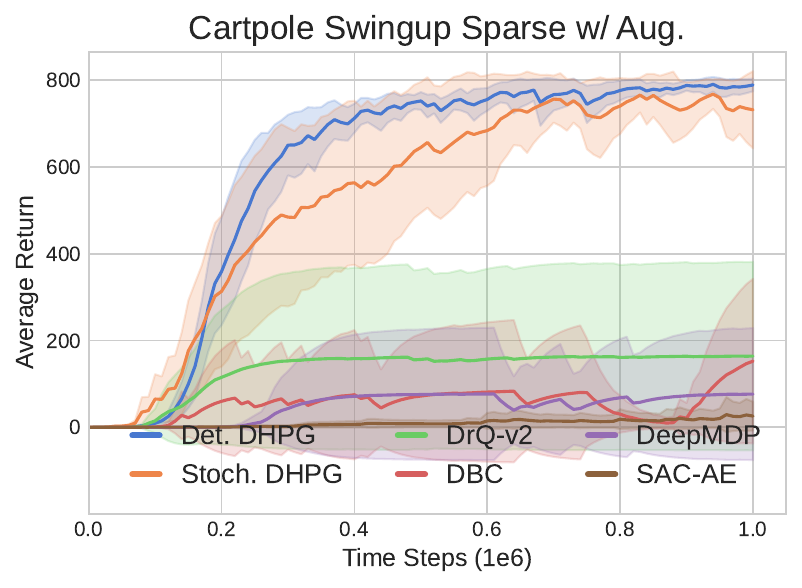}
     \end{subfigure}
     \hfill
     \begin{subfigure}[b]{0.24\textwidth}
         \centering
         \includegraphics[width=\textwidth]{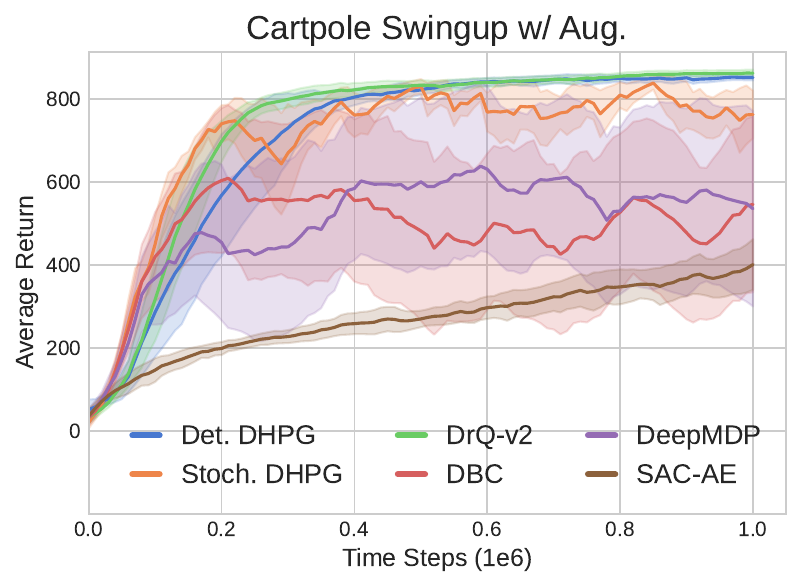}
     \end{subfigure}
     \hfill
     \begin{subfigure}[b]{0.24\textwidth}
         \centering
         \includegraphics[width=\textwidth]{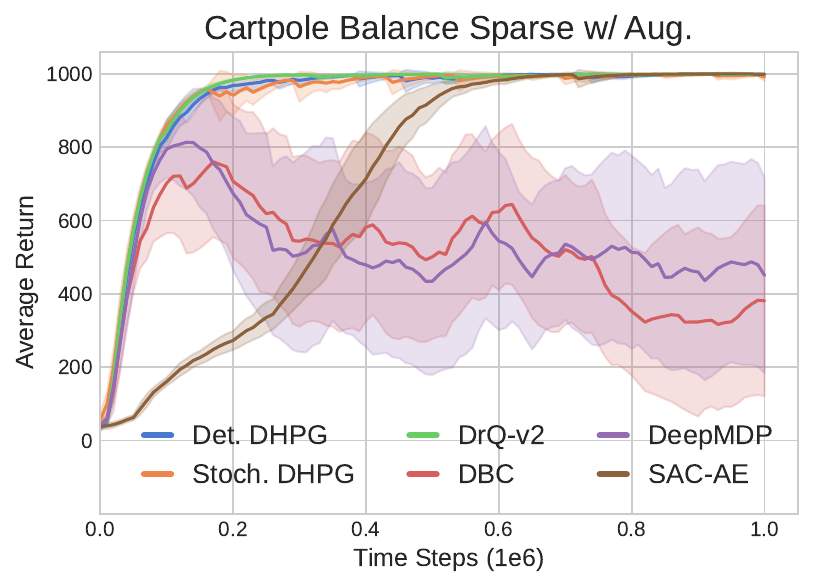}
     \end{subfigure}
     \hfill     
     \begin{subfigure}[b]{0.24\textwidth}
         \centering
         \includegraphics[width=\textwidth]{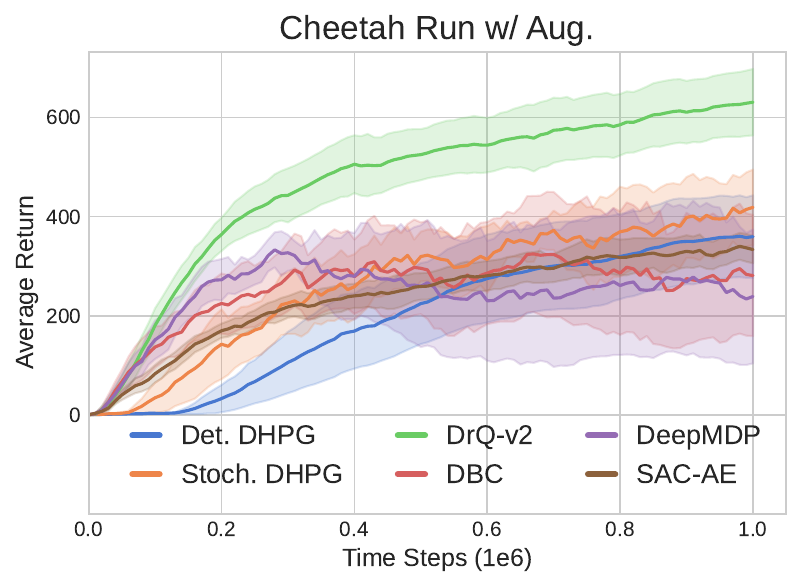}
     \end{subfigure}
     \hfill
     
     \begin{subfigure}[b]{0.24\textwidth}
         \centering
         \includegraphics[width=\textwidth]{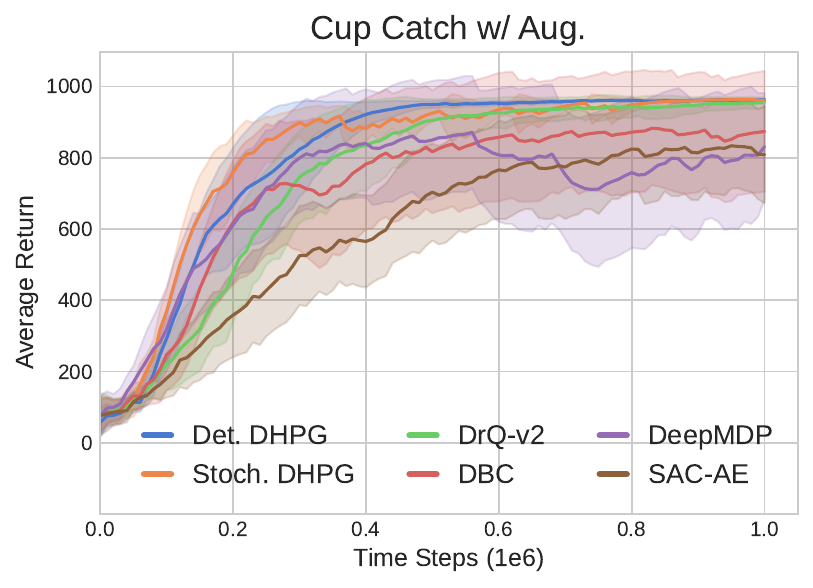}
     \end{subfigure}
     \hfill
     \begin{subfigure}[b]{0.24\textwidth}
         \centering
         \includegraphics[width=\textwidth]{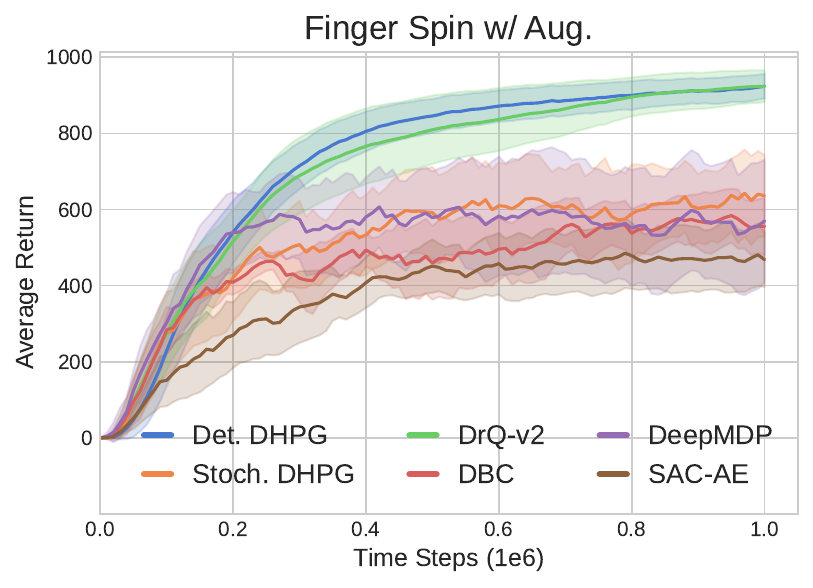}
     \end{subfigure}
     \hfill
     \begin{subfigure}[b]{0.24\textwidth}
         \centering
         \includegraphics[width=\textwidth]{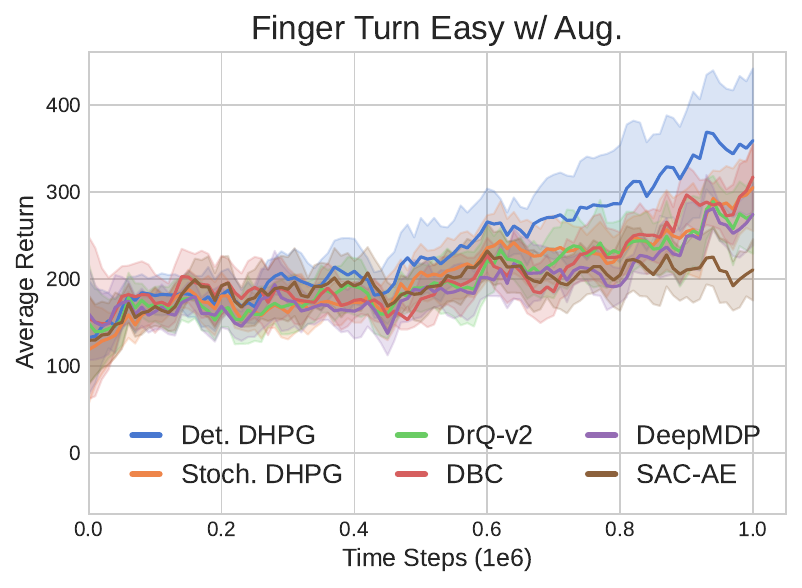}
     \end{subfigure}     
     \hfill
     \begin{subfigure}[b]{0.24\textwidth}
         \centering
         \includegraphics[width=\textwidth]{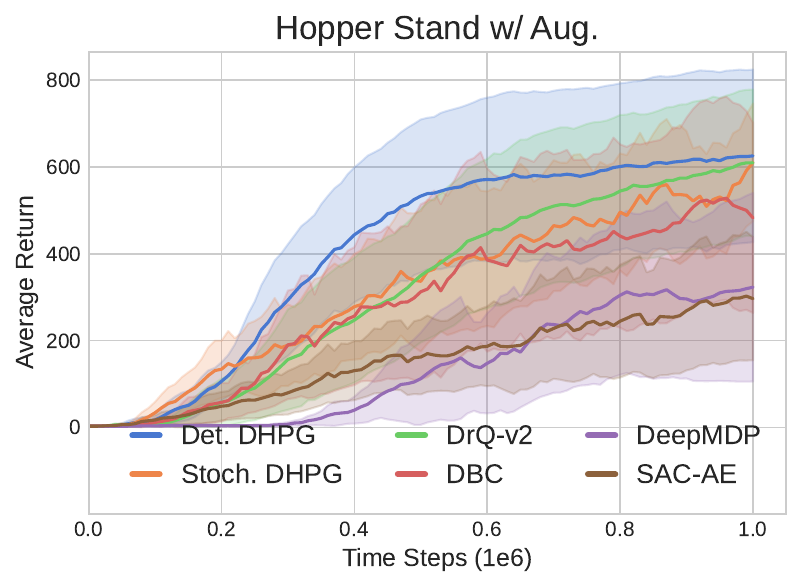}
     \end{subfigure}
     \hfill     
     
     \begin{subfigure}[b]{0.24\textwidth}
         \centering
         \includegraphics[width=\textwidth]{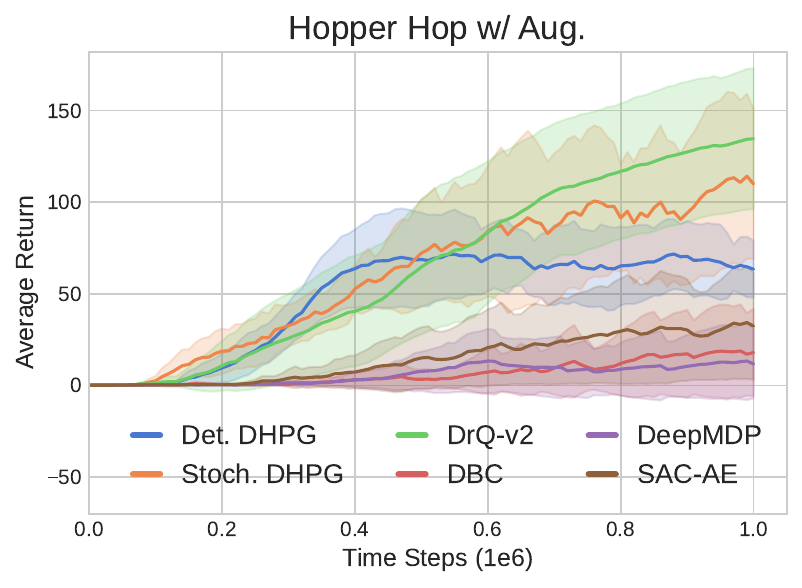}
     \end{subfigure}    
     \hfill
     \begin{subfigure}[b]{0.24\textwidth}
         \centering
         \includegraphics[width=\textwidth]{figures/pixels_aug_main/pixels_aug_pendulum_swingup_episode_reward_eval.pdf}
     \end{subfigure}
     \hfill
     \begin{subfigure}[b]{0.24\textwidth}
         \centering
         \includegraphics[width=\textwidth]{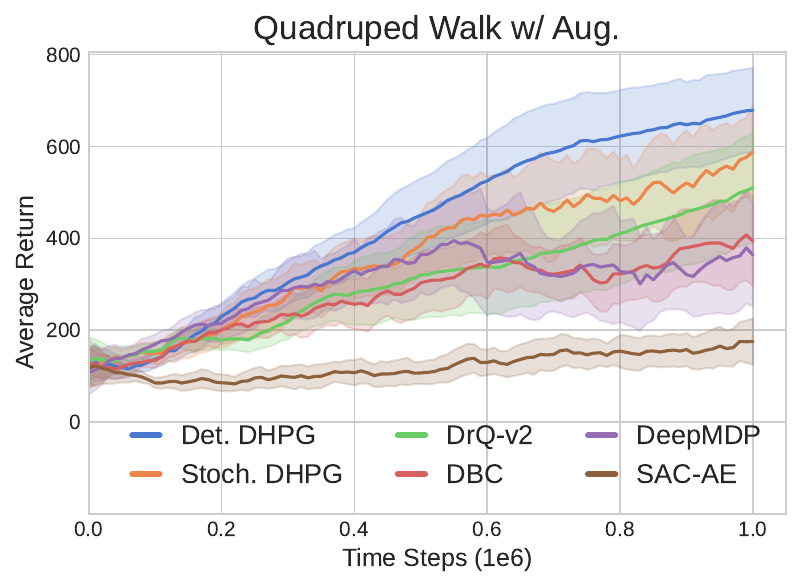}
     \end{subfigure}
     \hfill
     \begin{subfigure}[b]{0.24\textwidth}
         \centering
         \includegraphics[width=\textwidth]{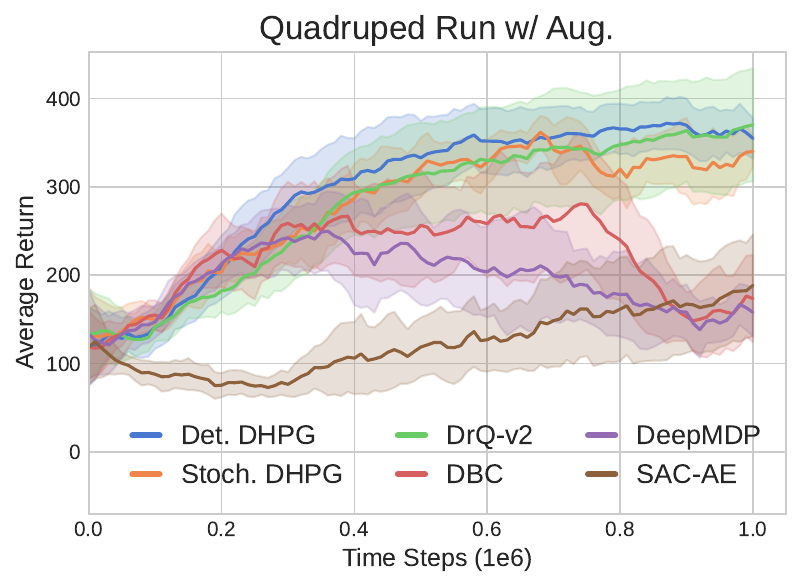}
     \end{subfigure}
     \hfill
     
     \begin{subfigure}[b]{0.24\textwidth}
         \centering
         \includegraphics[width=\textwidth]{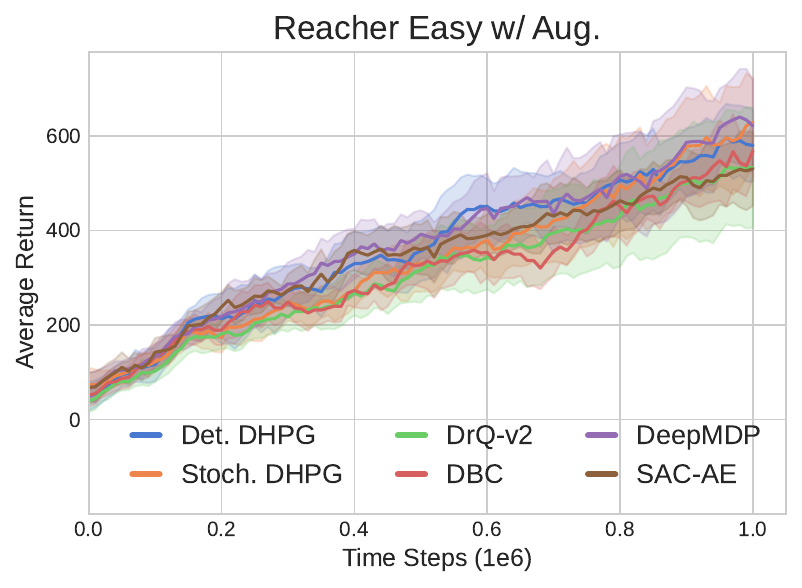}
     \end{subfigure}
     \hfill
     \begin{subfigure}[b]{0.24\textwidth}
         \centering
         \includegraphics[width=\textwidth]{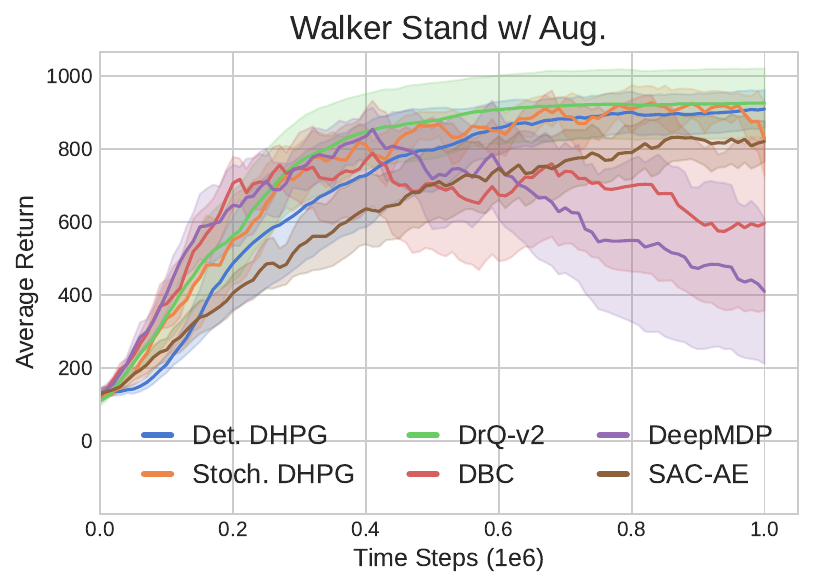}
     \end{subfigure}
     \hfill
     \begin{subfigure}[b]{0.24\textwidth}
         \centering
         \includegraphics[width=\textwidth]{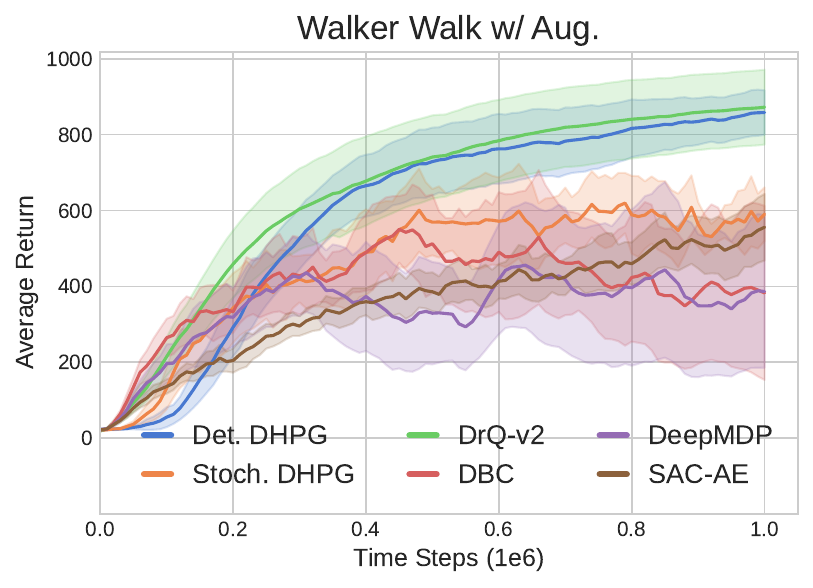}
     \end{subfigure}
     \hfill
     \begin{subfigure}[b]{0.24\textwidth}
         \centering
         \includegraphics[width=\textwidth]{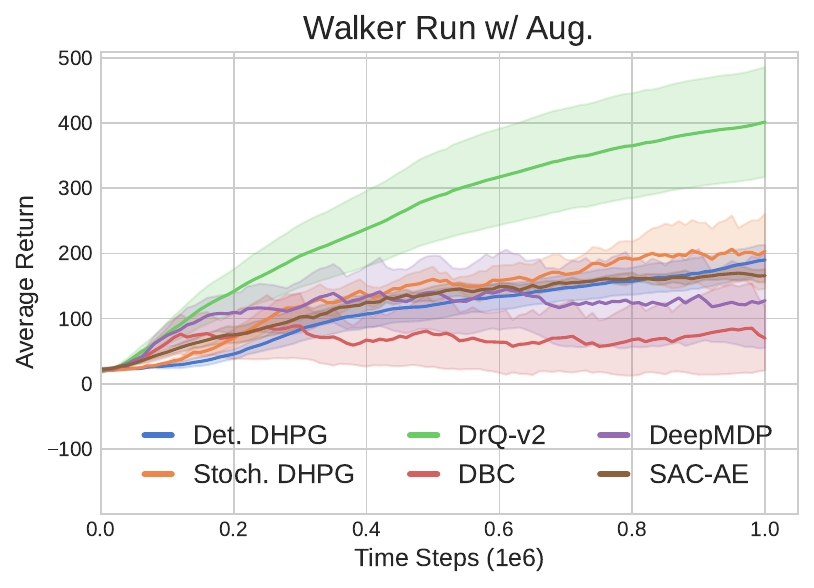}
     \end{subfigure}     
    \caption{Learning curves for 16 DM control tasks with pixel observations. All methods are \textbf{with} image augmentation. Mean performance is obtained over 10 seeds and shaded regions represent $95\%$ confidence intervals. Plots are smoothed uniformly for visual clarity.}
    \label{fig:pixel_results_supp_aug}
\end{figure}

\clearpage

\begin{figure}[h!]
     \centering
     \begin{subfigure}[b]{0.24\textwidth}
         \centering
         \includegraphics[width=\textwidth]{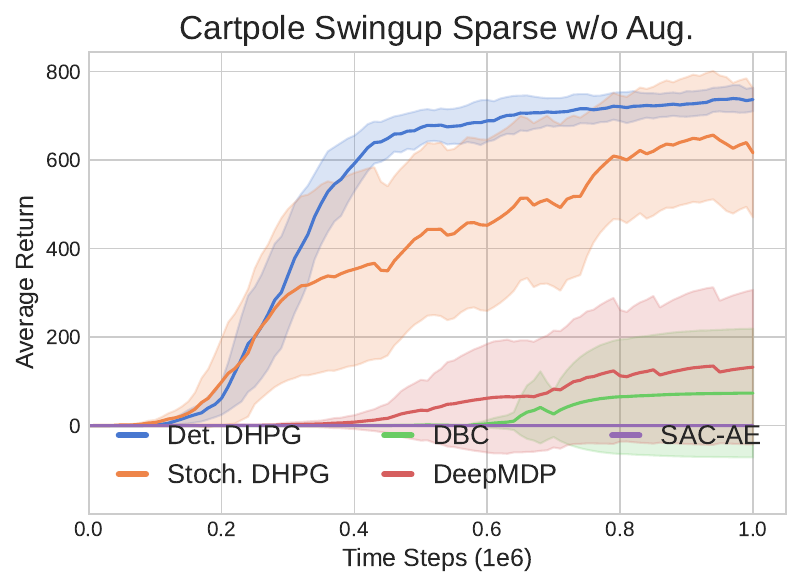}
     \end{subfigure}
     \hfill
     \begin{subfigure}[b]{0.24\textwidth}
         \centering
         \includegraphics[width=\textwidth]{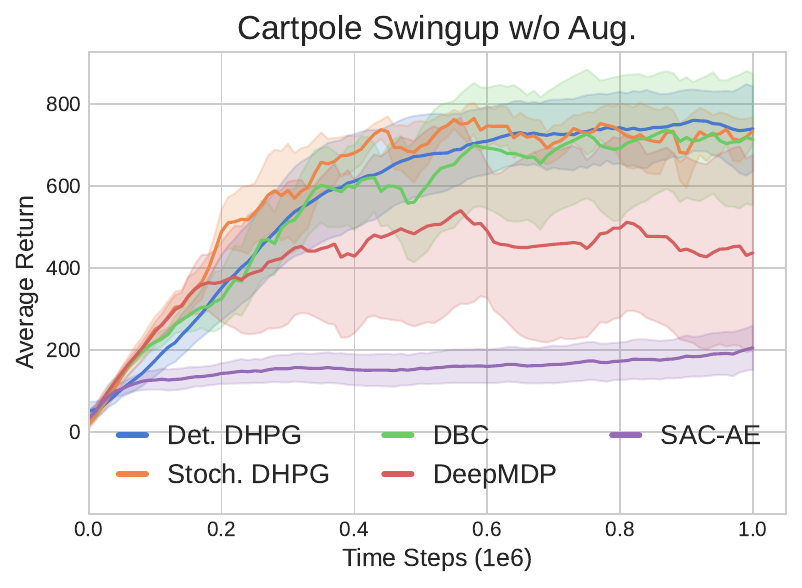}
     \end{subfigure}
     \hfill
     \begin{subfigure}[b]{0.24\textwidth}
         \centering
         \includegraphics[width=\textwidth]{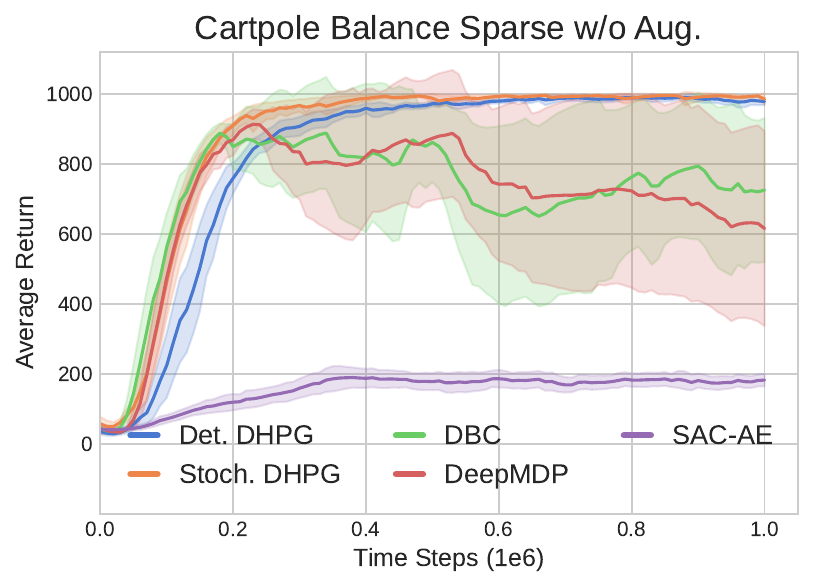}
     \end{subfigure}
     \hfill     
     \begin{subfigure}[b]{0.24\textwidth}
         \centering
         \includegraphics[width=\textwidth]{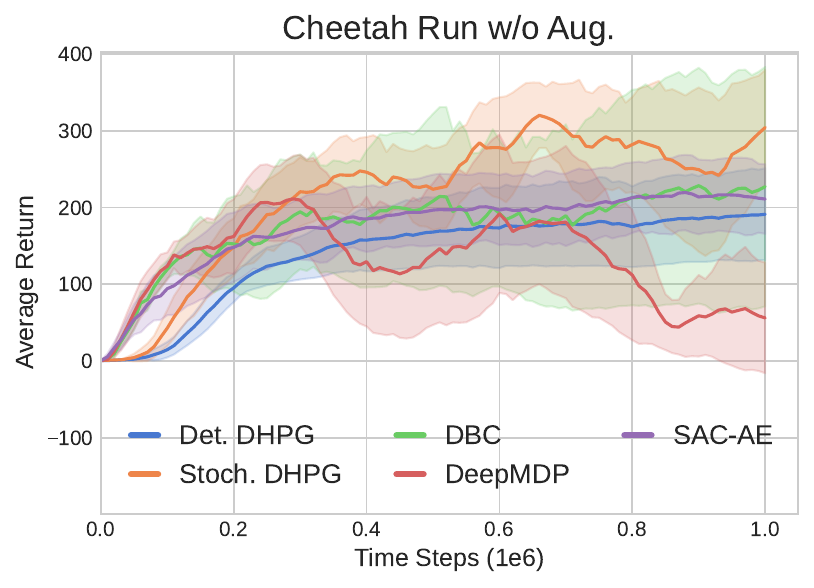}
     \end{subfigure}
     \hfill
     
     \begin{subfigure}[b]{0.24\textwidth}
         \centering
         \includegraphics[width=\textwidth]{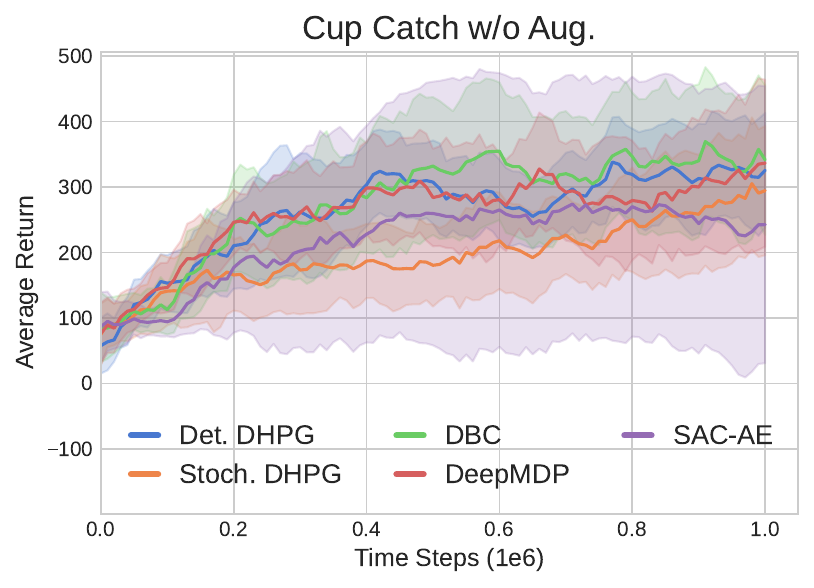}
     \end{subfigure}
     \hfill
     \begin{subfigure}[b]{0.24\textwidth}
         \centering
         \includegraphics[width=\textwidth]{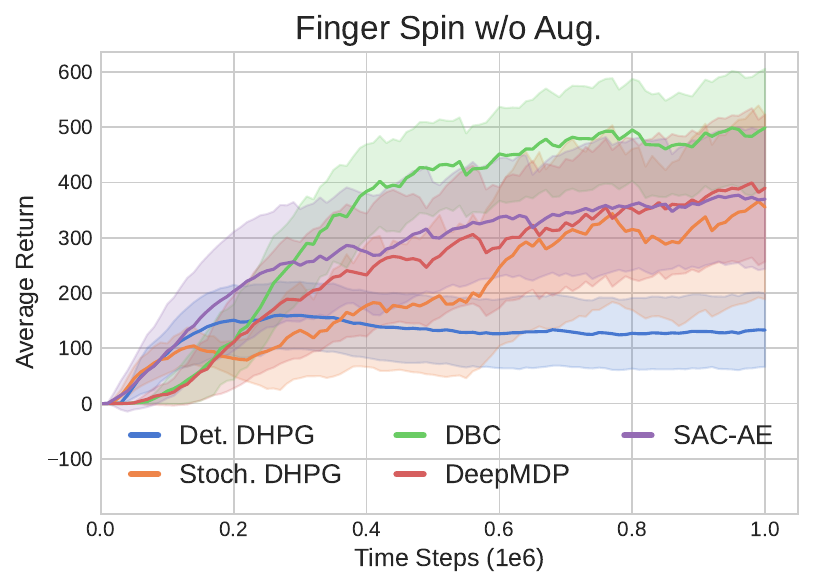}
     \end{subfigure}
     \hfill
     \begin{subfigure}[b]{0.24\textwidth}
         \centering
         \includegraphics[width=\textwidth]{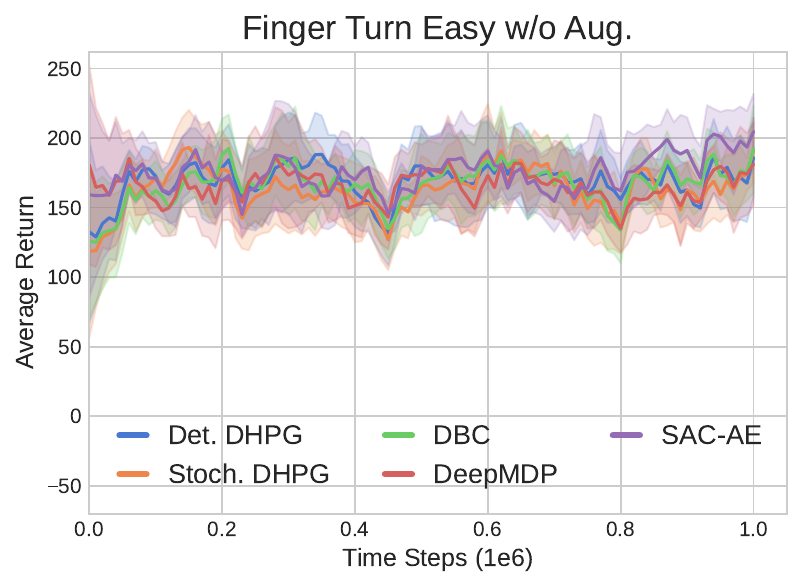}
     \end{subfigure}     
     \hfill
     \begin{subfigure}[b]{0.24\textwidth}
         \centering
         \includegraphics[width=\textwidth]{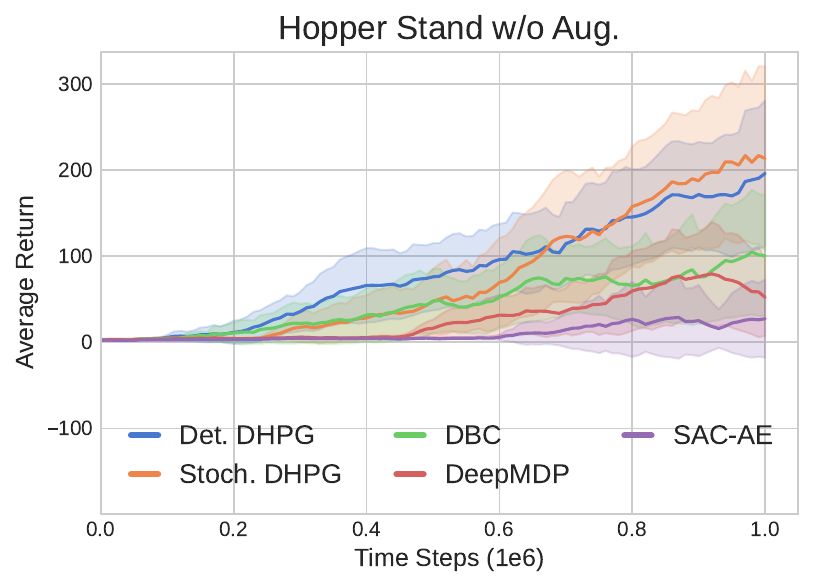}
     \end{subfigure}
     \hfill     
     
     \begin{subfigure}[b]{0.24\textwidth}
         \centering
         \includegraphics[width=\textwidth]{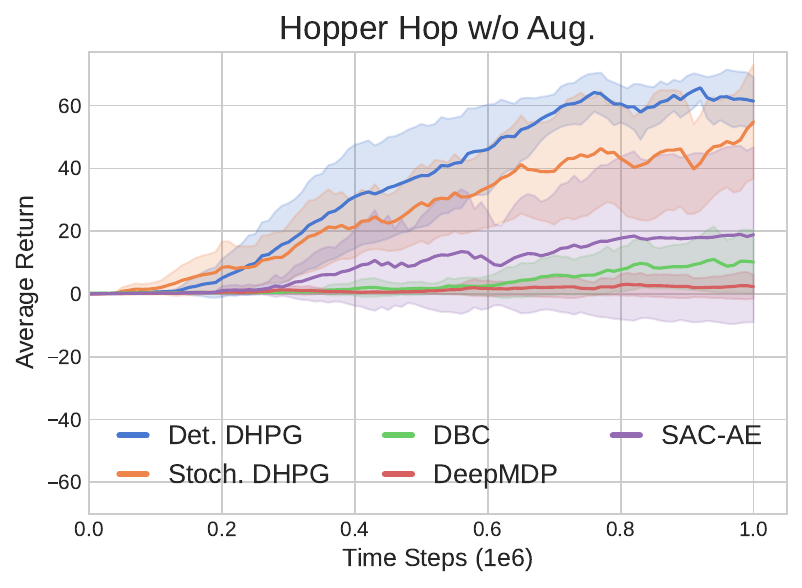}
     \end{subfigure}    
     \hfill
     \begin{subfigure}[b]{0.24\textwidth}
         \centering
         \includegraphics[width=\textwidth]{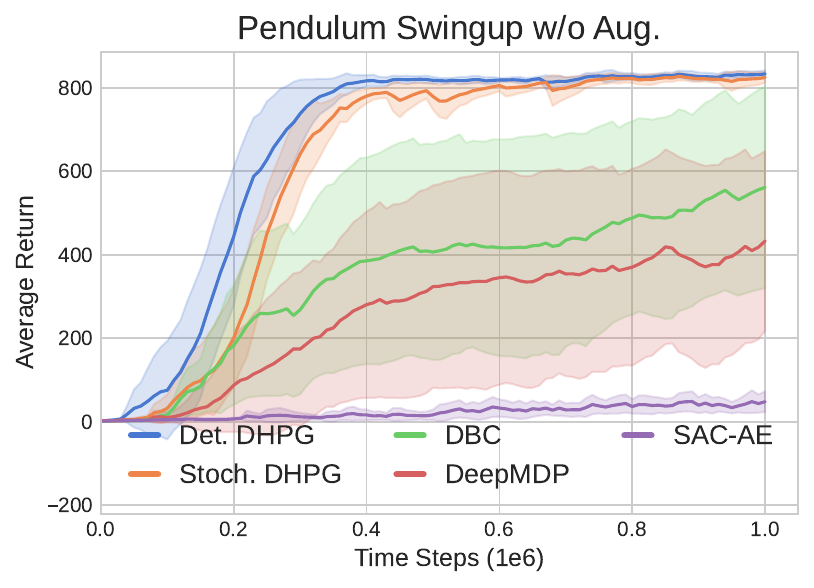}
     \end{subfigure}
     \hfill
     \begin{subfigure}[b]{0.24\textwidth}
         \centering
         \includegraphics[width=\textwidth]{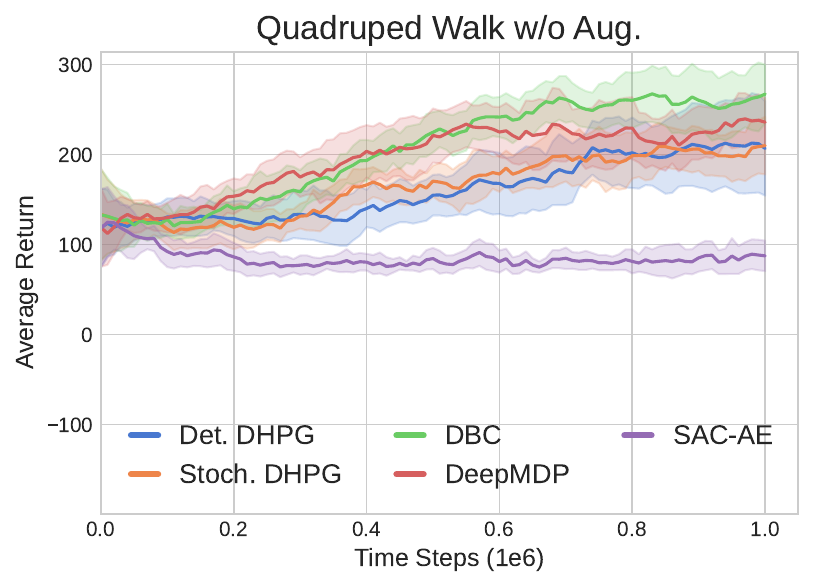}
     \end{subfigure}
     \hfill
     \begin{subfigure}[b]{0.24\textwidth}
         \centering
         \includegraphics[width=\textwidth]{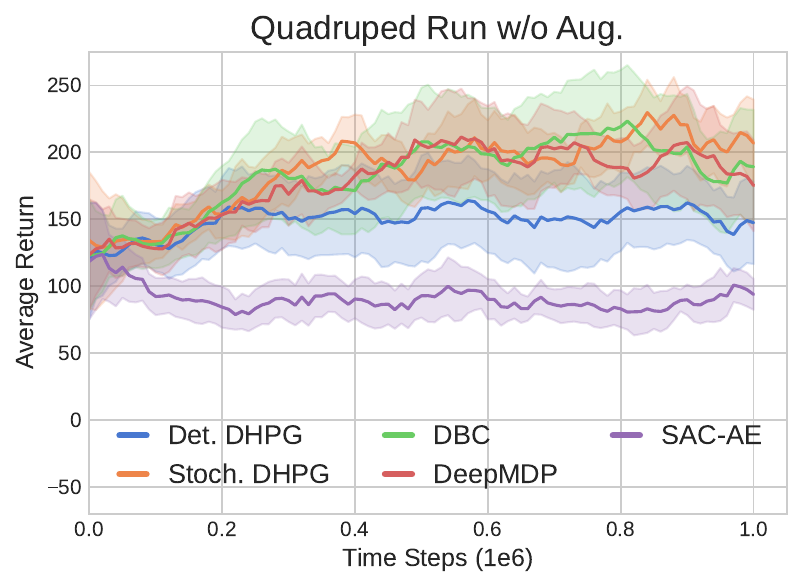}
     \end{subfigure}
     \hfill
     
     \begin{subfigure}[b]{0.24\textwidth}
         \centering
         \includegraphics[width=\textwidth]{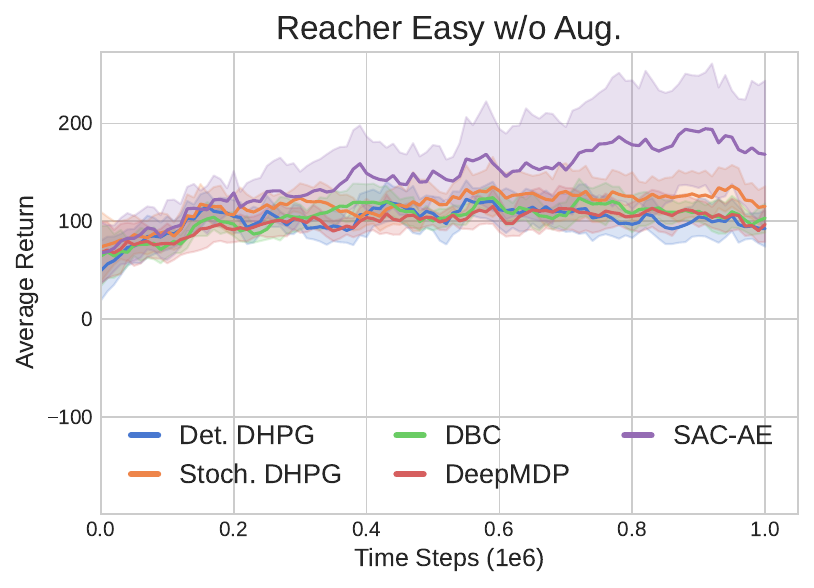}
     \end{subfigure}
     \hfill
     \begin{subfigure}[b]{0.24\textwidth}
         \centering
         \includegraphics[width=\textwidth]{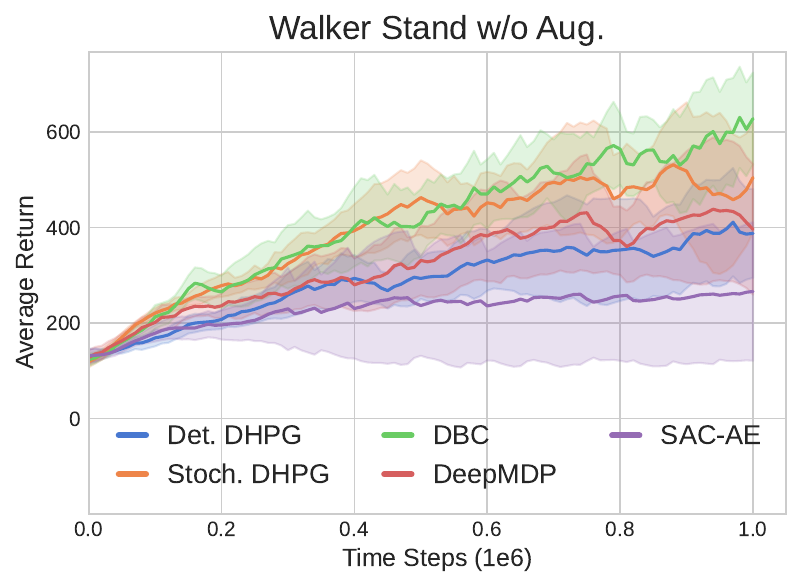}
     \end{subfigure}
     \hfill
     \begin{subfigure}[b]{0.24\textwidth}
         \centering
         \includegraphics[width=\textwidth]{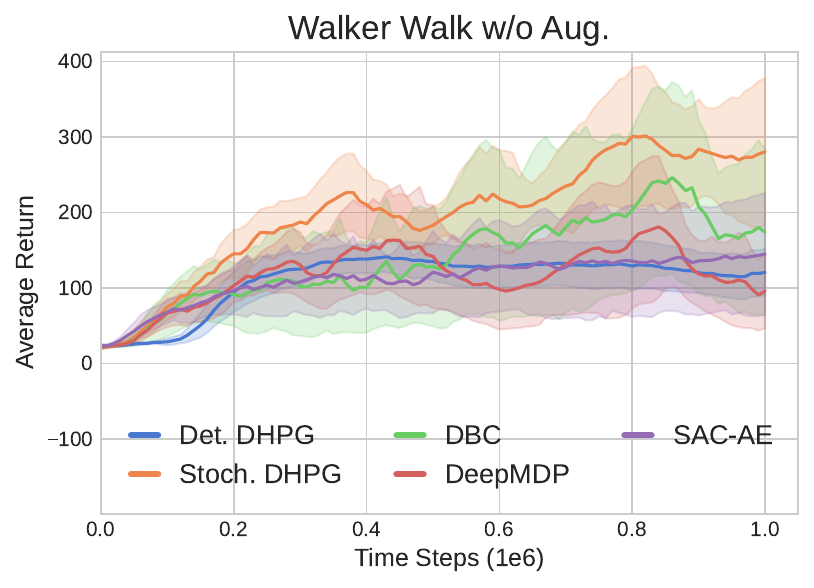}
     \end{subfigure}
     \hfill
     \begin{subfigure}[b]{0.24\textwidth}
         \centering
         \includegraphics[width=\textwidth]{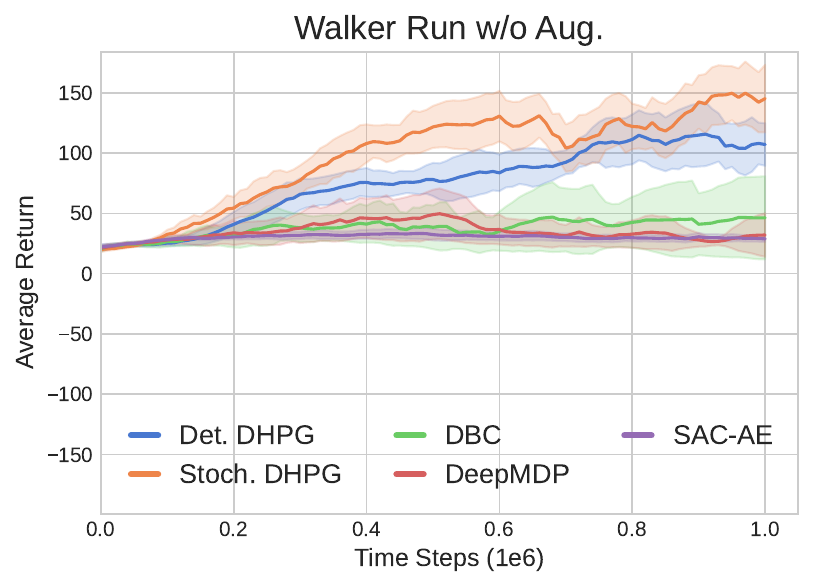}
     \end{subfigure}
     
    \caption{Learning curves for 16 DM control tasks with pixel observations. All methods are \textbf{without} image augmentation. Mean performance is obtained over 10 seeds and shaded regions represent $95\%$ confidence intervals. Plots are smoothed uniformly for visual clarity.}
    \label{fig:pixel_results_supp_no_aug}
\end{figure}

\clearpage

\begin{figure}[h!]
    \centering
    \begin{subfigure}[b]{0.45\textwidth}
        \includegraphics[width=\textwidth]{figures/pixels_aug_rliable/pixels_aug_performance_profiles_500k.pdf}
        \caption{500k step benchmark.}
    \end{subfigure}
    \hfill
    \begin{subfigure}[b]{0.45\textwidth}
        \includegraphics[width=\textwidth]{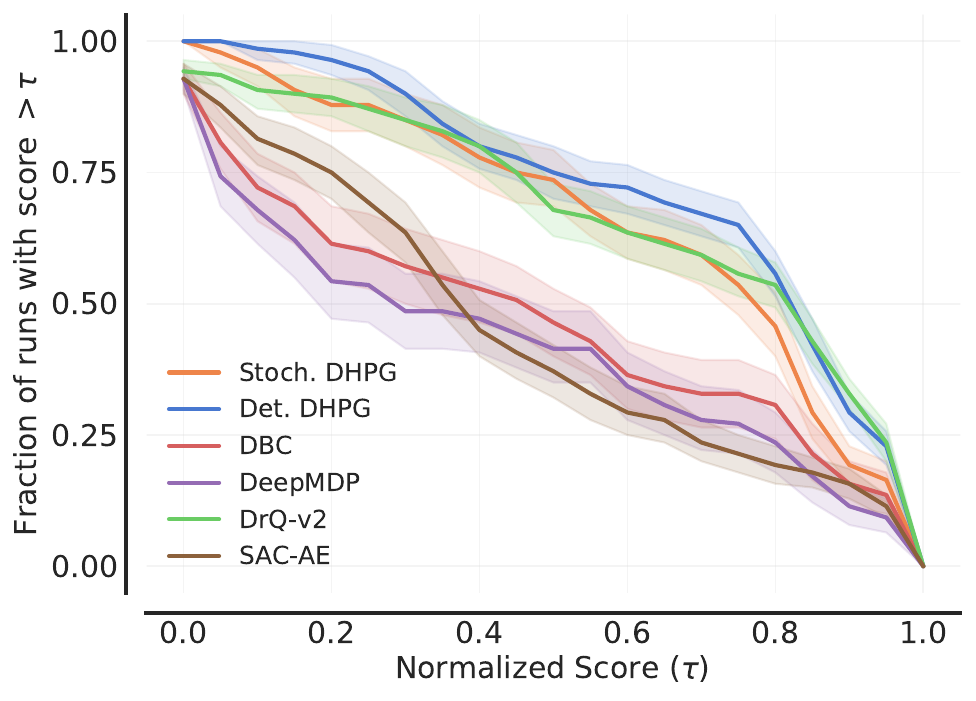}
        \caption{1m step benchmark.}
    \end{subfigure}
    \caption{Performance profiles for pixel observations based on 14 tasks over 10 seeds, at 500k steps \textbf{(a)}, and at 1m steps \textbf{(b)}. All methods are \textbf{with} image augmentation. Shaded regions represent $95\%$ confidence intervals.}    
    \label{fig:pixel_aug_results_performance_profiles}
\end{figure}

\vspace{5em}

\begin{figure}[h!]
    \centering
    \begin{subfigure}[b]{0.45\textwidth}
        \includegraphics[width=\textwidth]{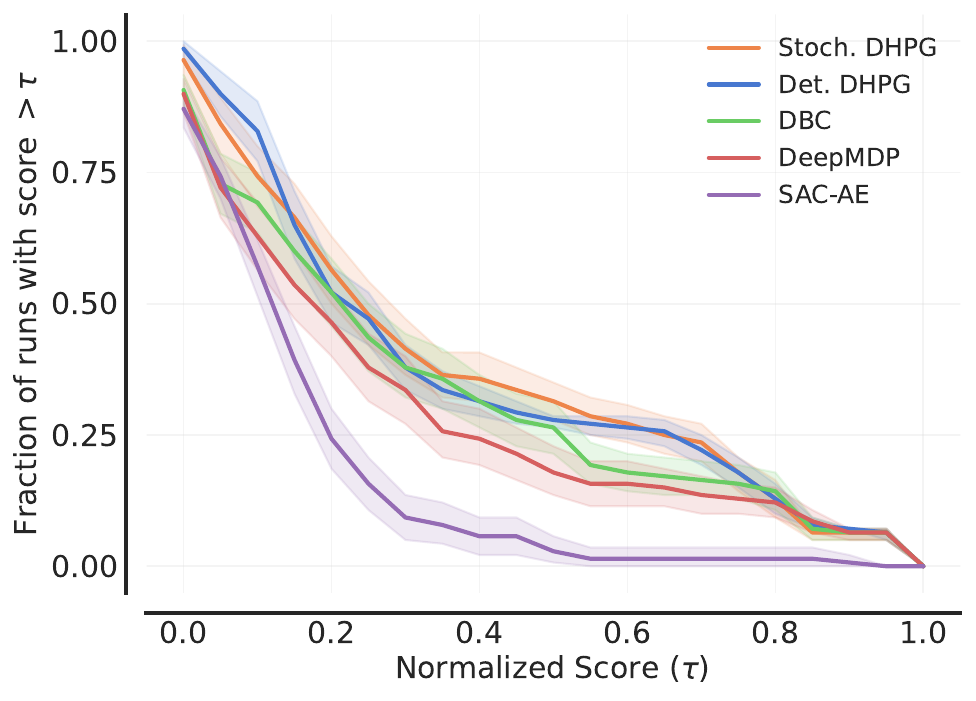}
        \caption{500k step benchmark.}
    \end{subfigure}
    \hfill
    \begin{subfigure}[b]{0.45\textwidth}
        \includegraphics[width=\textwidth]{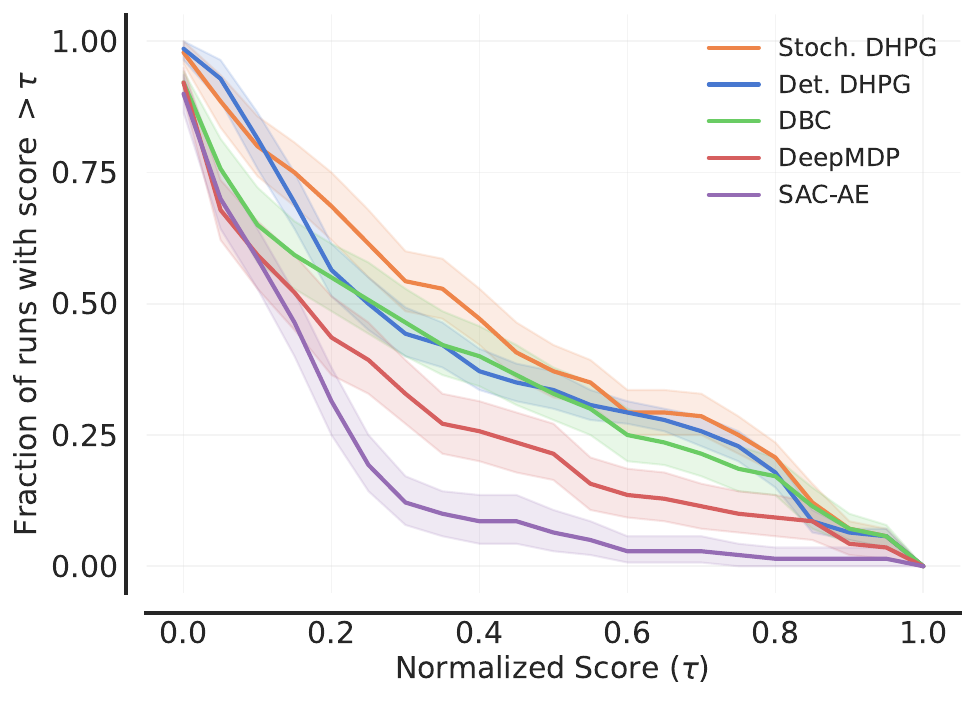}
        \caption{1m step benchmark.}
    \end{subfigure}
    \caption{Performance profiles for pixel observations based on 14 tasks over 10 seeds, at 500k steps \textbf{(a)}, and at 1m steps \textbf{(b)}. All methods are \textbf{without} image augmentation. Shaded regions represent $95\%$ confidence intervals.}    
    \label{fig:pixel_no_aug_results_performance_profiles}
\end{figure}
\clearpage

\begin{figure}[h!]
    \centering    
    \begin{subfigure}[b]{0.95\textwidth}
        \includegraphics[width=\textwidth]{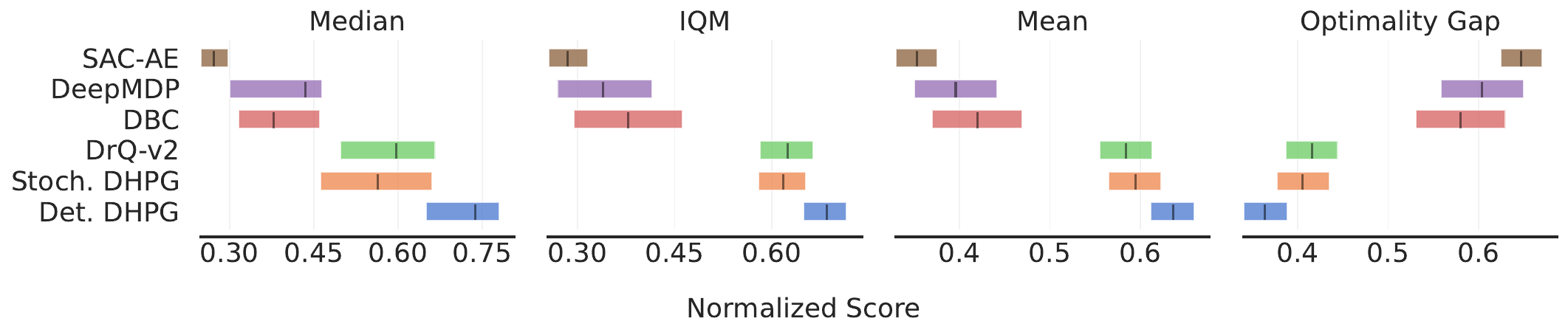}
        \caption{500k step benchmark.}
    \end{subfigure}
    \vspace{2em}
    
    \begin{subfigure}[b]{0.95\textwidth}
        \includegraphics[width=\textwidth]{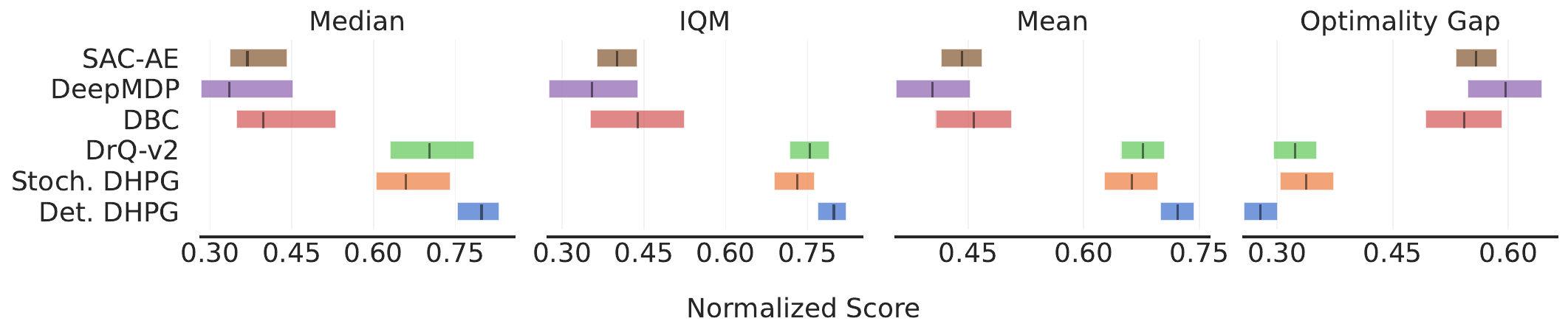}
        \caption{1m step benchmark.}
    \end{subfigure}
    \caption{Aggregate metrics for pixel observations with $95\%$ confidence intervals based on 14 tasks over 10 seeds, at 500k steps \textbf{(a)}, and at 1m steps \textbf{(b)}. All methods are \textbf{with} image augmentation.}    
    \label{fig:pixel_aug_results_aggregate_metrics}
\end{figure}

\vfill

\begin{figure}[h!]
    \centering    
    \begin{subfigure}[b]{0.95\textwidth}
        \includegraphics[width=\textwidth]{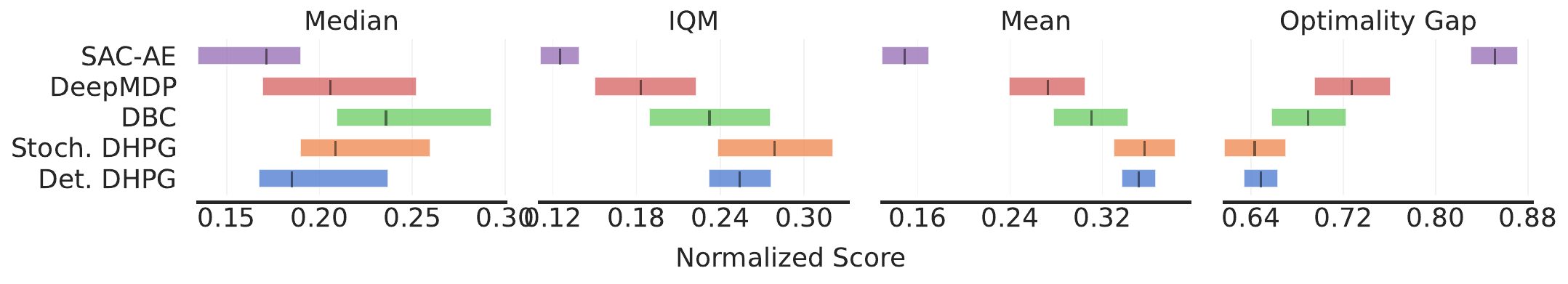}
        \caption{500k step benchmark.}
    \end{subfigure}
    \vspace{2em}
    
    \begin{subfigure}[b]{0.95\textwidth}
        \includegraphics[width=\textwidth]{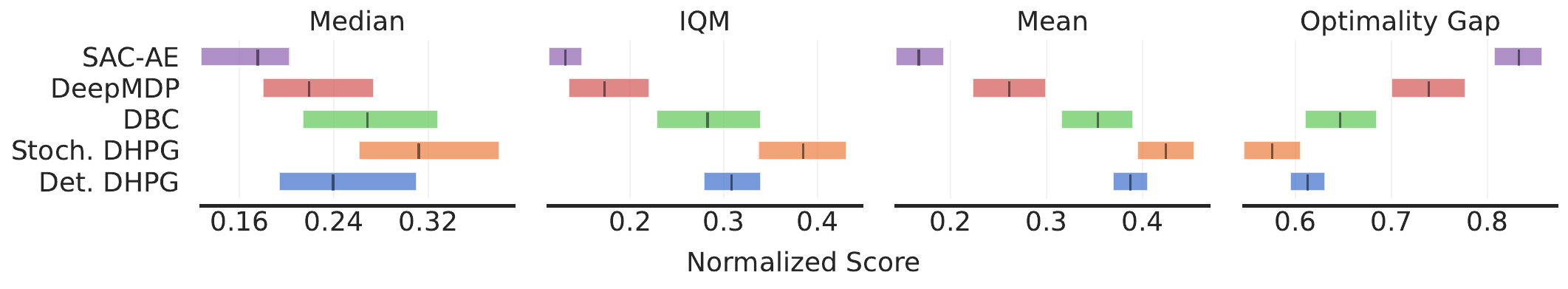}
        \caption{1m step benchmark.}
    \end{subfigure}
    \caption{Aggregate metrics for pixel observations with $95\%$ confidence intervals based on 14 tasks over 10 seeds, at 500k steps \textbf{(a)}, and at 1m steps \textbf{(b)}. All methods are \textbf{without} image augmentation.}    
    \label{fig:pixel_no_aug_results_aggregate_metrics}
\end{figure}

\clearpage

\subsection{Value Equivalence Property in Practice}
\label{sec:value_equiv_supp}
We can use the value equivalence between the critics of the actual and abstract MDPs as a measure for the quality of learned MDP homomorphisms, since the two critics are not directly trained to minimize this distance, instead they have equivalent values through the learned MDP homomorphism map. Figure \ref{fig:value_equivalence} shows the normalized mean absolute error of $|Q(s, a) \!-\! \overline{Q}(\overline{s}, \overline{a})|$ during training, indicating the property is holding in practice. Expectedly, for lower-dimensional tasks with easily learnable homomorphism maps (e.g., cartpole) the error is reduced earlier than more complicated tasks (e.g., quadruped). But importantly, in all cases the error decreases over time and is at a reasonable range towards the end of the training, meaning the continuous MDP homomorphisms is adhering to conditions of Definition \ref{def:cont_mdp_homo}.
\begin{figure}[h!]
    \centering    
    \begin{subfigure}[b]{0.7\textwidth}
        \includegraphics[width=\textwidth]{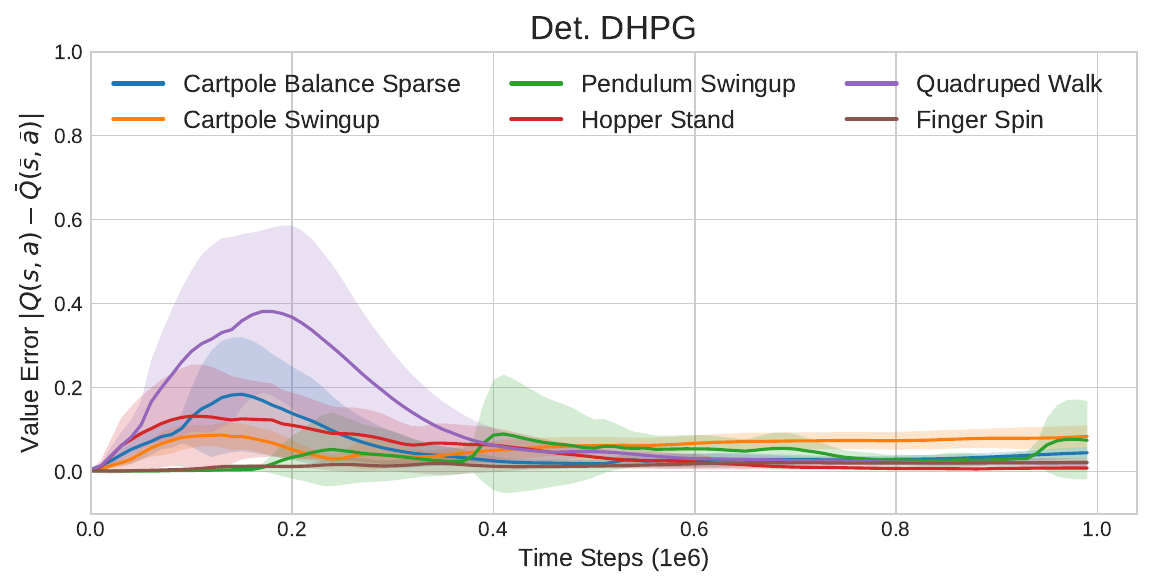}
        \caption{Value equivalence for deterministic DHPG.}
    \end{subfigure}
    \vspace{1em}
    
    \begin{subfigure}[b]{0.7\textwidth}
        \includegraphics[width=\textwidth]{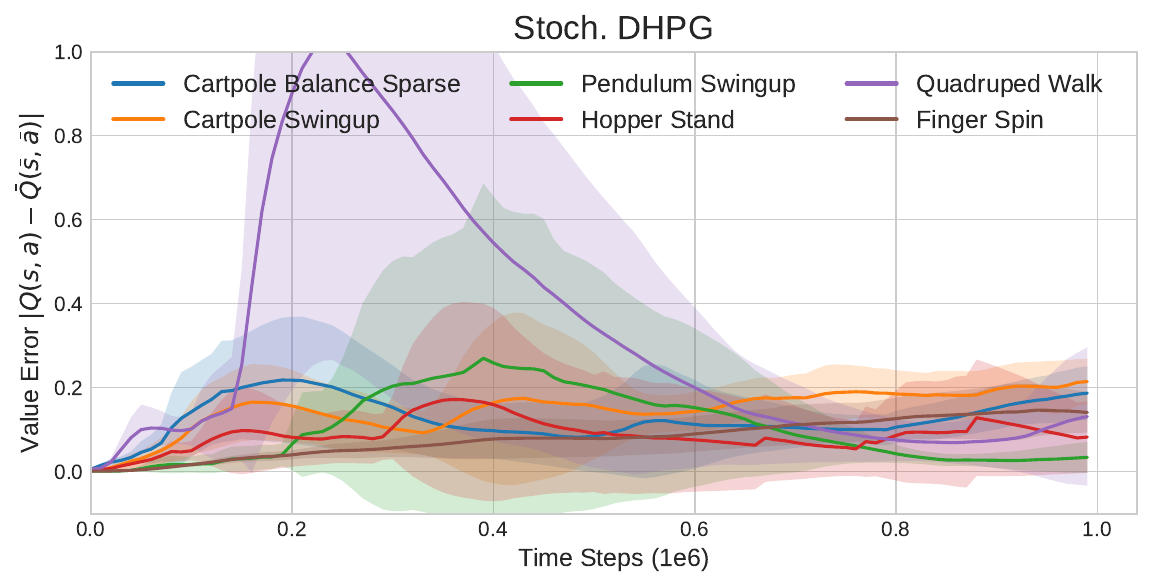}
        \caption{Value equivalence for stochastic DHPG.}
    \end{subfigure}
    \caption{Normalized mean absolute error $|Q(s, a) - \overline{Q}(\overline{s}, \overline{a})|$ as a measure for the value equivalence property during training of different tasks from pixel observations. The error is measured on samples from the replay buffer and is normalized by the range of the value function. The error is averaged over 10 seeds and shaded regions represent $95\%$ confidence intervals.}    
    \label{fig:value_equivalence}
\end{figure}
\clearpage

\clearpage
\subsection{Transfer Learning Experiments}
\label{sec:transfer_supp}
As discussed in Section \ref{sec:experiments}, the purpose of transfer experiments is to ensure that using MDP homomorphisms does not compromise transfer abilities. We select the deterministic DHPG for these experiments. Figure \ref{fig:pixels_transfer_supp} shows learning curves for a series of transfer scenarios in which the critic, actor, and representations are transferred to a new task within the same domain. DHPG matches the same transfer abilities of other methods. 
\begin{figure}[h!]
     \centering
     \begin{subfigure}[b]{0.325\textwidth}
         \centering
         \includegraphics[width=\textwidth]{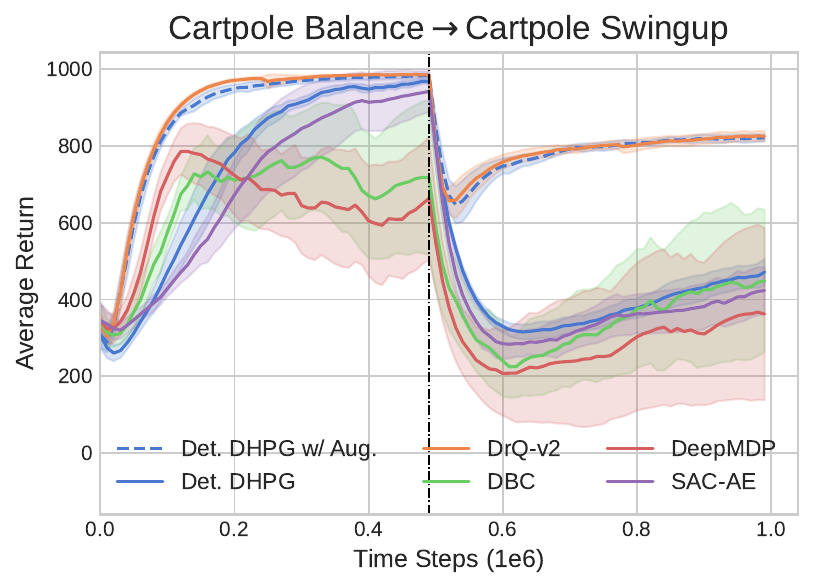}
         \caption{Cartpole.}
     \end{subfigure}
     \hfill
     \begin{subfigure}[b]{0.325\textwidth}
         \centering
         \includegraphics[width=\textwidth]{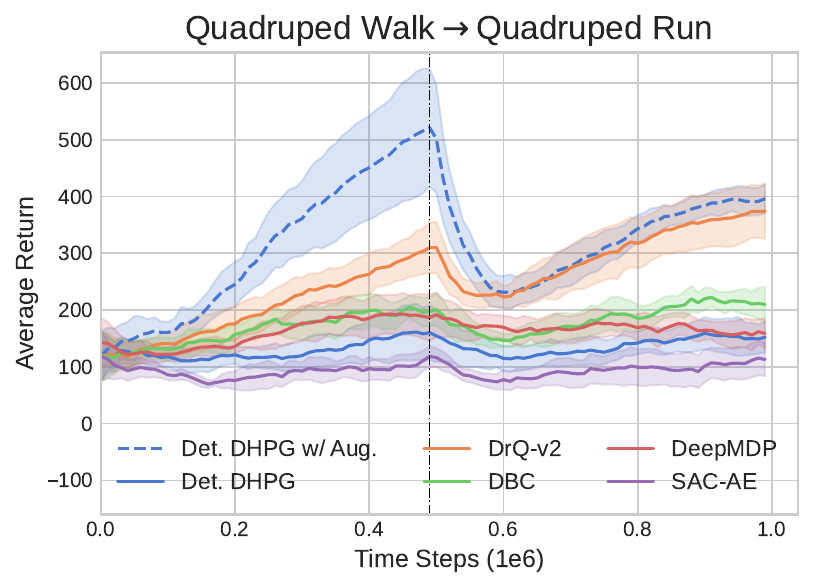}
         \caption{Quadruped.}
     \end{subfigure}
     \hfill
     \begin{subfigure}[b]{0.325\textwidth}
         \centering
         \includegraphics[width=\textwidth]{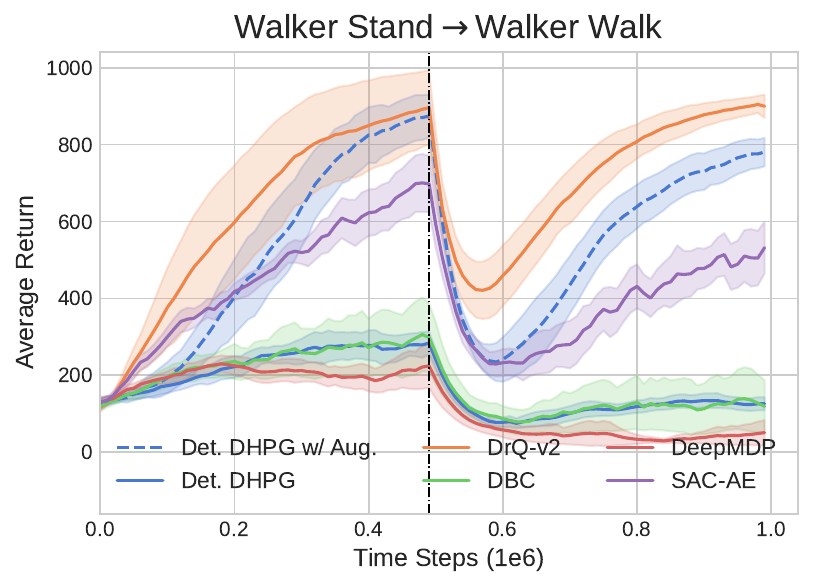}
         \caption{Walker.}
     \end{subfigure}        
     \caption{Learning curves for transfer experiments with pixel observations. At 500k time step mark, all components are transferred to a new task on the same domain. Mean performance is obtained over 10 seeds and shaded regions represent $95\%$ confidence intervals. Plots are smoothed uniformly for visual clarity.}
    \label{fig:pixels_transfer_supp}
\end{figure}

\subsection{Ablation Study on the Combination of HPG with the Standard Policy Gradient}
\label{sec:ablation_dhpg_variants}
We carry out an ablation study on the combination of HPG with the standard policy gradient for actor updates. Since the deterministic DHPG algorithm is generally simpler as the lifted policy can be analytically obtained, we evaluate the performance of four variants of the deterministic DHPG (all variants use image augmentation):
\begin{enumerate}[noitemsep,nosep]
    \item \textbf{DHPG:} Gradients of HPG and DPG are added together and a single actor update is done based on the sum of gradients. This is the deterministic DHPG algorithm that is used throughout the paper.
    \item \textbf{DHPG with independent DPG update:} Gradients of HPG and DPG are independently used to update the actor. 
    \item \textbf{DHPG without DPG update:} Only HPG is used to update the actor. 
    \item \textbf{DHPG with single critic:} A single critic network is trained for learning values of both the actual and abstract MDP. Consequently, HPG and DPG are used to update the actor. 
\end{enumerate}
Figure \ref{fig:ablation_dhpg_variants} shows learning curves obtained on 16 DeepMind Control Suite tasks with pixel observations, and Figure \ref{fig:ablation_dhpg_variants_rliable} shows RLiable \citep{agarwal2021deep} evaluation metrics. In general, summing the gradients of HPG and DPG (variant 1) results in lower variance of gradient estimates compared to independent HPG and DPG updates (variant 2).
Interestingly, the variant of DHPG without DPG (variant 3) performs reasonably well or even outperforms other variants in simple tasks where learning MDP homomorphisms is easy (e.g., cartpole and pendulum), indicating the effectiveness of our method in using \textbf{only} the abstract MDP to update the policy of the actual MDP. However, in the case of more complicated tasks (e.g., walker), DPG is required to additionally use the actual MDP for policy optimization. Finally, using a single critic for both the actual and abstract MDPs (variant 4) can improve sample efficiency in symmetrical MDPs, but may result in performance drops in non-symmetrical MDPs due to the large error bound between the two MDPs, $\| Q^{\pi^\uparrow}\!(s,a) \!-\! Q^{\overline{\pi}}\!(\overline{s}, \overline{a}) \|$ \citep{taylor2008bounding}.

\begin{figure}[h!]
    \centering
    \begin{subfigure}[b]{0.95\textwidth}
         \centering
         \includegraphics[width=\textwidth]{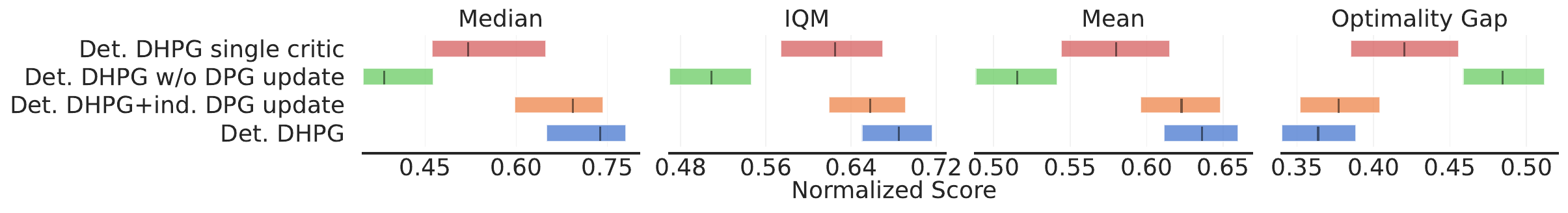}
         \caption{Aggregate metrics at 500k.}
    \end{subfigure}
    \vspace{2em}
    
    \begin{subfigure}[b]{0.325\textwidth}
         \centering
         \includegraphics[width=\textwidth]{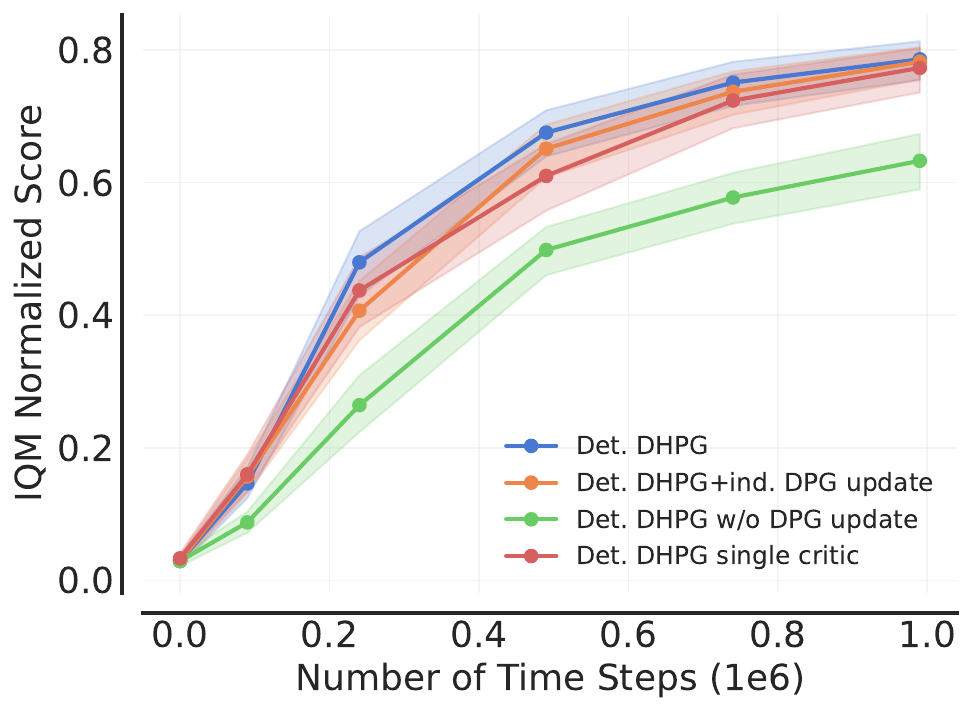}
         \caption{Sample efficiency.}
    \end{subfigure}
    \hfill
    \begin{subfigure}[b]{0.325\textwidth}
         \centering
         \includegraphics[width=\textwidth]{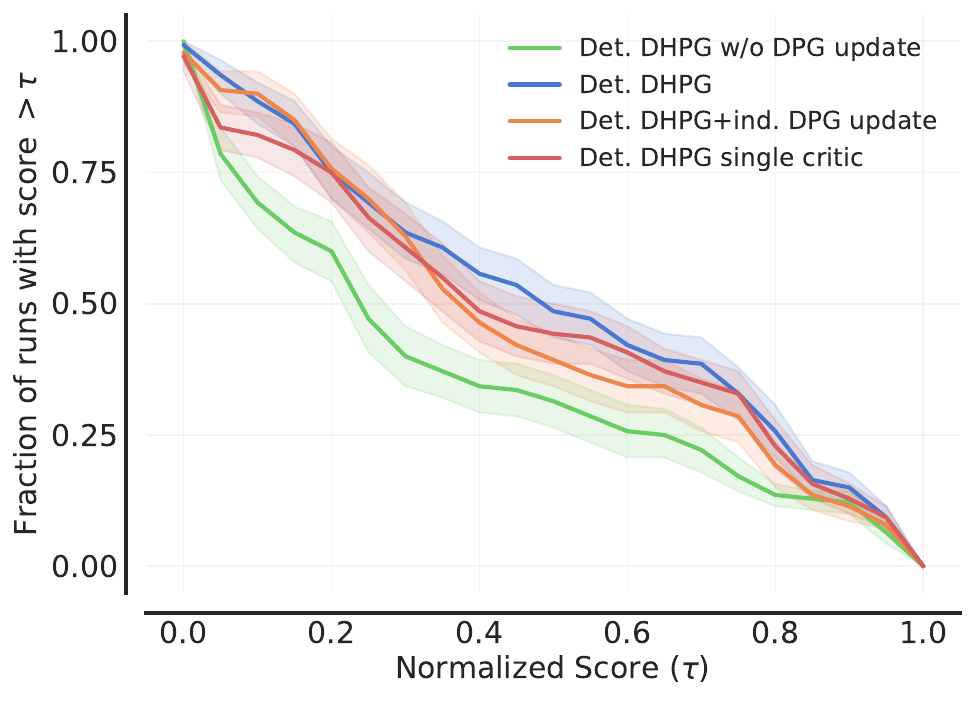}
         \caption{Performance profiles at 250k.}
    \end{subfigure}
    \hfill
    \begin{subfigure}[b]{0.325\textwidth}
         \centering
         \includegraphics[width=\textwidth]{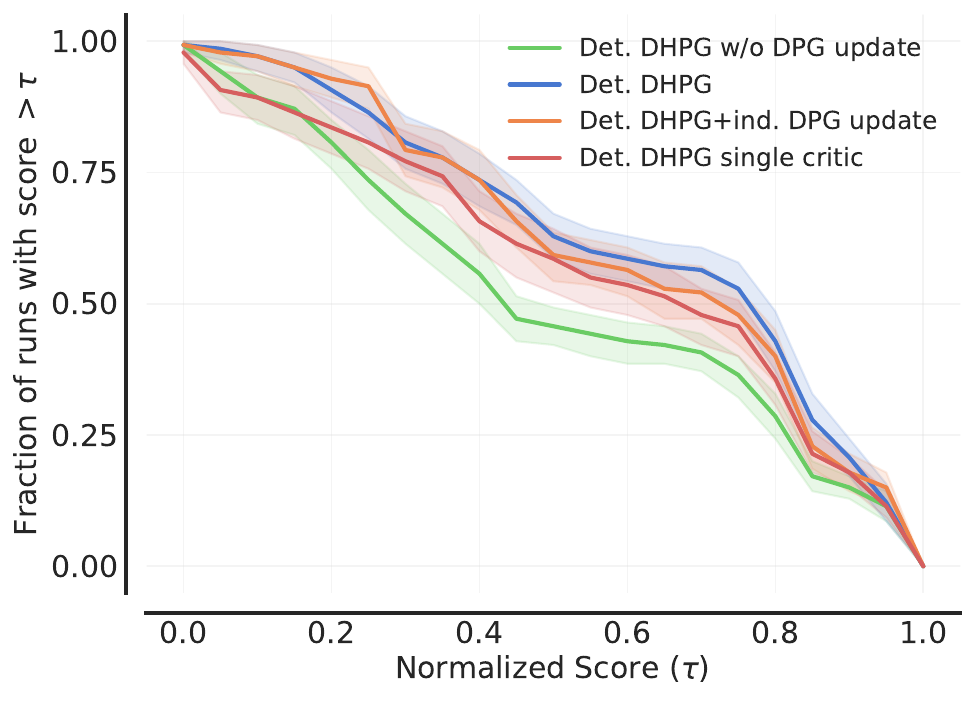}
         \caption{Performance profiles at 500k.}
    \end{subfigure}
    \caption{Ablation study on the combination of HPG and DPG. RLiable evaluation metrics for pixel observations averaged on 14 tasks over 10 seeds. Aggregate metrics at 500k steps \textbf{(a)}, IQM scores as a function of number of steps for comparing sample efficiency \textbf{(b)}, performance profiles at 250k steps \textbf{(c)}, performance profiles at 500k steps \textbf{(d)}. Shaded regions represent $95\%$ confidence intervals.}
    \label{fig:ablation_dhpg_variants_rliable}
\end{figure}

\begin{figure}[ht!]
     \centering
     \begin{subfigure}[b]{0.24\textwidth}
         \centering
         \includegraphics[width=\textwidth]{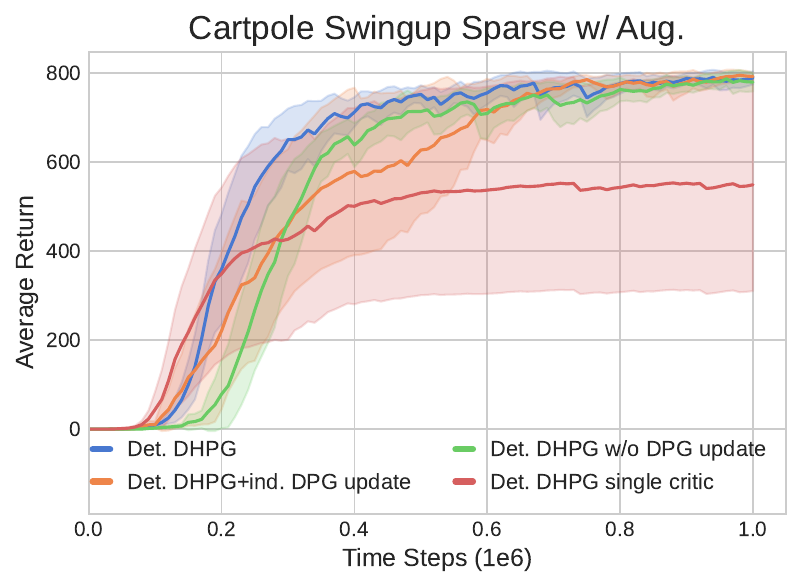}
     \end{subfigure}
     \hfill
     \begin{subfigure}[b]{0.24\textwidth}
         \centering
         \includegraphics[width=\textwidth]{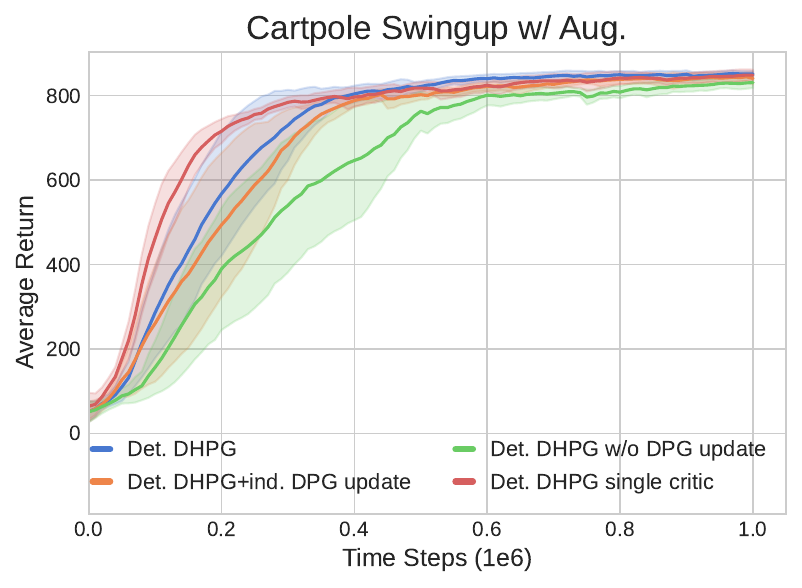}
     \end{subfigure}
     \hfill
     \begin{subfigure}[b]{0.24\textwidth}
         \centering
         \includegraphics[width=\textwidth]{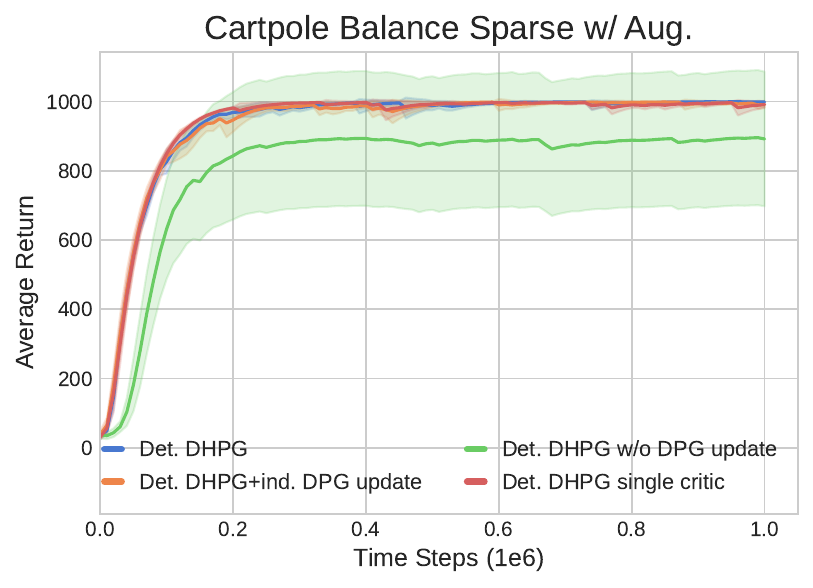}
     \end{subfigure}
     \hfill     
     \begin{subfigure}[b]{0.24\textwidth}
         \centering
         \includegraphics[width=\textwidth]{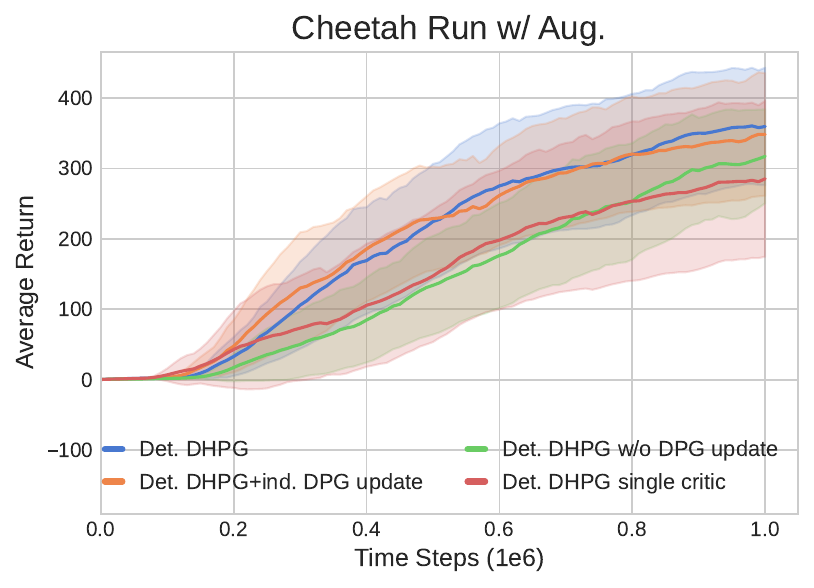}
     \end{subfigure}
     
     \hfill     
     \begin{subfigure}[b]{0.24\textwidth}
         \centering
         \includegraphics[width=\textwidth]{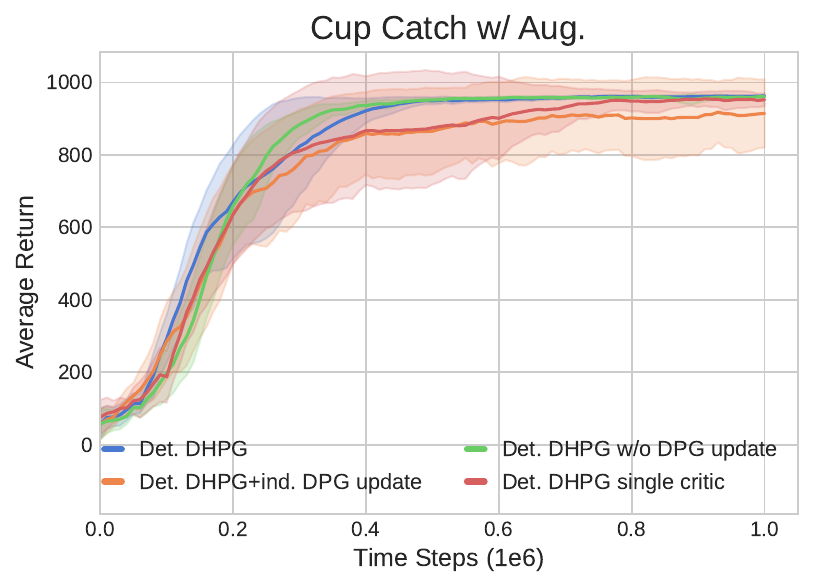}
     \end{subfigure}
     \hfill
     \begin{subfigure}[b]{0.24\textwidth}
         \centering
         \includegraphics[width=\textwidth]{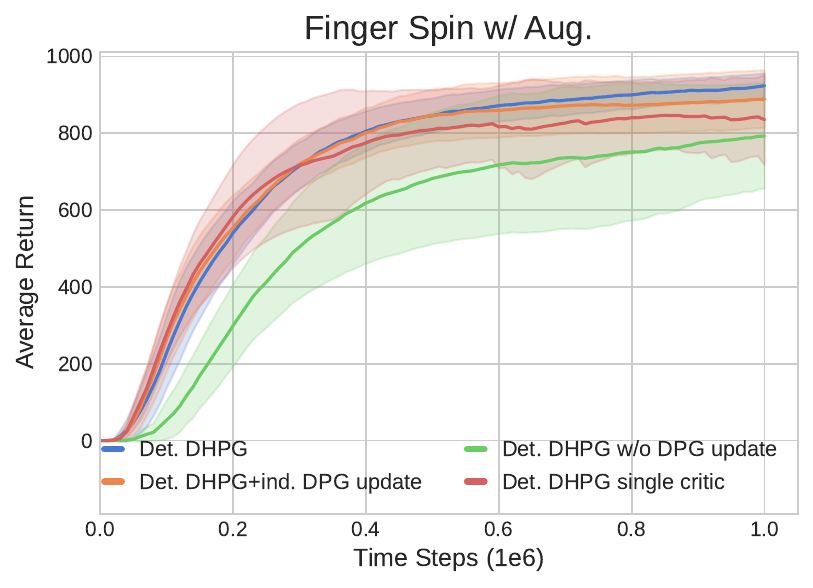}
     \end{subfigure}
     \hfill     
     \begin{subfigure}[b]{0.24\textwidth}
         \centering
         \includegraphics[width=\textwidth]{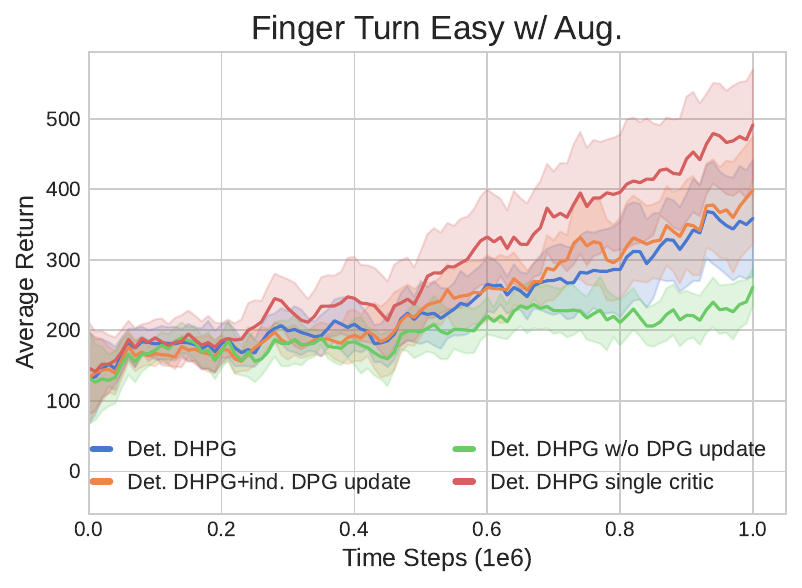}
     \end{subfigure}
     \hfill
     \begin{subfigure}[b]{0.24\textwidth}
         \centering
         \includegraphics[width=\textwidth]{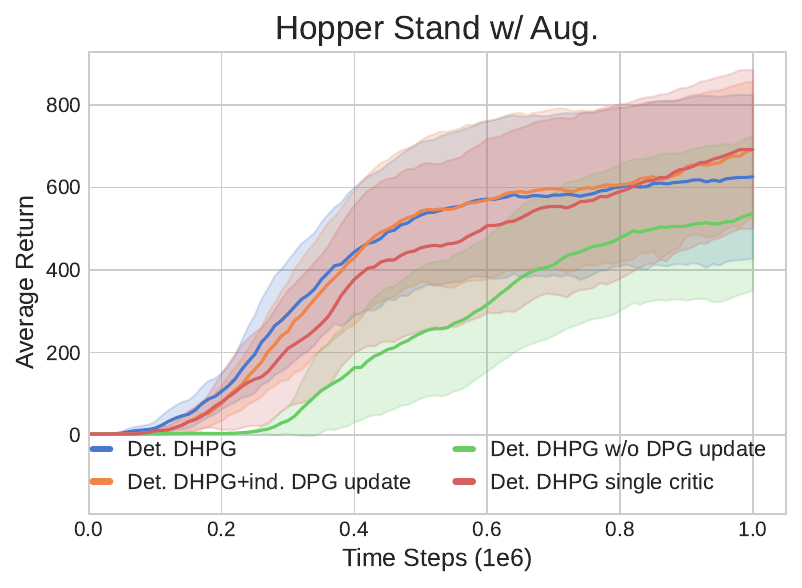}
     \end{subfigure}
     
     \hfill     
     \begin{subfigure}[b]{0.24\textwidth}
         \centering
         \includegraphics[width=\textwidth]{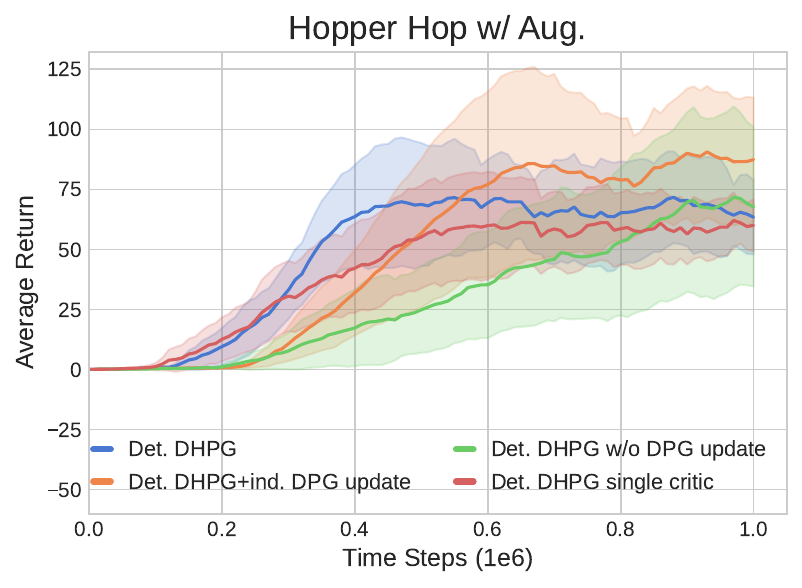}
     \end{subfigure}
     \hfill     
     \begin{subfigure}[b]{0.24\textwidth}
         \centering
         \includegraphics[width=\textwidth]{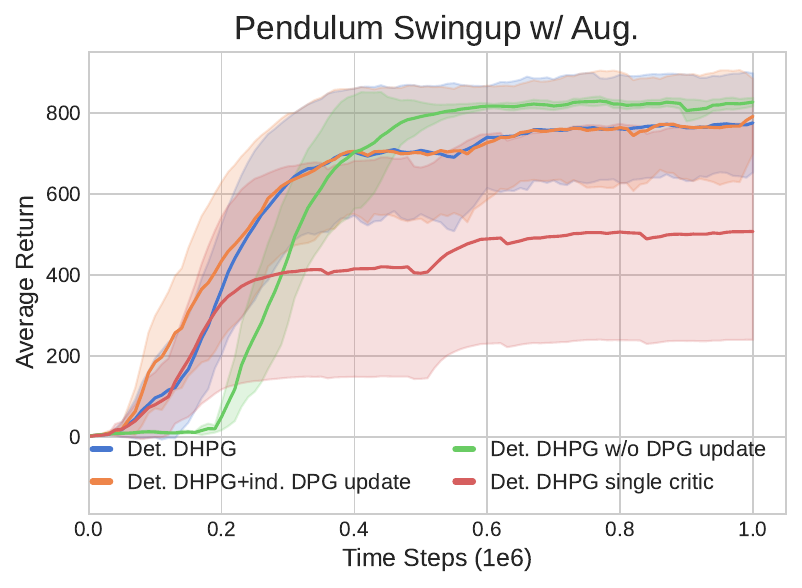}
     \end{subfigure}
     \hfill
     \begin{subfigure}[b]{0.24\textwidth}
         \centering
         \includegraphics[width=\textwidth]{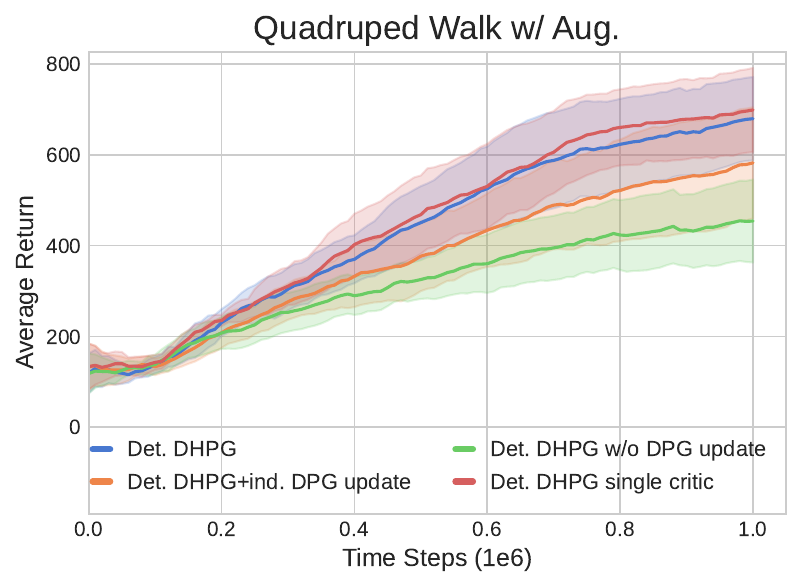}
     \end{subfigure}
     \hfill
     \begin{subfigure}[b]{0.24\textwidth}
         \centering
         \includegraphics[width=\textwidth]{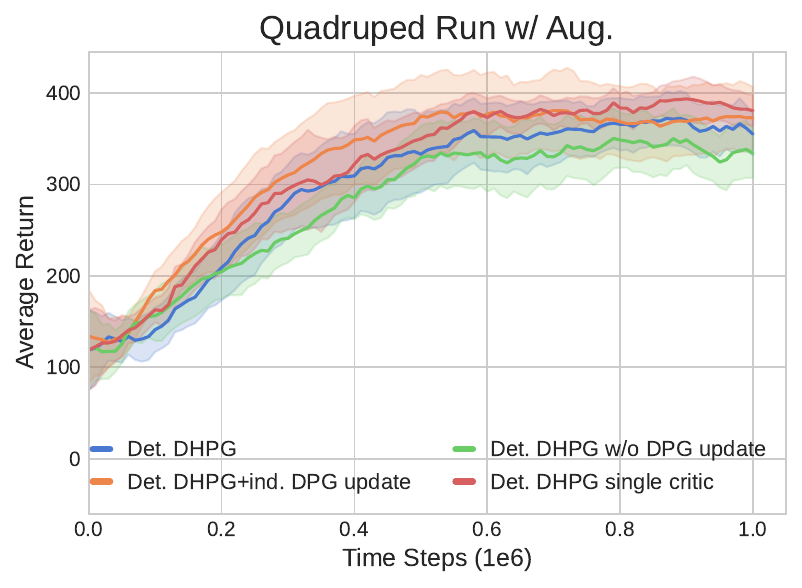}
     \end{subfigure}
     \hfill
     
     \begin{subfigure}[b]{0.24\textwidth}
         \centering
         \includegraphics[width=\textwidth]{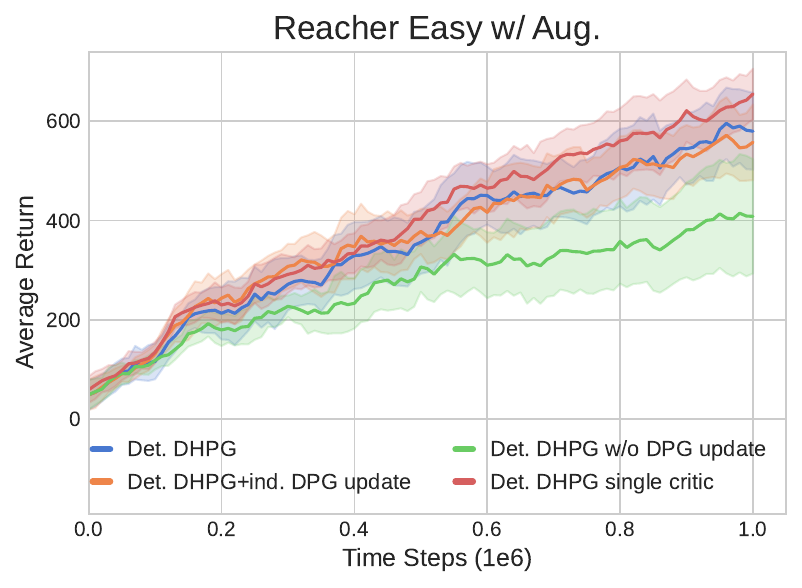}
     \end{subfigure}
     \hfill
     \begin{subfigure}[b]{0.24\textwidth}
         \centering
         \includegraphics[width=\textwidth]{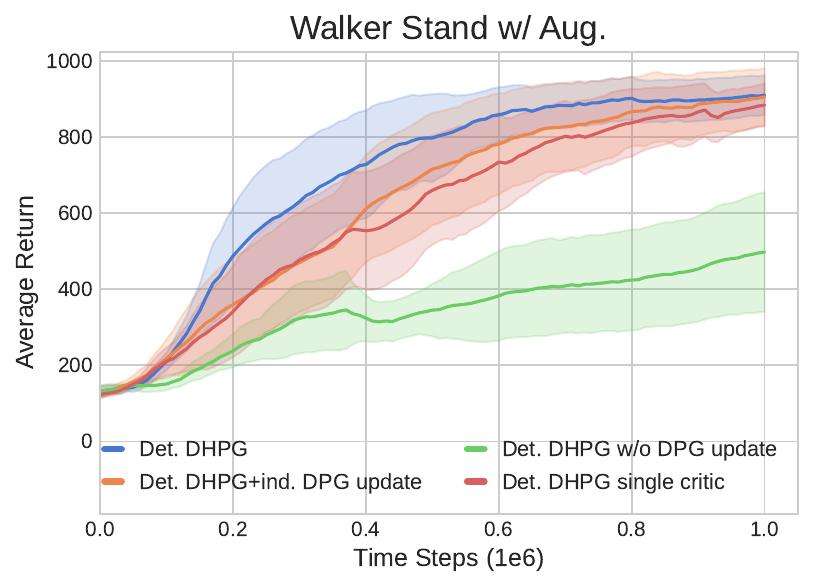}
     \end{subfigure}
     \begin{subfigure}[b]{0.24\textwidth}
         \centering
         \includegraphics[width=\textwidth]{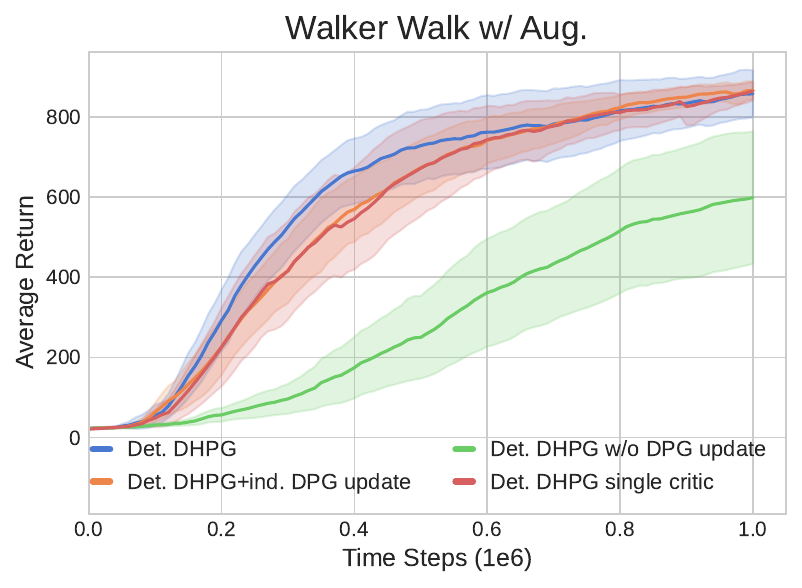}
     \end{subfigure}
     \hfill     
     \begin{subfigure}[b]{0.24\textwidth}
         \centering
         \includegraphics[width=\textwidth]{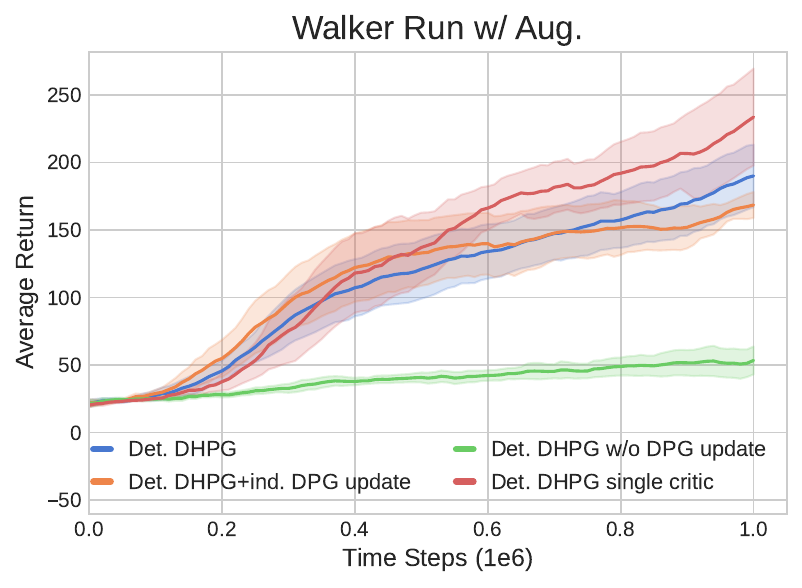}
     \end{subfigure}
     \hfill
    \caption{Ablation study on the combination of HPG and DPG. Learning curves for 16 DM control tasks with pixel observations. Mean performance is obtained over 10 seeds and shaded regions represent $95\%$ confidence intervals. Plots are smoothed uniformly for visual clarity.}
    \label{fig:ablation_dhpg_variants}
\end{figure}

\clearpage

\subsection{Ablation Study on n-step Return}
We carry out an ablation study on the choice of $n$-step return for DHPG. Similarly to the ablation study in Appendix \ref{sec:ablation_dhpg_variants}, we select the deterministic DHPG since  it is generally simpler and its lifted policy can be analytically obtained. Figure \ref{fig:ablation_nstep_rliable} shows RLiable \citep{agarwal2021deep} evaluation metrics for DHPG with $1$-step and $3$-step returns for pixel observations. We show the impact of $n$-step return on DHPG with and without image augmentation. Overall, $n$-step return appears to improve the early stages of training. In the case of DHPG without image augmentation, the final performance of $1$-step return is better than $3$-step return, perhaps indicating that using $n$-step return can render learning MDP homomorphisms more difficult.
\label{sec:ablation_n_step}
\begin{figure}[h!]
    \centering
    \begin{subfigure}[b]{0.95\textwidth}
         \centering
         \includegraphics[width=\textwidth]{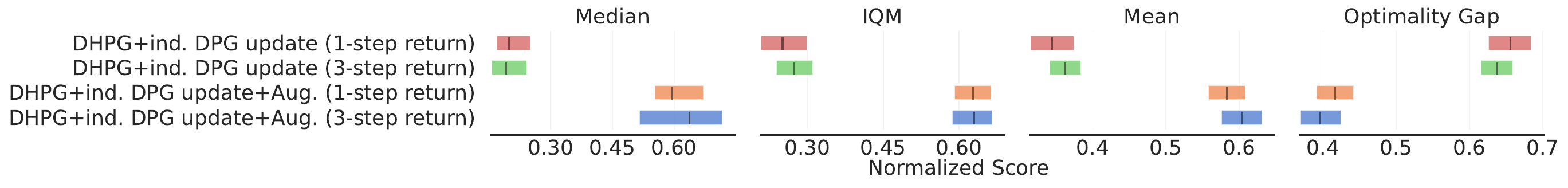}
         \caption{Aggregate metrics at 500k steps.}
    \end{subfigure}
    \vspace{2em}
    
    \begin{subfigure}[b]{0.325\textwidth}
         \centering
         \includegraphics[width=\textwidth]{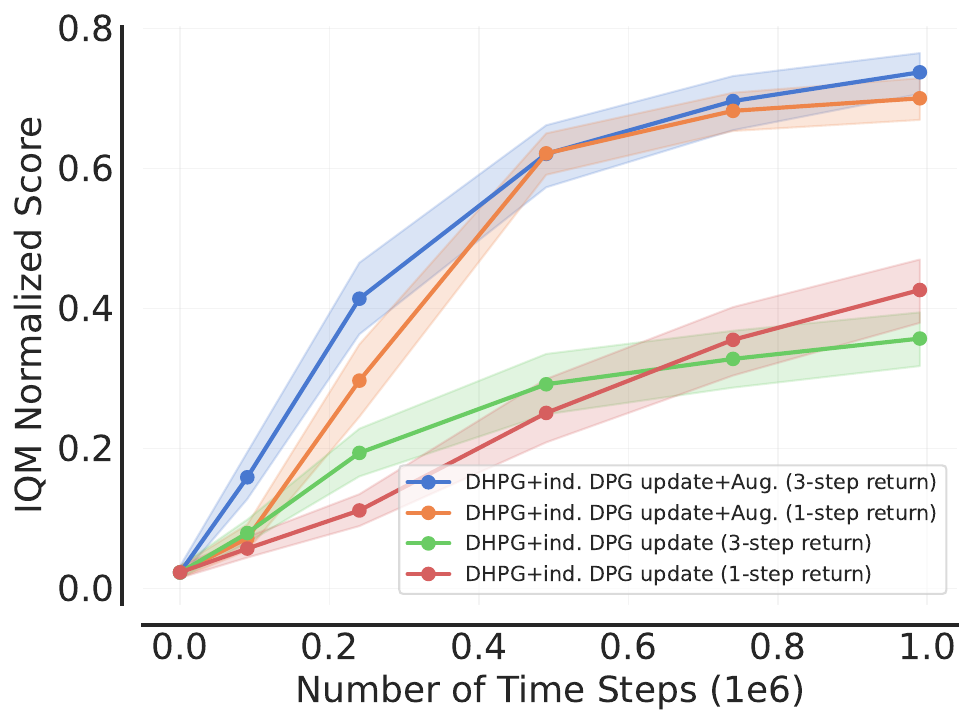}
         \caption{Sample efficiency.}
    \end{subfigure}
    \hfill
    \begin{subfigure}[b]{0.325\textwidth}
         \centering
         \includegraphics[width=\textwidth]{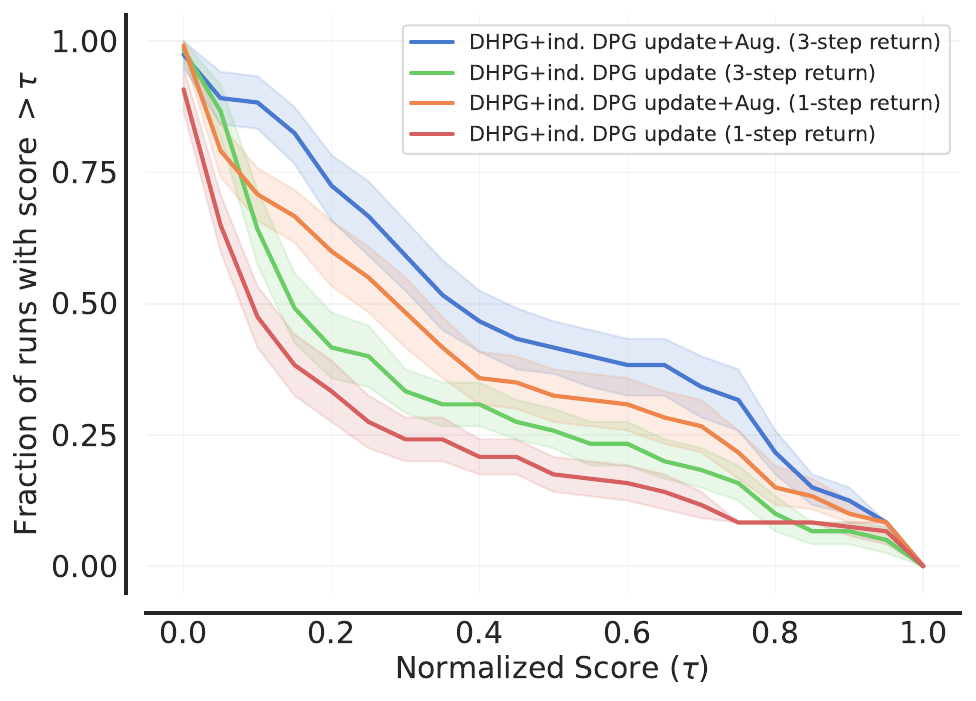}
         \caption{Performance profiles at 250k.}
    \end{subfigure}
    \hfill
    \begin{subfigure}[b]{0.325\textwidth}
         \centering
         \includegraphics[width=\textwidth]{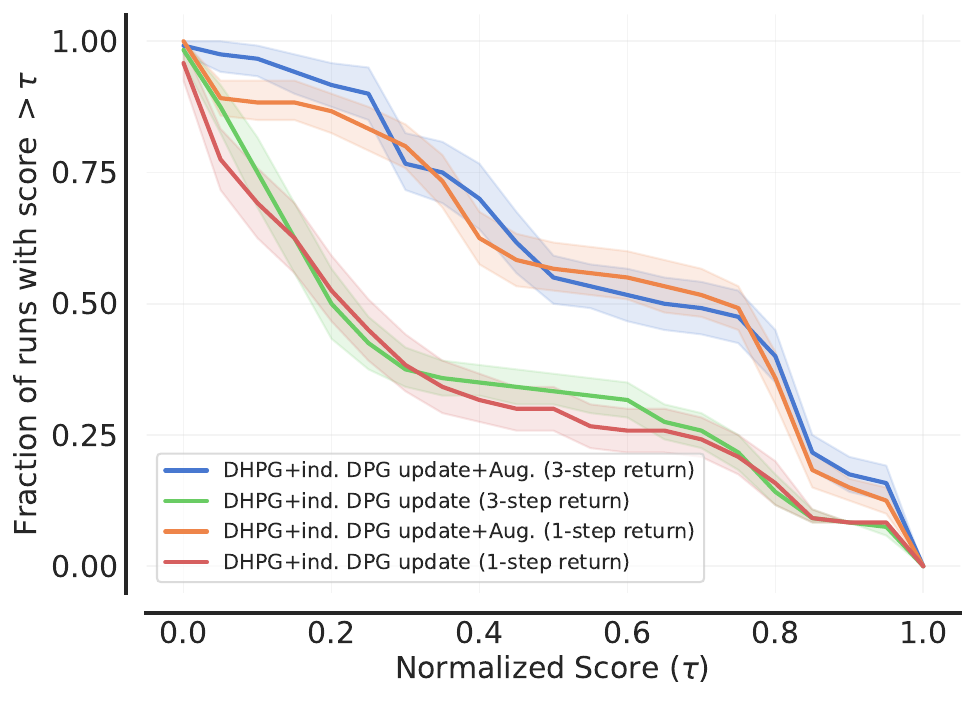}
         \caption{Performance profiles at 500k.}
    \end{subfigure}
    \caption{Ablation study on $n$-step return. RLiable evaluation metrics for pixel observations averaged on 12 tasks over 10 seeds. Aggregate metrics at 1m steps \textbf{(a)}, IQM scores as a function of number of steps for comparing sample efficiency \textbf{(b)}, performance profiles at 250k steps \textbf{(c)}, and performance profiles at 500k steps \textbf{(d)}. Shaded regions represent $95\%$ confidence intervals.}
    \label{fig:ablation_nstep_rliable}
\end{figure}
\clearpage

\subsection{Comparison of Computation Time}
\label{sec:computation_supp}

Figure \ref{fig:pixel_results_supp_computation} compares the computation cost of our method against the baselines. The horizontal axis represents wall clock time in hours. Since our method does not require image reconstruction, it is more computationally efficient than SAC-AE and DeepMDP. However, the bisimulation computation, the HPG update, and the policy lifting loss (in the case of stochastic DHPG) increase the computation costs of our method in comparison to DrQ-v2.

\begin{figure}[h!]
     \centering
     \begin{subfigure}[b]{0.24\textwidth}
         \centering
         \includegraphics[width=\textwidth]{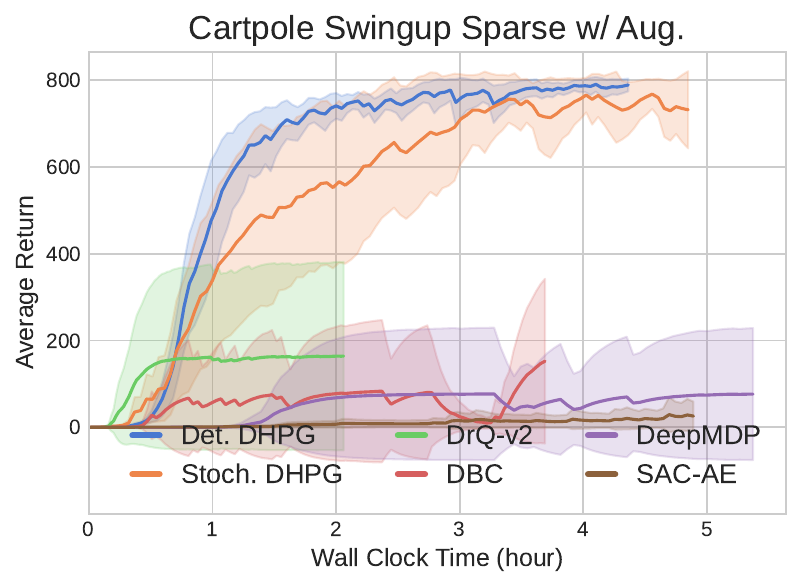}
     \end{subfigure}
     \hfill
     \begin{subfigure}[b]{0.24\textwidth}
         \centering
         \includegraphics[width=\textwidth]{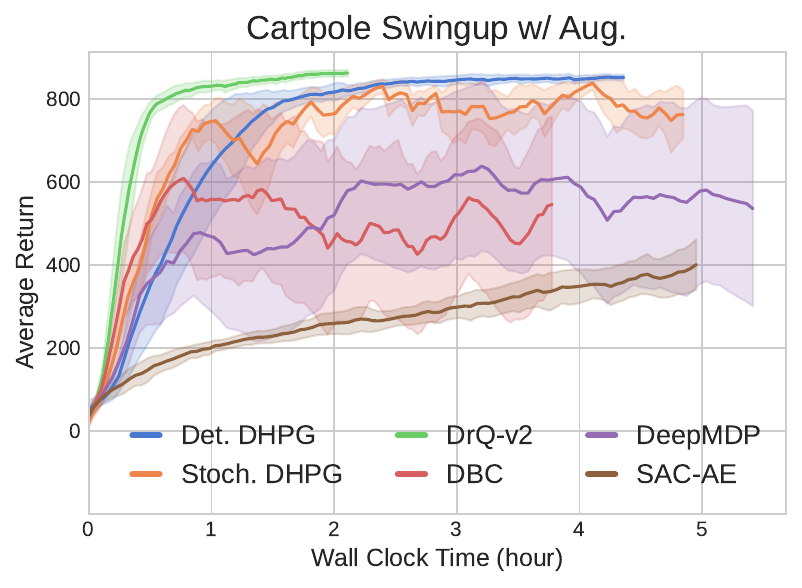}
     \end{subfigure}
     \hfill
     \begin{subfigure}[b]{0.24\textwidth}
         \centering
         \includegraphics[width=\textwidth]{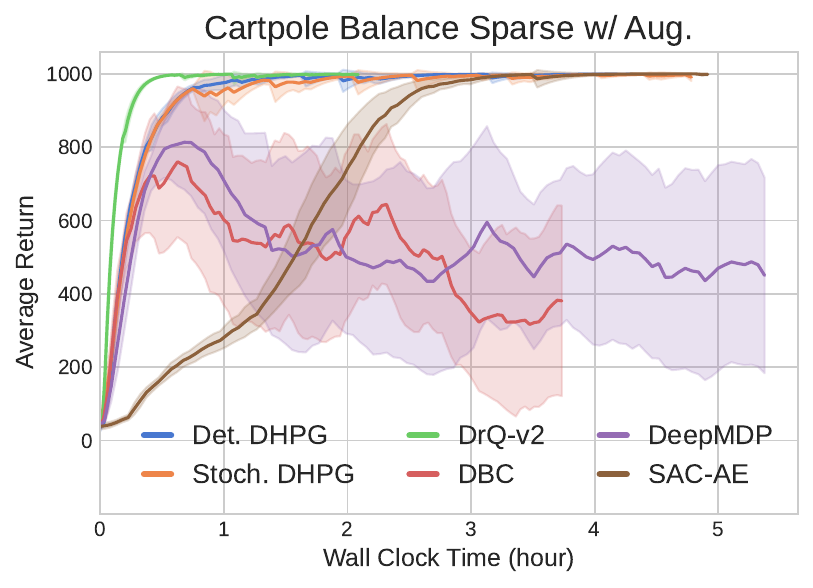}
     \end{subfigure}
     \hfill     
     \begin{subfigure}[b]{0.24\textwidth}
         \centering
         \includegraphics[width=\textwidth]{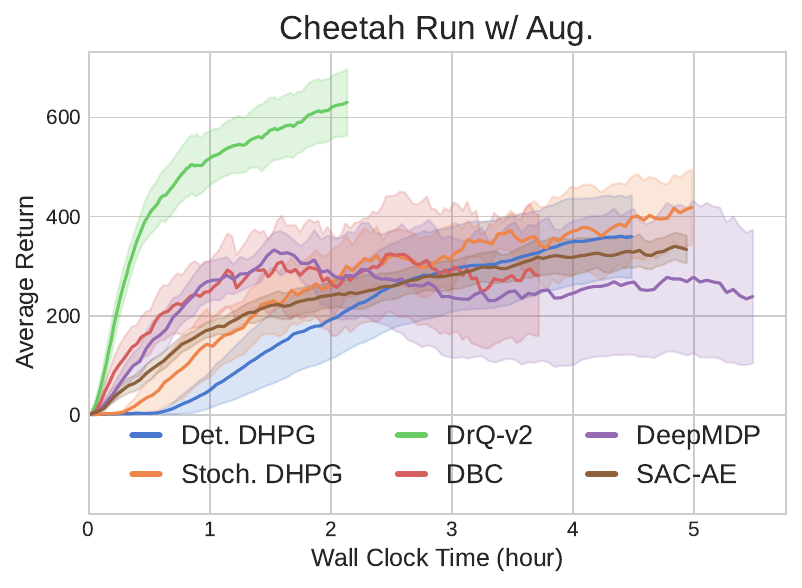}
     \end{subfigure}
     \hfill
     
     \begin{subfigure}[b]{0.24\textwidth}
         \centering
         \includegraphics[width=\textwidth]{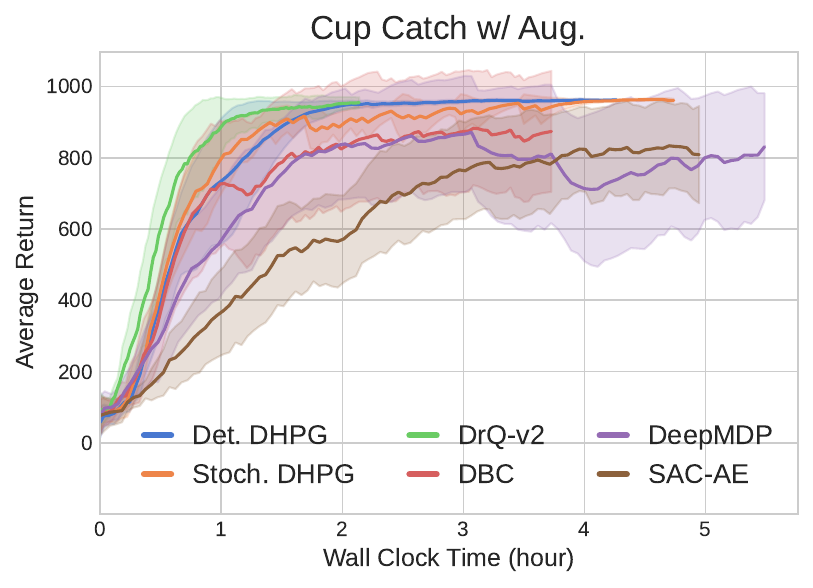}
     \end{subfigure}
     \hfill
     \begin{subfigure}[b]{0.24\textwidth}
         \centering
         \includegraphics[width=\textwidth]{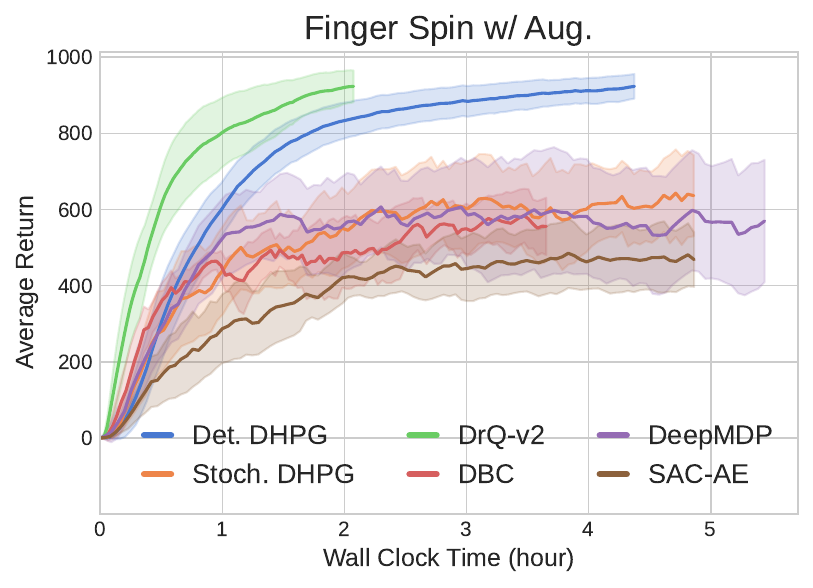}
     \end{subfigure}
     \hfill
     \begin{subfigure}[b]{0.24\textwidth}
         \centering
         \includegraphics[width=\textwidth]{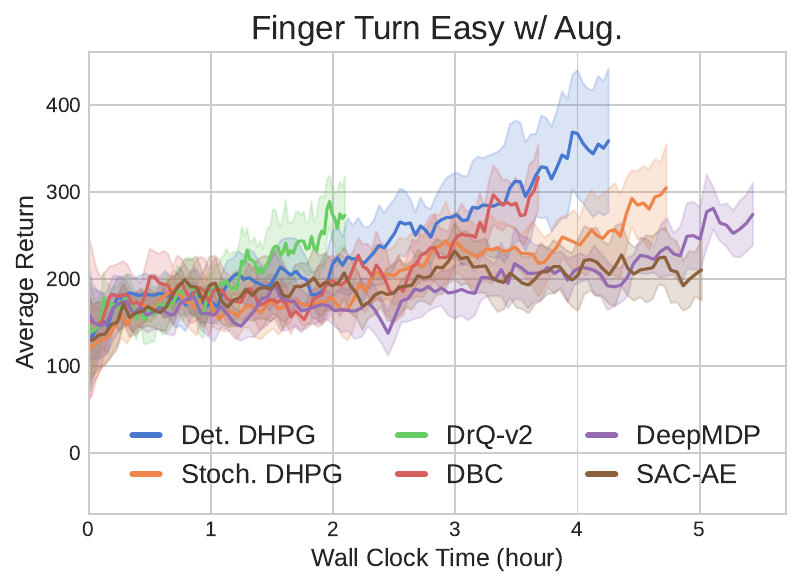}
     \end{subfigure}     
     \hfill
     \begin{subfigure}[b]{0.24\textwidth}
         \centering
         \includegraphics[width=\textwidth]{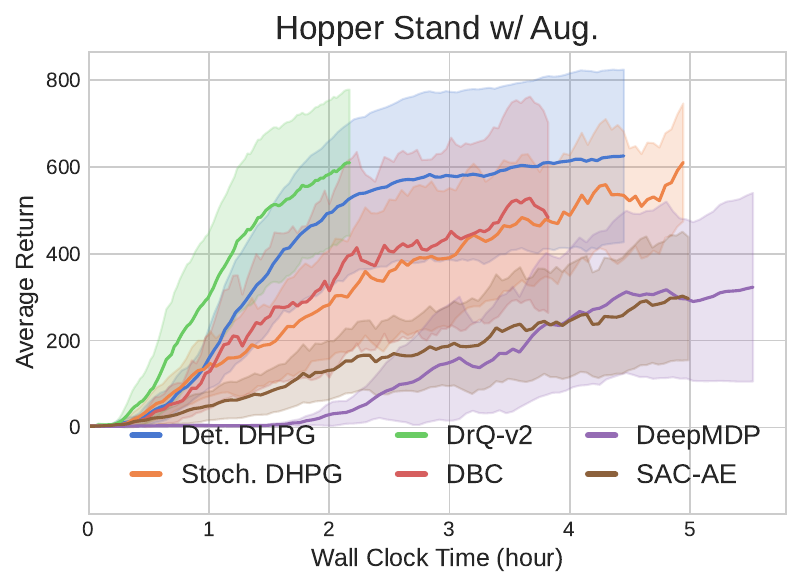}
     \end{subfigure}
     \hfill     
     
     \begin{subfigure}[b]{0.24\textwidth}
         \centering
         \includegraphics[width=\textwidth]{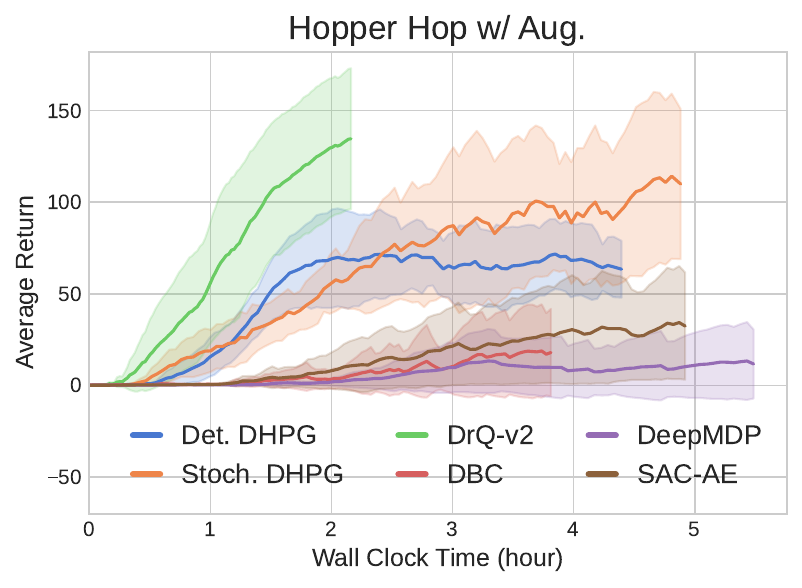}
     \end{subfigure}    
     \hfill
     \begin{subfigure}[b]{0.24\textwidth}
         \centering
         \includegraphics[width=\textwidth]{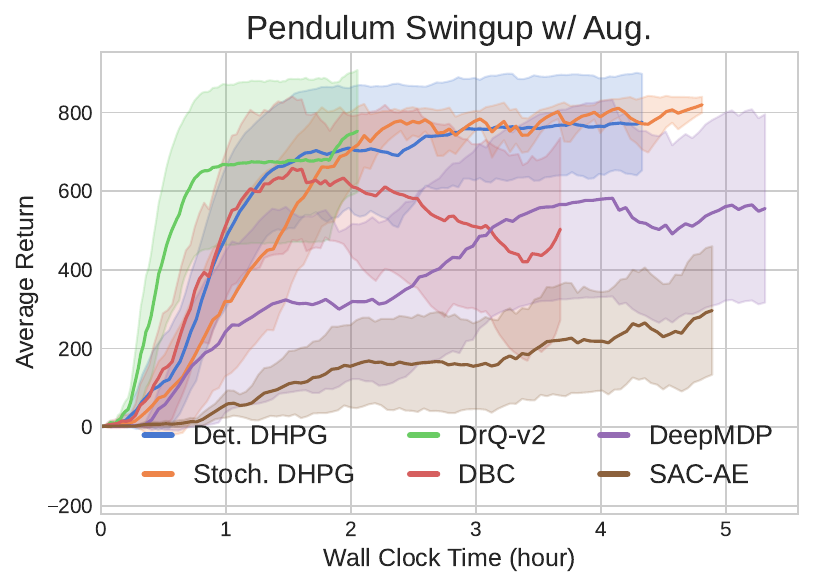}
     \end{subfigure}
     \hfill
     \begin{subfigure}[b]{0.24\textwidth}
         \centering
         \includegraphics[width=\textwidth]{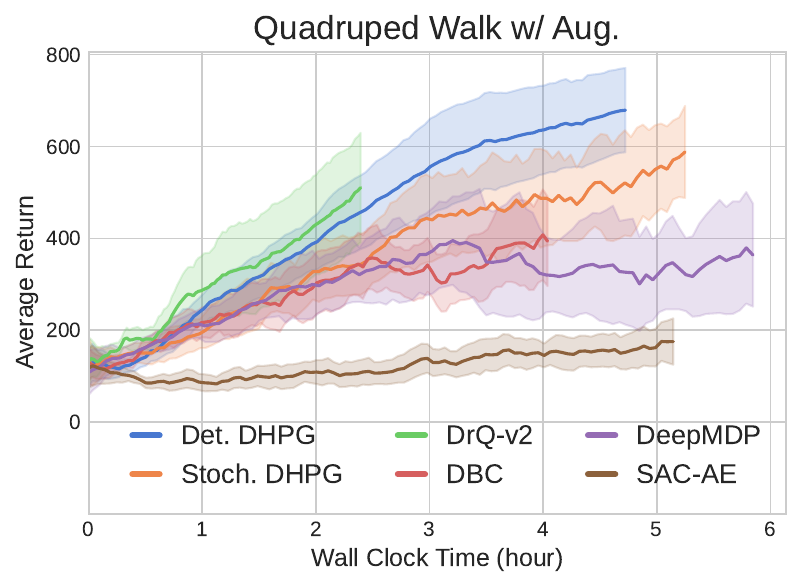}
     \end{subfigure}
     \hfill
     \begin{subfigure}[b]{0.24\textwidth}
         \centering
         \includegraphics[width=\textwidth]{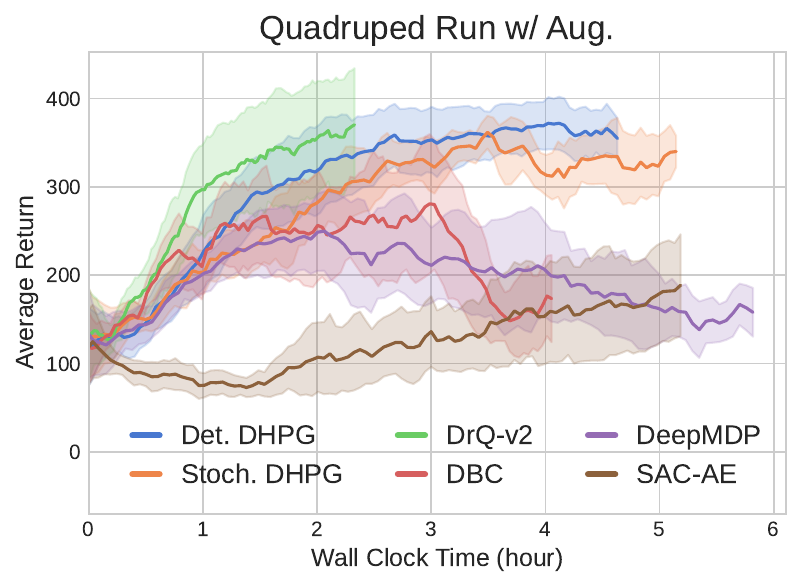}
     \end{subfigure}
     \hfill
     
     \begin{subfigure}[b]{0.24\textwidth}
         \centering
         \includegraphics[width=\textwidth]{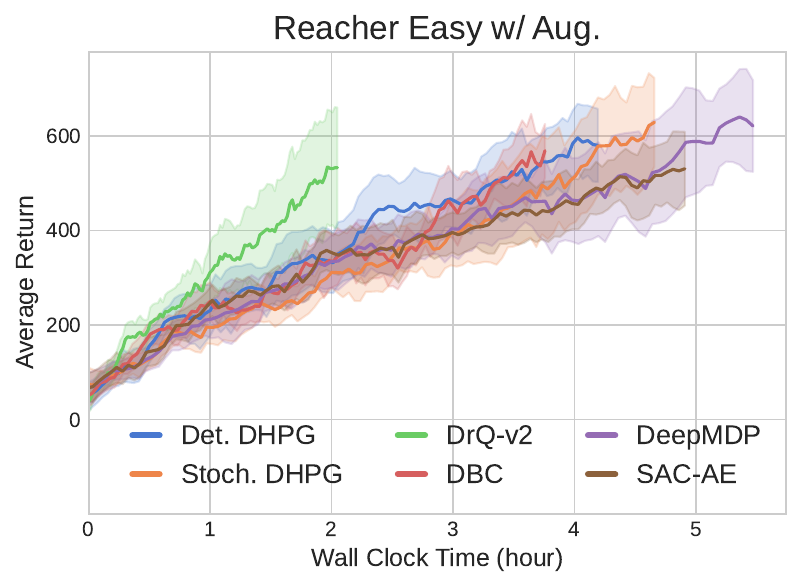}
     \end{subfigure}
     \hfill
     \begin{subfigure}[b]{0.24\textwidth}
         \centering
         \includegraphics[width=\textwidth]{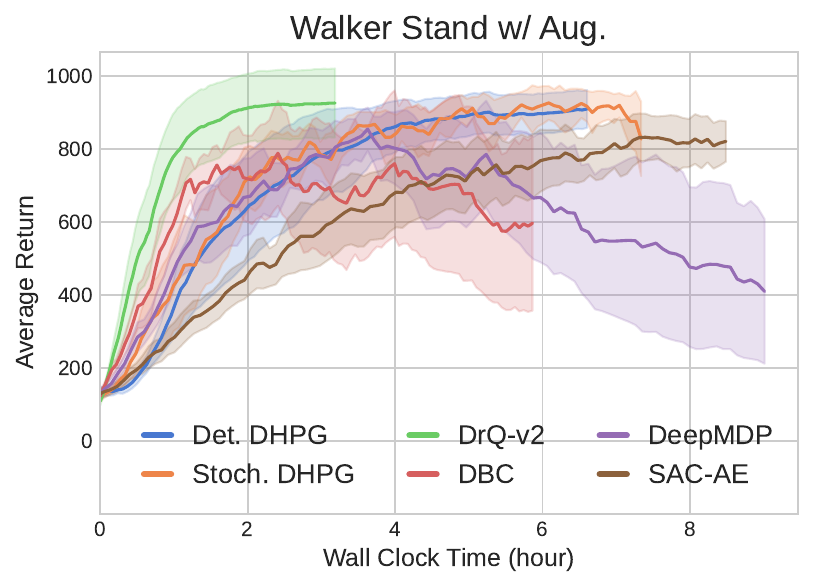}
     \end{subfigure}
     \hfill
     \begin{subfigure}[b]{0.24\textwidth}
         \centering
         \includegraphics[width=\textwidth]{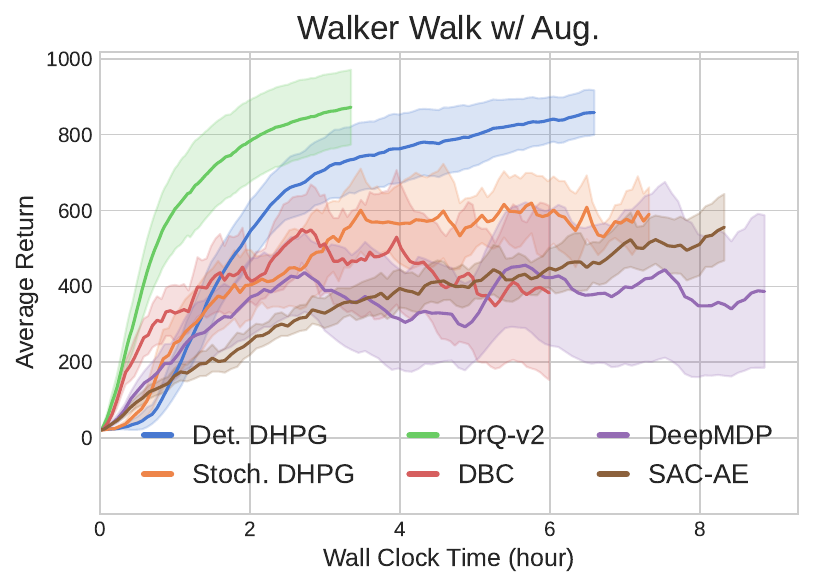}
     \end{subfigure}
     \hfill
     \begin{subfigure}[b]{0.24\textwidth}
         \centering
         \includegraphics[width=\textwidth]{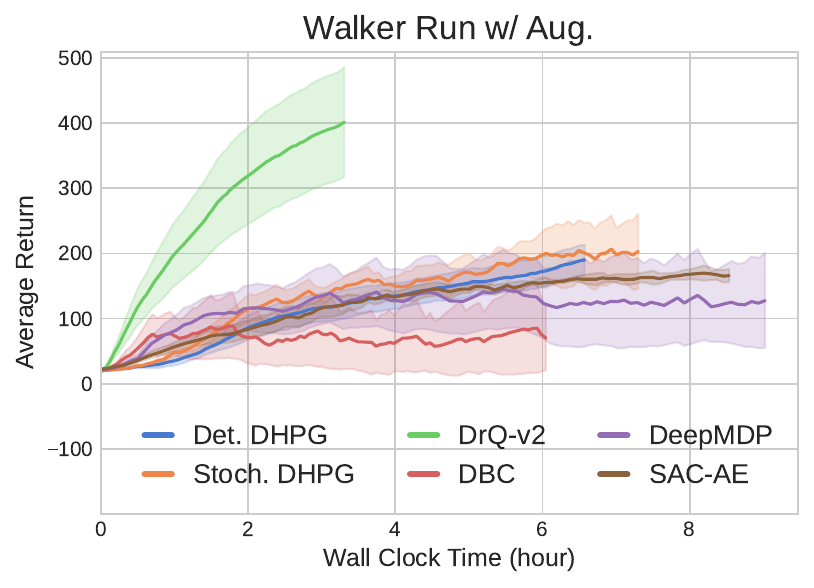}
     \end{subfigure}     
    \caption{Learning curves for 16 DM control tasks with pixel observations. The horizontal axis is the \textbf{wall clock time} in hours. All methods are \textbf{with} image augmentation. Mean performance is obtained over 10 seeds and shaded regions represent $95\%$ confidence intervals. Plots are smoothed uniformly for visual clarity.}
    \label{fig:pixel_results_supp_computation}
\end{figure}

\clearpage
\section{Implementation Details}
\label{sec:implementation}

\subsection{Hyperparameters}
\label{sec:hyperparams}

\begin{table}[b!]
    \centering
    \caption{Hyperparameters used in our experiments.}
    \label{tab:hyperparams}
    \begin{tabular}{cc}
        \hline
        \textbf{Hyperparameter}                       & \textbf{Setting}         \\
        \hline
        Learning rate                            & $1$e$-4$                   \\
        Optimizer                                & Adam                     \\
        $n$-step return                          & $3$                        \\
        Mini-batch size                          & $256$                      \\
        Actor update frequency $d$               & $2$                        \\
        Target networks update frequency         & $2$                        \\
        Target networks soft-update $\tau$       & $0.01$                     \\
        Target policy smoothing stddev. clip $c$ & 0.3                      \\
        Feature dim.                             & $50$                     \\
        Action repeat                            & $2$                      \\
        Frame stack                              & $3$                      \\
        Hidden dim.                              & $256$                      \\
        Replay buffer capacity                   & $10^6$                   \\
        Discount $\gamma$                        & $0.99$                     \\
        Seed frames                              & $4000$                     \\
        Exploration steps                        & $2000$                     \\
        Exploration stddev. schedule             & linear$(1.0, 0.1, 1$e$6)$ \\
        \hline
    \end{tabular}
\end{table}

Algorithm \ref{alg:dhpg} in Section \ref{sec:homomorphic_ac} presents the pseudo-code of stochastic and deterministic DHPG algorithms for pixel observations. 

In the image augmentation version of DHPG, as well as all the baselines, we use image augmentation of DrQ \citep{yarats2020image} that simply applies random shifts to pixel observations. First, $84 \times 84$ images are padded by $4$ pixels (by repeating boundary pixels), and then a random $84 \times 84$ crop is selected, rendering the original image shifted by $\pm 4$ pixels. Similarly to \citet{yarats2021mastering}, we also apply bilinear interpolation on top of the shifted image by replacing each pixel value with the average of four nearest pixel values. Our code is publicly available at \texttt{\href{https://github.com/sahandrez/homomorphic_policy_gradient}{https://github.com/sahandrez/homomorphic\_policy\_gradient}}.

We implemented our method in PyTorch \citep{paszke2019pytorch} and results were obtained using Python v3.8.10, PyTorch v1.10.0, CUDA 11.4, and Mujoco 2.1.1 \citep{todorov2012mujoco} on A100 GPUs on a cloud computing service. Table \ref{tab:hyperparams} present the hyperparameters used in our experiments. The hyperparameters are all adapted from DrQ-v2 \citep{yarats2021mastering} \emph{without any further hyperparameter tuning}. We have kept the same set of hyperparameters across all algorithms and tasks, except for the walker domain which similarly to DrQ-v2 \citep{yarats2021mastering}, we used $n$-step return of $n=1$ and mini-batch size of $512$.

The core RL components (actor and critic networks), as well as the components of DHPG (state and action encoders, transition and reward models) are all MLP networks with the ReLU activation function and one hidden layer with dimension of $256$. The image encoder is based on the architecture of DrQ-v2 which is itself based on SAC-AE \citep{yarats2021improving} and consists of four convolutional layers of $32 \times 3 \times 3$ with ReLU as their activation functions, followed by a one-layer fully-connected neural network with layer normalization \citep{ba2016layer} and tanh activation function. The stride of the convolutional layers are $1$, except for the first layer which has stride $2$. The image decoder of the baseline models with image reconstruction is based on SAC-AE \citep{yarats2021improving} and has a single-layer fully connected neural network followed by four transpose convolutional layers of $32 \times 32 \times 3$ with ReLU activation function. The stride of the transpose convolutional layers are $1$, except for the last layer which has stride $2$.

\subsection{Baseline Implementations}
\label{sec:baseline_impl}
All of the baselines are submitted in the supplemental material.
We use the official implementations of DBC and SAC-AE. DeepMDP does not have a publicly available code, and we use the implementation available in the official DBC code-base. As discussed in Section \ref{sec:experiments}, we have run two versions of the baselines, with and without image augmentation. The image augmented variants, use the same image augmentation method of DrQ-v2 described in Appendix \ref{sec:hyperparams}.

\end{document}